\documentclass[10pt,twocolumn,letterpaper,abstract=true]{scrartcl}

\usepackage{appendix}

\usepackage[utf8]{inputenc}             % Direkte Eingabe von ä usw.
\usepackage[T1]{fontenc}               	% Font Kodierung für die Ausgabe
\usepackage[english]{babel}                   	% Verschiedenste sprach-spezifische Extras

\usepackage{amsmath}					% Mathematischer Formelsatz mit zusätzlichen mathematischen Schriften und Symbolen
\usepackage{amssymb}					% Mathematischer Formelsatz mit zusätzlichen mathematischen Schriften und Symbolen
\usepackage{amsthm}
\usepackage{amsbsy} % For bold numbers

\usepackage{lmodern}
\usepackage{fourier}

\usepackage{mathtools}
\usepackage{listings}					% Zum Einbinden von Programmcode verwenden wir das listings-Paket
\usepackage[dvipsnames]{xcolor}			% um Elemente von Befehlen farblich zu unterstützen

\usepackage{tabularx}
\usepackage{enumitem}					% Aufzählungsarten
\usepackage{tikz}
\usepackage{tikz-cd}
\usetikzlibrary{calc}

\usepackage[style = numeric-comp, sorting = none]{biblatex}

\definecolor{iccvblue}{rgb}{0.21,0.49,0.74}
\usepackage[breaklinks,colorlinks,allcolors=iccvblue]{hyperref}

% Theorem envos will be defined after Cref.

%\theoremstyle{plain}
%\newtheorem{theorem}{Theorem}[section]
%\newtheorem{lemma}[theorem]{Lemma}
%\newtheorem{corollary}[theorem]{Corollary}
%\newtheorem{proposition}[theorem]{Proposition}
%\newtheorem{problem}[theorem]{Problem}
%\newtheorem*{crit}{Non-Minimality Criterion}

%\theoremstyle{definition}
%\newtheorem{definition}[theorem]{Definition}
%\newtheorem{example}[theorem]{Example}

%\theoremstyle{remark}
%\newtheorem{remark}[theorem]{Remark}

\makeatletter
\@namedef{ver@everyshi.sty}{}
\makeatother

\usepackage{tikz}
\usepackage{mathtools}
\usepackage{array}
\usepackage{tikz-cd}
\newcolumntype{C}{>{$}c<{$}} % math-mode version of "l" column type. For the list of balanced signatures.

\newcommand{\PP}{\mathbb{P}}
\newcommand{\Gr}{\mathrm{Gr}}
\newcommand{\GrLine}{\Gr(1, \PP^3)}
\newcommand{\SO}{\mathrm{SO}}

\newcommand{\rs}{RS$_1$\mbox{ }}

\newcommand{\RR}{\mathbb R}

\newcommand{\FF}{\mathbb F}
\newcommand{\PL}{\xi}

\newcommand{\R}{\mathbb{R}}

\newcommand{\arr}[2]{\begin{array}{#1} #2\end{array}}
\newcommand{\mat}[2]{\left[\!\!\arr{#1}{#2}\!\!\right]}

\newcommand{\xx}[1]{\left[#1\right]_{\times}}
\newcommand{\Macaulay}{\texttt{Macaulay2}}

\newcommand{\myqed}{}

% \usepackage{amsthm}
% \newtheorem{theorem}{Theorem}
% %\crefname{theorem}{theorem}{theorems}
% %\Crefname{theorem}{Theorem}{Theorems}
% \newtheorem{lemma}[theorem]{Lemma}
% \newtheorem{corollary}[theorem]{Corollary}
% \newtheorem{proposition}[theorem]{Proposition}
% \newtheorem{observation}[theorem]{Observation}
% \theoremstyle{remark}
% \newtheorem{remark}[theorem]{Remark}
% %\crefname{remark}{remark}{remarks}
% %\Crefname{remark}{Remark}{Remarks}
% \theoremstyle{definition}
% \newtheorem{definition}[theorem]{Definition}
% \newtheorem{example}[theorem]{Example}
% \newtheorem{problem}[theorem]{Problem}

 %% Latex cheat code for 8 pages
\setlength{\floatsep}{5pt plus2pt minus2pt}
\setlength{\textfloatsep}{5pt plus2pt minus2pt}
\setlength{\dblfloatsep}{5pt plus2pt minus2pt}
\setlength{\dbltextfloatsep}{5pt plus2pt minus2pt}

\newcommand\blfootnote[1]{%
  \begingroup
  \renewcommand\thefootnote{}\footnote{#1}%
  \addtocounter{footnote}{-1}%
  \endgroup
}
\bibliography{local,Albl-CVPR-2020,Albl-ICCV-2019}

\setkomafont{author}{\large}

\title{Order-One Rolling Shutter Cameras}

\author{Marvin Anas Hahn\\
Trinity College Dublin\\
17 Westland Row, Dublin
Ireland\\
{\small hahnma@maths.tcd.ie}
% For a paper whose authors are all at the same institution,
% omit the following lines up until the closing ``}''.
% Additional authors and addresses can be added with ``\and'',
% just like the second author.
% To save space, use either the email address or home page, not both
\and
Kathlén Kohn\\
KTH Royal Institute of Technology\\
Lindstedtsvägen 25, 10044 Stockholm, Sweden\\
{\small kathlen@kth.se}
\and
Orlando Marigliano\\
University of Genoa\\
Via Dodecaneso 35, 16146 Genoa,
Italy\\
{\small orlando.marigliano@edu.unige.it}
\and
Tomas Pajdla\\
CIIRC, Czech Technical University in Prague\\
Czech Republic\\
{\small pajdla@cvut.cz}
}

%% Cref fix block
\usepackage{amsthm}
\usepackage[capitalize]{cleveref}
\crefname{section}{Sec.}{Secs.}
\Crefname{section}{Section}{Sections}
\Crefname{table}{Table}{Tables}
\crefname{table}{Tab.}{Tabs.}
\newtheorem{theorem}{Theorem}
\newtheorem{lemma}[theorem]{Lemma}
\newtheorem{corollary}[theorem]{Corollary}
\newtheorem{proposition}[theorem]{Proposition}

\theoremstyle{remark}
\newtheorem{remark}[theorem]{Remark}
\theoremstyle{definition}
\newtheorem{definition}[theorem]{Definition}
\newtheorem{example}[theorem]{Example}
\newtheorem{problem}[theorem]{Problem}
%% End of Cref fix

\begin{document}

\twocolumn[{%
\renewcommand\twocolumn[1][]{#1}%
\maketitle
\centering
\begin{tabular}{cccc} 
        \fbox{\begin{minipage}[b][3.7cm][c]{2cm} \centering
        \includegraphics[width=2.0cm]{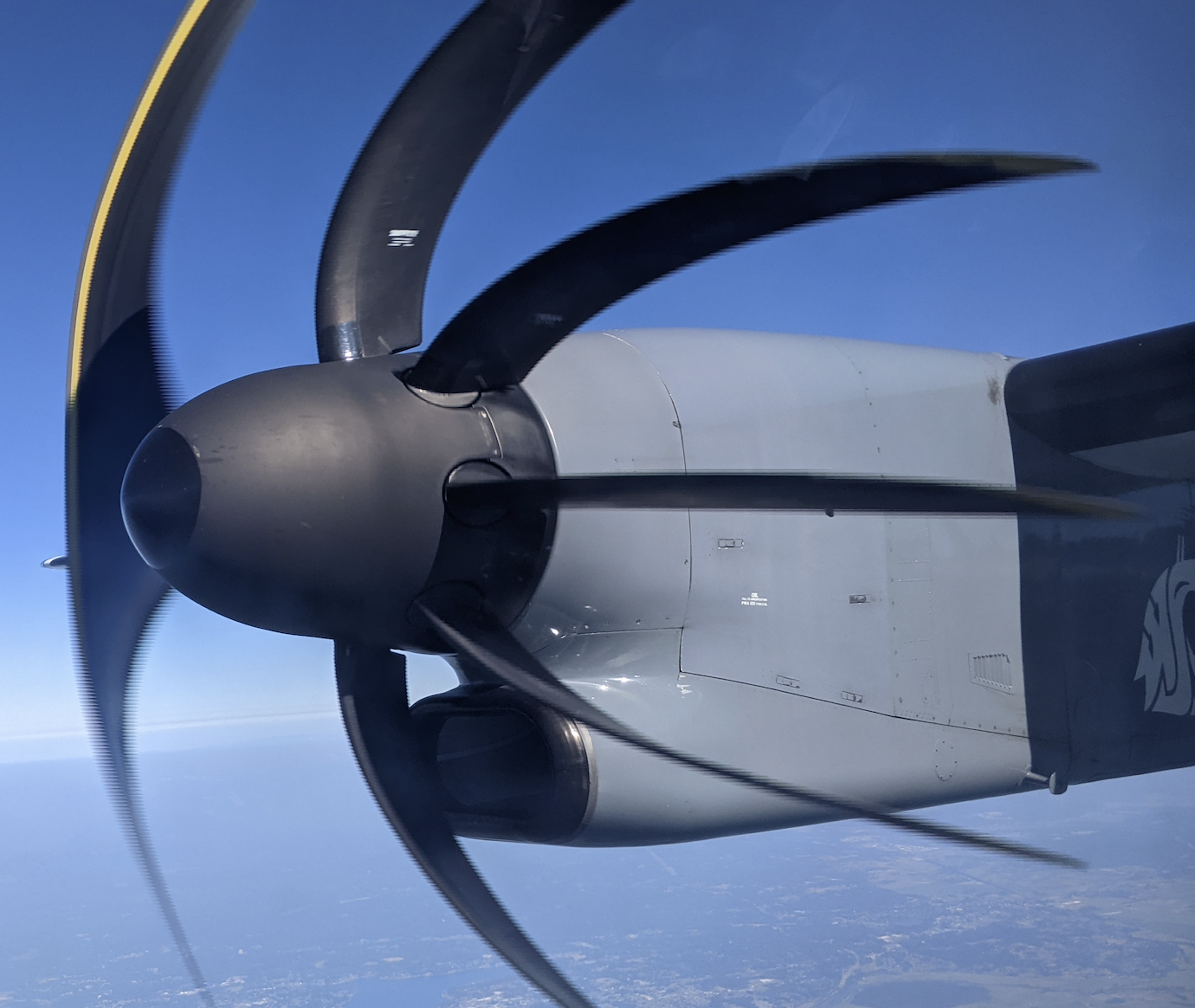}\\
        \includegraphics[width=1.9cm]{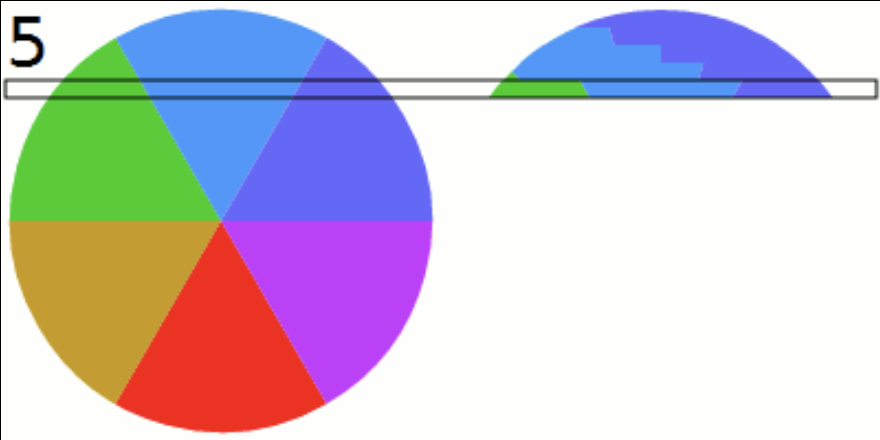}\\[1mm]
        \includegraphics[width=1.9cm]{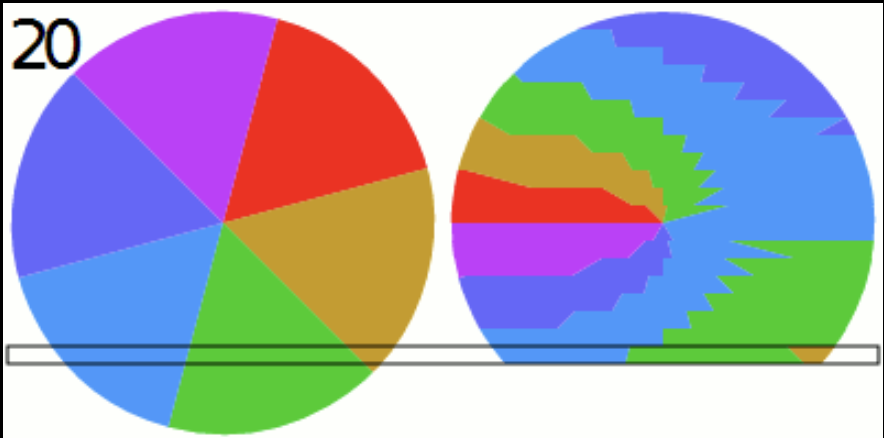}
        \end{minipage}}&
        \fbox{\includegraphics[height=3.7cm]{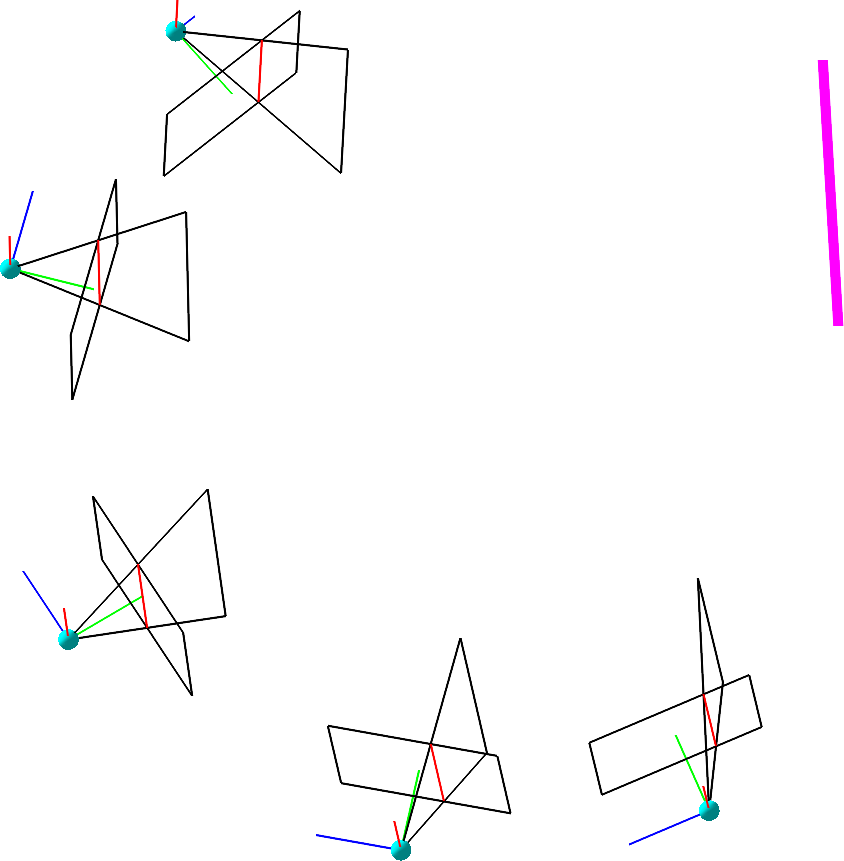}}&
        \fbox{\begin{minipage}[b][3.7cm][c]{2.2cm}
               \includegraphics[height=2.2cm]{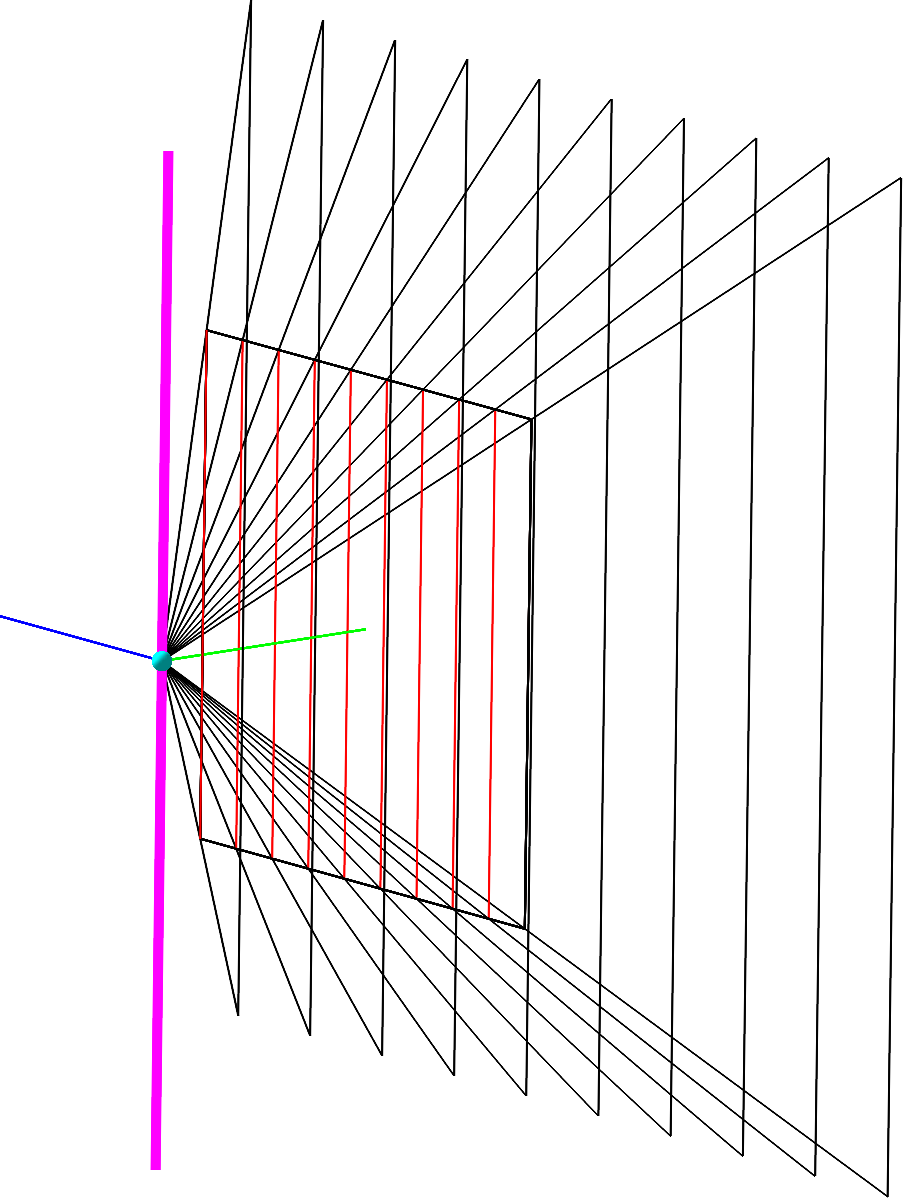}
              \end{minipage}}&
        \fbox{\includegraphics[height=3.7cm]{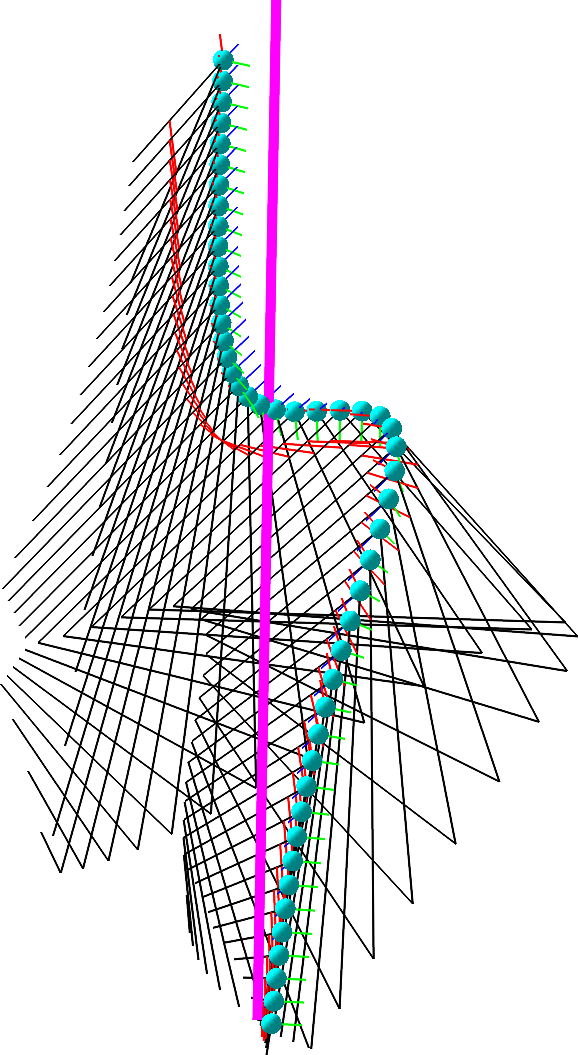}}\\
        (a) & (b) & (c) & (d)
\end{tabular}
% \vspace{-2em}
%\captionsetup{hypcap=false}
\captionof{figure}{(a) General rolling shutter cameras see points in space multiple times ({\small en.wikipedia.org/wiki/Rolling\_shutter}). (b) Order-one rolling shutter cameras see points in space exactly once. Their rolling planes intersect in a line. Examples include (c) perspective cameras and (d) some Straight-Cayley cameras moving on a twisted cubic. \vspace{1em}}
\label{fig:teaser}
}]
\begin{abstract} \vspace{-8mm}

Rolling shutter (RS) cameras \blfootnote{
\tiny{Authors are listed alphabetically. 
KK and OM were supported by Göran Gustafssons Stiftelse and Wallenberg AI, Autonomous Systems and Software Program (WASP) funded by Knut and Alice Wallenberg Foundation.
OM was supported by the EU (Project 101061315–MIAS–HORIZON-MSCA-2021-PF-01) and
TP  by  OPJAK CZ.02.01.01/00/22\_008/0004590 Roboprox.}}
dominate consumer and smartphone markets. Several methods for computing the absolute pose of RS cameras have appeared in the last 20 years, but the relative pose problem has not been fully solved yet. We provide a unified theory for the important class of order-one rolling shutter (RS$_1$) cameras. These cameras generalize the perspective projection to RS cameras, projecting a generic space point to exactly one image point via a rational map. We introduce a new back-projection RS camera model, characterize RS$_1$ cameras, 
construct explicit parameterizations of such cameras, and determine the image of a space line. We classify all minimal problems for solving the relative camera pose problem with linear RS$_1$ cameras and discover new practical cases. 
Finally, we show how the theory can be used to explain RS models previously used for absolute pose computation.
\end{abstract}    
%\begin{figure}
%    \vspace*{-1cm}
%    \centering
%    \begin{tabular}{cccc} 
%        \fbox{\begin{minipage}[b][3.7cm][c]{2cm} \centering
%        \includegraphics[width=2.0cm]{Figs/Propellor_with_rolling-shutter_artifact.png}\\
%        \includegraphics[width=1.9cm]{Figs/RS-effect-1.png}\\[1mm]
%        \includegraphics[width=1.9cm]{Figs/RS-effect-2.png}
%        \end{minipage}}&
%       \fbox{\includegraphics[height=3.7cm]{Figs/simpl-model-figs/25.png}}&
%        \fbox{\begin{minipage}[b][3.7cm][c]{2.2cm}
%               \includegraphics[height=2.2cm]{Figs/simpl-model-figs/24.png}
%              \end{minipage}}&
%        \fbox{\includegraphics[height=3.7cm]{Figs/simpl-model-figs/22-2.png}}\\
%        (a) & (b) & (c) & (d)
%    \end{tabular}
%    \caption{(a) General rolling shutter cameras see points in space multiple times ({\small en.wikipedia.org/wiki/Rolling\_shutter}). (b) Order-one rolling shutter cameras see points in space exactly once. Their rolling planes intersect in a line. Examples are (c) perspective cameras, (d) some Straight-Cayley cameras moving on a twisted cubic.} 
%    \label{fig:teaser}
%    \vspace*{-1cm}
%\end{figure}

\section{Introduction}
Rolling shutter (RS) cameras~\cite{Meingast-OMNIVIS-2005} dominate consumer and smartphone markets thanks to affordability, enhanced resolution, and rapid frame rates. Unlike global shutter cameras (GS), RS cameras capture images sequentially line-by-line, causing image distortions if the camera moves during capture~(\cref{fig:teaser}a). The distorted RS images do not match the geometry of GS cameras. Thus, for non-negligible movements, the developed multi-view geometry for GS cameras cannot be applied. So currently, accurate multi-view geometry from moving cameras requires GS cameras. Since moving cameras are omnipresent (e.g., every modern car is equipped
with cameras and cameras are often mounted on drones), %and thus are unsuitable for standard GS computer vision algorithms.  Hence, a 
new theory and algorithms must be developed for RS cameras. 
Many partial results appeared in the last 20 years~\cite{ait-aider_simultaneous_2006,ait-aider_structure_2009,BatogGP10,hedborg_rolling_2012,magerand_global_2012,oth_rolling_2013,saurer_rolling_2013,saurer_minimal_2015,albl_r6p_2015,albl_rolling_2016,albl_degeneracies_2016,dai2016rolling,ito_self-calibration-based_2017,zhuang_rolling-shutter-aware_2017,albl_linear_2018,lao_robust_2018,purkait_ackermann_2018,vasu_occlusion-aware_2018,Albl-PAMI-2019,Zhuang_2019_CVPR,albl2020two,DBLP:journals/ijautcomp/FanDH23,DBLP:journals/corr/abs-2112-06170,DBLP:journals/ijcv/LaoAB21,DBLP:conf/iccv/FanD21,DBLP:conf/cvpr/ZhongZS21,DBLP:journals/ivc/FanDZW22,DBLP:conf/cvpr/BaiSB22,DBLP:conf/cvpr/LiaoQXZL23,DBLP:journals/pami/FanDL23,DBLP:conf/wacv/McGriffMAD24,DBLP:journals/corr/abs-2408-05409,DBLP:conf/eccv/NiuCZLJZ24}. Here, we provide a unified theory and
extend multi-view GS geometry to the important class of moving order-one RS cameras. 

General RS cameras project a point in space into many image points (\cref{fig:teaser}a).  Perspective cameras project a point in space by a linear rational map to exactly one image point.
%It is thus natural to study generalizations of perspective cameras to RS cameras that project a point in space to exactly one image point via a rational map. We call those {\em Order-one Rolling Shutter Cameras} ($RS_1$).
% \Cref{fig:teaser}b.
Thus, it is natural to study a generalization of perspective cameras: RS cameras that project a point in space to exactly one image point via a rational map. We call these \emph{Order-one Rolling Shutter (RS$_1$)} cameras.
RS$_1$ are relatively simple but can still explain some common scenarios. For instance, they can be used  when RS images are taken on vehicles moving at constant speed on a straight line parallel to the image plane, which is common for cars, trains, planes, etc. 
Moreover, every flatbed scanner is an RS$_1$ camera, and  satellite imaging is often done with push-broom scanners~\cite{DBLP:conf/eccv/HartleyG94}, which are very close to RS$_1$ cameras.
\iffalse
However, the sole requirement to see every point in space exactly once is rather weak. There are non-constructible generalized cameras consisting of sets of lines (spreads), partitioning $\PP^3$, that can be constructed by transfinite induction~\cite{Knorr}. Such cameras are not practical. 
\fi
%\textcolor{gray}{In this paper, we focus on developing the theory for practical {\em Order-one Rolling Shutter Cameras} ($RS_1$), \Cref{fig:teaser}b, that see almost every point in space exactly once and are generated by rational polynomial maps.}

\subsection{Contribution and main results}

We present a systematic study of RS and \rs cameras.
%and present order-one cameras that see almost every point in space exactly once. We also identify \rs cameras that exhibit a rational image projection map.  \rs cameras can be understood as a natural generalization of the perspective projection obtained by a static RS camera. 

In~\cref{sec:RS-Models}, we introduce a new back-projection model of RS cameras that provides explicit parameterizations of RS camera-rays via a map $\Lambda$ \eqref{eq:LambdaMap} and of RS camera rolling planes via a map $\Sigma$ \eqref{eq:SigmaMap}. Our model connects the geometry of rays in space with the image projection maps of RS cameras.
The map $\Lambda$ assigns to every image point the ray in space that projects to the point.
In general, this map has no inverse for RS cameras that see space points several~times.

\rs cameras are precisely those where such an inverse picture-taking map $\Phi=\Lambda^{-1}$ exists and is rational.
We analyze these maps in \cref{sec:Order-One}.
%In~\Cref{sec:Order-One}, we analyze RS cameras that see a generic (i.e., sufficiently random) point in space exactly once in the image plane $\mathbb{P}^2$. For that to happen, two conditions need to be satisfied: (1) for a generic point in $\PP^3$, there has to be a unique camera ray passing through it, and (2) the camera ray has to correspond to a unique point in the plane via the map $\Lambda$. 
We show (\cref{thm:rs1cameras}) that 
%satisfying those conditions, where we additionally impose that $\Lambda$ is a rational polynomial map,
all rolling planes of an
\rs camera
intersect in a space line $K$. Furthermore,
the rolling planes map $\Sigma$
of such a camera is birational
and the camera center moves on a curve $\mathcal{C}$ that either equals $K$ or 
intersects $K$ in $\deg(\mathcal{C})-1$ points.  

%The characterization in~\Cref{sec:Order-One} is non-constructive. Hence, 
In~\cref{sec:Building-RS-cameras}, we construct explicit parameterizations of \rs cameras. Explicit parameterizations open a way to identify camera parameters from image measurements.
We give the dimensions of several \rs parameter spaces needed to characterize minimal problems (\cref{sec:Min-Prob}). We analyze  the %interesting 
special cases of (1) constant rotation (\cref{ssec:const-rot}) and (2) pure translation with a constant speed (\cref{sec:linearCameras}), which were studied in~\cite{dai2016rolling} under the name \emph{linear RS cameras}. 

\rs cameras give rise to picture-taking maps $\Phi$. 
In~\cref{sec:Image-of-a-line}, we give the degree of $\Phi$ for all parameterizations of \rs cameras from~\cref{sec:Building-RS-cameras} (see \cref{thm:degreePhi}), and show that the image of a line in space is a rational image curve of degree $\deg(\Phi)$ that passes through a special point at infinity $\deg(\Phi)-1$ many times. 
This means that the image of a space line contains precisely one further point at infinity.
%That point 
%can be recovered from the image of a single line and 
%can be used
We use that point
to simplify the camera relative pose minimal problems (see 
\cref{sec:Min-Prob}). 

In~\cref{sec:Min-Prob}, we present {\em all minimal problems} for computing the relative poses of linear \rs cameras from correspondences between multiple images of points and lines (with potential incidences) under complete visibility assumptions~\cite{DBLP:conf/cvpr/HrubyDLP22}. We show that there are exactly {\em 31 minimal problems for 2, 3, 4, and 5 cameras} (\cref{fig:minmtwo}). All these minimal problems are new. We also show that no minimal problems exist for a single camera and more than 5 cameras. For every minimal problem, we compute the number of solutions (degree). There are several practical minimal problems for two cameras: (i) Three problems with small degrees (28, 48, 60) and a small number of image features (e.g., 7 or 9 points, or 3 points + 2 lines) are suitable for constructing efficient symbolic-numeric minimal solvers~\cite{kukelova_automatic_2008,DBLP:conf/cvpr/LarssonOAWKP18,Martyushev_2022_CVPR}. (ii) Two important problems with 7 and 9 points have moderate degrees (140, 364) and thus are suitable for solving by optimized homotopy continuation~\cite{DBLP:journals/pami/FabbriDFRPTWHGKLP23,DBLP:conf/cvpr/HrubyDLP22,breiding2018homotopycontinuationjl}. Similarly, there is a practical problem for three cameras with degree 160. Minimal problems for more than three cameras are much harder and impractical unless they could be decomposed into simple problems~\cite{DBLP:conf/cvpr/HrubyKDOPPL23}. This is an open problem for the future. 

\cref{sec:Straight-Cayley} shows when a practical ``Straight-Cayley'' (SC) RS camera model~\cite{Albl-PAMI-2019} produces  \rs cameras.
The SC model is important since it leads to tractable minimal RS camera absolute pose problems~\cite{Albl-PAMI-2019}.  We provide explicit general constraints on the CS model and concrete examples. This demonstrates how the theory for \rs cameras developed in this paper can be used to understand existing practical RS camera models. 

\subsection{The most relevant previous work}
%RS papers study individual RS camera models and applications. \cite{hedborg2011structure,lao2020rolling,magerand2012global,saurer2015minimal,albl2016rolling,kukelova2020minimal,kukelova2018linear,bapat2018rolling}.
%Here, we review the most relevant previous work related to our work. 
\cite{dai2016rolling} formulates the relative pose problems for two RS cameras for several RS models,
%including our linear \rs camera from~\Cref{sec:Min-Prob}, 
but order-one cameras were not considered and camera order was not investigated. We show that general linear RS cameras have order two and that they have order one exactly when the motion line $\mathcal{C}$ is parallel to the camera projection plane. The uniform RS camera model of~\cite{dai2016rolling} uses the Rodrigues parameterization~\cite{hartley_multiple_2003} of rotation, which is not algebraic. Hence, \cite{dai2016rolling} replaces rotation matrices by their linearization to arrive at an approximate algebraic model. We use the Cayley rotation parameterization~\eqref{eq:Cayley}, which is algebraic, and we show when it produces \rs cameras. \cite{dai2016rolling}  observes that there is an 11-point minimal relative pose problem for two order-two linear RS cameras but does not present the solver. Instead, a rather impractical 20-point linear solver is suggested. We consider an \rs model and use all the algebraic constraints to get a solvable 9-point minimal relative \rs camera problem. We also classify all minimal problems for this model for arbitrarily many cameras (\cref{fig:minmtwo}) and identify several practical RS camera relative pose problems.

\cite{albl2015r6p,Albl-PAMI-2019} developed efficient absolute pose minimal problems with the Straight-Cayley model. We show that the Straight-Cayley model is not only efficient but also very general and accounts for a large class of RS cameras, ranging from perspective cameras to many order-one cameras suitable for practical applications. 

\cite{PONCE201762}  studies linear congruences to model ray arrangements of generalized cameras~\cite{DBLP:journals/ijcv/SeitzK02,DBLP:journals/ijcv/Pajdla02,DBLP:conf/cvpr/Ponce09}. It characterizes order-one congruences and together with its follow-up paper \cite{TragerSCHP16} introduces ``photographic camera'' projection maps that are rational.  They study special two-slit cameras but do not relate the congruences and maps to RS cameras. 
%They also do not deal with any camera parameter estimation from images by, e.g., minimal problem computation. 
We extend \cite{PONCE201762,TragerSCHP16} to real problems arising with RS cameras. 
%\subsection{Main concepts}
\subsection{Concepts used in the main paper}
%Here, we give the main concepts used in the main
%paper.
We work with cameras that take pictures of points in projective $3$-space $\PP^3$ and produce points in the projective image plane $\PP^2$.
We often identify planes $\Sigma$ in $\PP^3$ with points $\Sigma^\vee$ in the dual space $(\PP^3)^\ast$.
The Grassmannian $\GrLine$ is the set of lines in $\PP^3$. 
For a line $L \in \GrLine$, we write $L^\vee \subset (\PP^3)^\ast$ for its dual line.
The span of  projective subspaces $X$ and $Y$ is denoted  by  $X\vee Y$. 
An \emph{algebraic variety} is a solution set of polynomial equations.
The \emph{Zariski closure} of a set is the smallest algebraic variety containing the set.
The \emph{degree} of an algebraic curve $\mathcal{C}$ in $\PP^3$ is the number of complex points in its intersection with a generic~plane. We indicate rational functions, that are possibly not defined everywhere, via dashed arrows $\dashrightarrow$. 
A \emph{birational map} is a rational function that is bijective onto its image almost everywhere (i.e., outside of a proper subvariety of the domain).
Additional concepts, definitions, lemmas, and the proofs are included in SM.
\section{Rolling shutter camera model}\label{sec:RS-Models}

An \emph{RS camera} is defined by moving a perspective camera with center $C$ and projection plane $\Pi$ in the space $\PP^3$ while scanning the projection plane $\Pi$ along a pencil $\mathcal{R}$ of (parallel) lines.  The lines in this pencil are called \emph{rolling image lines}. They capture the geometry of the rolling shutter effect. In applications, $C$, $\Pi$ and lines $r \in \mathcal{R}$ are functions of time. However, typically, $\mathcal{R}$ is in one-to-one correspondence with a time interval. Thus, we use $r$ to parameterize the camera motion and write $C(r)$ and $\Pi(r)$.

\begin{figure}[htb]
    \centering
\includegraphics[width=0.78\linewidth]{Figs/RSoverview.pdf}    \caption{Overview of RS notation.}
    \label{fig:RSoverview}
\end{figure}

\vspace*{-2mm}
Each rolling line $r \in \mathcal{R}$ generates a rolling plane $\Sigma(r)$ that is the preimage of $r$ for the perspective camera with center $C(r)$ and projection plane $\Pi(r)$. The points of $\Sigma(r) \subset \PP^3$ are projected onto the rolling line $r \subset \PP^2$ along the pencil $\mathcal{L}(r)$ of lines in $\Sigma(r)$ that pass through the center $C(r)$.
The union $\mathcal{L} = \bigcup_{r \in \mathcal{R}} \mathcal{L}(r)$ of all pencils $\mathcal{L}(r)$ forms the set of rays of the RS camera; see \cref{fig:RSoverview}.

To specify an RS camera model, we need to parameterize the pencil $\mathcal{R}$ of  rolling lines in $\Pi$ and couple it with a parameterization of the motion of the perspective camera in $\PP^3$. We set the coordinate system in the  plane~$\PP^2$ so that the rolling lines are parallel to the $y$-axis.  The rolling-lines pencil $\mathcal{R}$ is then parameterized by the bijective morphism 
\begin{align} \label{eq:rho} 
    \rho: \PP^1 &\to \mathcal{R}, \\[-1mm]
    (v:t) &\mapsto (0:1:0) \vee (v:0:t) \equiv (-t:0:v) \in (\PP^2)^*. \nonumber
  %  \KK{(s:t), (u:v)  \longmapsto u \infty + v (0:s:t:t) = (ut:vs:vt:vt) }
\end{align}

Considering the calibrated scenario~\cite{HZ-2003}, the camera position and  orientation are defined on the affine chart $\mathbb{R}^1$ where $t \neq 0$. 
They are described by $C(\tfrac{v}{t}) \in \R^3$ and $R(\tfrac{v}{t}) \in \SO(3)$.
This gives the corresponding projection matrix $P(\tfrac{v}{t}) := R(\tfrac{v}{t})\,[ I_3 \mid -C(\tfrac{v}{t})] \in \mathbb{R}^{3 \times 4}$ that represents a linear map $\PP^3 \dashrightarrow \PP^2$. 
Now, we can take the preimages of the rolling lines in $\mathcal{R}$ and obtain rolling planes in $\PP^3$:
\begin{align} \label{eq:SigmaMap}
        \Sigma^\vee \!: \mathbb{R}^1 \to (\PP^3)^*, 
    \tfrac{v}{t} \mapsto \Sigma(\tfrac{v}{t})^\vee = (1:0:-\tfrac{v}{t}) \!\cdot\! P(\tfrac{v}{t}). 
  %  \KK{(s:t), (u:v)  \longmapsto u \infty + v (0:s:t:t) = (ut:vs:vt:vt) }
\end{align} 
%We now homogenize the entries of $R(r)$ such that the matrix is well-defined up to global scaling, and denote it by $R(v:t) \in \PP(\mathbb{R}^{3 \times 3})$.
%Similarly, we homogenize the entries of $C(r)$ such that they become rational functions of degree zero in $(v:t)$, and write $[I_3 \mid -C(v:t)] \in \PP(\mathbb{R}^{3 \times 4})$.
%We use the dashed arrow $\dashrightarrow$ in the definition of the map $\Sigma$ since we want to account for rational maps that are defined almost everywhere but not everywhere.
Now the set $\mathcal{L}$ of camera rays is determined: it is the union of the pencils $\mathcal{L}(\tfrac{v}{t}) := \{ L \in \GrLine \mid C(\tfrac{v}{t}) \in L \subset \Sigma(\tfrac{v}{t}) \}$.
A natural global parametrization of this union of pencils is given by taking preimages of image points on $r = \rho(\tfrac{v}{t}:1)$ under the projection matrix $P(\tfrac{v}{t})$.
To  parameterize the points on $r$, we intersect $r$ with another conveniently chosen set of lines, reflecting pixels on images. For that, we make the standard choice, using lines parallel to the $x$-axis:
\begin{align}
    \nu: \PP^1 &\to (\PP^2)^*, \\[-1mm]
    (u:s) &\mapsto (1:0:0) \vee (0:u:s) \equiv (0:-s:u) \in (\PP^2)^*. \nonumber
  %  \KK{(s:t), (u:v)  \longmapsto u \infty + v (0:s:t:t) = (ut:vs:vt:vt) }
\end{align}
An image point on the rolling line $r = \rho(v:t)$ is obtained by intersecting $r$ with the line $\nu(u:s)$, which is captured by the birational map 
\begin{align}\label{small-phi}
    \varphi:  \PP^1 \!\times\! \PP^1 \dashrightarrow \PP^2, 
    ((v:t), (u:s)) \mapsto (s v: u t : s t). 
  %  \KK{(s:t), (u:v)  \longmapsto u \infty + v (0:s:t:t) = (ut:vs:vt:vt) }
\end{align}
This map is not defined at the point $((1:0),(1:0))$, where both lines $\rho(1:0)$ and $\nu(1:0)$  equal  the line at infinity. 
We obtain all camera rays in $\mathcal{L}$ by taking the preimages in $\PP^3$ of the image points with affine coordinates $(\tfrac{u}{s},\tfrac{v}{t})$ under the projection matrix $P(\tfrac{v}{t})$ (see~\cref{ssec:Lambda} for a derivation):
%The map $\varphi$ connects the parameterization $\rho$ of rolling planes $\Sigma$ and the parameterization $\nu$ of rays of $\mathcal{L}$ to the actual image on the imaging chip of the perspective camera that is a subset of $\PP^2$ with pixels indexed by affine coordinates $(\tfrac{v}{t},\tfrac{u}{s})$.
%Lifting the map $\varphi$ into $\PP^3$ maps each image point on the rolling line $r = \rho(v:t)$ to its camera ray in the pencil $\mathcal{L}(r)$.
%That way, we obtain the following parameterization of the set $\mathcal{L}$ of camera rays 
\begin{align} \label{eq:LambdaMap}
    \Lambda: \mathbb{R}^1 \times \mathbb{R}^1 &\to \GrLine, \\[-2mm]
    (\tfrac{v}{t},\tfrac{u}{s}) &\mapsto  \mat{cc}{I_3\\-C(\tfrac{v}{t})^\top} 
    \xx{R(\tfrac{v}{t})^\top\mat{r}{\tfrac{v}{t} \\ \tfrac{u}{s} \\ 1}} \!\!\!\!
    [I_3 \mid -C(\tfrac{v}{t})].
    \nonumber
    %  \KK{(s:t), (u:v)  \longmapsto u \infty + v (0:s:t:t) =(ut:vs:vt:vt) }
\end{align}
Here, given a vector $V \in \mathbb{R}^3$, we write $[V]_\times$ for the skew-symmetric $3 \times 3$ matrix that represents the linear map that takes the cross product with $V$.
Then, $\Lambda(\tfrac{v}{t},\tfrac{u}{s})$ is a skew-symmetric $4 \times 4$ matrix whose entries are the dual Plücker coordinates of the camera ray that the camera maps to the image point $(\tfrac{v}{t},\tfrac{u}{s})$, i.e., 
the $(i,j)$-th entry of $\Lambda(\tfrac{v}{t},\tfrac{u}{s})$ is the determinant of the submatrix with columns $(i,j)$ of the $2 \times 4$ matrix whose two rows are $(0:1:-\tfrac{u}{s}) \cdot P(\tfrac{v}{t})$ and $\Sigma(\tfrac{v}{t})^\vee$.

The set $\mathcal{L}$ of all camera rays captures most of the essential geometry of the camera, while its parametrization $\Lambda$ describes the concrete imaging process onto an actual picture plane \cite{ponce2017congruences}. 
Note that we used the term `camera rays' for lines through the camera center, and not for actual rays in the sense of half-lines.
This a good model for RS cameras with a view of less than 180 degrees. For more general modeling, we'd have to consider half-lines with orientation.

\begin{remark}
    \label{rem:groupAction}
$G := \{ [\begin{smallmatrix}
        R & t \\ 0 & \alpha 
    \end{smallmatrix} ]  \mid  
    R \in \mathrm{SO}(3), t \in \R^3, \alpha \in \R \setminus \{ 0 \} 
    \}$ is the 
scaled special Euclidean group on $\RR^3$.
It acts on the space of RS cameras. 
For a camera given by the projection-matrix map $P: \mathbb{R}^1 \to \R^{3 \times 4}, \tfrac{v}{t} \mapsto P(\tfrac{v}{t})$ and a group element $g \in G$, the action is defined via
\begin{align}
    g . P := \left(
    \mathbb{R}^1 \to \R^{3 \times 4}, \; \tfrac{v}{t} \mapsto P(\tfrac{v}{t}) \cdot g
    \right).
\end{align}
When acting simultaneously on cameras and the world points $X \in \PP^3$ via $g.X := g^{-1} \cdot X$, the imaging process stays invariant.
Indeed, writing $g . \Lambda$ for the map \eqref{eq:LambdaMap} associated with the transformed projection $g.P = g.(R[I_3|-C])$, 
we have that the line $(g.\Lambda)(\tfrac{v}{t},\tfrac{u}{s})$ is the image of the line $\Lambda(\tfrac{v}{t},\tfrac{u}{s})$ under the $G$-action on $\PP^3$.
This means that 3D reconstruction using our camera model is only possible up to a proper rigid motion and a non-zero scale, when we do not fix any RS camera's scale (e.g., by knowing  \mbox{ray distances).}
\end{remark}

%Again, we use the dashed arrow $\dashrightarrow$ in the definition of the map $\Lambda$ to indicate that it is not defined in $((u,0),(v,0))$ for the same reason why $\varphi$ is not defined there.  
%The map $\Psi$ aligns the codomain of $\varphi$ with the affine plane of the moving perspective camera. Point $((u:s),(v:t)) = (ut:sv:st)$ corresponds to the affine point $[u,v]$ in the affine coordinate plane of the perspective camera.

\section{Order-one cameras}\label{sec:Order-One}
In this article, we analyze RS cameras that see a generic (i.e., sufficiently random) point in $3$-space exactly once, i.e.,
 a generic space point appears exactly once on the image plane $\mathbb{P}^2$.
For that to happen, two conditions need to be satisfied:
1) For a generic  point in $\PP^3$, there has to be a unique camera ray in $\mathcal{L}$ passing through it.
2) The camera ray has to correspond to a unique point on the image plane via the map $\Lambda$.
We  classify all RS cameras satisfying those conditions, where we additionally impose the   imaging process to be algebraic, i.e., we require $\Lambda$ to be a rational map.

The rationality of the map $\Lambda$ implies that both the rolling planes map $\Sigma$ and the center-movement map $C: \mathbb{R}^1 \to \mathbb{R}^3$ are rational (\cref{lem:LambdaRationalImplications}).
In that case, the \emph{center locus} $\mathcal{C}$, i.e., the Zariski closure in $\PP^3$ of the image of $C$, is either a point or a rational curve.
Moreover,  the Zariski closure $\overline{\mathcal{L}}$ of the set $\mathcal{L}$ of camera rays inside the Grassmannian $\GrLine$ is a surface (or a one-dimensional pencil in the degenerate case when the center locus $\mathcal{C}$ is a single point and all rolling planes are equal); see~\cref{union-dimension-two}.
Such a surface in $\GrLine$ is classically called a \emph{line congruence} \cite{jessop2001treatise}.
An important invariance of such a congruence is its order. The \emph{order} is the number of lines on the congruence that pass through a generic space point.
Hence, we are interested in RS cameras whose associated congruence $\overline{\mathcal{L}}$ has order one.

\begin{definition}
    We say that a RS camera has \emph{order one} if its associated congruence $\overline{\mathcal{L}}$ has order one and its parametrization $\Lambda$ is birational; in other words, if it projects a generic point in space to exactly one image point via a rational map. We shortly write \emph{\rs camera} for a RS camera of order one.
\end{definition}

For \rs cameras, the image projection is a rational function $\Phi: \PP^3 \dashrightarrow \PP^2$, which we can explicitly describe as follows:
The congruence $\overline{\mathcal{L}}$ has order one if and only if 
there is a map $\Gamma: \PP^3 \dashrightarrow \overline{\mathcal{L}}$ from (a Zariski dense subset of) $\PP^3$ to the camera rays that assigns to a visible point $X$ the unique ray that sees it.
The map $\Lambda$ is birational if and only if it is rational and (almost everywhere) invertible, 
which means that the camera ray $\Gamma(X)$ corresponds to the unique image point $\Lambda^{-1}(\Gamma(X))$.
The inverse map $\Lambda^{-1}$ is rational as well. Thus,
using the embedding $\mathbb{R}^1 \times \mathbb{R}^1 \subset \PP^1 \times \PP^1$, 
taking a picture of a generic space point $X \in \PP^3$ 
is the rational map %yields an image point as follows:
%We take a picture of a space point $X \in \PP^3$ by using the map $\Gamma: \PP^3 \dashrightarrow \overline{\mathcal{L}}$. It is practical to now map the congruence $\overline{\mathcal{L}}$ onto an image plane $\PP^2$. The map $\Lambda$ assigns a camera ray to $((u:s),(v:t))$ and the map $\varphi$ assigns a point in $\PP^2$ to $((u:s),(v:t))$. If $\Lambda$ was (almost everywhere) invertible, we could naturally construct a projection from the space $\PP^3$ to the \rs image $\PP^2$
\begin{align}
    \label{eq:PhiMap}
    \begin{split}
        \Phi: \PP^3 \dashrightarrow \PP^2, \quad 
    X  \longmapsto \varphi(\Lambda^{-1}(\Gamma(X))).
    \end{split}
\end{align}
\begin{example}
    A familiar \rs camera is the static pinhole camera. Its congruence $\overline{\mathcal{L}}$ consists of all lines passing through the fixed camera center $C$. 
The map $\Gamma$ assigns to each space point $X \neq C$  the camera ray spanned by $C$ and $X$. 
That ray intersects the static projection plane $\Pi \cong \PP^2$ in the unique point $\Phi(X)$. 
\end{example}

\begin{theorem}\label{thm:rs1cameras}
    Consider a RS camera whose congruence-parametrization map $\Lambda$ is rational. 
    The camera has order one if and only if
    the intersection of all rolling planes $\Sigma(\tfrac{v}{t})$ is a line $K$, the rolling planes map $\Sigma$ is birational, and its center locus $\mathcal{C}$ is one of the following:
    \\  I.~ $\mathcal{C}$ is a rational curve \& meets $K$ in $\deg \mathcal{C}-1$ many points,
    \\ II.~  or $\mathcal{C}=K$,  
    \\III. or $\mathcal{C}$ is a point on $K$. 
\end{theorem}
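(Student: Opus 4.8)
The plan is to characterize order-one rolling shutter cameras by reducing the geometric condition ``order one'' to statements about the two rational maps $\Sigma$ and $C$ built into the model, and then reading off the three configurations for the center locus.

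\emph{From the congruence to the maps.} First I would use \cref{lem:LambdaRationalImplications} to reduce everything to the setting where $\Sigma$ and $C$ are rational, so that $\mathcal{C}$ is a point or a rational curve and $\overline{\mathcal{L}}$ is a line congruence (a surface in $\GrLine$). The key observation is that a ray $L \in \mathcal{L}$ necessarily lies in some rolling plane $\Sigma(\tfrac{v}{t})$ and passes through the corresponding center $C(\tfrac{v}{t})$. I would analyze the incidence variety $\{((v:t), X) : X \in \Sigma(\tfrac{v}{t})\} \subset \PP^1 \times \PP^3$ and its projection to $\PP^3$. A generic space point $X$ has a unique ray through it precisely when (a) there is a unique parameter $\tfrac{v}{t}$ with $X \in \Sigma(\tfrac{v}{t})$, and (b) inside that one plane there is a unique line of $\mathcal{L}(\tfrac{v}{t})$ through $X$, which is automatic since all such lines pass through the single center $C(\tfrac{v}{t})$ — so (b) holds as long as $C(\tfrac{v}{t}) \ne X$ generically. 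Thus order one is equivalent to: the generic space point lies on exactly one rolling plane, i.e. the rational map $\Sigma \colon \PP^1 \dashrightarrow (\PP^3)^\ast$ has image a surface whose generic point has a unique preimage, equivalently $\Sigma$ is birational onto a surface. A one-parameter family of planes whose dual-space image is a curve sweeps out, in $\PP^3$, either a cone, a tangent developable, or — and this is the case forced by birationality plus the pencil structure coming from \eqref{eq:rho} — the rolling planes all contain a common line. Here I would invoke the classical fact that a pencil (or more generally a rational curve in $(\PP^3)^\ast$ that is a line) of planes has a common axis; since $\rho$ already exhibits $\mathcal{R}$ as a pencil in $(\PP^2)^\ast$ and $\Sigma^\vee = (1:0:-\tfrac vt)\cdot P(\tfrac vt)$ is built from it, the generic geometry is that the $\Sigma(\tfrac vt)^\vee$ trace out a \emph{line} in $(\PP^3)^\ast$, whose dual is the common axis line $K = \bigcap \Sigma(\tfrac vt)$. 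That every point lies on exactly one plane of such a pencil (away from $K$) is the content of birationality of $\Sigma$.

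\emph{Locating the center curve.} Once $K$ is fixed, I would study how $\mathcal{C}$ sits relative to $K$. Consider a generic point $X$; it lies on the unique plane $\Sigma(\tfrac vt)$ through $X$ and $K$, and the ray $\Gamma(X)$ is $C(\tfrac vt) \vee X$. For $\Lambda$ (equivalently $\Phi$) to be birational we need the assignment $X \mapsto (\text{parameter } \tfrac vt, \text{ point on rolling line})$ to be generically invertible; the parameter part is handled by birationality of $\Sigma$, and the second coordinate is recovered by intersecting the ray with the projection plane, which is a linear (hence birational on each fiber) operation. So the remaining constraint is purely about $\mathcal{C}$: the map $C$ must be compatible with the pencil structure, i.e. $C(\tfrac vt)$ must lie in $\Sigma(\tfrac vt)$ for every $\tfrac vt$ — this is true by construction, since $\Sigma(\tfrac vt)$ is the back-projection plane of the perspective camera centered at $C(\tfrac vt)$. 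Now intersect $\mathcal{C}$ with $K$: a generic plane of the pencil meets $\mathcal{C}$ in $\deg\mathcal{C}$ points, one of which is $C(\tfrac vt)$; the other $\deg\mathcal{C}-1$ intersection points must be forced to lie on $K$ (otherwise they would belong to a \emph{different} rolling plane through the same point, contradicting uniqueness / the order-one count via a degree-of-$\Phi$ argument). This Bézout-style bookkeeping gives Case I. The degenerate possibilities are when $\mathcal{C} \subset K$, i.e. either $\mathcal{C} = K$ (Case II) or $\mathcal{C}$ is a single point of $K$ (Case III), the latter being exactly the situation where the center does not move and $\overline{\mathcal{L}}$ degenerates to a pencil of planes — the static or fixed-center subcase already flagged in the text.

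\emph{The converse.} For the ``if'' direction I would simply run the above in reverse: given that $\Sigma$ is birational with common axis $K$ and $\mathcal{C}$ is in one of the three configurations, I construct $\Gamma$ explicitly (send $X$ to the ray $C(\Sigma^{-1}(X \vee K)) \vee X$, well defined and rational by birationality of $\Sigma$ and rationality of $C$), verify it is a rational section of $\mathcal{L} \to \PP^3$, so the congruence has order one, and check that composing with $\Lambda^{-1}$ and $\varphi$ yields a rational — indeed birational — $\Phi$, using that in Cases I–III the residual intersection points of $\mathcal{C}$ with the rolling planes lie on $K$ and hence do not produce spurious rays through a generic $X$.

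The step I expect to be the main obstacle is the rigidity claim in the first part: passing from ``$\Sigma$ is birational onto a surface'' to ``all rolling planes share a common line $K$''. A priori a rational one-parameter family of planes can also sweep out a quadric cone or a developable surface, and ruling these out (or showing they only arise as degenerations already covered by Cases II–III) requires careful use of the specific form \eqref{eq:SigmaMap} of $\Sigma^\vee$ together with the pencil parametrization $\rho$ — essentially showing the dual curve $\tfrac vt \mapsto \Sigma(\tfrac vt)^\vee$ is forced to be a line in $(\PP^3)^\ast$, not an arbitrary rational curve, precisely when the order is one.
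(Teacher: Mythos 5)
Your reduction is attractive, but as written it has two genuine gaps. First, the opening equivalence ``a generic point lies on a unique camera ray $\iff$ there is a unique parameter $\tfrac vt$ with $X\in\Sigma(\tfrac vt)$'' is not immediate: the forward implication fails for order-one congruences in general, because a generic point may lie on \emph{two} rolling planes whose associated rays through it coincide (the ray being a secant of $\mathcal{C}$ through two distinct centers). This secant configuration is exactly Kummer's order-one congruence of secants of a twisted cubic, and excluding it is where the real work sits; the paper does this by invoking De Poi's version of Kummer's classification (\cref{thm:kummer}) together with \cref{lem:type2}, and by treating the two conditions in the definition of an \rs camera (order-one congruence, birational $\Lambda$) separately (\cref{lem:LambdaBirational}, \cref{thm:birationalInGeneral}, \cref{cor:birationalOrder1congruence}). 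Your proposal never separates these two conditions, and the density/dimension argument needed to show that birationality of $\Lambda$ kills the secant-type coincidences is missing.

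Second, the step you yourself flag as the main obstacle — that order one forces all rolling planes to contain a common line — is only asserted, and the justification you hint at cannot work: nothing about the pencil $\rho$ or the explicit form $\Sigma^\vee(\tfrac vt)=(1:0:-\tfrac vt)\cdot P(\tfrac vt)$ forces the dual image to be a line, since perfectly ordinary RS cameras of this form trace higher-degree dual curves (the order-two linear cameras of \cref{rem:constantSpeedOrder2} give a conic in $(\PP^3)^*$, Straight--Cayley cameras a quartic). What forces a line is the order-one hypothesis itself, e.g.\ via the dual degree count: for generic $X$ the number of parameters with $X\in\Sigma(\tfrac vt)$ equals $\deg(\Sigma^\vee)$ times the degree of the image curve of $\Sigma^\vee$, so ``exactly one'' forces that curve to be a line $K^\vee$ and $\Sigma^\vee$ to be birational — an argument you do not supply. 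Relatedly, your Bézout bookkeeping for $\#(K\cap\mathcal{C})=\deg\mathcal{C}-1$ is applied at points of $\mathcal{C}$, which are not generic, so ``contradicting uniqueness'' does not apply there; the contradiction must be run against birationality of $\Sigma$ at a generic plane of the pencil through $K$ (equivalently the unique-parameter property at a generic point of $\PP^3$), and one must also exclude tangential intersections and the coplanar case $\#(K\cap\mathcal{C})=\deg\mathcal{C}$ (cf.\ \cref{lem:notAllRollingPlanesTangent} and \cref{lem:dSecantImpliesPlanar}). With these repairs your route could be completed into a self-contained alternative to the paper's Kummer-based proof, but in its current form the central claims are asserted rather than proven.
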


\begin{remark}    \label{rem:typeISigmaDeterminesC}
    In type I, the points in the intersection $\mathcal{C} \cap K$ are counted with multiplicity.
    Also,  the maps $\Sigma$ and $C$ determine each other:
    Whenever $C(\tfrac{v}{t}) \notin K$, we have $\Sigma(\tfrac{v}{t}) = K \vee C(\tfrac{v}{t})$.
    Conversely, every rolling plane $\Sigma(\tfrac{v}{t})$ meets $\mathcal{C}$ in $\deg \mathcal{C}$ many points (counted with multiplicity), out of which all but one lie on the line $K$. The remaining point is $C(\tfrac{v}{t})$. 
    In particular, since $\Sigma$ is birational, so is $C$.
\end{remark}

\section{Building %\rs cameras}
\texorpdfstring{\rs}{RS1}cameras}
\label{sec:Building-RS-cameras}

\Cref{thm:rs1cameras} is constructive, meaning that we can use it to build -- in theory --  all \rs cameras.
In the following, we describe the spaces of all \rs cameras of types I, II, and III (see Sec. \ref{sec:paramSpaceOverview}).
We start with type I.
For that, we consider
\begin{align}
\begin{split}
    \mathcal{H}_d := \{ (\mathcal{C}, K) \mid \mathcal{C} \!\subset\! \PP^3 \text{ rational curve of degree }d, \\[-1mm]
    %\, \deg \mathcal{C} = d, 
    \, K \!\subset\! \PP^3 \text{ line},
    %\, K \in \GrLine, 
    \, \#(K \cap \mathcal{C}) = d-1 \},\end{split}
\end{align}
where $\#$ counts with multiplicities.
We can explicitly pick elements in this parameter space as follows: Choose a line $K$. Rotate and translate until $K$ becomes the $z$-axis $K' := (0\!:\!0\!:\!1\!:\!0)\vee(0\!:\!0\!:\!0\!:\!1)$. Every curve $\mathcal{C}'$ with $(\mathcal{C}',K') \in \mathcal{H}_d$ is parametrized by
$\PP^1 \dashrightarrow \PP^3, (v:t) \mapsto (vf(v:t) : tf(v:t) : g(v:t)  : h(v:t))$, where $f,g,h$ are homogeneous polynomials of degree $d-1,d,d$ \cite[Eqn.~(18)]{ponce2017congruences}.
Reverse the translation and rotation to obtain~$(\mathcal{C},K) \in \mathcal{H}_d$.

Picking elements in $\mathcal{H}_d$ allows us to parametrize all \rs cameras of type~I.
For that, let $H^{\infty}$ be the plane at infinity $(0\!\!:\!\!0\!\!:\!\!0\!\!:\!\!1)^\vee\subseteq \PP^3$. We denote the intersection of a variety $\mathcal V\subseteq \PP^3$ with $H^\infty$ by $\mathcal V^\infty$.
%Since the last coordinate of each point in $\mathcal{V}^\infty$ is zero, we sometimes omit it.
For a map $\Sigma^\vee \!\!:\! \PP^1\!\!\dashrightarrow\!\! (\PP^3)^*$, we define $\Sigma^\vee_{\infty}(v:t)\in (H^{\infty})^*$ as the projection of $\Sigma^\vee(v:t)$ to $(H^\infty)^*$. In the primal $\PP^3$, this is the line in the intersection of the planes $\Sigma(v:t)$ and $H^\infty$.
For vectors $A,B$, we write $A \cdot B$ for the bilinear form $\sum_i A_iB_i$.
\begin{proposition} \label{prop:typeIparameters}
    The \rs cameras of type I are in 4-to-1 correspondence with the parameter space
    \vspace{-1ex}
\small
\begin{align} \begin{split}
    &\mathcal{P}_{I,d,\delta} := \{ (K,\mathcal{C},\Sigma^\vee_\infty,\lambda) \mid   (\mathcal{C},K) \in \mathcal{H}_d, 
      \\[-1mm]& \quad \mathcal{C}^\infty \neq \mathcal{C}, \, K^\infty \neq K,
      \, \lambda: \mathbb{P}^1 \dashrightarrow K, \, \deg (\lambda) = \delta,
       \\[-1mm]
     & \quad \Sigma^\vee_\infty: \mathbb{P}^1  \dashrightarrow  (K^\infty)^\vee, (v:t) \mapsto Av+Bt \text{ \ for some } \\[-1mm]& \quad A,B \text{ with }  
      A \cdot B =0 \text{ and } A \cdot A = B \cdot B \}. \end{split}
\end{align}
\normalsize 

\vspace{-1ex}\noindent
The dimension of this  space is $\dim \mathcal{P}_{I,d,\delta} =  
3d + 2 \delta + 7.$
\end{proposition}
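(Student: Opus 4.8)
The plan is to establish the 4-to-1 correspondence first and then compute the dimension by adding up the dimensions of the pieces appearing in the description of $\mathcal{P}_{I,d,\delta}$. For the correspondence, I would start from \cref{thm:rs1cameras} and \cref{rem:typeISigmaDeterminesC}: a type-I \rs camera is determined by the pair $(\mathcal{C},K)\in\mathcal{H}_d$ together with the birational rolling-planes map $\Sigma$ and the parameterization data coming from $\Lambda$. The key observation is that the map $\Sigma$ is already pinned down on a dense set by $(\mathcal{C},K)$ via $\Sigma(\tfrac{v}{t}) = K \vee C(\tfrac{v}{t})$, so the genuine extra freedom in the camera is the \emph{reparameterization}: how the abstract parameter $(v:t)\in\PP^1$ is matched to a point of $\mathcal{C}$ (equivalently of $K$), and how the rotational part $R(\tfrac{v}{t})$ is chosen subject to the constraint that $\Sigma^\vee(\tfrac{v}{t}) = (1:0:-\tfrac{v}{t})\cdot P(\tfrac{v}{t})$ actually produces the prescribed plane. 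I would argue that once $K$ has been normalized to the $z$-axis, the residual data is exactly a degree-$\delta$ map $\lambda:\PP^1\dashrightarrow K$ (the center movement seen ``along $K$'') together with the linear map $\Sigma^\vee_\infty(v:t)=Av+Bt$ recording how the rolling planes meet $H^\infty$; the orthogonality relations $A\cdot B=0$, $A\cdot A=B\cdot B$ are precisely the conditions forcing the pencil of lines $\Sigma^\vee_\infty$ in $(H^\infty)^*$ to be realizable by a genuine rotation $R\in\SO(3)$ (a Cayley-type constraint). The 4-to-1 multiplicity then comes from the finite residual symmetry: sign choices / the stabilizer of the normalization (e.g. the ambiguity $v\leftrightarrow -v$ combined with an orientation flip), which I would check produces exactly $4$ preimages of a generic camera.

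For the dimension count I would add: $\dim\mathcal{H}_d$, which by the parameterization $(v:t)\mapsto(vf:tf:g:h)$ with $\deg f=d-1$, $\deg g=\deg h=d$ is $(d)+(d+1)+(d+1)=3d+2$ coefficients, minus $1$ for projective scaling of the curve, minus $1$ for reparameterization of the source $\PP^1$ that is already absorbed, giving $\dim\mathcal{H}_d = 3d$; plus $4$ for the choice of the line $K$ in $\Gr(1,\PP^3)$ — wait, here I must be careful, because in $\mathcal{H}_d$ the line $K$ is constrained to meet $\mathcal{C}$ in $d-1$ points, so I should instead treat $K$ as free ($\dim=4$) and $\mathcal{C}$ as a rational curve through those $d-1$ points, recounting consistently so that $\dim\mathcal{H}_d$ comes out to a fixed value independent of how the bookkeeping is split. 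Then I add $2\delta+1$ for the degree-$\delta$ rational map $\lambda:\PP^1\dashrightarrow K\cong\PP^1$ (a pair of degree-$\delta$ binary forms up to common scalar), and the dimension of the space of admissible $\Sigma^\vee_\infty$, i.e.\ pairs $(A,B)$ modulo scaling satisfying the two quadratic relations: this is $3+3-1-1-1=3$ but with one more subtraction for the residual symmetry, and I would verify that the total assembles to $3d+2\delta+7$. Since passing to the 4-to-1 cover does not change dimension, $\dim$ of the space of type-I \rs cameras equals $\dim\mathcal{P}_{I,d,\delta}$.

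Concretely, the steps in order: (1) recall from \cref{thm:rs1cameras}/\cref{rem:typeISigmaDeterminesC} that a type-I camera $\leftrightarrow$ $(\mathcal{C},K)$ plus birational $\Sigma$ plus the $\SO(3)$-valued reparameterization, modulo nothing (cameras are taken up to the $G$-action of \cref{rem:groupAction}, which is what lets us normalize $K=K'$); (2) show the reparameterization data is faithfully encoded by $(\lambda,\Sigma^\vee_\infty)$ with the stated constraints — this is the heart of the argument and where I'd use the explicit form of $P(\tfrac{v}{t})=R(\tfrac{v}{t})[I_3\mid-C(\tfrac{v}{t})]$ and \eqref{eq:SigmaMap}; (3) identify the finite fiber and show it has cardinality $4$; (4) do the dimension arithmetic. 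The main obstacle will be step (2): disentangling exactly which part of the rotation $R(\tfrac{v}{t})$ is observable from the congruence-plus-imaging data and verifying that the constraints on $(A,B)$ (orthogonality and equal norm) are both \emph{necessary} and \emph{sufficient} for the existence of a compatible rational family of rotations realizing the prescribed rolling planes — essentially a pointwise Cayley-parameterization argument promoted to a statement about rational maps on $\PP^1$. A secondary subtlety is making the dimension bookkeeping of $\mathcal{H}_d$ robust: one must confirm that the $d-1$ incidence conditions $\#(K\cap\mathcal{C})=d-1$ are independent and that no hidden automorphisms of the parameterization inflate or deflate the count, so that $\dim\mathcal{H}_d$ is unambiguously the value making the final total $3d+2\delta+7$.
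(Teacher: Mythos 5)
Your plan follows the same decomposition as the paper's proof: a type-I camera is the data of $(\mathcal{C},K)\in\mathcal{H}_d$, a parameterization of the rolling planes recorded by $\Sigma^\vee_\infty$ subject to the two quadratic constraints on $(A,B)$, and a map $\lambda$ fixing the last rotational degree of freedom, with a 4-to-1 ambiguity from sign choices. However, as written there are genuine gaps at exactly the two places you flag as "the heart of the argument," plus errors in the dimension count.

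First, the constraints $A\cdot B=0$, $A\cdot A=B\cdot B$ are \emph{not} the condition that the pencil $\Sigma^\vee_\infty$ be realizable by rotations: any family of unit normals is realizable by some $R(x)\in\SO(3)$ pointwise. The actual source (\cref{lem:sigma-restrictions}) is the requirement that $\Lambda$ be a \emph{rational} map. From $\Sigma^\vee_\infty(x)=(1:0:-x)\cdot R(x)$ and the explicit formula for $\Lambda_\infty$ one finds that $\sqrt{1+x^2}\,\Vert\Sigma^\vee_\infty(x)\Vert$ must be a rational function of $x$, i.e.\ $(1+x^2)(\Sigma^\vee_\infty\cdot\Sigma^\vee_\infty)$ must be a perfect square of a quadratic; for an affine-linear $\Sigma^\vee_\infty=Ax+B$ this is equivalent to the two stated relations. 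Without this computation the necessity direction is unproven, and the sufficiency direction (reconstructing a rotation map $R$ and checking $\Lambda$ is rational) is also missing. Second, your proposed source of the 4-to-1 fiber ("$v\leftrightarrow -v$ combined with an orientation flip") is not correct: there is no reparameterization of the source $\PP^1$ involved. The four cameras per parameter tuple come from the two independent sign choices made when lifting the projective data $\Sigma^\vee_\infty(x)$ and $\omega(x)=\PL_\infty(x)$ to actual vectors in the explicit formula for $R(x)$ (\cref{lem:rs1CamerasAndLambdaMaps}); geometrically these are a $180^\circ$ rotation about the ray $\PL(x)$ and a $180^\circ$ rotation about the normal of $\Sigma(x)$.

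Finally, the dimension bookkeeping contains two compensating off-by-one errors. For fixed $K$ the normal form $(v:t)\mapsto(vf:tf:g:h)$ is already canonical — the first two coordinates pin the parameter $(v:t)$ to the projection of $\mathcal{C}$ away from $K$, which is birational precisely because $\#(K\cap\mathcal{C})=d-1$ — so there is no further reparameterization quotient and the count is $(3d+2)-1=3d+1$, giving $\dim\mathcal{H}_d=3d+5$ after adding $4$ for $K$, not $3d+4$. And the space of admissible $\Sigma^\vee_\infty$ is $1$-dimensional, not $2$ or $3$: $A$ and $B$ are constrained to lie in the $2$-dimensional space underlying $(K^\infty)^\vee$, so the count is $2+2-2-1=1$ (choosing $A$ determines $B$ up to sign). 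Your totals $3d+4$ and $2$ happen to sum to the same answer as the correct $3d+5$ and $1$, but the individual counts as justified are wrong, so the arithmetic only "works" by accident.
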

Each element $(K,\mathcal{C}, \Sigma^\vee_\infty,\lambda)$ in $\mathcal{P}_{I,d,\delta}$ corresponds to four \rs cameras as follows (see \cref{fig:paramRS1}): 
The rolling planes map $\Sigma$ of the cameras can be read off from the the map $\Sigma^\vee_\infty$ since each rolling plane $\Sigma(v:t)$ is the span of the line $\Sigma_\infty(v:t)$ with $K$.
By~\cref{rem:typeISigmaDeterminesC}, the map $\Sigma$ determines uniquely the parametrization $C$ of the curve $\mathcal{C}$, i.e., the movement of the camera center. 
The camera rotation map $R: \mathbb{R} \to \mathrm{SO}(3)$ is fixed as follows: 
For $x \in \mathbb{R}$, $R(x)$ has three degrees of freedom.
The first two are accounted for by the rolling plane $\Sigma(x)$. 
Here, we have to choose an orientation / sign of the normal vector of the plane $\Sigma(x)$, since this is not encoded in the projective map $\Sigma$.
Finally, the map $\lambda: \PP^1 \dashrightarrow K$ chooses the unique point $\lambda(x)$ on $K$ that the projection matrix $P(x)$ maps to $(0:1:0)$, the intersection point of all rolling lines.
Thus, $\lambda(x)$ fixes the third degree of freedom of $R(x)$, but its sign gives us again two choices. 
In summary, the two choices of orientation, that on $\Sigma$ and that on $\lambda$, give us 4 rotation maps $R: \mathbb{R} \to \mathrm{SO}(3)$.

\begin{remark}\label{rem:cameraPerCurve}
    For every line, conic, or non-planar rational curve $\mathcal{C}$ of degree at most five, there is a \rs camera moving along $\mathcal{C}$.
    For a generic rational curve of degree at least six, there is no such camera.
\end{remark}

\begin{proposition} \label{prop:typeIIparameters}
    The \rs cameras of type II are in 4-to-1 correspondence with \vspace{-1ex}
 \small
 \begin{align}
 \begin{split}
    &\mathcal{P}_{II,d,\delta} :=
    \{ (K,C,\Sigma^\vee_\infty,h,p) \mid 
    K \in \GrLine, \\[-1mm] & \quad K^\infty \neq K, \, C: \PP^1 \dashrightarrow K, \, \deg(C) = d,
       \\[-1mm] 
      &\quad \Sigma^\vee_\infty: \mathbb{P}^1  \dashrightarrow  (K^\infty)^\vee, (v:t) \mapsto Av+Bt \text{ for some } \\[-1mm] &\quad A,B \text{ with }  A \!\cdot\! B =0 \text{ \& } A \!\cdot\! A = B \!\cdot\! B, \\[-1mm] &\quad (h,p) \in \PP(\R[v,t]_\delta \times  \R[v,t]_{\delta+d+1}) 
      \}.
 \end{split}
\end{align}
\normalsize
 \rs cameras of type III are the special case when $d=0$ and the image of the constant map $C$ is not  at infinity.
Moreover, 
$    \dim \mathcal{P}_{II,d,\delta} = 3d +  2\delta + 8.
$
\end{proposition}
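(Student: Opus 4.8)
The plan is to follow the proof of \cref{prop:typeIparameters}, adapting it to the two features that distinguish type~II. First, the center locus is now $\mathcal{C}=K$ itself, so the mechanism of \cref{rem:typeISigmaDeterminesC} (which recovers $C$ from $\Sigma$ in type~I) fails, and the parametrized center $C\colon\PP^1\dashrightarrow K$ — of some degree $d$, which will index $\mathcal{P}_{II,d,\delta}$ — becomes independent data. Second, in type~I the third rotational degree of freedom was the point $\lambda(x)\in K$ that $P(x)$ sends to $(0:1:0)$, but when $C(x)\in K$ the map $P(x)$ collapses $K$ to a single point, so no such $\lambda$ exists; this degree of freedom instead ranges over a circle of directions inside the rolling plane $\Sigma(x)$ and must be encoded by the richer datum $(h,p)$.

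I would first set up the reconstruction. Given a type~II \rs camera, \cref{thm:rs1cameras} provides $K$ and the birationality of $\Sigma$, and \cref{lem:LambdaRationalImplications} gives that $R$, $C$, $\Sigma$ are rational. All rolling planes contain $K$, so $\Sigma$ maps $\PP^1$ into the pencil of planes through $K$, itself a $\PP^1$; a birational rational map $\PP^1\dashrightarrow\PP^1$ is automatically an isomorphism, hence of degree one, so $\Sigma$ — and, since $K^\infty\neq K$, also the induced $\Sigma^\vee_\infty$ — is given by linear forms, $\Sigma^\vee_\infty(v:t)=Av+Bt$. Writing $\Sigma^\vee(x)=(-t:0:v)\cdot R(x)[I_3\mid-C(x)]$ and using that right multiplication by $R(x)\in\SO(3)$ is an isometry, the $H^\infty$-part of $\Sigma^\vee(x)$ is proportional to $(-t,0,v)R(x)$, whose squared norm is $v^2+t^2$; so $\lVert Av+Bt\rVert^2=(A\cdot A)v^2+2(A\cdot B)vt+(B\cdot B)t^2$ must be a constant multiple of $v^2+t^2$, forcing $A\cdot B=0$ and $A\cdot A=B\cdot B$. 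Conversely these two equations are exactly what allows a degree-one normal field of the rolling planes to be realized by some $\SO(3)$-valued $R$, up to a binary choice of orientation of the normal. Since $\mathcal{C}=K$, the plane $\Sigma(x)$ has no further intersection with $\mathcal{C}$, so $C$ is taken as free data; it is birational onto its image (as $\Sigma$ is) of well-defined degree $d$.

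Next I would pin down the residual rotational freedom. After $\Sigma(x)$ and the sign of its normal are chosen, $R(x)$ still ranges over a circle of rotations about the camera-frame rolling-plane normal; equivalently, the $P(x)$-pre-image of $(0:1:0)$ is a line through $C(x)$ in $\Sigma(x)$, and its world-frame direction $M(x)=R(x)^\top(0,1,0)^\top$ ranges over the circle of unit directions inside $\Sigma(x)$. Parametrizing that circle rationally (a Cayley/Weierstrass substitution) and clearing every denominator that occurs — the degree-$d$ denominator of $C$ in $P(x)=R(x)[I_3\mid-C(x)]$, the degree-one rolling-line pencil $\rho$, and the circle parameter itself — one finds that the free data is a pair $(h,p)$ of homogeneous forms whose degrees differ by $d+1$; normalizing $\deg h=\delta$ then forces $\deg p=\delta+d+1$, i.e.\ $(h,p)\in\PP(\R[v,t]_\delta\times\R[v,t]_{\delta+d+1})$. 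The projective class $(h:p)$ records $M(x)$ only up to sign, the second binary choice: the four rotations $R(x)Q(x)$, with $Q(x)$ running over the four orientation-preserving double sign-flips of the orthonormal frame $\{R(x)^\top(-t,0,v)^\top,\,M(x),\,\text{their cross product}\}$, yield the same tuple $(K,C,\Sigma^\vee_\infty,h,p)$, while two type~II cameras with equal tuples differ by such a $Q(x)$; checking (via \cref{thm:rs1cameras}) that the camera reconstructed from a tuple really is type~II completes the $4$-to-$1$ correspondence. Type~III is the case $d=0$: then $C$ is a constant point of $K$, and ``$C$ not at infinity'' says exactly that $\mathcal{C}$ is a finite point, matching case~III of \cref{thm:rs1cameras}.

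The dimension formula is then bookkeeping. We have $\dim\GrLine=4$ for $K$; over a fixed $K$, the degree-$d$ maps $C\colon\PP^1\dashrightarrow K$ form a $(2d+1)$-dimensional family (an open subset of $\PP(\R[v,t]_d\times\R[v,t]_d)$); over a fixed $K$ the admissible $\Sigma^\vee_\infty$ add only $1$ dimension, since $A,B$ lie in the $2$-plane $(K^\infty)^\perp$ and satisfy $A\cdot B=0$, $A\cdot A=B\cdot B$ modulo a common scalar, i.e.\ $2+2-1-1-1=1$; and $\dim\PP(\R[v,t]_\delta\times\R[v,t]_{\delta+d+1})=(\delta+1)+(\delta+d+2)-1=2\delta+d+2$. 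Summing, $\dim\mathcal{P}_{II,d,\delta}=4+(2d+1)+1+(2\delta+d+2)=3d+2\delta+8$. I expect the main obstacle to be the third paragraph: proving that the residual rotational freedom is captured \emph{exactly} by $(h,p)$ with the degree gap $d+1$ — that no denominator is overlooked, no spurious component appears, and the correspondence is generically $4$-to-$1$ rather than having positive-dimensional fibers. Once the parametrization is in place, the degree gap, the type~III statement, and the dimension count follow routinely.
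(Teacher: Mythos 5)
Your overall architecture mirrors the paper's (extract $K$, the linear map $\Sigma^\vee_\infty$, encode the remaining rotational freedom by the camera ray sent to $(0:1:0)$, then count dimensions), but two key steps are asserted rather than proved. First, your derivation of the constraints $A\cdot B=0$, $A\cdot A=B\cdot B$ is a non sequitur: the polynomial representative $Av+Bt$ of the projective map $\Sigma^\vee_\infty$ and the representative $(-t,0,v)\cdot R(v/t)$ coming from the actual camera agree only up to a scalar \emph{function} of $(v:t)$, which nothing in your argument forces to be constant ($R$ itself is not rational in general). Indeed, for \emph{any} linearly independent $A,B\in(K^\infty)^\vee$ one can realize $\Sigma^\vee_\infty\propto Av+Bt$ by some $\SO(3)$-valued $R$ — realizability by rotations is not the issue — but the resulting $\Lambda$ fails to be rational unless $A\cdot B=0$ and $\Vert A\Vert=\Vert B\Vert$. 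So both directions of this equivalence genuinely require the rationality of $\Lambda$: in the paper this is the content of \Cref{lem:sigma-restrictions}, which computes $\Lambda_\infty$ explicitly (formula \eqref{lambda-infty}) and shows that rationality of $\Lambda$ forces $\sqrt{1+x^2}\,\Vert\Sigma^\vee_\infty(x)\Vert$ to be rational, hence $(1+x^2)\,\Vert\Sigma^\vee_\infty(x)\Vert^2$ to be a polynomial square, which is equivalent to the two conditions on $A,B$; the converse direction uses the same formula to verify that $\Lambda$ is rational. Your "constant multiple of $v^2+t^2$" shortcut skips exactly this, i.e.\ it never uses the hypothesis that distinguishes RS$_1$ cameras from arbitrary ones.

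Second, the identification of the residual freedom with a pair $(h,p)\in\PP(\R[v,t]_\delta\times\R[v,t]_{\delta+d+1})$ is precisely the technical heart of the type-II case, and you do not supply it (you even flag it as the expected obstacle). In the paper this is \Cref{lem:cameraRayMapII}: after normalizing $K$ to the $z$-axis, one writes the ray map $\xi$ (the $P$-preimage of $(0:1:0)$) in Plücker coordinates, and a divisibility argument ($\Sigma_1\mid a_2$, $\Sigma_2\mid a_1$, etc.) shows these coordinates must be $(0:-h\Sigma_2C_3:-h\Sigma_2C_0:h\Sigma_1C_3:h\Sigma_1C_0:p)$, which is where the degree gap $\deg p-\deg h=d+1$ actually comes from; one also needs that $\xi$ is rational at all, which in the paper follows from $\xi(x)=\Lambda((x:1),(1:0))$ and hence again from the rationality of $\Lambda$, a point your circle-of-directions picture leaves unaddressed. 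The 4-to-1 statement likewise needs the explicit reconstruction of $R$ from $(\Sigma^\vee_\infty,\xi)$ up to the two sign choices (\Cref{lem:rs1CamerasAndLambdaMaps}); your frame-flip description is the right picture but is not backed by an argument that these are the only ambiguities and that the reconstructed camera is again RS$_1$. Your dimension bookkeeping and the type-III specialization are correct and match the paper, but they rest on the two unproved steps above.
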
 
An element $(K,C, \Sigma^\vee_\infty,h,p)$ in $\mathcal{P}_{II,d,\delta}$ gives rise to 4 \rs cameras as follows:
$C$ describes the movement of the camera center. As above, the map $\Sigma_\infty^\vee$ determines the rolling planes map $\Sigma$, which fixes (up to orientation) 2 degrees of freedom of each rotation $R(x)$ for $x \in \mathbb{R}$.
To fix the 3rd degree, we assume (by rotating and translating) that $K$ is the $z$-axis.
Then, $C = (0:0:C_3:C_0)$, where $C_3,C_0$ are homogeneous of degree $d$, and $\Sigma^\vee = (\Sigma_1:\Sigma_2:0:0)$ with $\Sigma_1,\Sigma_2$ homogeneous linear. 
The polynomials $h,p$ define a map $\xi: \PP^1 \dashrightarrow \GrLine$ in Plücker coordinates: 
$(0\!:\!-h \Sigma_2 C_3\!:\! -h \Sigma_2 C_0\!:\! h \Sigma_1 C_3\!:\! h \Sigma_1 C_0\!:\! p)$\footnote{
The Plücker coordinates $(p_{12}\!:\!p_{13}\!:\!p_{10}\!:\!p_{23}\!:\!p_{20}\!:\!p_{30})$ of the line spanned by $(a_1\!:\!a_2\!:\!a_3\!:\!a_0)$ and $(b_1\!:\!b_2\!:\!b_3\!:\!b_0)$  are 
%$2 \times 2$-minors
$p_{ij} = a_ib_j - a_jb_i$.}.
This map satisfies $C(v:t) \in \xi(v:t) \subseteq \Sigma(v:t)$ for all $(v:t) \in \PP^1$ (see \Cref{lem:cameraRayMapII}).
It chooses the unique camera ray $\xi(x)$ that the projection matrix $P(x)$ maps to $(0\!:\!1\!:\!0)$. 
So, up to sign, $\xi(x)$ fixes $R(x)$.
The two choices of orientation, that on $\Sigma$ and that on $\xi$, give us 4 rotation maps $R: \mathbb{R} \to \mathrm{SO}(3)$.

\subsection{Constant rotation}\label{ssec:const-rot}

In the special case that cameras do not rotate, we can more easily check whether the order is 1. 
In fact, the center-movement map $C$ is rational iff $\Lambda$ is rational (see \eqref{eq:LambdaMap} and~\cref{lem:LambdaRationalImplications}), and the condition that all rolling planes meet in a line implies all other conditions in~\cref{thm:rs1cameras}.

\begin{proposition}\label{prop:rs1NoRotation}
    A RS camera with constant rotation and rational center movement has order one if and only if the intersection of all its rolling planes is a line.
\end{proposition}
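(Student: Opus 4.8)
The plan is to deduce Proposition~\ref{prop:rs1NoRotation} from the characterization in \cref{thm:rs1cameras}. The forward direction is immediate: if the camera has order one (with constant rotation and rational center movement $C$, hence rational $\Lambda$ by~\eqref{eq:LambdaMap} and \cref{lem:LambdaRationalImplications}), then \cref{thm:rs1cameras} gives that all rolling planes $\Sigma(\tfrac{v}{t})$ meet in a line $K$. So the content is the converse: assuming constant rotation $R(\tfrac{v}{t}) \equiv R_0$, rational $C$, and that $\bigcap_{v/t} \Sigma(\tfrac{v}{t})$ is a line $K$, we must verify the remaining hypotheses of \cref{thm:rs1cameras}, namely that $\Sigma$ is birational and that $\mathcal{C}$ falls into one of the three cases I, II, III.

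First I would write out the rolling planes map explicitly under constant rotation. By~\eqref{eq:SigmaMap}, $\Sigma(\tfrac{v}{t})^\vee = (1:0:-\tfrac{v}{t})\cdot R_0[I_3 \mid -C(\tfrac{v}{t})]$. Clearing denominators, this is (up to scale) $(v:t) \mapsto$ a vector whose first three entries are $v\,(R_0)_{1\bullet} - t\,(R_0)_{3\bullet}$ — a \emph{fixed} linear pencil of directions in $(\PP^3)^\ast$, independent of $C$ — and whose last entry is $-\bigl(v\,(R_0)_{1\bullet} - t\,(R_0)_{3\bullet}\bigr)\cdot C(\tfrac{v}{t})$. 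The key observation is that the restriction $\Sigma^\vee_\infty(v:t) = v\,(R_0)_{1\bullet} - t\,(R_0)_{3\bullet}$ to the plane at infinity is already visibly linear in $(v:t)$ and injective (the rows $(R_0)_{1\bullet}$ and $(R_0)_{3\bullet}$ of a rotation matrix are linearly independent), so $\Sigma$ is birational onto its image. This realizes exactly the shape of the parameterization $\Sigma^\vee_\infty$ appearing in Propositions~\ref{prop:typeIparameters} and~\ref{prop:typeIIparameters}: indeed $(R_0)_{1\bullet}\cdot(R_0)_{3\bullet} = 0$ and $(R_0)_{1\bullet}\cdot(R_0)_{1\bullet} = (R_0)_{3\bullet}\cdot(R_0)_{3\bullet} = 1$ because $R_0$ is orthogonal, so the orthogonality constraints $A\cdot B = 0$, $A\cdot A = B\cdot B$ are automatic. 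This is the step I expect to carry the most weight and where I'd be most careful: checking that ``all rolling planes meet in a line'' is equivalent to the existence of such a line $K$ containing $\mathcal C$ or met appropriately by it, so that the center locus is forced into type I, II, or III and no other configuration can arise.

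Next I would handle the center locus. Since $C$ is rational, $\mathcal{C} = \overline{\im C}$ is a point or a rational curve of some degree $d$. The hypothesis that $\bigcap \Sigma(\tfrac{v}{t}) = K$ is a line, combined with $C(\tfrac{v}{t}) \in \Sigma(\tfrac{v}{t})$ for every parameter, forces a rigid relation between $\mathcal{C}$ and $K$: either $\mathcal{C} \subseteq K$ (giving type II, or type III if $\mathcal C$ is a point on $K$), or, when $C(\tfrac{v}{t}) \notin K$ generically, the plane $\Sigma(\tfrac{v}{t})$ must equal $K \vee C(\tfrac{v}{t})$ since it contains $K$ and the point $C(\tfrac{v}{t})$ off $K$. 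Writing this out in coordinates with $K$ the $z$-axis, one gets $\Sigma^\vee_\infty$ linear in $(v:t)$ forcing $C(\tfrac{v}{t}) = (v\,f : t\,f : g : h)$ with $\deg f = d-1$, $\deg g = \deg h = d$, which is precisely the normal form for $\mathcal{H}_d$ quoted from \cite[Eqn.~(18)]{ponce2017congruences}; this yields $\#(K \cap \mathcal{C}) = d-1$ and places us in type I. A small degenerate check is needed to see that the two rolling planes cannot sweep a pencil with a common line unless one of I–III holds.

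Finally, having verified that $\Sigma$ is birational and that $\mathcal{C}$ is of type I, II, or III, we invoke the ``if'' direction of \cref{thm:rs1cameras} to conclude the camera has order one, completing the equivalence. The only genuinely new content beyond bookkeeping is the reduction: \emph{under constant rotation}, the single condition ``rolling planes meet in a line'' automatically supplies both the birationality of $\Sigma$ (free from orthogonality of $R_0$) and the trichotomy on $\mathcal{C}$ (forced by $C(\tfrac{v}{t}) \in \Sigma(\tfrac{v}{t}) \supseteq K$), so none of the other conditions in \cref{thm:rs1cameras} need to be checked independently. I would present the argument in that order — explicit $\Sigma$, birationality, classification of $\mathcal{C}$, appeal to \cref{thm:rs1cameras} — with the coordinate normal form doing the heavy lifting in the middle.
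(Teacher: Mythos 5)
Your proof is correct and follows the same skeleton as the paper's: both treat the forward direction as immediate from \cref{thm:rs1cameras}, and for the converse both show that under constant rotation the single hypothesis ``all rolling planes meet in a line $K$'' automatically supplies the birationality of $\Sigma$ and the trichotomy I/II/III. The birationality step is identical in substance (the paper inverts $\Sigma$ by intersecting each rolling plane with $H^\infty$, which is exactly your observation that $\Sigma^\vee_\infty$ is the injective pencil spanned by the first and third rows of $R_0$; note only that the correct homogenization is $t\,(R_0)_{1\bullet}-v\,(R_0)_{3\bullet}$ rather than $v\,(R_0)_{1\bullet}-t\,(R_0)_{3\bullet}$, a harmless index swap). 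Where you genuinely diverge is the classification of the center locus. The paper argues by cases on the intersection number: it excludes $\#(K\cap\mathcal C)=d$ via \cref{lem:dSecantImpliesPlanar} (coplanarity would force all rolling planes to coincide) and excludes $\#(K\cap\mathcal C)\le d-2$ by combining B\'ezout with \cref{lem:notAllRollingPlanesTangent} and the birationality of $\Sigma$. You instead derive the normal form $C=(vf:tf:g:h)$ from the divisibility relation $\Sigma_1C_1+\Sigma_2C_2=0$, where $(\Sigma_1:\Sigma_2)$ is a \emph{nonconstant} linear pencil, and read off $\#(K\cap\mathcal C)=\deg f=d-1$. This is a legitimate and arguably slicker route, but the final step hides exactly the content of the paper's two auxiliary lemmas, so you should make it explicit: (i) the normal form forces $C$ to be birational onto its image, since $(C_1:C_2)=(v:t)$ off the roots of $f$, which is what guarantees $\deg f = \deg\mathcal C-1$ rather than a multiple thereof; and (ii) the normal form excludes $\mathcal C$ being coplanar with $K$, since a plane $s_1X_1+s_2X_2=0$ containing $\mathcal C$ would give $f\cdot(s_1v+s_2t)\equiv 0$, hence $s_1=s_2=0$ -- this is what rules out $\#(K\cap\mathcal C)=d$ and certifies that the count $d-1$ is exact. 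With those two sentences added (and the tacit convention that $\#(K\cap\mathcal C)$ is computed by pulling back along the birational parameterization, which is how the paper's $\mathcal H_d$ normal form from \cite[Eqn.~(18)]{ponce2017congruences} is meant), your argument is complete.
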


We denote the three rows of the constant rotation matrix $R \in \mathrm{SO}(3)$ by $w_1,w_2,w_3$.
Then, we can write the rolling planes map \eqref{eq:SigmaMap} as
$\Sigma: (v:t) \mapsto (w_2:0) \vee (vw_1+tw_3:0) \vee C(v:t)$.
In particular, all rolling planes go through the point $(w_2:0)$.
When the intersection of the rolling planes is a line $K$, 
then $(w_2:0)$ is the unique point of intersection of $K$ with the plane at infinity.
(Otherwise, if $K$ were contained in that plane, then all rolling planes would be equal to that plane.)
In particular, the line $K$ determines the second row of $R$ (up to sign) and the remaining rows of $R$ determine the rolling planes map as
\begin{align}
\label{eq:SigmaViaR}
    \Sigma: (v:t) \mapsto K \vee (vw_1+tw_3:0).
\end{align}

\noindent
So, for constant rotation, the spaces of \rs cameras %of types I--III 
are
\begin{align} \begin{split}
    &\mathcal{P}_{I,d} := \{ (R,K,\mathcal{C})  \mid
     R \in \mathrm{SO}(3) \text{ with 2nd row } w_2,\\[-1mm] & \quad 
    K \in \GrLine, \, K^\infty = (w_2:0), %\\[-1mm] & \quad 
    (\mathcal{C},K) \in \mathcal{H}_d
    \}, 
\end{split} 
\end{align} \vspace{-4mm}
\small\begin{align} 
    &\mathcal{P}_{II,d} \!:=\! \{ (R,K, C)  \mid 
    R \in \mathrm{SO}(3) \text{ with 2nd row } w_2,\\[-1mm]
    & \quad K \in \GrLine, \, K^\infty \!=\! (w_2\!:\!0),% \\[-1mm]
    %& \quad \,
    C : \PP^1 \!\dashrightarrow\! K, \, \deg(C) = d
    \} 
    \nonumber
\end{align} \normalsize
and $\mathcal{P}_{II,0}$ (with $\mathrm{im}(C) \neq K^\infty$) is the space of static pinhole cameras; note 
    $\dim \mathcal{P}_{I,d} = 3d+6$ and $\dim \mathcal{P}_{II,d} =  2d+6$.

\subsection{Moving along a line with constant speed}
\label{sec:constantSpeed}

In many applications, where the camera moves along a line, it moves with approximately constant speed. 
Projectively, this means that the parameterization $C$ of the line $\mathcal{C}$ is birational with $C(1:0) =  \mathcal{C}^\infty$.
In the case of \rs cameras of type I, this means that $\Sigma(1:0) = K \vee \mathcal{C}^\infty$ is already determined by $K$ and $\mathcal{C}$, and cannot be freely chosen.
Thus, the space of such \rs cameras is
\small
\begin{align} \begin{split}
    &\mathcal{P}^\mathrm{cs}_{I,1,\delta} := \{
         (K,\mathcal{C},\Sigma^\vee_\infty,\lambda)
 \mid 
      K,\mathcal{C} \in \GrLine, \\[-1mm]& \quad \mathcal{C}^\infty \neq \mathcal{C}, \, K^\infty \neq K, \,  K \cap \mathcal{C} = \emptyset,  \\[-1mm]& \quad
       \Sigma^\vee_\infty: \mathbb{P}^1  \dashrightarrow  (K^\infty)^\vee, (v:t) \mapsto Av+Bt, \text{ where }  \\[-1mm]& \quad A \!=\! (K^\infty \!\vee\! \mathcal{C}^\infty)^\vee, \, A \cdot B =0, A \cdot A = B \cdot B, \\[-1mm] & \quad
      \lambda: \PP^1 \dashrightarrow K, \, \deg (\lambda) = \delta 
    \}   
\end{split}
\end{align}
\normalsize

\begin{figure}
    \centering
    \includegraphics[width=0.6\linewidth]{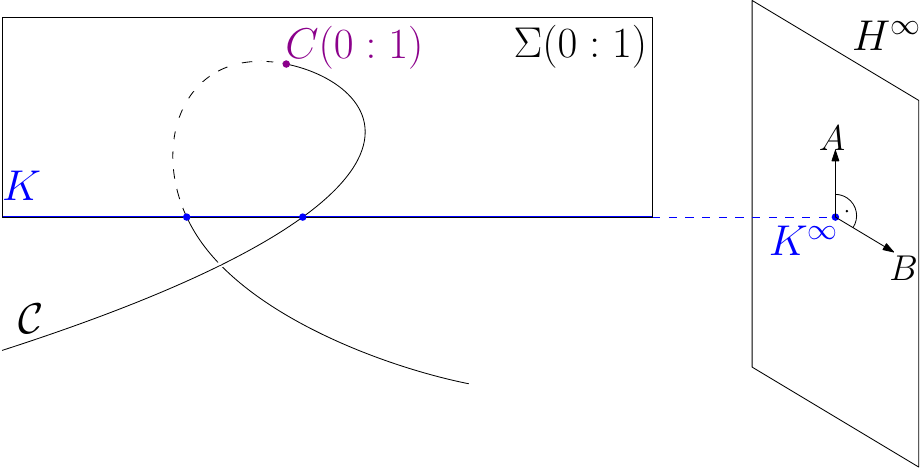}
    \caption{Illustration of $\mathcal{P}_{1,3,\delta}$: E.g., the rolling plane $\Sigma(0:1)$ meets the infinity plane $H^\infty$ at a line with normal vector~$B$.}
    \label{fig:paramRS1}
\end{figure}
%TODO: dashed part of the curve

The dimension of this space is one less than the space $\mathcal{P}_{I,1,\delta}$ without the constant-speed assumption,
 i.e., $ \dim \mathcal{P}^\mathrm{cs}_{I,1,\delta} = 2 \delta + 9$, as the birational map $\Sigma^\vee_\infty$ is already prescribed.
 In fact, over $\mathbb{R}$, there are two such maps since the coefficient vector $B$ can be scaled by $-1$.
Similarly, the space of constant-speed \rs cameras of type II is
\small 
\begin{align} \begin{split}
 & \mathcal{P}^\mathrm{cs}_{II,1,\delta} \! := \!\{
         (K, C,  \Sigma^\vee_\infty, h, p)
   \mid
         K \!\in\! \GrLine,  K^\infty \!\neq\! K, \\[-1mm]&\quad C: \PP^1 \dashrightarrow K \text{ birational},  C(1\!:\!0) \!=\! K^\infty, \\[-1mm]&\quad 
         \Sigma^\infty: \mathbb{P}^1  \dashrightarrow  (K^\infty)^\vee, (v:t) \mapsto Av+Bt \text{ for some } \\[-1mm]&\quad  A,B \text{ with }  A \cdot B =0, A \cdot A = B \cdot B, \\[-1mm]&\quad 
      (h,p) \in \PP(\R[v,t]_\delta \times  \R[v,t]_{\delta+2})
    \}    
\end{split}
\end{align}
\normalsize
and has dimension $\dim \mathcal{P}^\mathrm{cs}_{II,1,\delta} = 2\delta+10$.

\subsection{Linear \texorpdfstring{\rs}{RS1}cameras}
\label{sec:linearCameras}
As in \cite{dai2016rolling}, we call a RS camera that does not rotate and moves along a line with constant speed  a \emph{linear RS camera}.
We describe such cameras of order one.
Since, in type~I, constant speed means $\Sigma(1:0) = K \vee \mathcal{C}^\infty$,  \eqref{eq:SigmaViaR} implies that the point $\mathcal{C}^\infty$ has to lie on the line in the infinity plane that is spanned by the first two 2 of the fixed rotation matrix~$R$: \small
\begin{align}
    \begin{split}
    &\mathcal{P}_{I,1}^\mathrm{cs} := \{ (R, K, \mathcal{C})  \mid
         R = \left[\begin{smallmatrix}
             w_1 \\ w_2 \\ w_3
         \end{smallmatrix}\right] \in \mathrm{SO}(3), \\[-1mm] &\quad
         K \in \GrLine, \, K^\infty = (w_2 : 0), \,
         \mathcal{C} \in \GrLine, \\[-1mm] &\quad K \cap \mathcal{C} = \emptyset, \,  \mathcal{C}^\infty \in  (w_1 : 0) \vee (w_2 : 0)  
    \}.
    \end{split}
\end{align}\normalsize
Since the rotation is constant, the embedded projection plane $\Pi(v:t)$ of the camera is only affected by parallel translation. 
It stays always parallel to the plane spanned by $w_1$, $w_2$ and the origin in $\R^3$.
Projectively, this means that $\Pi(v:t)^\infty = (w_1 : 0) \vee (w_2 : 0)$ for all $(v:t)\in \PP^1$.
Hence, the last condition in the definition of $\mathcal{P}_{I,1}^{\mathrm{cs}}$  means that the line $\mathcal{C}$ has to be parallel to the projection plane $\Pi$.
In particular, the projection plane does not change at all over time.

For linear \rs cameras of type II, we have analogously \small
\begin{align}\begin{split}
    &\mathcal{P}_{II,1}^\mathrm{cs} \!:= \{ (R, K, C)  \mid
     R \!\in\! \mathrm{SO}(3) \text{ with 2nd row } w_2,\\[-1mm] &\quad
    K \in \GrLine, \, K^\infty = (w_2:0), \\[-1mm] &\quad
    C: \PP^1 \dashrightarrow K  \text{ birational}, \,  C(1:0) = K^\infty
    \}.    
\end{split}
\end{align} \normalsize
The dimensions of these spaces are
$\dim \mathcal{P}_{I,1}^\mathrm{cs} = 3 + 2 + 3 = 8$
and $\dim \mathcal{P}_{II,1}^\mathrm{cs} = 3 + 2 + 2 = 7$.
By the following proposition, we can put $\mathcal{P}_{I,1}^\mathrm{cs}$ and $\mathcal{P}_{II,1}^\mathrm{cs}$ into a joint parameter space.
\begin{proposition}\label{prop:constandSpeedConstantRotation}
A linear RS camera (i.e., that moves with constant speed along a line $\mathcal{C} \subseteq \PP^3$ and does not rotate) has order one 
if and only if
the line $\mathcal{C}$ is parallel to the projection plane $\Pi$.
If $\mathcal{C}$ is parallel to the rolling-shutter lines on $\Pi$, then the \rs camera is of type II. 
Otherwise, it is of type I.
\end{proposition}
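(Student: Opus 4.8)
The plan is to reduce Proposition~\ref{prop:constandSpeedConstantRotation} to the already-established characterization in Proposition~\ref{prop:rs1NoRotation}, which says that a non-rotating camera with rational center movement has order one exactly when its rolling planes intersect in a line. Since a linear camera moves along a line with constant speed, its center map $C$ is a degree-one (hence rational) parametrization, so Proposition~\ref{prop:rs1NoRotation} applies directly and the whole problem becomes: \emph{when do the rolling planes of a linearly moving camera meet in a common line?} I would then use the explicit form of the rolling planes map for constant rotation, namely $\Sigma:(v:t)\mapsto(w_2:0)\vee(vw_1+tw_3:0)\vee C(v:t)$ from \eqref{eq:SigmaViaR}'s derivation, together with the fact (stated just above \eqref{eq:SigmaViaR}) that all rolling planes contain the fixed point $(w_2:0)$ at infinity.

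First I would set up coordinates. By the $G$-action (Remark~\ref{rem:groupAction}) I may rotate so that $R = I_3$, i.e.\ $w_1 = e_1$, $w_2 = e_2$, $w_3 = e_3$; then the rolling lines on $\Pi$ are parallel to the $y$-axis and $\Pi^\infty = (e_1:0)\vee(e_2:0)$. Write the constant-speed center as $C(v:t) = (a v + b t)$ for fixed vectors $a,b\in\R^3$ giving an affine line $\mathcal{C}$; constant speed forces $\mathcal{C}^\infty = (a:0)$. The rolling plane through the camera at time $(v:t)$ is spanned by the point $(e_2:0)$ at infinity, the point $(v e_1 + t e_3 : 0)$ at infinity, and the finite center $C(v:t)$. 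The key computation is: the pencil of planes $\{\Sigma(v:t)\}_{(v:t)\in\PP^1}$ has a common line if and only if it is a genuine pencil (one-parameter, with fixed axis) rather than sweeping out a larger family. I would compute $\Sigma^\vee(v:t)$ explicitly as a vector in $(\PP^3)^\ast$ depending on $(v:t)$ and determine when the image of this map is contained in a line (equivalently, when $\Sigma^\vee$ is a linear — degree one — map $\PP^1\dashrightarrow(\PP^3)^\ast$, as opposed to degree two). Since $\Sigma^\vee(v:t)$ is built from a cross-product-type expression that is linear in $(v:t)$ from the direction part and linear in $(v:t)$ from $C(v:t)$, the entries are \emph{a priori} quadratic in $(v:t)$; the planes meet in a line precisely when the quadratic parts vanish, i.e.\ the $v^2$-, $vt$-, $t^2$-coefficient vectors are linearly dependent. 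This dependency condition, unwound, is exactly the statement that the direction $a$ of $\mathcal{C}$ lies in the span of $w_1$ and $w_2$ — equivalently $\mathcal{C}^\infty \in (w_1:0)\vee(w_2:0) = \Pi^\infty$, i.e.\ $\mathcal{C}$ is parallel to $\Pi$.

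Having the ``meet in a line'' condition equal to ``$\mathcal{C}\parallel\Pi$'', I would then identify the line $K = \bigcap\Sigma(v:t)$ and decide the type by comparing with Theorem~\ref{thm:rs1cameras}. Since every rolling plane contains $K$, and the common point at infinity of all rolling planes is $(w_2:0)$, we get $K^\infty = (w_2:0)$, so $K$ is never at infinity (good, this matches the hypotheses in $\mathcal{P}_{I,1}^\mathrm{cs}$, $\mathcal{P}_{II,1}^\mathrm{cs}$). The type is II when $\mathcal{C} = K$ or $\mathcal{C}\subset K$-direction... precisely, by Theorem~\ref{thm:rs1cameras} and Remark~\ref{rem:typeISigmaDeterminesC}, type II happens iff $\mathcal{C} = K$, which for a line $K$ with $K^\infty = (w_2:0)$ means $\mathcal{C}^\infty = (w_2:0)$, i.e.\ the direction of $\mathcal{C}$ is $w_2$ — and $w_2$ is exactly the direction of the rolling-shutter lines on $\Pi$ (we chose the $y$-axis direction $e_2 = w_2$ for the rolling lines in \eqref{eq:rho}). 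So: $\mathcal{C}$ parallel to the rolling lines $\iff$ $\mathcal{C}^\infty = (w_2:0) = K^\infty$ $\iff$ $\mathcal{C}$ and $K$ are parallel lines, and I would argue that in fact $\mathcal{C} = K$ in this case (both pass appropriately, or: if $\mathcal{C}\neq K$ then $\mathcal{C}^\infty = K^\infty$ forces $\#(K\cap\mathcal{C}) = 0 \neq d-1 = 0$ — wait, $d=1$ so $d-1 = 0$, consistent, so I need the finer argument that a type-I line camera with $\mathcal{C}^\infty = K^\infty$ is degenerate, reducing to type II after noticing $\Sigma(1:0)$ would be the plane at infinity). When $\mathcal{C}^\infty \in \Pi^\infty \setminus \{(w_2:0)\}$, we are in type I with $\mathcal{C}$ a line disjoint from (or meeting in $d-1=0$ points) $K$, matching $\mathcal{P}_{I,1}^\mathrm{cs}$.

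The main obstacle I anticipate is the careful bookkeeping at infinity in the last paragraph: cleanly separating the case $\mathcal{C}^\infty = (w_2:0)$ (type II) from $\mathcal{C}^\infty \in \Pi^\infty\setminus\{(w_2:0)\}$ (type I), and verifying that when $\mathcal{C}\parallel$ rolling lines one genuinely lands in type II rather than a degenerate type I — this requires checking that $\Sigma(1:0)$ is not the plane at infinity and that $K$ contains $\mathcal{C}$, not merely is parallel to it. Everything else is a direct coordinate computation using \eqref{eq:SigmaViaR} and Proposition~\ref{prop:rs1NoRotation}.
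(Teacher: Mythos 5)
Your proposal is correct and follows essentially the same route as the paper: reduce to \Cref{prop:rs1NoRotation}, normalize to $R=I_3$, and observe that $\Sigma^\vee(v:t)$ is a priori quadratic in $(v:t)$ and drops to a linear (pencil) map exactly when $\mathcal{C}^\infty\in(w_1:0)\vee(w_2:0)$ — the paper carries out this same computation, obtaining $\Sigma^\vee(v:t)=(-t:0:v:av)$ in the parallel case and a conic of planes otherwise (cf.\ \Cref{rem:constantSpeedOrder2}). The one step you leave hanging — that in the case $\mathcal{C}^\infty=(w_2:0)$ one genuinely has $K=\mathcal{C}$ rather than merely $K\parallel\mathcal{C}$ — is settled by the additional normalization the paper uses, namely translating $C(0:1)$ to the origin, after which $K$ is computed explicitly as the span of $(0:1:0:0)$ and $(0:0:-a:1)$ and visibly coincides with $\mathcal{C}$ when $a=0$ (your parenthetical guess that $\Sigma(1:0)$ becomes the plane at infinity is not the right mechanism and is in fact false there).
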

The joint parameter space is \small
\begin{align} \label{eq:paramSpaceLinearRScam} 
\begin{split}
    &\mathcal{P}^\mathrm{cs}_1
    := \{
(R,\mathcal{C},C) \mid
     R = \left[\begin{smallmatrix}
             w_1 \\ w_2 \\ w_3
         \end{smallmatrix}\right] \in \mathrm{SO}(3), \\[-1mm] &\quad
         \mathcal{C} \in \GrLine, \, \mathcal{C} \neq \mathcal{C}^\infty \in  (w_1 : 0) \vee (w_2 : 0), \\[-1mm] &\quad
         C: \PP^1 \dashrightarrow \mathcal{C} \text{ birational}, \, C(1:0) = \mathcal{C}^\infty
    \}.    
\end{split}
\end{align} \normalsize
The parameters with $\mathcal{C}^\infty = (w_2:0)$ correspond to type-II cameras; the others to type-I cameras.

\begin{remark}
    \label{rem:constantSpeedOrder2}
    A linear RS camera  whose center moves on a line that is \emph{not} parallel to the projection plane has order two.
\end{remark}
\section{The image of a line }\label{sec:Image-of-a-line}

\rs cameras view 3D points exactly once. So their image of a 3D line is an irreducible curve. However, that curve is typically not a line. The degree of that image curve is the degree of the picture-taking map $\Phi$~in~\eqref{eq:PhiMap}.

\begin{theorem}    \label{thm:degreePhi}
    The degree of $\Phi$ for a general \rs camera in the parameter spaces described above is\\ \small\begingroup
\setlength{\tabcolsep}{2pt}
\begin{tabular}{c|c|c|c|c|c|c|c}
        $\mathcal{P}_{I,d,\delta}$ & 
        $\mathcal{P}_{I,1,\delta}^\mathrm{cs}$ &
        $\mathcal{P}_{II,d,\delta}$ &
        $\mathcal{P}_{II,1,\delta}^\mathrm{cs}$  
        & $\mathcal{P}_{I,d}$ & 
        $\mathcal{P}_{I,1}^\mathrm{cs}$ & 
        $\mathcal{P}_{II,d}$ & 
        $\mathcal{P}_{II,1}^\mathrm{cs}$ 
        \\ \hline
        $2d\!+\!\delta\!+\!1$ & 
        $\delta+3$ & 
        $2d\!+\!\delta\!+\!2$ & 
        $\delta+4$ & $d+1$ & 
        $2$ & 
        $d+1$ & 
        $2$
    \end{tabular} \normalsize
\endgroup
  \iffalse  
    \begin{tabular}{c||c|c|c|c}
    &
        $\mathcal{P}_{I,d,\delta}$ & 
        $\mathcal{P}_{I,1,\delta}^\mathrm{cs}$ &
        $\mathcal{P}_{II,d,\delta}$ &
        $\mathcal{P}_{II,1,\delta}^\mathrm{cs}$  
        \\ \hline
        $\deg \Phi$ &
        $2d\!+\!\delta\!+\!1$ & 
        $\delta+3$ & 
        $2d\!+\!\delta\!+\!2$ & 
        $\delta+4$
    \end{tabular}
\\
    \begin{tabular}{c||c|c|c|c}
    &
        $\mathcal{P}_{I,d}$ & 
        $\mathcal{P}_{I,1}^\mathrm{cs}$ & 
        $\mathcal{P}_{II,d}$ & 
        $\mathcal{P}_{II,1}^\mathrm{cs}$ 
        \\ \hline $\deg \Phi$ &
        $d+1$ & 
        $2$ & 
        $d+1$ & 
        $2$
    \end{tabular}. 
\fi
\end{theorem}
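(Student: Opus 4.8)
The plan is to compute $\deg \Phi$ by intersecting the image curve $\Phi(L)$ of a generic space line $L$ with a generic line in $\PP^2$, equivalently by counting preimages of a generic image point, which is cleanest to do through the factorization $\Phi(X) = \varphi(\Lambda^{-1}(\Gamma(X)))$ in \eqref{eq:PhiMap}. Since $\varphi$ is birational and $\Lambda$ is birational for an \rs camera, $\deg \Phi$ equals the number of camera rays on the congruence $\overline{\mathcal L}$ that meet $L$; equivalently, it is the degree of the surface $\overline{\mathcal L} \subset \Gr(1,\PP^3)$ in the Plücker embedding (its \emph{class}, i.e.\ the number of lines of the congruence meeting a generic line). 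So the core of the proof is: for each of the eight parameter families, compute the class of $\overline{\mathcal L}$. I would carry out this computation by using the explicit parametrization of $\overline{\mathcal L}$ provided in \Cref{sec:Building-RS-cameras}: in types I and II the rays are parametrized by $(v:t) \in \PP^1$ (choice of rolling plane, hence of $C(v:t)$ and the pencil $\mathcal L(v:t)$) together with a second $\PP^1$ coordinate $(u:s)$ picking the ray in that pencil. Pulling back the Plücker hyperplane $\{\text{lines meeting } L\}$ along this $\PP^1\times\PP^1 \dashrightarrow \overline{\mathcal L}$ map gives a bidegree $(a,b)$ curve, and $\deg\Phi$ is that bidegree paired against the class of the parametrization, i.e.\ essentially $a$ plus a correction coming from the base locus of the parametrization.

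Concretely, I would first treat the general type-I case $\mathcal{P}_{I,d,\delta}$. Write $\Lambda(v:t,u:s)$ in dual Plücker coordinates as in \eqref{eq:LambdaMap}: the ray is the intersection of $\Sigma(v:t)$ with the plane $(0:1:-\tfrac us)\cdot P(v:t)$ through $C(v:t)$. Using \Cref{rem:typeISigmaDeterminesC}, $\Sigma(v:t) = K \vee C(v:t)$ where $C(v:t)$ has coordinates given by the polynomials $(vf:tf:g:h)$ of degrees $(d,d,d,d)$ with $\gcd$ the degree-$(d-1)$ polynomial $f$; and the second plane depends linearly on $(u:s)$ with coefficients of degree $\delta$ in $(v:t)$ coming from $\lambda$. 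Multiplying out, the Plücker coordinates of the ray become bihomogeneous of bidegree $(2d+\delta, 1)$ in $((v:t),(u:s))$ before removing common factors. The condition ``meet $L$'' is one linear equation in these Plücker coordinates, hence a curve of bidegree $(2d+\delta,1)$ in $\PP^1\times\PP^1$; intersecting with the generic fiber structure and subtracting the contribution of the base points (rays that degenerate or are counted spuriously — e.g.\ the fixed line $K$, which lies on every $\Sigma(v:t)$ and must be divided out, accounting for one reduction), I expect to land on $\deg\Phi = 2d+\delta+1$. The analogous bookkeeping in type II uses the camera-ray map $\xi$ from \Cref{lem:cameraRayMapII} with its Plücker vector $(0:-h\Sigma_2C_3:\dots:p)$ of the stated degrees, giving the extra $+1$ over type I because there $\mathcal C = K$ sits differently relative to the congruence.

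After the two master cases, the remaining six columns follow by specialization: $\mathcal{P}_{I,1,\delta}^\mathrm{cs}$ and $\mathcal{P}_{II,1,\delta}^\mathrm{cs}$ are $d=1$ with the constant-speed constraint that prescribes $\Sigma^\vee_\infty$, so one checks the prescribed linear factor does not drop the degree and reads off $\delta+3$, $\delta+4$; the constant-rotation spaces $\mathcal{P}_{I,d}$, $\mathcal{P}_{II,d}$ use \eqref{eq:SigmaViaR}, where $\Sigma(v:t) = K \vee (vw_1+tw_3:0)$ is \emph{linear} in $(v:t)$, collapsing the $2d+\delta$ to essentially $d$ and giving $d+1$; and $\mathcal{P}_{I,1}^\mathrm{cs}$, $\mathcal{P}_{II,1}^\mathrm{cs}$ are the further specialization $d=1$ with no freedom in $\lambda/\xi$ ($\delta=0$-type), giving $2$ — consistent with \Cref{rem:constantSpeedOrder2} and the fact that a linear \rs camera of order one still bends lines into conics. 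The main obstacle I anticipate is the careful accounting of the base locus of each parametrization $\PP^1\times\PP^1 \dashrightarrow \overline{\mathcal L}$: the naive bidegree overcounts by exactly the multiplicity with which the fixed line $K$ (and, in type I, the points $\mathcal C \cap K$) force common factors into the Plücker vector, and getting these multiplicities right — rather than the raw polynomial degrees — is what pins down the ``$+1$'' versus ``$+2$'' discrepancies between the columns. I would verify the final numbers on small cases ($d=1$, $\delta=0,1$) with \Macaulay{} to guard against an off-by-one in this base-point count.
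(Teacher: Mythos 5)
There is a genuine gap, and it starts with your very first reduction. You claim that $\deg\Phi$ "equals the number of camera rays on the congruence $\overline{\mathcal L}$ that meet $L$; equivalently, it is the degree of the surface $\overline{\mathcal L}\subset\GrLine$ in the Plücker embedding." This cannot be right: the set of congruence lines meeting a generic $L$ is a curve (a hyperplane section of the surface $\overline{\mathcal L}$), not a finite set, and more importantly the congruence $\overline{\mathcal L}$ depends only on $(\mathcal C,K)$ while the table you are trying to prove shows $\deg\Phi=2d+\delta+1$ depending on $\delta$, which is a parameter of the parametrization $\Lambda$ (via $\lambda$, i.e.\ the rotation map), not of $\overline{\mathcal L}$. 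The Plücker degree of the type-I congruence (order plus class) is $d+1$ for every $\delta$; it coincides with $\deg\Phi$ only in the constant-rotation columns. So $\deg\Phi$ is an invariant of the pair (congruence, parametrization), and any correct argument must see the rotation map. Your second-paragraph reformulation — pull back the "meets $L$" condition along $\PP^1\times\PP^1\dashrightarrow\overline{\mathcal L}$, intersect with $\varphi^{-1}(\ell)$ — is the right kind of statement, but there the entire content of the theorem sits in exactly the two places you defer: (i) the actual bidegree of the reduced Plücker vector of $\Lambda$ in $(v:t)$, which requires controlling the degree of the rotation map $R(v:t)$ (built from $\Sigma^\vee_\infty$ and $\PL_\infty$ as in \eqref{rotation-explicit}, so of degree governed by $d+\delta$, not something you can read off as "$2d+\delta$" without computation), and (ii) the base-locus multiplicities, which you yourself identify as "the main obstacle" and resolve only by "I expect to land on" the numbers in the statement. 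Asserting the target value and promising a \Macaulay{} check on small cases is a plan, not a proof.

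For comparison, the paper avoids the bidegree bookkeeping entirely: it normalizes $K$ to a coordinate axis, observes that the rolling-plane parameter $(v_X:t_X)$ of a point $X$ depends linearly on $(X_1:X_3)$, writes down $\Gamma_\infty(X)$ and $\Lambda_\infty$ explicitly, solves the resulting $2\times 2$ linear system for $(u_X:s_X)$ in closed form, and then proves that the coordinate $u_X$ (of degree $2d+\delta+1$, resp.\ $2d+\delta+2$, etc.) is irreducible for a \emph{general} camera in each family via explicit gcd arguments on the pieces $f,f_0,f_2$. This irreducibility/no-common-factor step is the precise analogue of your base-locus correction and is what distinguishes the "$+1$" from the "$+2$" columns; your proposal does not contain an argument for it in any of the eight cases. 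Until the bidegree of the reduced $\Lambda$ and the generic absence of common factors are actually established, the table is assumed rather than derived.
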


\iffalse
\begin{theorem}
    \label{thm:degreePhi}
    For a general \rs camera of type I in $\mathcal{P}_{I,d,\delta}$, the degree of $\Phi$ is $2d+\delta+1$. This also holds for $d=1$ under the constant-speed assumption (i.e., for $\mathcal{P}_{I,1,\delta}^\mathrm{cs}$).

    For a general \rs camera of type II or III in $\mathcal{P}_{II,d,\delta}$, the degree of $\Phi$ is $2d+\delta+2$. This also holds for $d=1$ under the constant-speed assumption (i.e., for $\mathcal{P}_{II,1,\delta}^\mathrm{cs}$).

    For a general \rs camera with constant rotation in $\mathcal{P}_{I,d}$ or $\mathcal{P}_{II,d}$, the degree of $\Phi$ is $d+1$.
    This also holds for $d=1$ under the constant-speed assumption (i.e., for $\mathcal{P}_{I,1}^\mathrm{cs}$ or $\mathcal{P}_{II,1}^\mathrm{cs}$).
\end{theorem}
\fi

The image curve of a line under an \rs camera is not an arbitrary rational curve of the degree as prescribed in~\cref{thm:degreePhi}. In fact, they have a single singularity at the image point where all rolling lines meet.

\begin{proposition} \label{prop:highMultiplicityPoint}
Consider an \rs camera.
For a general line $L$, its image $\Phi(L)$ is a curve of degree $\mathrm{deg}(\Phi)$ with multiplicity \mbox{$\deg (\Phi)-1$} at the point $(0:1:0)$.
\end{proposition}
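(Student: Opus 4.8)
## Proof proposal

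The plan is to analyze the map $\Phi$ near the point $(0:1:0) \in \PP^2$, which is the common intersection point of all rolling lines, i.e.\ the image $\varphi((1:0),(0:1))$. The key observation is that $(0:1:0)$ is hit by $\Phi$ exactly along the camera rays that pass through a point of the line $K$: by the construction of the rotation maps in \cref{sec:Building-RS-cameras} (via $\lambda$ in type I, via $\xi$ in type II), the projection matrix $P(\tfrac{v}{t})$ sends the line $K$ (more precisely the chosen point $\lambda(\tfrac{v}{t})$, respectively the chosen ray $\xi(\tfrac{v}{t})$) to $(0:1:0)$. So, first I would show that $\Phi^{-1}(\{(0:1:0)\})$ is exactly the union over all parameters $\tfrac{v}{t}$ of the camera rays lying in the rolling plane $\Sigma(\tfrac{v}{t})$ that meet $K$ — and since all rolling planes contain $K$, every such ray either lies in $K$'s pencil or passes through $K$. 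Concretely, $\Phi(X) = (0:1:0)$ iff the unique ray $\Gamma(X)$ through $X$ meets $K$, which (using \cref{thm:rs1cameras} and \cref{rem:typeISigmaDeterminesC}) happens iff $X$ lies on a ray of the congruence incident to $K$.

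Next I would compute, for a general line $L$, the intersection number of $\Phi(L)$ with a general line through $(0:1:0)$ in $\PP^2$, and separately the intersection multiplicity of $\Phi(L)$ with the ``exceptional'' configuration at $(0:1:0)$. The cleaner route is via degrees: a general line $\ell \subset \PP^2$ not through $(0:1:0)$ meets $\Phi(L)$ in $\deg(\Phi(L)) = \deg(\Phi)$ points (this is where I invoke \cref{thm:degreePhi} together with the fact that $\Phi|_L$ is birational onto its image, since the camera has order one). On the other hand, I would count how many points of $L$ map into $(0:1:0)$: these are the points $X \in L$ whose ray $\Gamma(X)$ meets $K$. Because $L$ is a general line and the rays meeting $K$ sweep out a surface (the sub-congruence of $\overline{\mathcal L}$ incident to $K$) of a degree I can read off from the parameterization, this is a finite number $m$, and I claim $m = \deg(\Phi) - 1$. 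The multiplicity of $\Phi(L)$ at $(0:1:0)$ is then forced to be $m$: a general line through $(0:1:0)$ meets $\Phi(L)$ with total multiplicity $\deg(\Phi)$, of which $m$ is absorbed at $(0:1:0)$ and the remaining $\deg(\Phi)-m = 1$ accounts for a single further transverse intersection — which matches \cref{prop:highMultiplicityPoint} and also recovers the ``exactly one further point at infinity'' statement from the introduction.

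To pin down $m = \deg(\Phi)-1$ in each case I would work in the normalized coordinates of \cref{sec:Building-RS-cameras} (put $K$ or $K'$ as the $z$-axis), write $\Phi$ explicitly using $\varphi \circ \Lambda^{-1} \circ \Gamma$, and identify the pullback $\Phi^*(\text{line through }(0:1:0))$ versus $\Phi^*(\text{general line})$ as divisors on $L \cong \PP^1$. The difference of these two divisor classes is the divisor of points of $L$ mapping to $(0:1:0)$, whose degree is $m$; but also the difference is $0$ in $\mathrm{Pic}(\PP^1)$ only after accounting for base points, and a local computation at the base locus of $\Phi$ over $(0:1:0)$ yields $m$. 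I would do this once for the generic type I case ($\mathcal{P}_{I,d,\delta}$) and then note that the other parameter spaces are specializations or mild variants, so the same count goes through with $\deg(\Phi)$ replaced by its value from the table.

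The main obstacle I expect is the bookkeeping at the base locus: $\Phi$ is only a rational map, and its indeterminacy locus in $\PP^3$ includes (a neighborhood of) the line $K$ and possibly the center locus $\mathcal C$, so ``$\Phi(X) = (0:1:0)$'' must be interpreted on a resolution / via the graph of $\Phi$. Establishing that the excess intersection at $(0:1:0)$ is exactly $\deg(\Phi)-1$ — and not more, i.e.\ that $\Phi(L)$ genuinely has a single remaining transverse point there — requires showing that the strict transform of a general line $\ell \ni (0:1:0)$ meets the strict transform of $\Phi(L)$ in one point, which amounts to a local analysis of the blow-up of $\PP^2$ at $(0:1:0)$ and of how $\Phi(L)$ sits in it. I would handle this by exhibiting one explicit tangent direction at $(0:1:0)$ along which $\Phi(L)$ has an ordinary $(\deg\Phi-1)$-fold point (ordinary multiplicity, distinct tangents generically), which is the statement that makes \cref{prop:highMultiplicityPoint} exactly sharp.
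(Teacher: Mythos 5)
There is a genuine gap, and it starts at your very first step. You claim that $\Phi(X)=(0:1:0)$ if and only if the ray $\Gamma(X)$ meets $K$. But for a type-I camera the congruence $\overline{\mathcal L}$ is precisely the family of lines meeting \emph{both} $\mathcal C$ and $K$ (Kummer type iii in \cref{thm:kummer}); equivalently, every camera ray lies in some rolling plane $\Sigma(\tfrac{v}{t})$, which contains $K$, so every camera ray meets $K$. Your proposed characterization would therefore make $\Phi$ constant. The actual preimage of $(0:1:0)$ is much smaller: it is the ruled surface swept out by the \emph{distinguished} rays $\PL(\tfrac{v}{t})=C(\tfrac{v}{t})\vee\lambda(\tfrac{v}{t})$ (type I), resp.\ $\xi(\tfrac{v}{t})$ (type II) --- the one ray per rolling plane that $P(\tfrac{v}{t})$ sends to $(0:1:0)$. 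This error propagates into your count of $m$: you never actually compute the degree of the correct ruled surface, you defer the identity $m=\deg(\Phi)-1$ to an unexecuted ``local computation at the base locus,'' and you would additionally need to justify that the number of preimage points of $(0:1:0)$ on $L$ equals the multiplicity of the image curve there (branches can contribute with multiplicity greater than one). So the hard part of the proposition is exactly the part left open.

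The paper's proof avoids all of this by computing the \emph{complement} instead. The rolling lines $\rho(v:t)$ are exactly the pencil of lines through $(0:1:0)$, and a generic rolling line $r$ is the image of the rolling plane $\Sigma(r)$, which meets a general line $L$ in exactly one point; hence $r$ meets $\Phi(L)$ in exactly one point outside $(0:1:0)$, and a short duality argument shows this intersection is transverse. B\'ezout then forces the remaining intersection multiplicity $D-1$ (with $D=\deg\Phi(L)=\deg\Phi$) to sit at $(0:1:0)$, and since $r$ is generic in the pencil through that point, this is the multiplicity of the curve there. If you want to salvage your route, you should replace ``rays meeting $K$'' by the ruled surface of the rays $\PL(\tfrac{v}{t})$ and compute its degree; but the paper's argument via rolling planes is both shorter and sidesteps the base-locus bookkeeping you anticipate.
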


\begin{figure*}
\centering
\begin{minipage}{0.68\textwidth}
\scalebox{0.35}{
\tikzset{every picture/.style={line width=0.75pt}} %set default line width to 0.75pt        
% [inline block 0: 1 envs, 62330 chars -> data_tex | \begin{tikzpicture}[x=0.75pt,y=0.75pt,yscale=-1,xscale=1] %uncomment if require: \path (0,731); %set diagram left start ...]
}
\caption{Illustration of all minimal problems as lines and points in $\mathbb{P}^3$. Each problem is encoded by 9 integers: The number $m$ of cameras, followed by its combinatorial signature (see \cref{def:balanced} ff.) with $p_\infty$ at the end. 
% The minimal problems for $m=2$ are found in the first three rows. The fourth row shows the minimal problems for $m=3$. In the fifth row, the first three images represent the minimal problems for $m=4$ and the last sketch illustrates the unique minimal problem for $m=5$.
Points on dashed lines are known to be collinear in $\PP^3$, but the image conics are not observed.
%   Dashed lines indicate missing knowledge of the corresponding conic in two-space.}
Lower bounds for the degrees
%(see~\Cref{deg-comp})
are shown below the sketches, see~\Cref{deg-comp}.
   %Asterisks indicate varying results between runs (the maximum is shown), plusses indicate interrupted runs.
%``$\ast$'' indicates varying results between runs. We believe the actual number is close to the shown one. ``$+$'' indicates interrupted runs. For these, we do not know how much higher the actual number is. 
``$\ast$'': the maximum from different computational runs is shown (the actual number is close to the number shown). ``$+$'': interrupted runs (the actual number is higher than the number shown).}
\label{fig:minmtwo}
\end{minipage}
\hspace*{10pt}
\begin{minipage}{0.28\textwidth}
\centering
\begin{tabular}{c}
\fbox{(a)~\includegraphics[height=1.9cm,]{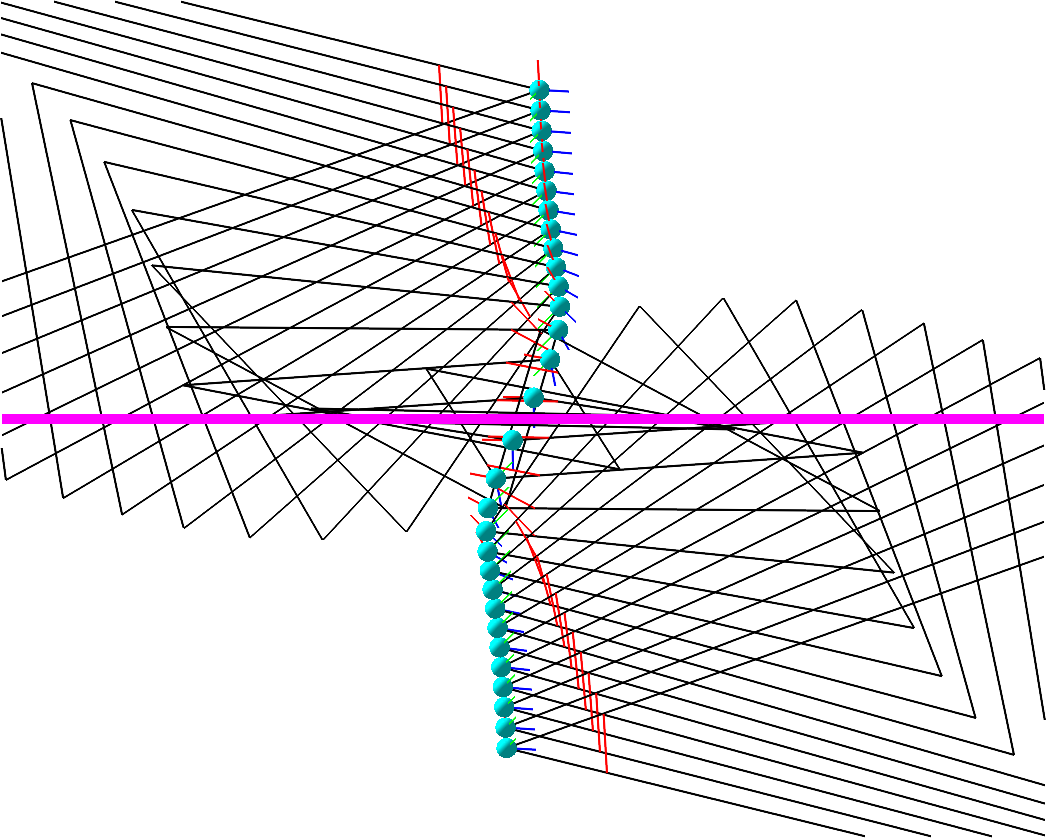}
\includegraphics[height=1.9cm,width=1.5cm]{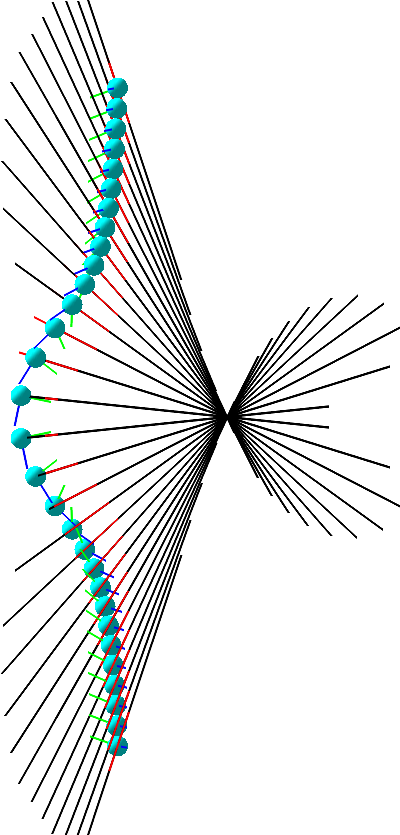}}\\
\fbox{(b)~\includegraphics[height=1.9cm,width=2.15cm]{Figs/simpl-model-figs/22-2.png}
\includegraphics[height=1.9cm,width=1.7cm]{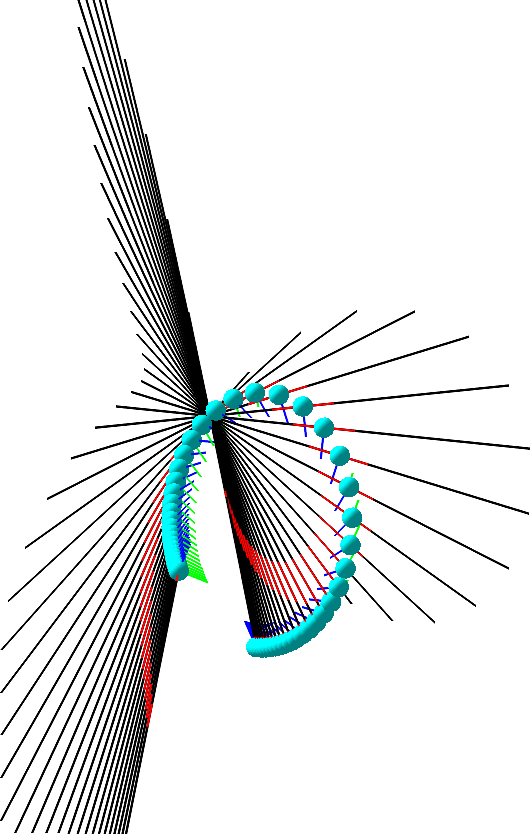}}\\
\fbox{(c)~\includegraphics[height=1.9cm,width=2.07cm]{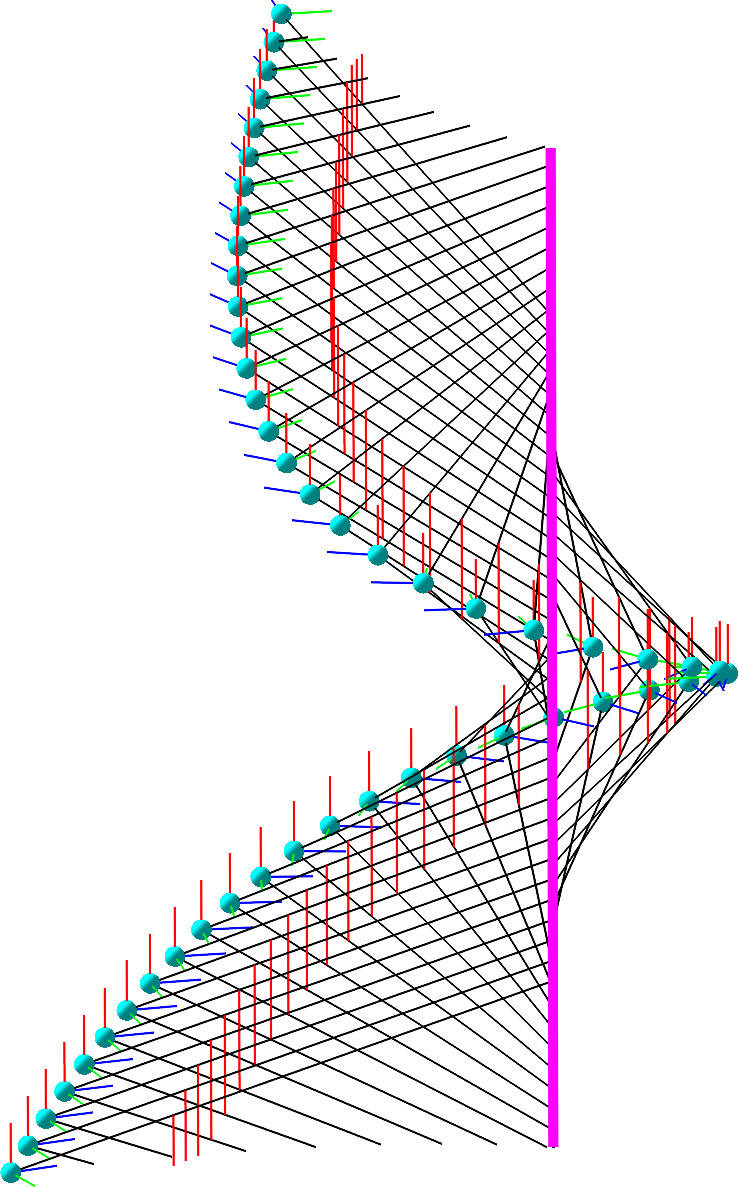}
\includegraphics[height=1.9cm]{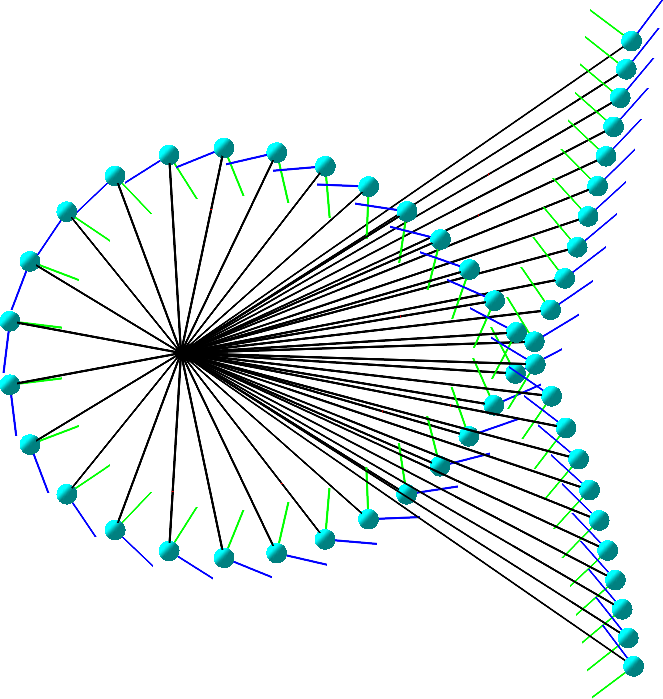}}
\end{tabular}
\caption{
(a)  \cref{ex:twistedCubicCameras1}, (b)  \cref{ex:twistedCubicCameras2}, (c)  \cref{ex:twistedCubicCameras3}. The camera center ${\textcolor{cyan}C}$ (cyan) moves along a twisted cubic curve $\mathcal{C}$. The rolling planes $\Sigma$ (black) intersect in a line ${\textcolor{magenta} K}$ (magenta). See~\cref{sec:details-twistedCubicCameras} for more details.}
\label{fig:twistedCubicCameras} 
\vspace*{1em}
\end{minipage}
\end{figure*}

\section{Minimal problems of linear \texorpdfstring{\rs}{RS1}cameras}\label{sec:Min-Prob}

 The linear \rs cameras are classified in \Cref{prop:constandSpeedConstantRotation}.
This section classifies the minimal problems of structure-from-motion (SfM) from linear \rs cameras that observe points, lines, and their incidences. 
In this setting, SfM is the following 3D reconstruction problem:

\begin{problem}\label{reconstruction-problem}
We have pictures $Y_1,\dotsc,Y_m$ of a finite set $X$ of points and lines in space.
The points and lines in $X$ satisfy some prescribed incidences.
Each picture was taken by a linear \rs camera.
Find the set $X$ and the camera parameters that produced the pictures.
\end{problem}

Let $\mathcal P^m$ be the set of all $m$-tuples of camera parameters.
We can take $\mathcal{P}^m = (\mathcal{P}_1^{\mathrm{cs}})^m$, where the latter parameter space is defined in \eqref{eq:paramSpaceLinearRScam}.
We consider point-line arrangements in $3$-space consisting of $p$ points and $\ell$ lines whose incidences are encoded by an index set $\mathcal{I} \subset [p ] \times [ \ell ]$.
That $(i,j) \in \mathcal{I}$ means that the $i$-th point is contained in the $j$-th line.
We model intersecting lines by requiring their  intersection  point to be one of the $p$ points.
We  write $\mathfrak X = \mathfrak X(p,\ell,\mathcal{I})$ for the variety of all $p$-tuples of points and $\ell$-tuples of lines in space that satisfy the incidences \mbox{prescribed by $\mathcal{I}$.}

By \Cref{thm:degreePhi}, a general linear \rs camera maps a general line in space to an image conic. 
The next lemma explains that the line at infinity on the image plane intersects each image conic at two points: one is $(0:1:0)$, the other depends only on the camera parameters.

\begin{lemma}\label{lemma-p-infty-short}
Let $(R,\mathcal{C},C)\in \mathcal{P}_1^{cs}$ be of type I and  $(a:b:0)$ be such that $\mathcal{C}^\infty = (a:b:0) \cdot R$.
Consider the associated picture-taking map $\Phi\colon\mathbb{P}^3\dashrightarrow\mathbb{P}^2$ and a line $L\subset\mathbb{P}^3$. Then:
\begin{itemize}
	\item either $\overline{\Phi(L)}$ is a conic through %the points
 $(0:1:0)$ and $(a:b:0)$,
	\item or $\overline{\Phi(L)}$ is the line through %the points
 $(0:1:0)$ and $(a:b:0)$.
\end{itemize}
\end{lemma}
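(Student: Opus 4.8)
The plan is to reduce everything to an explicit computation in coordinates adapted to the linear \rs camera, using the structure provided by Theorem~\ref{thm:rs1cameras} and Proposition~\ref{prop:constandSpeedConstantRotation}. First I would normalize by the $G$-action of Remark~\ref{rem:groupAction}: rotate and translate so that the constant rotation matrix $R$ becomes the identity and the line $K$ (the common intersection of all rolling planes) becomes the $z$-axis $K' = (0:0:1:0)\vee(0:0:0:1)$. By \eqref{eq:SigmaViaR} the rolling planes map is then $\Sigma\colon(v:t)\mapsto K\vee(vw_1+tw_3:0)$, and in these coordinates the center moves along a line $\mathcal{C}$ parallel to the projection plane with constant speed, so $C(v:t)$ is an affine-linear function of $v/t$. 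From these normalizations I would write down the projection matrices $P(v/t) = [I_3 \mid -C(v/t)]$ explicitly and assemble $\Phi$ via \eqref{eq:PhiMap}: for type~I, $\Sigma$ is birational, so $\Gamma(X)$ — the unique ray of $\overline{\mathcal{L}}$ through $X$ — is obtained by finding the unique parameter $v/t$ with $X\in\Sigma(v/t)$ (a linear equation in $v/t$ whose solution is a rational function of $X$), then intersecting $\Sigma(v/t)$ with the pencil $\mathcal{L}(v/t)$ through $C(v/t)$ and joining to $X$; composing with $\Lambda^{-1}$ and $\varphi$ gives $\Phi(X)$ in closed form.

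With $\Phi$ in hand, the second step is to analyze $\overline{\Phi(L)}$ and its points at infinity. Since $\deg\Phi = \delta+3$ with $\delta = 0$ here (linear cameras, $d=1$, $\delta=0$ in $\mathcal{P}_1^{\mathrm{cs}}$, see Theorem~\ref{thm:degreePhi}) — wait, one must be careful: the entry for $\mathcal{P}_{I,1}^{\mathrm{cs}}$ is $2$, so $\overline{\Phi(L)}$ is a conic (or degenerates to a line), consistent with the statement. By Proposition~\ref{prop:highMultiplicityPoint}, $\overline{\Phi(L)}$ has multiplicity $\deg\Phi - 1 = 1$ at $(0:1:0)$, i.e.\ it passes through $(0:1:0)$; so I only need to identify the \emph{other} intersection point of the conic with the line at infinity $\{s=0\}$ on $\PP^2$ and show it equals $(a:b:0)$ independently of $L$. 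The natural way is: a point of $\overline{\Phi(L)}$ at infinity is the limit of $\Phi(X)$ as $X$ runs along $L$ towards a point where the denominator of (a suitable affine version of) $\Phi$ vanishes. This happens exactly when the chosen parameter value $v/t$ tends to $\infty$, i.e.\ when the unique rolling plane through $X$ is $\Sigma(1:0) = K\vee\mathcal{C}^\infty$ — which by the constant-speed hypothesis is the rolling plane containing $C(1:0)=\mathcal{C}^\infty$, the point at infinity of the center line. I would compute $\varphi(\Lambda^{-1}(\Gamma(X)))$ in this limit and check it is the fixed point $(a:b:0)$ where $\mathcal{C}^\infty = (a:b:0)\cdot R$; after the normalization $R=I$ this is just $\mathcal{C}^\infty$ itself, and the claim becomes a short direct verification that the image of the rolling line $\rho(1:0)$ "at the limit" passes through the projected direction of $\mathcal{C}$.

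The degenerate alternative — $\overline{\Phi(L)}$ being the \emph{line} through $(0:1:0)$ and $(a:b:0)$ rather than a conic — arises precisely for the special lines $L$ on which the generically-quadratic parametrization $t\mapsto\Phi(\text{point of }L\text{ at parameter }t)$ drops degree (e.g.\ lines $L$ meeting $K$, or lines lying in a rolling plane). Since both candidate curves already pass through the two distinguished points, it suffices to observe that $\overline{\Phi(L)}$ is an irreducible curve of degree $\le 2$ containing $(0:1:0)$ and $(a:b:0)$, hence is either the claimed conic or the line through them; I would note that the line case is exactly when $\Phi|_L$ is not birational onto its image. The main obstacle I anticipate is the bookkeeping in the limit computation of Step~2: correctly tracking how $\Lambda^{-1}$ behaves as the camera parameter degenerates to $(1:0)$ (the locus where $\Lambda$ and $\varphi$ are not defined), and confirming that the two sources of "point at infinity" — the fixed base point $(0:1:0)$ of $\varphi$ and the direction $\mathcal{C}^\infty$ — are genuinely distinct and exhaust the intersection with $\{s=0\}$. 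Everything else is routine elimination, ideally double-checked with a \Macaulay{} computation on a generic choice of $R$, $\mathcal{C}$, $C$.
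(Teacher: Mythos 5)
Your overall strategy is sound and the first half coincides with the paper's: normalize by the $G$-action so that $R=I_3$ and $C(v:t)=(av:bv:0:t)$, solve the linear equation $X\in\Sigma(v:t)$ for the time parameter, and write $\Phi$ in closed form (one small slip: with the convention \eqref{eq:SigmaViaR} the second row of $R$ is $K^\infty$, so after setting $R=I_3$ the line $K$ is parallel to the $y$-axis, not the $z$-axis). Where you diverge is the second half. The paper eliminates to get the defining quadric of $\Phi(L)$ for an \emph{arbitrary} line $L=\alpha\vee\beta$ and reads off the two identities $c_{22}=0$ and $ac_{11}+bc_{12}=0$ on its coefficients; this handles the generic and degenerate cases in one stroke. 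You instead invoke \cref{thm:degreePhi} and \cref{prop:highMultiplicityPoint} to get a conic through $(0:1:0)$ and then locate the second point at infinity by a limit. Two cautions there. First, your claim that the denominator of $\Phi$ vanishes ``exactly when $v/t\to\infty$'' is not right: in the normalized coordinates the last coordinate of $\Phi$ is $X_3(X_3+aX_0)$, and only the factor $X_3+aX_0$ (the plane $\Sigma(1:0)=K\vee\mathcal{C}^\infty$) corresponds to $v/t\to\infty$; the factor $X_3=0$ is hit at finite time with $u/s\to\infty$ and produces the point $(0:1:0)$. Since you obtain $(0:1:0)$ separately from \cref{prop:highMultiplicityPoint} this does not break the argument, but the statement should be corrected. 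Second, the limit bookkeeping you worry about is avoidable: once $\Phi$ is in closed form, substituting $X_3=-aX_0$ gives $\Phi(X)=(-aX_1X_0:-bX_1X_0:0)=(a:b:0)$ directly, i.e.\ the plane $\Sigma(1:0)$ is contracted to $(a:b:0)$, and a general $L$ meets each of the two planes $\{X_3=0\}$ and $\{X_3+aX_0=0\}$ once, which accounts for both intersections of the conic with the line at infinity. Finally, note that \cref{thm:degreePhi} and \cref{prop:highMultiplicityPoint} are stated for general cameras and general lines, so to cover every $L$ (and the line-image degeneration) you still need the observation that the curve $\overline{\Phi(L)}$ has degree at most two for all $L$ — which is exactly what the paper's explicit quadric from elimination gives you for free.
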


Hence, the camera maps a general point-line arrangement in $\mathfrak{X}$ to an arrangement in the image plane consisting of $p$ points and $\ell$ conics that satisfy the incidences $\mathcal{I}$ and such that all conics pass through the same two points at infinity, one of them being $(0:1:0)$. 
We write $\mathcal Y(p,\ell,\mathcal{I})$ for the variety of all such planar point-conic arrangements. 

Note that, as soon as such an arrangement contains at least one conic, the point $(a:b:0)$ from \Cref{lemma-p-infty-short} is known. It can be obtained from intersecting the conic with the line at infinity.
If however only image points have been observed, then the point $(a:b:0)$ is a priori unknown.
But we might have an oracle that has additional knowledge of camera parameters and provides us with that special point.
Thus, we want to allow the possible knowledge of the point $(a:b:0)$ for each involved camera, without necessarily observing any image conics.  
We set the boolean value
 $p_\infty\in\{1,0\}$ according to whether we assume knowledge of the point $(a:b:0)$ or not. 
 We write $\mathcal Y = \mathcal Y(p,\ell,\mathcal{I},p_\infty)$ for the variety of all planar point-conic arrangements in $\mathcal Y(p,\ell,\mathcal{I})$ plus the point $(a:b:0)$ if $p_\infty =1$. 
 %TODO: add example, refering to table

\Cref{reconstruction-problem} asks to compute the preimage of such a planar arrangement $y \in \mathcal{Y}$ under the rational joint-camera map $\mathcal P^m\times \mathfrak X\dashrightarrow \mathcal Y$.
The scaled special Euclidean group $G$ from \Cref{rem:groupAction} acts on the preimages of that map, so we rather consider the quotient on the domain and let $\Phi^{(m)} = \Phi^{(m)}(p,\ell,\mathcal{I})$ be the map
\begin{align}\label{eq:jcm}
    \Phi^{(m)}:(\mathcal P^m\times \mathfrak X) / G \dashrightarrow \mathcal Y.
\end{align}
%\vspace*{-3ex}
\begin{definition} \label{def:minimal}
    The Reconstruction \cref{reconstruction-problem} is \emph{minimal} if its solution set is non-empty and finite for generic input pictures. 
    In that case, the number of complex solutions given a generic input is the \emph{degree} of the minimal problem.
\end{definition}
%\vspace*{-2ex}
 \begin{theorem} \label{thm:minimalMain}
     There are exactly 31 minimal problems for SfM with linear \rs cameras fully observing point-line arrangements, where either all or no given views know the special point $(a\!:\!b\!:\!0)$. They are shown in \cref{fig:minmtwo}.
\end{theorem}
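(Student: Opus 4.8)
The plan is to set up a counting/dimension framework that reduces the classification to a finite combinatorial search, and then carry out that search. First I would make precise the notion of minimality in terms of the joint-camera map $\Phi^{(m)}$ from \eqref{eq:jcm}. Since both the domain $(\mathcal{P}^m \times \mathfrak{X})/G$ and the target $\mathcal{Y}$ are (quasi-projective) algebraic varieties and $\Phi^{(m)}$ is a dominant rational map whenever the problem is minimal, minimality forces $\dim (\mathcal{P}^m \times \mathfrak{X})/G = \dim \mathcal{Y}$, so the first step is to compute both dimensions as explicit functions of the discrete data $(m, p, \ell, \mathcal{I}, p_\infty)$. The camera-side dimension is $m \cdot \dim \mathcal{P}_1^{\mathrm{cs}} - \dim G = 8m - 7$ by \eqref{eq:paramSpaceLinearRScam} and \cref{rem:groupAction}. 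The dimension of $\mathfrak{X}(p,\ell,\mathcal{I})$ is $3p + 4\ell$ minus one for each incidence in $\mathcal{I}$ (a point lying on a line is one condition, since once the point is on the line the line has $2$ remaining degrees of freedom through it — I would double-check this standard incidence bookkeeping). On the image side, a planar point-conic arrangement in $\mathcal{Y}(p,\ell,\mathcal{I})$ contributes $2$ per image point and, because every image conic is constrained to pass through $(0:1:0)$ and through the common unknown point $(a:b:0)$, each conic contributes $5 - 2 = 3$ free parameters once $(a:b:0)$ is fixed, plus $1$ for the location of $(a:b:0)$ itself when $\ell \geq 1$ or $p_\infty = 1$, minus one per incidence; here I must be careful that the $p_\infty$ bit and the ``$\ell \geq 1$'' case interact correctly, and that $\mathcal{Y}$ is taken per-image (so each of the $m$ views carries its own image arrangement). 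Equating the two sides yields a single Diophantine equation in $(m,p,\ell,\mathcal{I}_{\#}, p_\infty)$ that every minimal problem must satisfy.

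The balance equation alone gives infinitely many candidates, so the second step is to impose the genuinely restrictive conditions. One direction is \emph{local surjectivity / non-degeneracy}: for the solution set to be non-empty and finite for \emph{generic} input, $\Phi^{(m)}$ must be dominant, which by generic smoothness is equivalent to the Jacobian of $\Phi^{(m)}$ having full rank at a generic point of the domain. The key technical ingredient I would use here is the same differential/tangent-space analysis already used to prove \cref{thm:degreePhi} and \cref{prop:highMultiplicityPoint}: it tells us exactly how a single line's image conic depends on the camera parameters and on the line, so I can assemble the full Jacobian blockwise (one block per camera, one per feature) and read off when it drops rank. This rules out, e.g., configurations with too many lines relative to points or with degenerate incidence patterns where some camera parameters become unidentifiable (cf.\ the classical rolling-shutter degeneracies in the cited literature). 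The other direction is an \emph{upper bound on $m$}: I would show that for $m \geq 6$ the minimal balance forces the feature counts into a regime where either $\mathfrak{X}$ or $\mathcal{Y}$ stops behaving as expected (e.g.\ forced incidences collapse dimensions, or the map factors), so no minimal problems exist — this is the ``no minimal problems for more than 5 cameras'' claim. The $m=1$ case is handled directly: $8 \cdot 1 - 7 = 1$ is too small to match any nonempty image arrangement's dimension, so there are no single-camera minimal problems.

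With $m \in \{2,3,4,5\}$ pinned down, the third step is a finite enumeration: for each $m$, solve the balance equation for admissible $(p,\ell,\mathcal{I}, p_\infty)$, discard those failing the Jacobian full-rank test, and discard redundant ones (problems that are ``sub-problems'' of others, or related by the obvious symmetries — relabeling points/lines, the $p_\infty$ toggle when $\ell \geq 1$ makes it automatic). This enumeration is best organized by first fixing $(m, p, \ell, p_\infty)$, then ranging over incidence index sets $\mathcal{I}$ up to isomorphism of the associated bipartite incidence graph; for the small values of $p, \ell$ that survive the balance equation this is a genuinely finite and computer-checkable list. I expect the main obstacle to be exactly this: verifying that the naive dimension counts are \emph{correct} — i.e.\ that $\Phi^{(m)}$ really is dominant (not merely that domain and target have equal dimension) and that the solution set is genuinely finite (not positive-dimensional because of an unforeseen degeneracy) for each surviving candidate. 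In practice this is settled by exhibiting, for each of the 31 candidates, one explicit instance over $\mathbb{Q}$ (or a finite field) where the Jacobian has full rank and the fiber is computed to be finite, e.g.\ via Gröbner bases or numerical homotopy in \Macaulay{} or \texttt{HomotopyContinuation.jl}; this simultaneously confirms minimality and produces the degree (number of complex solutions) reported in \cref{fig:minmtwo}. Assembling \cref{fig:minmtwo} from these computations, together with the non-existence arguments for $m = 1$ and $m \geq 6$, completes the proof.
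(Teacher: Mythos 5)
Your overall architecture (balance equation from a dimension count, Diophantine finiteness, computer enumeration, Jacobian test at a random point for dominance) matches the paper's strategy for $m\geq 2$, but there are three genuine gaps.

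First and most seriously, your dismissal of $m=1$ is wrong. You argue that ``$8\cdot 1-7=1$ is too small to match any nonempty image arrangement's dimension,'' but the domain of $\Phi^{(1)}$ is $(\mathcal P\times\mathfrak X)/G$, whose dimension is $1+\dim\mathfrak X$, not $1$. The balance equation for $m=1$ reduces to $p_0+\ell_0+1=p_\infty+p_3+\ell_2$ and has infinitely many solutions, so single-camera problems cannot be excluded by counting. The paper needs a substantial geometric argument here: for a generic image conic the preimage lines rule a smooth quadric (\cref{lemma-p-infty-short}), the second ruling induces isomorphisms between the preimage lines of image points, and composing these along any cycle of the incidence graph gives an automorphism of $\mathbb{P}^1$ with three fixed points, hence the identity. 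This produces a one-dimensional family of reconstructions per connected component, so the fiber of $\Phi^{(1)}$ is never finite. Nothing in your proposal supplies this.

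Second, your dimension bookkeeping for $\mathfrak X$ and $\mathcal Y$ is internally inconsistent (you say ``minus one for each incidence'' while correctly noting a line through a fixed point has $2$ remaining degrees of freedom, i.e.\ minus two) and, more importantly, misses the distinction the paper must make between a \emph{third} collinear point and \emph{further} collinear points: in $\mathbb{P}^3$ a third point on a line costs $2$ conditions, but in the image the corresponding conic through $(0:1:0)$ and $(a:b:0)$ is not determined until three image points lie on it, so third and further collinear points contribute differently to $\dim\mathcal Y$ ($2$ vs.\ $1$). Without the refined signature $(p_0,p_3,p_4,\ell_0,\ell_1,\ell_2,\gamma,p_\infty)$ your balance equation would enumerate a different (wrong) candidate list.

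Third, the enumeration yields $34$ balanced problems of which $3$ are balanced but \emph{not} minimal; a random-point Jacobian test can certify dominance but cannot certify its failure, so you need positive non-minimality arguments. The paper supplies two: a cross-ratio obstruction for configurations with four collinear points (the cross-ratio of the projected image points is an invariant, contradicting genericity of two independent views), and a scaling argument for a single point with adjacent lines (moving the camera radially leaves all pictures unchanged). Your proposal would leave these three cases unresolved.
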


\section{Straight-Cayley cameras}\label{sec:Straight-Cayley}
In~\cite{Albl-PAMI-2019}, the Straight-Cayley RS model with a constant speed translation in the camera coordinate system $\beta_u$ and Cayley rotation parameterization was used to set up a tractable minimal problem of RS absolute pose camera computation:
\begin{eqnarray}
\lambda [u \,  v \, 1]^\top = R\left((u-u_0)\,O_\delta\right)\,X_\delta + T_{\beta_u} + (u-u_0)\,V_{\beta_u} \nonumber \\
R(c)=\frac{1}{d} 
\left[ \begin{smallmatrix}
1+c_1^2-c_2^2-c_3^2&2\,(c_1 c_2- c_3)&2\,(c_1 c_3+ c_2)\\
2\,(c_1 c_2+ c_3)&1-c_1^2+c_2^2-c_3^2&2\,(c_2 c_3- c_1)\\
2\,(c_1 c_3- c_2)&2\,(c_2 c_3+ c_1)&1-c_1^2-c_2^2+c_3^2
\end{smallmatrix} \right]
\label{eq:Cayley}\\ 
d= 1+c_1^2+c_2^2+c_3^2  \nonumber
\end{eqnarray}
Here, we have depth $\lambda \in \R$, image coordinates $u,v \in \R$, the offset $u_0 \in \R$, a rotation axis $O_\delta \in \R^3$ in the world coordinate system $\delta$, a 3D point $X_\delta \in \R^3$, a camera center $T_{\beta_u} \in \R^3$ for $u = u_0$ in the cameras coordinate system $\beta_u$, and a translation direction vector $V_{\beta_u} \in \R^3$. The translation velocity is given by $\|V_\delta\| = \|V_{\beta_u}\|$. The rotation angle $\theta$ of $R\left((u-u_0)\,O_\delta\right)$ around the axis $o$ is determined by $(u-u_0)\|O_\delta\| = \tan(\theta/2)$. Thus, for small angles, $\theta \simeq (u-u_0)\|O_\delta\|/2$, and $\|O_\delta\|/2$ approximates the angular velocity of the rotation.  
We now show how the camera center moves in the world coordinate system $\delta$. We can write
\begin{align}
& \lambda [u\, v\, 1]^\top =  R\!\left((u-u_0)\,O_\delta\right) \cdot \\ & \quad \left(X_\delta - R\!\left((u-u_0)\,O_\delta\right)^\top \left(-T_{\beta_u} - (u-u_0)\,V_{\beta_u}\right)\right),
   \nonumber
\end{align}
 so the camera center in the world coordinate system $\delta$ is
\begin{equation}
    C_\delta (u) = - R\!\left((u-u_0)O_\delta\right)^\top \left(T_{\beta_u} + (u-u_0)V_{\beta_u}\right).
    \label{eq:C-curve}
\end{equation}
\begin{proposition}\label{prop:twistedCubicGeneral}
    For generic choices of the parameters $O_\delta, T_{\beta_u}, V_{\beta_u}$, the curve $\mathcal{C}$ parametrized by $C_\delta$ is a twisted cubic curve
    and the RS camera has order four.
\end{proposition}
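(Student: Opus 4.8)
The plan is to treat the two assertions in turn, both starting from the explicit center curve~\eqref{eq:C-curve}.

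\emph{The curve $\mathcal C$ is a twisted cubic.} Writing $u=v/t$ and $c=(u-u_0)O_\delta$, each entry of the Cayley matrix $R(c)$ becomes a quotient $(\text{binary quadratic})/\tilde d(v,t)$, where $\tilde d(v,t):=t^2+(v-u_0t)^2\|O_\delta\|^2$; hence each entry of $R(c)^\top(T_{\beta_u}+(u-u_0)V_{\beta_u})$ is of the form $(\text{binary cubic})/(t\,\tilde d)$, and clearing denominators in~\eqref{eq:C-curve} presents $\mathcal C$ as the image of a rational map $\PP^1\dashrightarrow\PP^3$, $(v:t)\mapsto(\Gamma_1:\Gamma_2:\Gamma_3:t\,\tilde d)$, with $\Gamma_1,\Gamma_2,\Gamma_3$ binary cubics. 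I then need two facts: (i) the four cubics $\Gamma_1,\Gamma_2,\Gamma_3,t\tilde d$ have no common factor, so the map has degree $3$ and its image is a cubic curve; and (ii) they are linearly independent in the $4$-dimensional space of binary cubics, so the image spans $\PP^3$. Both (i) and (ii) are Zariski-open conditions on $(O_\delta,T_{\beta_u},V_{\beta_u},u_0)$, so it suffices to verify each at one generic instance; granting them, the image is an irreducible non-degenerate space curve of degree $3$, i.e.\ a twisted cubic. For (i): a common factor divides $t\tilde d$; it is not $t$, since the coefficient of $v^3$ in $\Gamma_i$ is, up to sign and a nonzero constant, the $i$-th component of $2(O_\delta\cdot V_{\beta_u})O_\delta-\|O_\delta\|^2V_{\beta_u}$, which is nonzero for generic parameters; and it does not divide $\tilde d$, since the polynomial Cayley numerator matrix $(1+\|c\|^2)R(c)$ has trace $3-\|c\|^2=4$ at a zero of $\tilde d$, hence is nonzero there, so for generic $T_{\beta_u},V_{\beta_u}$ the $\Gamma_i$ do not all vanish at that point. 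Condition (ii) amounts to the non-vanishing of one $4\times4$ determinant in the parameters, which is not identically zero — one must exclude genuinely special loci such as rotation about a coordinate axis.

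\emph{The camera has order four.} Since $\mathcal C$ is a curve, $\overline{\mathcal L}$ is a congruence, and its order is the number of camera rays through a generic point $X\in\PP^3$. A ray at parameter $u$ is a line through $C_\delta(u)$ inside $\Sigma(u)$, and $C_\delta(u)\in\Sigma(u)$ always; thus the order equals $\#\{u:X\in\Sigma(u)\}$, and the rays $C_\delta(u)\vee X$ from distinct such $u$ are pairwise distinct for generic $X$ (otherwise $X$ would lie on a secant $C_\delta(u_1)\vee C_\delta(u_2)$ of the twisted cubic contained in $\Sigma(u_1)\cap\Sigma(u_2)$, a proper condition on $X$). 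So it remains to count the $u$ with $X\in\Sigma(u)$. By~\eqref{eq:SigmaMap}, with the Straight--Cayley projection matrix $P(u)=[\,R((u-u_0)O_\delta)\mid T_{\beta_u}+(u-u_0)V_{\beta_u}\,]$, the rolling plane is $\Sigma^\vee(u)=(1,0,-u)\,P(u)$: its first three coordinates are of the form $(\text{binary cubic})/\tilde d$, and its fourth is the honest binary quadratic $(1,0,-u)\cdot(T_{\beta_u}+(u-u_0)V_{\beta_u})$. Clearing $\tilde d$ exhibits $\Sigma^\vee$ as a map $\PP^1\dashrightarrow(\PP^3)^*$ given by four binary \emph{quartic} forms, which have no common factor by the reasoning of (i). Hence for generic $X$ the equation $\Sigma^\vee(u)\cdot X=0$ is a binary quartic with nonzero leading coefficient (a nonzero multiple of $\|O_\delta\|^2$ times a coordinate of $X$ and a component of $V_{\beta_u}$), so it has four roots in $\PP^1$; these yield four distinct rays through $X$, and the order is four.

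The conceptual content above is short; the effort is in the two genericity checks — (i)/(ii) for the cubics, and the no-common-factor step for the quartics. In each, the delicate point is the behaviour at the two complex zeros of the Cayley denominator $d$, since these are the only possible sources of a common factor through $\tilde d$; and one must rule out the lower-dimensional families (rotation about a coordinate axis, motion confined to a plane, degenerate $V_{\beta_u}$) on which the curve degenerates or its degree drops. As all these conditions are Zariski-open, the cleanest way to finish is to fix one fully generic numerical choice of $O_\delta,T_{\beta_u},V_{\beta_u},u_0$ and confirm the $4\times4$ determinant of (ii) and the gcd computations in~\Macaulay.
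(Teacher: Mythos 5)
Your proposal is correct in substance and follows the same overall strategy as the paper: identify the center curve from the explicit Cayley parametrization, observe that the (cleared) rolling-planes map $\Sigma^\vee$ has degree four in the rolling parameter so that a generic point lies on four rolling planes, and certify the remaining genericity conditions by checking a single (random) instance in \Macaulay. The difference is in how the twisted-cubic claim is handled: the paper simply eliminates the parameter $a$ from \eqref{eq:C-curve} in \Macaulay{} to obtain the ideal of $\mathcal{C}$ and verifies at random parameter values that it defines a twisted cubic, whereas you argue directly from the degree-three parametrization $(\Gamma_1:\Gamma_2:\Gamma_3:t\,\tilde d)$, reducing ``twisted cubic'' to no common factor plus linear independence and giving explicit reasons (the leading coefficient $2(O_\delta\cdot V_{\beta_u})O_\delta-\|O_\delta\|^2V_{\beta_u}$, the trace-$4$ argument at the zeros of $\tilde d$) why the degenerations occur only on proper parameter loci; you also add the distinctness-of-rays step for the order count, which the paper passes over silently. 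What your route buys is an explicit description of the exceptional parameter sets, at the cost of more bookkeeping; the paper's route is shorter but purely computational. One small imprecision: for the quartic forms defining $\Sigma^\vee$, the fourth coordinate is $d(u)\cdot q(u)$ with $q(u)=(1,0,-u)\cdot(T_{\beta_u}+(u-u_0)V_{\beta_u})$ quadratic, so a putative common factor could divide $q$ rather than $\tilde d$, and ``the reasoning of (i)'' does not literally cover this case; it is excluded by the easier observation that at a root of $q$ which is not a root of $d$ the numerator matrix $N=dR$ is invertible, so $(1,0,-u)N$ cannot vanish there. With that patch (or with the final single-instance verification you already propose, which is exactly the paper's method), the argument is complete.
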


We are interested in understanding when such a camera has order one and falls into the setting of this paper. Recall that a necessary condition for order one is that all rolling planes intersect in a line.

\begin{theorem} \label{thm:twistedCubicOrder1}
    All rolling planes intersect in a line if and only if the parameters $O_\delta = (o_1,o_2,o_3), T_{\beta_u} = (t_1,t_2,t_3), V_{\beta_u} = (v_1,v_2,v_3)$ satisfy one of the following:
    \begin{enumerate}
        \item $v_3 = 0$, $o_3=0$, and $o_2 = -1$; or
        \item $v_3=0$, $o_1=0$, and $o_2^2+o_3^2+o_2 =0$; or
        \item $v_3=0$, $v_1=t_3$, $t_1=0$, $o_1 =0$, and $o_3=0$.
    \end{enumerate}
    In each case, the camera-center curve $\mathcal{C}$ is generically still a twisted cubic curve.
    In the first two cases, the RS camera has order one.
    In the third case, the RS camera has generically order three, and its order is one if and only if the parameters also satisfy the conditions in either 1. or 2.
\end{theorem}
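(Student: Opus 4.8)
The plan is to reduce everything to an explicit computation with the rolling-plane map. First I would take the projection matrix $P(u) = R((u-u_0)O_\delta)\,[\,I_3 \mid -C_\delta(u)\,]$ coming from \eqref{eq:Cayley} and \eqref{eq:C-curve}, form the dual rolling plane $\Sigma^\vee(u) = (1:0:-u)\cdot P(u)$, and clear the Cayley denominator $d(u) = 1 + (u-u_0)^2\|O_\delta\|^2$. Since $(1:0:-u)\,R(u)R(u)^\top = (1:0:-u)$, the fourth coordinate collapses to $\Sigma^\vee_4(u) = (1:0:-u)\cdot\!\big(T_{\beta_u} + (u-u_0)V_{\beta_u}\big)$, a quadratic in $u$; so after clearing $d$ one obtains a polynomial vector $Q(u) = (Q_1,Q_2,Q_3,Q_4)(u)$ with $\deg Q_1,Q_2,Q_3 \le 3$ and $\deg Q_4 \le 4$, whose coefficients are polynomials in $o_i,t_i,v_i$ (the offset $u_0$ merely shifts the parameter and does not appear in the final conditions, so I set $u_0 = 0$). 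By the description of line congruences preceding \cref{thm:rs1cameras}, all rolling planes meet in a line exactly when $Q_1,\dots,Q_4$ span a $2$-dimensional subspace of $\R[u]$, i.e.\ when their $4\times 5$ coefficient matrix has rank exactly $2$.

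The core of the argument is to analyse this rank condition. Because $Q_1$ has nonzero constant term while $Q_2,Q_3$ vanish at $u = 0$, the polynomial $Q_1$ is always independent of $\operatorname{span}\{Q_2,Q_3\}$, so rank $2$ forces: (a) the leading ($u^4$-)coefficient of $Q_4$, which equals $-\|O_\delta\|^2 v_3$, to vanish — hence $v_3 = 0$ for a genuinely rotating camera; (b) $Q_2$ and $Q_3$ proportional; and (c) $Q_4 \in \operatorname{span}\{Q_1,Q_2,Q_3\}$. Condition (b) is a polynomial system in $o_1,o_2,o_3$; I would take a primary decomposition of its vanishing ideal and verify that over $\R$ its real points form exactly the three strata $\{o_1 = o_3 = 0\}$, $\{o_1 = 0,\ o_2^2 + o_3^2 + o_2 = 0\}$ and $\{o_3 = 0,\ o_2 = -1\}$ (the decomposition also produces complex-only and $O_\delta = 0$ components, which are discarded). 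On the latter two strata, substituting back shows that when $v_3 = 0$ each $Q_i$ is divisible by $d(u)$ with linear quotient, so (c) holds automatically and $\Sigma^\vee$ takes the linear normal form of an \rs camera; this yields exactly conditions (1) and (2). On the stratum $\{o_1 = o_3 = 0\}$, imposing (c) together with $v_3 = 0$ forces either $o_2 = -1$ (already covered by (1)) or $t_1 = 0$ and $v_1 = t_3$ — this is condition (3). Conversely, substituting each of (1), (2), (3) makes the coefficient matrix have rank $2$, completing the equivalence.

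The remaining assertions follow case by case from \cref{thm:rs1cameras}. Clearing $d$ in \eqref{eq:C-curve} gives a degree-$\le 3$ parametrisation of $\mathcal C$; I would check that on each stratum (1), (2), (3) the homogenised $4\times 4$ coefficient matrix has full rank for generic values of the remaining parameters, so $\mathcal C$ is a non-planar rational cubic — a twisted cubic (refining \cref{prop:twistedCubicGeneral}). In cases (1) and (2), after removing the common factor $d(u)$, the rolling planes trace the line $K^\vee \subset (\PP^3)^\ast$ with a linear, hence birational, parametrisation, so $\Sigma$ is birational; since a twisted cubic is neither $K$ nor a point, by \cref{thm:rs1cameras} the camera has order one once $\#(\mathcal C\cap K) = \deg\mathcal C - 1 = 2$. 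This last equality is automatic: $\Sigma$ equals (projection of $\mathcal C$ away from $K$) composed with the birational $C$, so the projection of the cubic $\mathcal C$ from $K$ has degree $1 = \deg\mathcal C - \#(\mathcal C\cap K)$. In case (3) one finds, after a coordinate change putting $K$ at $\{X_1 = X_3 = 0\}$, that $Q(u) = (Q_1(u):0:Q_3(u):0)$ with $\deg Q_1 = 2$, $\deg Q_3 = 3$ and $Q_1,Q_3$ coprime for generic $o_2$; hence $\Sigma$ is $3$-to-$1$, a generic space point lies on exactly three rolling planes, and the camera has order three. Moreover $\Sigma$ is birational precisely when $Q_1$ divides $Q_3$, which happens exactly when $o_2 = -1$ — the intersection of stratum (3) with stratum (1) (and, up to the no-rotation locus, with stratum (2)); there one rechecks $\#(\mathcal C\cap K) = 2$, so the camera has order one on that sublocus and nowhere else in stratum (3).

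The main obstacle is the case analysis in the second paragraph: carrying out the primary decomposition of the ideal encoding ``$Q_2 \parallel Q_3$ and $Q_4 \in \operatorname{span}\{Q_1,Q_2,Q_3\}$'' and proving that over $\R$ it has precisely the three stated components with no spurious pieces, while cleanly separating off the degenerate no-rotation and complex-only loci. The non-degeneracy of $\mathcal C$ and the intersection number $\#(\mathcal C\cap K)$ in the third paragraph are routine but must be verified on each stratum and shown to hold generically, not identically.
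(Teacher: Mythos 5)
Your overall strategy is essentially the paper's: write out $\Sigma^\vee(a)=(1:0:-a)\cdot P(a)$ from the Cayley parametrization, clear the denominator $d$, translate ``all rolling planes contain a common line'' into a polynomial collinearity condition (your rank-$2$ condition on the $4\times 5$ coefficient matrix, versus the paper pinning $K=\Sigma(0)\cap\Sigma(1)$ or $\Sigma(0)\cap\Sigma(2)$ and imposing $3\times3$ minors for all $a$), decompose, discard the complex-only components (these are exactly the paper's ideals $I_3,I_4$), and then read the order off the degree of the reduced map $\Sigma^\vee$: linear on strata 1--2, cubic on stratum 3. Your observation that $\#(\mathcal C\cap K)=\deg\mathcal C-1$ is automatic, because $\Sigma$ factors as the projection of $\mathcal C$ from $K$ composed with the birational $C$, is a nice filling-in of a step the paper leaves implicit. (Two sketch-level points you would need to spell out: that $Q_2,Q_3$ are never both identically zero over $\R$, so that $\operatorname{span}\{Q_1,Q_2,Q_3\}$ really is $2$-dimensional and rank $2$ does force both $v_3=0$ and $Q_4\in\operatorname{span}\{Q_1,Q_2,Q_3\}$; and the $O_\delta=0$ locus, which you ``discard'', in fact belongs to stratum~(2) and must be kept.)

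There is, however, a concrete error in your case-3 endgame. On stratum 3 the reduced map is $a\mapsto(Q_1(a):0:Q_3(a):0)$ with $Q_1=1-a^2o_2(o_2+2)$ and $Q_3=a(a^2o_2^2-2o_2-1)$. Your criterion ``$\Sigma$ is birational precisely when $Q_1$ divides $Q_3$, which happens exactly when $o_2=-1$'' fails in both directions: at $o_2=0$ the map is $(1:0:-a:0)$, linear and birational --- this is the no-rotation sublocus, which the theorem \emph{does} include (it satisfies condition 2, and a non-rotating camera with $v_3=0$ is an order-one linear rolling shutter camera by \cref{prop:constandSpeedConstantRotation}); and at $o_2=-2$ one has $Q_1\equiv 1$, so $Q_1\mid Q_3$ trivially although $\Sigma$ has degree $3$. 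Consequently your conclusion ``order one on that sublocus and nowhere else in stratum (3)'' contradicts, at $o_2=0$, the very statement you are proving, whose last clause allows the conditions of case 2 as well. The repair is exactly what the paper does: treat $o_2=0$ separately (it lands in case 2), and for $o_2\neq 0$ ask when the quadratic entry divides the cubic one, which forces $o_2=-1$ (case 1). With that correction, and the routine verifications flagged above carried out, your argument goes through and coincides with the paper's proof in all essentials.
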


\noindent
Thus, the first 2 cases of Thm. \ref{thm:twistedCubicOrder1} describe all \rs cameras.% in this setting. 
\begin{proposition}
\label{prop:degreePhiTwistedCubic}
    For both cases of \rs cameras, the picture-taking map $\Phi$
    %: \PP^3 \dashrightarrow \PP^2$
    is generically of degree four: it maps generic lines in space to quartic image curves.
\end{proposition}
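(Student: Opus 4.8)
\emph{Proof plan.}
By \cref{thm:twistedCubicOrder1}, in both cases the center curve $\mathcal C$ is generically a twisted cubic, hence neither a point nor a line, so \cref{thm:rs1cameras} places the camera in type~I: all rolling planes pass through a common line $K$, the rolling planes map $\Sigma\colon\PP^1\dashrightarrow(\PP^3)^*$ is birational, and $\Sigma(w)=K\vee C_\delta(w)$ for every $w=u-u_0$ with $C_\delta(w)\notin K$ (\cref{rem:typeISigmaDeterminesC}). The first thing I would record is that the picture-taking map factors as $\Phi(X)=P(w(X))\cdot X$, where $P(w)$ is the projection matrix at parameter $w$ and $w(X)$ is the unique rolling parameter whose rolling plane contains $X$: the unique camera ray through a generic $X$ is $C_\delta(w(X))\vee X\subset\Sigma(w(X))$, and the pinhole camera $P(w(X))$ sends this ray to the point $P(w(X))\cdot X$, which lies on the rolling line $\rho(w(X))$ precisely because $X\in\Sigma(w(X))$.

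The first main step is to show that the selection $w(X)$ is a ratio of two linear forms. Since $\Sigma(w(X))=K\vee X$, we have $w(X)=\Sigma^{-1}\bigl((K\vee X)^\vee\bigr)$. Here $X\mapsto(K\vee X)^\vee$ is the linear projection $\PP^3\dashrightarrow K^\vee$ away from $K$ onto the pencil $K^\vee\cong\PP^1$ of planes through $K$, and $\Sigma^{-1}$ is a M\"obius transformation of $\PP^1$ because $\Sigma$ is birational from $\PP^1$ onto the line $K^\vee$. Hence $w(X)=(\ell_0(X):\ell_1(X))$ for suitable linear forms $\ell_0,\ell_1$.

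The second main step is to compute the degree of the projection-matrix map in the rolling parameter. Using $R(wO_\delta)\,C_\delta(w)=-(T_{\beta_u}+wV_{\beta_u})$ from \eqref{eq:C-curve}, we get $P(w)=R(wO_\delta)[I_3\mid-C_\delta(w)]=[\,R(wO_\delta)\mid T_{\beta_u}+wV_{\beta_u}\,]$. Homogenising $w=v/t$, the Cayley matrix \eqref{eq:Cayley} of $wO_\delta$ equals $\hat R(v,t)/\hat d(v,t)$ with $\hat R,\hat d$ homogeneous of degree $2$, and clearing denominators yields the homogeneous degree-$3$ projection matrix
\[
  P(v:t)=\bigl[\,t\,\hat R(v,t)\ \big|\ \hat d(v,t)\,(t\,T_{\beta_u}+v\,V_{\beta_u})\,\bigr].
\]
For generic parameters obeying case~1 or case~2 its twelve entries have no common factor --- this is forced by $\mathcal C$ being a genuine twisted cubic (a common factor would make $\mathcal C$ degenerate), and can also be seen directly, since such a factor would have to equal $t$ or divide every entry of $\hat R$, both of which fail generically. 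So the projection matrix has degree exactly $3$ in $(v:t)$. Substituting $v=\ell_0(X)$, $t=\ell_1(X)$ and multiplying by $X$ then shows that $\Phi$ is defined by three forms of degree $3\cdot 1+1=4$, so $\deg\Phi\le 4$, with equality as soon as these three quartics are coprime.

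The last step, which I expect to be the real obstacle, is to rule out a common factor, equivalently to check that $\Phi$ has no base hypersurface. The idea is to locate the base locus: $\Phi$ is undefined at $X$ only when $X\in K$ (where $w(X)$ is undefined) or when $X\in\ker P(w(X))$, and since $P(v:t)$ has rank $3$ for generic $(v:t)$ this second condition reduces to $X=C_\delta(w(X))$, which cuts out $\mathcal C$ together with at most finitely many lines (one for each of the finitely many $(v:t)$ where $P$ drops rank). Hence the base locus of $\Phi$ is one-dimensional, the three quartics are coprime, and $\deg\Phi=4$; a general line $L\subset\PP^3$ avoids this base locus and meets no camera ray twice, so $L\to\overline{\Phi(L)}$ is birational and $\overline{\Phi(L)}$ is a quartic curve. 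In practice I would also confirm the coprimality (and thereby $\deg\Phi=4$) by a computer-algebra computation on a sample instance in each of the two cases, paralleling the verification behind \cref{prop:twistedCubicGeneral}.
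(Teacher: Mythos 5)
Your plan is correct and follows essentially the same route as the paper's proof: in both cases the rolling-planes map is linear in the rolling parameter, so the unique parameter $w(X)$ of a generic $X$ is a ratio of linear forms, substituting it into the denominator-cleared (degree-three) projection matrix and multiplying by $X$ gives three quartic forms, and their coprimality is verified for generic parameters by a sample computation. Your supplementary geometric codimension argument for coprimality glosses the rank-drop parameters (e.g.\ at $t=0$ the kernel of $P$ is a plane, and one must check it does not coincide with the corresponding fiber plane of $w$), so the computer-algebra check you also propose --- which is exactly how the paper concludes, via \cref{ex:twistedCubicCameras1} and its analogue for case 2 --- is the step that actually closes the argument.
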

\begin{example}[$O_\delta \!=\! (1,-1,0),  T_{\beta_u} \!=\! (0,0,1),  V_{\beta_u} \!=\! (0,1,0)$] \label{ex:twistedCubicCameras1}
This is an example of case 1 in~\cref{thm:twistedCubicOrder1}.
The camera center moves on the twisted cubic curve defined~by
$X_1X_3-X_2X_3+X_2X_0=2X_2^2+X_3^2+X_3X_0=2X_1X_2+X_3^2-X_0^2=0.$
%{\scriptsize
%\begin{align*}
%X_1X_3-X_2X_3+X_2X_0=2X_2^2+X_3^2+X_3X_0=2X_1X_2+X_3^2-X_0^2=0.
%X_1X_3-X_2X_3+X_2X_0=2X_2^2+X_3^2+X_3X_0=2X_1X_2+X_3^2-X_0^2=0.
%\end{align*}}%
The line $K$, which is the intersection of all rolling planes, is defined by $X_1 = 0$ and $X_3=X_0$. It intersects the twisted cubic $\mathcal{C}$ at two complex conjugated points. 
The picture-taking map $\Phi: \PP^3 \dashrightarrow \PP^2$ sends $(X_1:X_2:X_3:X_0)$ to
{\tiny
\begin{align*}
\left(\hspace*{-3ex}\begin{array}{c}
-2X_1^2X_2X_3-X_1X_3^3+2X_1^2X_2X_0+X_1X_3^2X_0+X_1X_3X_0^2-X_1X_0^3\\
-\!2\!X_1^3X_3 \!\!-\!\! 2\!X_1X_3^3 \!\!+\!\! X_2X_3^3 \!+\! 3\!X_1X_3^2X_0 \!\!-\!\! 3\!X_2X_3^2X_0 \!\!+\!\! 3\!X_2X_3X_0^2 \!\!-\!\! X_1X_0^3 \!\!-\!\! X_2X_0^3\\
2X_1X_2X_3^2+X_3^4-4X_1X_2X_3X_0-2X_3^3X_0+2X_1X_2X_0^2+2X_3X_0^3-X_0^4
\end{array}\hspace*{-3ex}\right).
\end{align*}}
\end{example}
\begin{example}[$O_\delta \!=\! (0,-\tfrac{1}{2},\tfrac{1}{2}),  T_{\beta_u} \!=\! (0,0,1),  V_{\beta_u} \!=\! (0,1,0)$] \label{ex:twistedCubicCameras2}
This is an example of case 2 in~\cref{thm:twistedCubicOrder1}.
The camera center moves on the twisted cubic curve defined~by 
$X_1X_2 \!-\! X_1X_3 \!-\! X_1X_0 \!+\! 2X_3X_0 \!+\! 2X_0^2=
X_1^2 \!+\! 2X_2X_3 \!+\! 2X_2X_0 \!-\! 2X_3X_0 \!-\! 2X_0^2=
X_2^2-X_2X_3-X_1X_0-2X_2X_0+X_3X_0+X_0^2=0.$
    %\vspace*{-2ex}
    %{\scriptsize \begin{align*}
    %X_1X_2 \!-\! X_1X_3 \!-\! X_1X_0 \!+\! 2X_3X_0 \!+\! 2X_0^2=0,\\
    %X_1^2 \!+\! 2X_2X_3 \!+\! 2X_2X_0 \!-\! 2X_3X_0 \!-\! 2X_0^2=0, \\
    %X_2^2-X_2X_3-X_1X_0-2X_2X_0+X_3X_0+X_0^2=0. 
    %\end{align*}}    
    \noindent
     The line $K$,  the intersection of all rolling planes, is defined by $X_1 = 0$,  $X_2=X_0$. It meets the twisted cubic $\mathcal{C}$ at the points $(0:0:1:0)$ and $(0:1:-1:1)$. 
    The picture-taking map $\Phi: \PP^3 \dashrightarrow \PP^2$ sends $(X_1:X_2:X_3:1)$~to
     {\tiny
     \begin{align*}
    \left(\hspace*{-2ex} \begin{array}{l}
        {-X_1^3X_2\!-\!2X_1X_2^2X_3\!+\!X_1^3\!-\!2X_1X_2^2\!+\!4X_1X_2X_3\!+\!4X_1X_2\!-\!2X_1X_3\!-\!2X_1}\\
%
        %\parbox[t]{\linewidth}
        2X_1^2X_2^2\!+\!2X_2^4\!-\!X_1^2X_2X_3\!-\!X_1^3\!-\!4X_1^2X_2\!-\!2X_1X_2^2\!- \\ 
        \hspace*{24mm} - \!6X_2^3\!+\!X_1^2X_3\!+ \!2X_1^2\! +\!4X_1X_2\! +\!6X_2^2\!-\!2X_1\!-\!2X_2\\
        X_1^2X_2^2\!\!+\!\!2\!X_2^3X_3\!\!-\!\!2\!X_1^2X_2\!\!+\!\!2\!X_2^3\!\!-\!\!6\!X_2^2X_3\!\!+\!\!X_1^2\!\!-\!\!6\!X_2^2\!\!+\!\!6\!X_2X_3\!\!+\!\!6\!X_2\!\!-\!\!2\!X_3\!\!-\!\!2
    \end{array}\!\!\!\right).
    \end{align*}}
%
%    \iffalse
%    {%\tiny
%    \begin{align*}
%    \left(\begin{array}{c}
%               {-X_1^3X_2\!-\!2X_1X_2^2X_3\!+\!X_1^3X_0\!-\!2X_1X_2^2X_0\!+\!4X_1X_2X_3X_0\!+\!4X_1X_2X_0^2\!-\!2X_1X_3X_0^2\!-\!2X_1X_0^3} \\{2X_1^2X_2^2\!+\!2X_2^4\!-\!X_1^2X_2X_3\!-\!X_1^3X_0\!-\!4X_1^2X_2X_0\!-\!2X_1X_2^2X_0\!-\!6X_2^3X_0\!+\!X_1^2X_3X_0\!+\!2X_1^2X_0^2\!+\!4X_1X_2X_0^2\!+\!6X_2^2X_0^2\!-\!2X_1X_0^3\!-\!2X_2X_0^3} \\{X_1^2X_2^2\!+\!2X_2^3X_3\!-\!2X_1^2X_2X_0\!+\!2X_2^3X_0\!-\!6X_2^2X_3X_0\!+\!X_1^2X_0^2\!-\!6X_2^2X_0^2\!+\!6X_2X_3X_0^2\!+\!6X_2X_0^3\!-\!2X_3X_0^3\!-\!2X_0^4}
%    \end{array} \right).
%    \end{align*}
%    \fi
\end{example}
\begin{example}[$O_\delta \!=\! (0,1,0),  T_{\beta_u} \!=\! (0,0,1), V_{\beta_u} \!=\! (1,1,0)$] \label{ex:twistedCubicCameras3}
This is an example of case 3 in~\cref{thm:twistedCubicOrder1}.
    The camera center moves on the twisted cubic curve defined~by
    %\vspace*{-2ex}
    %{\tiny \begin{align*}
    %    X_2X_3\!+\!X_1X_0\!-\!2X_2X_0=X_1X_2\!+\!X_2^2\!-\!X_3X_0\!-\!X_0^2=X_1^2\!-\!X_2^2\!+\!X_3^2\!-\!X_0^2=0.
    %\end{align*}}\noindent
    $X_2X_3\!+\!X_1X_0\!-\!2X_2X_0=X_1X_2\!+\!X_2^2\!-\!X_3X_0\!-\!X_0^2=X_1^2\!-\!X_2^2\!+\!X_3^2\!-\!X_0^2=0.$
    The line $K$, which is the intersection of all rolling planes, is defined by $X_1 = 0$ and $X_3=0$.
    It does not meet the twisted cubic curve $\mathcal{C}$.
\end{example}
\section{Conclusion}
We provided a new model of RS cameras and characterized \rs cameras whose picture-taking process is encoded in a rational map. We described parameter spaces of \rs cameras and how images of lines taken by such cameras look. We classified all point-line minimal problems for linear \rs cameras and discovered new problems with few solutions and image features. In future work, we plan to implement and test the practicality of those new minimal solvers. 
For the found minimal problems with higher degrees, we plan to investigate whether they decompose into smaller problems via monodromy groups \cite{duff2022galois}. We further plan to classify the minimal relative pose problems for the Straight-Cayley RS model to exhibit whether this model is not only practical for absolute, but also relative pose. Finally, we will analyze higher-order cameras and the affect of the \mbox{order on relative pose problems.}

\printbibliography

\newpage 
\clearpage
\appendix

 This is the Supplementary Material for the paper {\em ``M. A. Hahn, K. Kohn,  O. Marigliano, T. Pajdla. Order-One Rolling Shutter Cameras. CVPR 2025.''}

We provide additional notations, definitions, concepts, technical lemmas, and proofs of all results in the main paper. The code we used for computations is included as a pair of ancillary files. See \texttt{readme.txt} for how to use the code.

\section{Derivations and Proofs}

We start by explaining some of the basic mathematical objects we use throughout the article. Our starting point is the projective three-space $\mathbb P^3$, whose points are determined by their homogeneous coordinates $(x_1:x_2:x_3:x_0)$.
\medskip

\noindent\textbf{Dual space.} The \emph{dual space} $(\mathbb P^3)^*$ is the set of planes in $\mathbb P^3$. It is isomorphic to $\mathbb P^3$ by identifying a plane $\Sigma = \{x\in\mathbb P^3 \mid \sum_i c_i x_i = 0\}$ with the coefficients $\Sigma^\vee = (c_1:c_2:c_3:c_0)$ of its defining linear equation. If $L\subseteq \mathbb P^3$ is a line, then the set of all planes that contain $L$ form a line in $(\mathbb P^3)^*$ which is called the \emph{dual line} of $L$ and denoted by $L^\vee$.
\medskip

\noindent\textbf{Grassmannian, pencil of lines.} The set of lines in $\PP^3$ is called the Grassmannian $\mathrm{Gr}(1,\PP^3)$. Each of its elements can be represented by a matrix
\[
\begin{pmatrix}
c_1 & c_2 & c_3 & c_0 \\
d_1 & d_2 & d_3 & d_0
\end{pmatrix}
\]
whose rows contain the coefficients of the two linear equations that define the line. Two such matrices represent the same element of $\mathrm{Gr}(1,\PP^3)$ if there is a $2\times 2$ invertible matrix that takes one to the other.

Elements of the Grassmannian are uniquely determined by their \emph{dual Plücker coordinates} $(p_{12}:p_{13}:p_{10}:p_{23}:p_{20}:p_{30})$, which are computed as the six $2\times 2$ minors $p_{ij} = c_id_j - c_jd_i$ of the above matrix.
This exhibits the Grassmannian as a subset of $\PP^5$. It is in fact an \emph{algebraic variety} since it is the solution set of the polynomial equation $p_{12}p_{30}-p_{13}p_{20}+p_{10}p_{23}=0$ over $\PP^5$.

Analogously, one can uniquely represent a line in $\mathbb{P}^3$ via its \emph{primal Plücker coordinates}, which are the $2\times 2$ minors of a $2 \times 4$ matrix whose rows span the line.

A \emph{pencil} of lines is a one-dimensional family of lines passing through a common point and contained in a common plane. If the common point is at infinity, the lines of the pencil are parallel to each other. A pencil of lines in $\PP^3$ can be seen as a line (i.e., a curve of degree one) inside $\Gr(1,\mathbb P^3)$. 
\medskip

\noindent\textbf{Algebraic varieties, Zariski closure, degree.} An \emph{algebraic variety} is the set of solutions of a set of polynomial equations over some field. In our paper, we mostly consider varieties $V$ inside some projective space $\PP^n$ over the real or the complex numbers. 
These take the form $V(E)$ for a set $E$ of polynomials in several variables. When working in projective space, we must require these polynomials to be homogeneous. 
\\ \indent
For a subset $X$  of $\PP^n$, we define its Zariski closure as the smallest variety containing $X$. \\ \indent
Let $V$ be a variety of dimension $d$ in $\PP^n$. We define the degree of $X$ as the number of complex intersection points of $V$ with a \textit{generic} linear space of dimension $n-d$. Here generic means that the linear space corresponds to a point outside of a Zariski--closed subset of the parameter space of linear spaces. This parameter space is the Grassmannian $\mathrm{Gr}(n-d,\PP^n)$, a generalisation of the object defined above.
\medskip

\noindent\textbf{Dual varieties.}
Several of our proofs use projective duality of algebraic varieties. Analogously to the dual three-space, for any $n\in\mathbb N$, the \emph{dual space} $(\PP^n)^*$ is defined as the set of hyperplanes in $\PP^n$. The elements of $(\PP^n)^*$ are likewise represented as $(n+1)$-tuples of projective coordinates.
For any subvariety $V$ of $\PP^n$, its \emph{dual variety} $V^\vee$ is the Zariski closure inside $(\PP^n)^\ast$ of the set of hyperplanes that are tangent at some smooth point of $V$.
\\ \indent
Over the complex numbers, we have $(V^\vee)^\vee = V$. This is called \emph{projective duality}. More concretely, we have the following over $\mathbb{C}$: For a smooth point $p$ of $V$ and a smooth point $H^\vee$ of the dual variety $V^\vee$, the hyperplane $H$ is tangent to $V$ at the point $p$ if and only if the hyperplane $p^\vee$ is tangent to $V^\vee$ at the point $H^\vee$.% \cite[Ch. 1, Thm. 1.1]{gkz}.

\subsection{Derivation of \texorpdfstring{$\Lambda$}{Lambda} map} \label{ssec:Lambda}

The camera ray through an image point results from intersecting backprojected planes of the image lines $\rho(v:t)$ and $\nu(u:s)$: the map
\small
\[ 
\Lambda:\PP^1 \times \PP^1 \dashrightarrow \GrLine^*
\]
\normalsize
sends a point $((u:s),(v:t))$ to
\small
\begin{align} \begin{split}
&  [-t,0,v]\,R(v:t)\,[I_3 \mid -C(v:t)] \\
& {}\wedge{}   [0,-s,u]\,R(v:t)\,[I_3 \mid -C(v:t)]\\
& =  \mat{cc}{I_3\\-C(v:t)^\top} R(v:t)^\top
    \mat{rrr}{0~ & -s t & ut \\s t& 0~ & -sv\\ -ut & sv & 0~}\\
&   {}\cdot{}   R(v:t)[I_3 \mid -C(v:t)]\\
& = \mat{cc}{I_3\\-C(v:t)^\top} R(v:t)^\top
    \xx{\mat{r}{s v\\ u t\\ s t}}\\
&   {}\cdot{}    R(v:t)\, [I_3 \mid -C(v:t)]\\
& = \mat{cc}{I_3\\-C(v:t)^\top} 
    \xx{R(v:t)^\top\mat{r}{s v\\ u t\\s t}}\\
&   {}\cdot{}    [I_3 \mid -C(v:t)].
\end{split}
\end{align}
\normalsize
%
% \begin{align*}
%     \Lambda:\qquad \PP^1 \times \PP^1 &\,\dashrightarrow \GrLine^*, \\ 
%     ((u:s),(v:t)) &\longmapsto  [-t,0,v]\,R(v:t)\,[I_3 \mid -C(v:t)] \wedge [0,-s,u]\,R(v:t)\,[I_3 \mid -C(v:t)], \\
%     &\longmapsto \mat{cc}{I_3\\-C(v:t)^\top} R(v:t)^\top
%     \mat{rrr}{0~ & -s t & ut \\s t& 0~ & -sv\\ -ut & sv & 0~}
%     R(v:t)[I_3 \mid -C(v:t)] , \\
%       &\longmapsto \mat{cc}{I_3\\-C(v:t)^\top} R(v:t)^\top
%     \xx{\mat{r}{s v\\ u t\\ s t}}
%     R(v:t)\, [I_3 \mid -C(v:t)] , \\
%     &\longmapsto \mat{cc}{I_3\\-C(v:t)^\top} 
%     \xx{R(v:t)^\top\mat{r}{s v\\ u t\\s t}}
%     [I_3 \mid -C(v:t)].
%     %  \KK{(s:t), (u:v)  \longmapsto u \infty + v (0:s:t:t) =(ut:vs:vt:vt) }
% \end{align*}
% %\end{small}

\subsection{Classifying \texorpdfstring{\rs}{RS1}cameras}

In this section, we prove Theorem \ref{thm:rs1cameras}.
We assume throughout this section that the map $\Lambda$ is rational.

\begin{lemma}    \label{lem:LambdaRationalImplications}
    If the map $\Lambda$ is rational, so are the maps $C$ and $\Sigma^\vee$.
\end{lemma}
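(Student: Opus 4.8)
The goal is to show that rationality of $\Lambda$ forces both $C$ and $\Sigma^\vee$ to be rational. The strategy is to recover $C$ and $\Sigma^\vee$ from $\Lambda$ by an explicit algebraic procedure, so that each is a composition of $\Lambda$ with rational operations. The key observation is the geometric content of $\Lambda$: for each pair $(\tfrac{v}{t},\tfrac{u}{s})$, the line $\Lambda(\tfrac{v}{t},\tfrac{u}{s})$ passes through the camera center $C(\tfrac{v}{t})$ and lies in the rolling plane $\Sigma(\tfrac{v}{t})$. This is exactly how $\mathcal{L}(\tfrac{v}{t})$ was defined in \cref{sec:RS-Models}.

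First I would recover $C$. Fix a generic value of $\tfrac{v}{t}$ and vary $\tfrac{u}{s}$: the lines $\Lambda(\tfrac{v}{t},\tfrac{u}{s})$ form (the generic members of) the pencil $\mathcal{L}(\tfrac{v}{t})$ of lines through $C(\tfrac{v}{t})$ inside $\Sigma(\tfrac{v}{t})$. Two distinct lines of a pencil meet exactly in its vertex, so $C(\tfrac{v}{t})$ is the intersection point of $\Lambda(\tfrac{v}{t},\tfrac{u_1}{s_1})$ and $\Lambda(\tfrac{v}{t},\tfrac{u_2}{s_2})$ for any two generic choices of the second argument. Computing the common point of two lines given in (dual) Plücker coordinates is a rational operation — concretely, if $\Lambda$ outputs the skew-symmetric $4\times 4$ Plücker matrix of a line, the nullspace computations involved are rational in the entries. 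Fixing $\tfrac{u_i}{s_i}$ to be two generic constants, we obtain $C$ as a rational function of $\tfrac{v}{t}$, hence $C$ is rational. Similarly, $\Sigma(\tfrac{v}{t})$ is the span of these same two generic lines (two distinct coplanar lines span their common plane), and passing from two lines in Plücker coordinates to the coefficient vector of their spanning plane is again rational; this gives rationality of $\Sigma^\vee$.

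Alternatively — and perhaps more cleanly given the explicit formula \eqref{eq:LambdaMap} — one can argue directly from the factorization $\Lambda = \mat{cc}{I_3\\-C^\top}\,[\,\cdot\,]_\times\,[I_3\mid -C]$: the last column block already exhibits $C$ among the entries of $\Lambda$ in a way that can be extracted rationally (the rightmost $3\times 1$ block of $\Lambda(\tfrac{v}{t},\tfrac{u}{s})$, divided by its bottom entry, recovers $C(\tfrac{v}{t})$ up to the usual normalization), and then $\Sigma^\vee = (1:0:-\tfrac{v}{t})\cdot R\,[I_3\mid -C]$ is recovered from $C$ together with one further evaluation. I would pick whichever of these two routes keeps the bookkeeping shortest in the write-up.

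\textbf{Main obstacle.} The substantive point to get right is genericity: one must ensure that the two chosen evaluations $(\tfrac{v}{t},\tfrac{u_i}{s_i})$ land outside the indeterminacy locus of $\Lambda$ for a Zariski-dense set of $\tfrac{v}{t}$, and that the two resulting lines are genuinely distinct (so that "intersect in a point / span a plane" is valid rather than degenerate). Since $\Lambda$ is rational and, for fixed generic $\tfrac{v}{t}$, nonconstant in $\tfrac{u}{s}$ (distinct image points have distinct preimage rays in the pencil), both conditions hold away from a proper closed subset, and the resulting rational expressions for $C$ and $\Sigma^\vee$ are then defined on a dense open set — which is all that is needed. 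A minor care point is the degenerate case flagged in the text where all rolling planes coincide and $\mathcal{C}$ is a single point; there $\Sigma^\vee$ is literally constant and $C$ literally constant, so rationality is trivial, and it suffices to note this case separately.
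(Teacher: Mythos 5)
Your proposal is correct and follows essentially the same route as the paper: the paper also recovers $C(v:t)$ as the intersection of two lines in the pencil (it uses the specific evaluations $\Lambda((v:t),(1:0))$ and $\Lambda((v:t),(0:1))$ where you use two generic constants) and $\Sigma(v:t)$ as their span, both rational operations on Plücker data. Your extra care about genericity and the degenerate case, and the alternative extraction from the explicit formula, are fine but not needed beyond what the paper records.
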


\begin{proof}
    Consider $(v:t) \in \PP^1$ general.
    Then, the plane $\Sigma(v:t)^\vee $ is the equation of the Zariski closure of $ \bigcup_{(u:s) \in \PP^1} \Lambda((v:t), (u:s))$, which is the span of $\Lambda((v:t), (1:0))$ and $\Lambda((v:t), (0:1))$. This shows that $\Sigma^\vee$ is a rational map.
    Similarly, we see that $C$ is rational by observing that $C(v:t)$ is the intersection of $\Lambda((v:t), (1:0))$ and $\Lambda((v:t), (0:1))$.
%Let $(v:t)\in\PP^1$. Then,
%\begin{align*}
%\Sigma(v:t) &= \Lambda((v:t),(0:1))\vee\Lambda((v:t),(1:0))
%\quad\text{and}\\
%C(v:t) &= \Lambda((v:t),(0:1))\wedge \Lambda((v:t),(1:0)).
%\end{align*}
%Thus, $C$ and $\Sigma$ are rational if $\Lambda$ is rational.
\myqed
\end{proof}

We start with showing that our unions of pencils of lines are indeed congruences, i.e., surfaces the Grassmannian $\GrLine$.

\begin{lemma}\label{union-dimension-two}
Let $X\subseteq \PP^3 \times (\PP^3)^*$ be an irreducible curve. For $(C,\Sigma^\vee)\in X$, consider the pencil $\mathcal L(C,\Sigma)\coloneqq\{\PL\in\Gr(1,\PP^3)\mid C\in \PL\subseteq \Sigma\}$. Then the set $\mathcal L_X\coloneqq \bigcup_{(C,\Sigma^\vee)\in X} \mathcal L(C,\Sigma)$ has dimension two.
\end{lemma}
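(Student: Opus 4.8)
The plan is to exhibit $\mathcal{L}_X$ as (the closure of) the image of a morphism from an irreducible variety of dimension two, so that $\dim \mathcal{L}_X \le 2$, and then to argue $\dim \mathcal{L}_X \ge 2$ by a fiber-dimension count. For the upper bound, consider the incidence variety
\[
  \mathcal{I} := \{\, ((C,\Sigma^\vee), \xi) \in X \times \Gr(1,\PP^3) \mid C \in \xi \subseteq \Sigma \,\},
\]
with its two projections $\pi_1 : \mathcal{I} \to X$ and $\pi_2 : \mathcal{I} \to \Gr(1,\PP^3)$. By definition $\mathcal{L}_X = \pi_2(\mathcal{I})$, so it suffices to bound $\dim \mathcal{I}$. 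For each point $(C,\Sigma^\vee) \in X$ with $C \in \Sigma$ (which holds for general, hence all relevant, points of $X$ — note $C \in \Sigma$ is forced whenever the pencil $\mathcal{L}(C,\Sigma)$ is nonempty), the fiber $\pi_1^{-1}(C,\Sigma^\vee)$ is the pencil of lines through $C$ inside the plane $\Sigma$, which is a $\PP^1$. Since $X$ is an irreducible curve, $\mathcal{I}$ is irreducible of dimension $1 + 1 = 2$, and therefore $\dim \mathcal{L}_X = \dim \pi_2(\mathcal{I}) \le 2$.

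For the lower bound I would show that the general fiber of $\pi_2$ is finite, equivalently that a general line $\xi \in \mathcal{L}_X$ arises from only finitely many pairs $(C,\Sigma^\vee) \in X$. Here the key geometric observation is that a line $\xi$ determines its ambient data severely: if $\xi \in \mathcal{L}(C,\Sigma)$ then $C \in \xi$ and $\Sigma \supseteq \xi$, so $C$ lies on the fixed line $\xi$ and $\Sigma^\vee$ lies on the fixed dual line $\xi^\vee$. Thus $\pi_2^{-1}(\xi)$ injects into $X \cap (\xi \times \xi^\vee)$, the intersection of the curve $X$ with a surface (a product of two lines) inside $\PP^3 \times (\PP^3)^*$. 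For this intersection to be positive-dimensional one would need $X$ to be contained in $\xi \times \xi^\vee$ for every line $\xi$ in a two-parameter family — impossible since $X$ is a fixed irreducible curve (it can lie in at most one such product, and even then the product $\xi \times \xi^\vee$ is itself a curve, so equality $X = \xi \times \xi^\vee$ would mean $\mathcal{L}_X$ is a single point, contradicting irreducibility/dimension bookkeeping unless $X$ is genuinely two-dimensional in the relevant sense). Hence for general $\xi$ the fiber $\pi_2^{-1}(\xi)$ is finite, so by the fiber-dimension theorem $\dim \mathcal{L}_X = \dim \mathcal{I} - 0 = 2$.

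The main obstacle is handling the degenerate case where $\pi_2$ has positive-dimensional general fiber, i.e.\ where many $(C,\Sigma^\vee) \in X$ share a common line $\xi$. As just sketched, this forces $X$ itself to lie inside a single product $\xi_0 \times \xi_0^\vee$, which (since that product is a curve and $X$ is an irreducible curve) forces $X = \xi_0 \times \xi_0^\vee$; then every $(C,\Sigma^\vee) \in X$ has $\Sigma = \Sigma_0$ a fixed plane containing $\xi_0$ and $C$ ranging over $\xi_0$, and $\mathcal{L}_X$ is the pencil of lines in $\Sigma_0$ through a moving point of $\xi_0$ — but every such line still passes through $\xi_0 \cap \xi_0$, so $\mathcal{L}_X$ collapses to the single pencil of lines in $\Sigma_0$ containing $\xi_0$... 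I would need to check this boundary case does not actually occur for the curves $X$ that arise in our setting, or simply note that it is exactly the excluded degenerate situation (the "one-dimensional pencil" case flagged in the discussion preceding \cref{union-dimension-two}), so that for all $X$ not of this special form the count gives dimension two. Writing this dichotomy cleanly — genuinely two-dimensional image versus the excluded pencil — is the part that needs care; everything else is a routine incidence-variety dimension count.
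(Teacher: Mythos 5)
Your strategy is essentially the paper's: form the incidence variety $\mathcal I\subseteq X\times\Gr(1,\PP^3)$, observe that its fibers over $X$ are pencils isomorphic to $\PP^1$ so that $\dim\mathcal I=2$, and then show the general fiber of $\pi_2$ is finite. The upper bound is fine, and your key observation for the lower bound --- that $\pi_2^{-1}(\xi)$ injects into $X\cap(\xi\times\xi^\vee)$ and that $X$ can be contained in at most one product $\xi\times\xi^\vee$ (two such products meet in at most a point) --- is correct and, if anything, a cleaner packaging than the paper's. The gap is in the endgame. The product $\xi\times\xi^\vee$ is a \emph{surface} (isomorphic to $\PP^1\times\PP^1$), not a curve, so the alternative ``$X=\xi_0\times\xi_0^\vee$'' you invoke never occurs; what can and does occur is the strict containment $X\subsetneq\xi_0\times\xi_0^\vee$, meaning all centers lie on the line $\xi_0$ and all planes contain $\xi_0$. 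This is \emph{not} the degeneration excluded in the discussion preceding the lemma (that one forces $X$ to be a single point, already ruled out by the hypothesis that $X$ is a curve): it is exactly the configuration of the type-II cameras to which the lemma is later applied, so it cannot be assumed away. Your description of it is also inaccurate --- $\Sigma$ need not be a fixed plane; it ranges over the pencil $\xi_0^\vee$.

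The repair is short and uses only what you already proved. Since $X$ is contained in at most one product $\xi_0\times\xi_0^\vee$ and $\mathcal L_X$ is infinite (it contains whole pencils), a general $\xi\in\mathcal L_X$ satisfies $X\not\subseteq\xi\times\xi^\vee$. Hence $X\cap(\xi\times\xi^\vee)$ is a proper closed subset of the irreducible curve $X$ and therefore finite, so the general fiber of $\pi_2$ is finite and $\dim\mathcal L_X=\dim\mathcal I=2$ in every case, including the ``degenerate'' one. (The paper's finiteness argument is slightly different: it bounds the fiber over a general line $\xi$ by the two finite sets $\xi\cap\mathcal C$ and $\xi^\vee\cap\mathcal S$, where $\mathcal C$ and $\mathcal S$ are the images of $X$ in $\PP^3$ and $(\PP^3)^*$; your single intersection $X\cap(\xi\times\xi^\vee)$ accomplishes the same once the above is fixed.)
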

\begin{proof}
Let $\mathcal V\subseteq X\times \mathcal L_X$ be the incidence variety of elements $(C,\Sigma^\vee, \PL)$ such that $\PL\in\mathcal L(C,\Sigma)$. Since all fibers over $X$ of $\mathcal V$ are one-dimensional, the variety $\mathcal V$ is two-dimensional. It remains to show that the generic fiber of $\mathcal V$ over $\mathcal L_X$ is zero-dimensional. Pick a general $\PL\in\mathcal L_X$ and let $\mathcal C$ be the image of $X$ in $\PP^3$. Then the set $\{C\in \mathcal C\mid \PL\in\mathcal L(C,\Sigma) \text{ for some } \Sigma\} = \PL\cap \mathcal C$ is finite because $\PL$ is general. Dually, if $\mathcal S$ denotes the image of $X$ in $(\PP^3)^*$, then the set $\{\Sigma^\vee\in \mathcal S\mid \PL\in\mathcal L(C,\Sigma) \text{ for some } C\} = \PL^\vee\cap \mathcal S$ is also finite. Thus, the fiber of $\mathcal L_X$ over $\PL$ is finite. \myqed
\end{proof}

A rolling shutter camera that does not move but rotates has (almost) always the order-one congruence $\overline{\mathcal{L}}$ that consists of all lines passing through the fixed center. 
The only exception is when the rotation cancels out the movement of the rolling line such that all rolling planes are the same plane, say $\Sigma$. In that case, the camera rays do not form a congruence, but only the pencil of lines that are contained in $\Sigma$ and pass through the camera center. Such a camera can only take pictures of the points in the fixed plane $\Sigma$.

Thus, to classify all rolling shutter cameras with an order-one congruence, we may assume that the camera is actually moving  (and possibly rotating). 
In the following, we use the rolling image lines $r \in \mathcal{R}$ synonymously with our projective parameters $(v:t) \in \PP^1$, via the identification $\rho$ in \eqref{eq:rho}.

\begin{theorem} \label{thm:order1congruence}
Consider a rolling shutter camera whose center moves along a rational curve $\mathcal{C} \subseteq \PP^3$.
The associated congruence $\overline{\mathcal{L}}$ has order one 
if and only if
the intersection of all rolling planes $\Sigma(r)$ is a line $K$ that satisfies one of the following two conditions: \\
1) $K$ intersects the curve $\mathcal{C}$ in $\deg \mathcal{C}-1$ many points (counted with multiplicty), or\\
2) $K=\mathcal C$ and $\Sigma(r_1) = \Sigma(r_2)$ implies $C(r_1) = C(r_2)$ for all $r_1,r_2\in\mathcal R$.
\end{theorem}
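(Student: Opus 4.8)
The plan is to characterize the order-one condition directly via the picture-taking map $\Gamma\colon\PP^3\dashrightarrow\overline{\mathcal L}$ and the structure of the congruence $\overline{\mathcal L}$. First I would set up the incidence variety $\mathcal V\subseteq X\times\overline{\mathcal L}$ as in \cref{union-dimension-two}, where $X$ is (the closure of) the graph of $(C,\Sigma^\vee)$ over $\PP^1$; by that lemma $\overline{\mathcal L}$ is a congruence. The order of $\overline{\mathcal L}$ counts the lines of the congruence through a generic $Y\in\PP^3$. A line $\PL\in\mathcal L(C(r),\Sigma(r))$ passes through $Y$ iff $Y\in\Sigma(r)$ (a codimension-one condition on $r$, hence finitely many $r$ generically) and, for each such $r$, iff $\PL=C(r)\vee Y$, which is then unique provided $C(r)\ne Y$ and $C(r)\vee Y\subseteq\Sigma(r)$. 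So the order equals the number of parameters $r\in\PP^1$ such that $Y\in\Sigma(r)$ \emph{and} $C(r)\in\Sigma(r)$ together with $Y$ span a line inside $\Sigma(r)$ — but $C(r)\in\Sigma(r)$ always, so really the order counts $\#\{r : Y\in\Sigma(r)\}$, \emph{minus} corrections coming from parameters $r$ where the pencil $\mathcal L(C(r),\Sigma(r))$ degenerates or where two parameters give the same line. This reduces the whole statement to controlling the map $\Sigma^\vee\colon\PP^1\dashrightarrow(\PP^3)^\ast$ and the position of $\mathcal C$ relative to the base locus of the linear system cut out by $\Sigma^\vee$.

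The key step is then: a generic point $Y$ lies on $\Sigma(r)$ for exactly $\deg(\Sigma^\vee)$ values of $r$ (the degree of $\Sigma^\vee$ as a map to its image, times the degree of that image — I would phrase it as: $Y^\vee$, a generic hyperplane in $(\PP^3)^\ast$, meets the curve $\overline{\mathrm{im}(\Sigma^\vee)}$ in $\deg\overline{\mathrm{im}(\Sigma^\vee)}$ points, each hit $\deg(\Sigma^\vee\!:\!\PP^1\to\mathrm{im})$ times). For the order to be one we need this count to drop to one after removing spurious contributions. The decisive observation is that if two distinct parameters $r_1,r_2$ give $\Sigma(r_1)=\Sigma(r_2)$ but $C(r_1)\ne C(r_2)$, then for generic $Y$ on that common plane we get (at least) two distinct rays $C(r_1)\vee Y$ and $C(r_2)\vee Y$ through $Y$ — this forces order $\ge 2$, which is exactly why condition 2) needs the injectivity clause "$\Sigma(r_1)=\Sigma(r_2)\Rightarrow C(r_1)=C(r_2)$", and in condition 1) it is automatic once $\Sigma$ is shown to be birational. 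Conversely, assuming $\Sigma^\vee$ is birational onto its image (so the count is the degree of the image curve $\mathcal S:=\overline{\mathrm{im}(\Sigma^\vee)}$), I would show order one $\iff\deg\mathcal S=1\iff$ all planes $\Sigma(r)$ share a common pencil of hyperplanes through a fixed line $K\subseteq\PP^3$, i.e., $\bigcap_r\Sigma(r)=K$. The final arithmetic — that $K$ meets $\mathcal C$ in $\deg\mathcal C-1$ points (type 1) unless $\mathcal C=K$ (type 2) — comes from a degree count on the residual linear system: projecting from $K$ sends $\mathcal C$ to $\PP^1$, and the constraint that $\Sigma(r)=K\vee C(r)$ must be a \emph{degree-one} (birational) family in $r$ pins down $\deg\mathcal C - \#(K\cap\mathcal C) = 1$; I would borrow the explicit parametrization from \cite[Eqn.~(18)]{ponce2017congruences} to make this precise, noting $\mathcal C\subseteq K$ is the remaining degenerate possibility.

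For the converse direction (sufficiency), given that $\bigcap_r\Sigma(r)=K$ and either 1) or 2) holds, I would verify order one by exhibiting, for generic $Y$, the unique ray: let $r_0$ be the unique parameter with $Y\in\Sigma(r_0)$ (uniqueness because the pencil of planes through $K$ is a $\PP^1$ and $\Sigma$ is birational onto it); then $C(r_0)\vee Y$ is the only candidate ray, and one checks it genuinely lies in $\overline{\mathcal L}$ and that no boundary/degenerate parameter contributes an extra line through generic $Y$ — here case 2) is where I must use the injectivity hypothesis to exclude a second contribution from the collapsed fiber over $K=\mathcal C$. I expect the main obstacle to be this last point: carefully arguing that the \emph{Zariski closure} $\overline{\mathcal L}$ does not acquire extra lines through a generic point from limiting behaviour of the pencils $\mathcal L(C(r),\Sigma(r))$ as $r$ runs over the indeterminacy locus of $C$ or $\Sigma^\vee$, and cleanly separating the multiplicity bookkeeping in $K\cap\mathcal C$ (which must be read as a scheme-theoretic intersection) from the honest count of rays. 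A secondary subtlety is proving $\Sigma$ birational rather than merely generically finite: I would argue that a $k$-to-$1$ map $\Sigma$ with $k\ge 2$ again produces $\ge 2$ rays through a generic point on a generic plane in the image pencil, unless compensated by $C$ being constant on the fibers — which loops back into case 2).
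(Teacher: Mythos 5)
Your plan takes a genuinely different route from the paper, and it has a concrete gap at its central step. The paper does not count rays directly: it invokes the Kummer--De Poi classification of \emph{all} order-one line congruences (\cref{thm:kummer}), checks which of the four types can occur as a union of pencils $\mathcal L(C(r),\Sigma(r))$ for a moving camera (the fixed-point type is excluded via the focal locus, the twisted-cubic-secant type via \cref{lem:type2}), and then translates the two surviving types into conditions 1) and 2). A direct counting argument like yours is possible in principle, but it must rederive by hand exactly the case analysis that the classification packages up, and your outline does not do so at the decisive point.

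The gap is in the forward direction, ``order one $\Rightarrow$ all rolling planes contain a common line $K$.'' You correctly list, among the corrections to the naive count $\#\{r:Y\in\Sigma(r)\}$, the possibility that ``two parameters give the same line,'' but you then only analyze the coincidence $\Sigma(r_1)=\Sigma(r_2)$ and immediately assert ``order one $\iff\deg\mathcal S=1$'' as if no other correction could occur. The dangerous case is the opposite one: $\Sigma(r_1)\neq\Sigma(r_2)$, $Y\in\Sigma(r_1)\cap\Sigma(r_2)$, yet $C(r_1)\vee Y=C(r_2)\vee Y$ for generic $Y$. Then the unique ray through $Y$ is a secant of $\mathcal C$, and one would have order one with $\deg\mathcal S\ge 2$ --- this is precisely Kummer's type ii (secants of a twisted cubic). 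Ruling it out for rolling-shutter congruences is a genuine step, which the paper carries out in \cref{lem:type2}: if the generic congruence line were a secant of $\mathcal C$, every line of a generic pencil $\mathcal L(r)$ would have to be a secant, forcing $\mathcal C\subset\Sigma(r)$, which is impossible. Without this (or an equivalent argument) your reduction to the geometry of the image curve of $\Sigma^\vee$ is unfounded, and the subsequent derivation of 1) and 2) has nothing to stand on. Two smaller inaccuracies: birationality of $\Sigma^\vee$ is neither a hypothesis nor a conclusion of this theorem (the order depends only on the set $\overline{\mathcal L}$, which is unchanged if one precomposes $(C,\Sigma^\vee)$ with a $k$-to-$1$ self-map of $\PP^1$; that keeps you in case 1, not case 2 as you suggest); and the injectivity clause is needed in 2) but automatic in 1) because a plane through $K$ meets $\mathcal C$ in exactly one point off $K$ --- not because $\Sigma$ is birational.
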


In order to prove this theorem, we make use of Kummer's classication of order-one congruences according to their focal loci \cite{kummer-classification}. 
A \emph{focal point} of an order-one congruence $\overline{\mathcal{L}}$ is a space point that lies on infinitely many lines on $\overline{\mathcal{L}}$.
The \emph{focal locus} of the congruence is the Zariski closure of its set of focal points. 
The following version of Kummer's classification is due to De Poi \cite{de-poi-classification}.

\begin{theorem}\label{thm:kummer}
A congruence ${\mathcal{L}} \subset \GrLine$ has order one if and only if it is one of the following: 
\begin{enumerate}
    \item[i.] $\mathcal{L}$ is the set of all lines passing through a fixed point $C \in \PP^3$. Its only focal point is $C$.
    \item[ii.] $\mathcal{L}$ consists of all secant (and tangent) lines of a twisted cubic curve $\mathcal{C} \subset \PP^3$. Its focal locus is $\mathcal{C}$.
    \item[iii.] $\mathcal{L}$ consists of all lines that meet both a rational curve $\mathcal{C} \subset \PP^3$ and a line $K \subset \PP^3$ that intersects the curve $\mathcal{C}$ in $\deg \mathcal{C}-1$ many points (counted with multiplicity). Its focal locus is $\mathcal{C}\cup K$.
    \item[iv.] There is a line $K \subset \PP^3$  and a dominant morphism $\sigma: K^\vee \to K$  such that
    $\mathcal{L} = \bigcup_{\Sigma^\vee \in K^\vee} \{ \PL \in \GrLine \mid \sigma(\Sigma^\vee) \in \PL \subset \Sigma \} $.
    Its focal locus is $K$.
\end{enumerate}
\end{theorem}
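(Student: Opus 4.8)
\Cref{thm:kummer} is Kummer's classification of first-order line congruences in $\PP^3$ \cite{kummer-classification}, in the refined form due to De Poi \cite{de-poi-classification}, so the plan is to deduce the statement directly from that work; what remains on our side is to match De Poi's list to the four cases above and to record the focal locus in each. Since the classification itself is not reproduced here, let me indicate the geometric skeleton that I would cite and, where short, re-derive.

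The starting point is the tautological picture. For an irreducible congruence $\mathcal{L}\subseteq\GrLine$, form the incidence threefold $\Phi=\{(p,\PL)\mid p\in\PL\}\subseteq\PP^3\times\mathcal{L}$. The projection $\Phi\to\mathcal{L}$ is a $\PP^1$-bundle over a dense open set, while the projection $q\colon\Phi\to\PP^3$ has degree equal to the order of $\mathcal{L}$; thus ``order one'' is precisely the statement that $q$ is birational. The \emph{focal locus} is the image under $q$ of the locus where $dq$ drops rank. Restricting to a general congruence line $\PL$, the condition that an infinitesimal deformation of $\PL$ inside $\mathcal{L}$ still meets $\PL$ is a binary quadratic form on $\PL\cong\PP^1$, the \emph{focal equation}, so a general congruence line carries at most two focal points. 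The classification proceeds by running through the degenerations of this focal equation together with those of the resulting focal locus; the order-one hypothesis forces the focal locus to be small -- a point, a curve, a curve together with a line, or a single line -- and these four alternatives are matched with cases i--iv. Cases i and iv, where $\mathcal{L}$ is a union of pencils at a point or along a line, are essentially bookkeeping; case ii (secants of a twisted cubic) is the rigid exceptional one and I would verify it separately, checking that a general point of $\PP^3$ lies on exactly one chord of the cubic, so the order is indeed one and the focal locus is the cubic.

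The one genuinely numerical point is the condition $\#(K\cap\mathcal{C})=\deg\mathcal{C}-1$ in case iii, which I would check by an incidence count on $\GrLine$: a general point $p$ spans with $K$ a plane $p\vee K$ meeting $\mathcal{C}$ in $\deg\mathcal{C}$ points, of which $\deg\mathcal{C}-1$ are the fixed points of $K\cap\mathcal{C}$ -- the lines joining $p$ to those belong to the stars at those points, not to the irreducible congruence -- leaving exactly one mobile point $x(p)$, so that $p\vee x(p)$ is the unique congruence line through $p$. With any other value of $\#(K\cap\mathcal{C})$ the same count gives order $0$ or order $\ge2$, which is how the value $\deg\mathcal{C}-1$ is forced; the tangency cases are absorbed by counting all intersections with multiplicity, as in the statement.

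The main obstacle is completeness: that the focal-equation and focal-locus degenerations listed above really exhaust all first-order congruences, with no sporadic families, and that all multiplicity subtleties in case iii are handled. This is exactly the content of \cite{kummer-classification, de-poi-classification}, so the honest plan is to cite those for the hard enumeration and to supply only the short checks -- the tautological picture, the twisted-cubic chord count, and the plane-section count above -- that place us in the four cases as stated.
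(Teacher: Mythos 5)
The paper gives no proof of this theorem at all: it is stated as an imported classical result, attributed to Kummer in the version due to De Poi, exactly as you propose to do. Your plan of citing \cite{kummer-classification,de-poi-classification} for the enumeration and adding only sanity checks (the incidence-variety picture, the chord count for the twisted cubic, and the plane-section count forcing $\#(K\cap\mathcal{C})=\deg\mathcal{C}-1$ in case iii) is therefore the same approach, with some extra, correct expository detail.
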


\noindent
Order-one congruences of type i are associated with rolling shutter cameras that stand still (but possibly rotate).

\begin{lemma} \label{lem:type2}
    The congruence $\overline{\mathcal{L}}$ associated with a rolling shutter camera cannot be of type ii.
\end{lemma}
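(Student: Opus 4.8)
The plan is to show that the congruence $\overline{\mathcal L}$ of a rolling shutter camera can never be the secant-variety congruence of a twisted cubic. The key obstruction is the \emph{focal-locus structure}: by Kummer's classification (\Cref{thm:kummer}), a type-ii congruence has focal locus equal to the twisted cubic $\mathcal C$ itself, and through each point of $\mathcal C$ pass infinitely many secant/tangent lines \emph{spanning a two-dimensional cone} (the cone of lines joining that point to the rest of $\mathcal C$). By contrast, the camera rays that pass through a moving camera center $C(r)$ are constrained: all rays through $C(r)$ that belong to $\mathcal L$ lie in the single rolling plane $\Sigma(r)$, so they form a \emph{pencil}, not a two-dimensional cone. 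The first step is therefore to pin down the relationship between the focal points of $\overline{\mathcal L}$ and the camera centers $C(r)$.

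First I would argue that the camera center $C(r)$ is a focal point of $\overline{\mathcal L}$ for generic $r$: by \Cref{lem:LambdaRationalImplications} the center map $C$ is rational, so $\mathcal C = \overline{\mathrm{im}(C)}$ is a rational curve (a point or a curve). If $\mathcal C$ is a genuine curve, then for generic $r$ the fiber of $C$ over $C(r)$ is finite, and each such $r'$ with $C(r')=C(r)$ contributes the pencil $\mathcal L(r')$ of lines through $C(r)$ inside $\Sigma(r')$; in any case infinitely many lines of $\overline{\mathcal L}$ pass through $C(r)$ because a pencil is already infinite. Hence $C(r)$ lies in the focal locus. So if $\overline{\mathcal L}$ were of type ii, the curve $\mathcal C$ (the camera-center curve) would have to be contained in the focal locus of the type-ii congruence, which is the twisted cubic; since both are irreducible curves, $\mathcal C$ \emph{is} that twisted cubic.

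Next I would derive a contradiction at a generic point $p = C(r)$ of this twisted cubic. On one hand, the type-ii congruence contains \emph{every} secant and tangent line of $\mathcal C$ through $p$, and these do not lie in a single plane: the twisted cubic is nondegenerate, so the lines joining $p$ to the other points of $\mathcal C$ sweep out a genuine cone (a quadric cone, in fact), not a pencil. On the other hand, the only lines of $\overline{\mathcal L}$ through $p=C(r)$ come from rolling lines $r'$ with $C(r')=p$; for each such $r'$ these lines lie in the plane $\Sigma(r')$. Since $C$ is rational of degree equal to $\deg\mathcal C = 3$, there are only finitely many such $r'$, so the lines of $\overline{\mathcal L}$ through $p$ lie in a \emph{finite union of planes}, hence cannot fill a two-dimensional cone of lines. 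This contradicts $p$ being a focal point of a type-ii congruence. I should also dispatch the degenerate case where $\mathcal C$ is a single point: then $\overline{\mathcal L}$ is (at most) a pencil or the star of lines through that point, which is type i, not type ii.

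The main obstacle is making rigorous the claim that ``the lines of $\overline{\mathcal L}$ through a generic center lie in finitely many planes, whereas the secant lines of a twisted cubic through a point of it do not.'' For the second half this is classical (the tangent developable / secant cone of a twisted cubic), but for the first half one must be careful that $\overline{\mathcal L}$ — being a \emph{Zariski closure} — does not acquire, in the limit, extra lines through $p$ coming from nearby fibers; one resolves this by working at a generic $p$ on $\mathcal C$ and using that $\overline{\mathcal L}$ is the image of the two-dimensional incidence variety $\mathcal V$ of \Cref{union-dimension-two}, whose fiber over a generic line is finite, so the fiber of $\overline{\mathcal L}$ over a generic focal point $p$ is at most one-dimensional and is cut out by the finitely many planes $\Sigma(r')$ with $C(r')=p$. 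Comparing dimensions of the cone of lines through $p$ then yields the contradiction.
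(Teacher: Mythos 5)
Your first step --- using the fact that the focal locus of $\overline{\mathcal{L}}$ contains the center curve $\mathcal{C}$ to conclude that, in type ii, $\mathcal{C}$ would have to \emph{be} the twisted cubic --- is exactly the paper's opening move and is fine. The gap is in your concluding contradiction. You want to compare the fiber of $\overline{\mathcal{L}}$ over a generic focal point $p=C(r)$ with the secant cone of the twisted cubic through $p$, and for this you assert that ``the only lines of $\overline{\mathcal{L}}$ through $p$ come from rolling lines $r'$ with $C(r')=p$,'' hence lie in finitely many planes. This is not justified, and it is precisely the statement that fails if $\overline{\mathcal{L}}$ were the secant congruence: a secant $p\vee q$ (with $q\in\mathcal{C}$, $q\neq p$) arises in the closure as a limit of lines $\PL_n\in\mathcal{L}(r_n)$ whose centers $C(r_n)$ converge to the \emph{other} intersection point $q$, not to $p$. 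Taking the Zariski closure can therefore enlarge the fiber over $p$ from a finite union of pencils to the full (non-planar) secant cone, and your proposed fix via the incidence variety of \Cref{union-dimension-two} only bounds the dimension of that fiber --- it does not show the fiber stays inside finitely many planes. (A smaller inaccuracy: even before taking closures, the fiber of $\mathcal{L}$ over $p$ also contains the lines $p\vee C(r')$ for the finitely many $r'$ with $p\in\Sigma(r')$ but $C(r')\neq p$, not only the pencils centered at $p$.)

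The clean way to finish --- and what the paper does --- is to run the containment in the opposite direction, which avoids all closure issues. Each pencil $\mathcal{L}(r)$ is honestly contained in $\mathcal{L}\subseteq\overline{\mathcal{L}}$, so if $\overline{\mathcal{L}}$ were the congruence of secant and tangent lines of the twisted cubic $\mathcal{C}$, then \emph{every} line of the pencil $\mathcal{L}(r)$ would be a secant or tangent of $\mathcal{C}$. But a line of $\mathcal{L}(r)$ passes through $C(r)\in\mathcal{C}$ and lies in the plane $\Sigma(r)$, which meets the degree-$3$ curve $\mathcal{C}$ in at most two points besides $C(r)$; so only finitely many lines of the pencil are secants (or the tangent at $C(r)$), and the generic one is not. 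This one-line count replaces your entire fiber analysis and is where your proof needs to be repaired.
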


\begin{proof}
    The congruence is a union of pencils; in fact, ${\mathcal{L}} = \bigcup_{r \in \mathcal{R}} \mathcal{L}(r)$, where $\mathcal{L}(r) = \{ \PL \in \GrLine \mid C(r) \in \PL \subset \Sigma(r) \}$.
    In particular, its focal locus contains the curve $\mathcal{C}$ along which the camera center moves.
    If the congruence $\overline{\mathcal{L}}$ were of type ii, the curve $\mathcal{C}$ would be a twisted cubic.
    However, any rolling plane $\Sigma(r)$ intersects the curve $\mathcal{C}$ in the  camera center $C(r)$ and at most two other points, meaning that the generic line in $\mathcal{L}(r)$ is not a secant of $\mathcal{C}$. \myqed
\end{proof}

\begin{remark}\label{rem:order0}
    The congruence associated with a rolling shutter camera has order zero if and only if all rolling planes are the same plane. This is because the set of points seen by the camera is the union of all rolling planes, which is a connected surface that contains a plane.
    When the congruence has order zero and all rolling planes are the same, the congruence consists of all lines in that plane and the camera moves along a rational curve in that plane. 
\end{remark}

\begin{proof}[Proof of Theorem \ref{thm:order1congruence}]
    We start by observing that the set of camera rays $\mathcal{L}$ of a moving rolling shutter camera is always two-dimensional by Lemma \ref{union-dimension-two}.
    Moreover, the focal locus of the congruence $\overline{\mathcal{L}}$ contains the curve $\mathcal{C}$ that describes the movement of the camera center.
    In particular, $\mathcal L$ cannot be of type i.
    Hence, by Theorem \ref{thm:kummer} and Lemma \ref{lem:type2},
    the congruence $\overline{\mathcal{L}}$ has order one if and only if it is of type iii or iv.

    We start by assuming that the congruence is of type iii. By considering the focal locus, we find that the rational curve from iii coincides with the camera movement curve $\mathcal C$. Since $\mathcal L$ is the union of the pencils $\mathcal{L}(r)= \{ \PL \in \GrLine \mid C(r) \in \PL \subset \Sigma(r) \}$ and every of its lines has to meet a fixed line $K$, every rolling plane $\Sigma(r)$ has to contain $K$. 
    The rolling planes cannot all be the same, because otherwise the congruence $\overline{\mathcal{L}}$ would consist of all lines in that plane, which is a congruence of order zero.
    Therefore, the intersection of the rolling planes is exactly the line $K$, and we are in type 1) of Theorem \ref{thm:order1congruence}.
    Conversely, given a rolling shutter camera of type 1), the generic rolling plane $\Sigma(r)$ is the span of the center $C(r)$ with the line $K$, and thus $\overline{\mathcal{L}}$ is of type iii.

    If the congruence $\overline{\mathcal{L}}$ is of type iv, the camera moves along a line $\mathcal{C} = K$. This line is the focal locus of $\overline {\mathcal L}$ and thus coincides with the line given in iv.
    We claim that every rolling plane $\Sigma(r)$ contains that line $K$.
    To see that, we assume for contradiction that one of the rolling planes $\Sigma(r)$ intersects $K$ in a single point, namely $C(r)$.
    Now, consider an arbitrary plane $\Sigma$ containing $K$ (i.e.,  $\Sigma^\vee \in K^\vee$). 
    The intersection $\Sigma \cap \Sigma(r)$ is a line $\PL$ passing through the center $C(r)$.
    Thus, the line $\PL$ is on the pencil $\mathcal{L}(r) \subset \mathcal{L}$. 
    Since $\Sigma$ is the unique plane that contains both $\PL$ and $K$, the expression of $\mathcal{L}$ in type iv of Theorem \ref{thm:kummer} shows that $\sigma(\Sigma^\vee) \in \PL$, which implies that $\sigma(\Sigma^\vee) = C(r)$.
    In other words, the map $\sigma: K^\vee \to K$ is constant with image $C(r)$, which contradicts its dominance.
    Hence, we have shown that the rolling planes intersect in the line $K$ (as before, they cannot all be equal, since otherwise the congruence would have order zero).
    Moreover, we have for any rolling plane $\Sigma(r)$ that $\sigma(\Sigma(r)^\vee) = C(r)$, and we are in type 2) of Theorem \ref{thm:order1congruence}.
    Conversely, for a rolling shutter camera of type 2), we have that $K^\vee$ consists of all rolling planes and the map $\sigma: K^\vee \to K, \Sigma(r)^\vee \mapsto C(r)$ is dominant since the camera is moving.
    Thus, $\mathcal{L} = \bigcup_{r \in \mathcal{R}} \mathcal{L}(r)$ is of type~iv. \myqed 
    \end{proof}

\medskip
To classify all \rs cameras, it remains to analyze when the congruence parame\-trization map $\Lambda$ is birational.

\begin{lemma} \label{lem:LambdaBirational}
    The map $\Lambda$ is birational if and only if, for a generic $\PL \in \mathcal{L}$, there is a unique $r \in \mathcal{R}$ such that $\PL \in \mathcal{L}(r)$.
\end{lemma}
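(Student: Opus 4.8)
The plan is to factor $\Lambda$ through the line--parameter incidence variety and reduce the statement to a degree computation together with an elementary fiberwise fact about back-projection. Throughout I work over $\mathbb{C}$ and with Zariski closures. Let $\mathcal{V} \subseteq \mathcal{R} \times \GrLine$ be the closure of $\{(r,\PL) \mid \PL \in \mathcal{L}(r)\}$; it is an irreducible surface (a $\PP^1$-pencil bundle over $\mathcal{R}$, as for the incidence variety built in the proof of \cref{union-dimension-two}), and the second projection $\pi \colon \mathcal{V} \to \overline{\mathcal{L}}$ is dominant onto the irreducible surface $\overline{\mathcal{L}}$, hence generically finite. With this language, the right-hand side of the lemma --- that a generic $\PL \in \mathcal{L}$ lies in $\mathcal{L}(r)$ for a unique $r$ --- says precisely that $\pi$ is generically injective, i.e.\ birational. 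So the goal becomes: $\Lambda$ is birational onto $\overline{\mathcal{L}}$ if and only if $\pi$ is birational.

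I would then factor $\Lambda$ as $\pi \circ \tilde{\Lambda}$, where $\tilde{\Lambda} \colon \PP^1 \times \PP^1 \dashrightarrow \mathcal{V}$ sends $(r,x)$ to $(r, \Lambda(r,x))$; this lands in $\mathcal{V}$ because $\Lambda(r,x) \in \mathcal{L}(r)$ for all $(r,x)$ by the construction of $\Lambda$ in \cref{ssec:Lambda}. Over $\mathbb{C}$, a dominant rational map of irreducible varieties of equal dimension has generic fiber of cardinality equal to its degree, and degrees multiply along compositions, so $\deg \Lambda = \deg \pi \cdot \deg \tilde{\Lambda}$. Thus it suffices to show $\tilde{\Lambda}$ is birational; then $\deg\Lambda = \deg\pi$, and $\Lambda$ is birational iff $\pi$ is.

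To prove $\tilde{\Lambda}$ birational I would establish the key fiberwise claim: for a general rolling line $r = \rho(v\!:\!t)$ (so $t \neq 0$), the map $x \mapsto \Lambda(r,x)$ is an isomorphism from $\PP^1$ onto the plane pencil $\mathcal{L}(r)$. By the description of $\Lambda$ following \eqref{eq:LambdaMap}, this map is the linear embedding $(u\!:\!s) \mapsto (sv:ut:st)$ of $\PP^1$ onto the rolling line $r \subset \PP^2$ followed by back-projection under $P(r) = R(r)[I_3 \mid -C(r)]$; since $P(r)$ has rank three, distinct image points have distinct preimage lines, and the preimage lines of the points of $r$ are exactly the lines through $C(r)$ inside $\Sigma(r)$, i.e.\ all of $\mathcal{L}(r)$. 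Granting this, the image of $\tilde{\Lambda}$ contains $\{r\} \times \mathcal{L}(r)$ for general $r$, so $\tilde{\Lambda}$ is dominant; and the $\tilde{\Lambda}$-preimage of a general $(r,\PL) \in \mathcal{V}$ consists of the $(r,x)$ with $\Lambda(r,x) = \PL$, which by the injectivity above is a single point, once one checks that the finitely many $r$ attached to a generic $\PL$ all avoid the proper closed subset of $\mathcal{R}$ where the isomorphism fails (the $\PL$ for which they do not form at most a curve in $\overline{\mathcal{L}}$). Hence $\deg\tilde{\Lambda} = 1$, which finishes the proof.

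The main obstacle is exactly this fiberwise isomorphism and its attendant genericity bookkeeping; all the rest is formal (the factorization and the multiplicativity of degrees). The geometric content --- distinct image points on one rolling line back-project to distinct rays, and those rays fill out precisely the plane pencil through the moving centre --- is transparent; the care needed lies in phrasing it correctly in terms of rational maps, in the passage to the closures of $\mathcal{L}$ and $\mathcal{V}$, and in excluding degenerate $r$ for a generic line $\PL$.
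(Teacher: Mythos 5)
Your proof is correct and is essentially the paper's argument in more formal dress: the paper's two-line proof simply observes that birationality of $\Lambda$ means uniqueness of the pair $((v:t),(u:s))$, and that $(u:s)$ already parametrizes each pencil $\mathcal{L}(r)$ bijectively, which is exactly your fiberwise isomorphism; your incidence-variety factorization and degree multiplicativity repackage the same content. (The only caveat is the degenerate case where all rolling planes coincide and $\overline{\mathcal{L}}$ is a one-dimensional pencil rather than a surface, where your "generically finite $\pi$" step does not apply --- but there both sides of the equivalence fail, so the lemma holds trivially.)
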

\begin{proof}
    $\Lambda$ is birational if and only if, for a generic $\PL \in \mathcal{L}$, there are unique parameters $((v:t),(u:s))$ whose image under $\Lambda$ is $\mathcal{L}$.
    Recall that the morphism $\rho$ in \eqref{eq:rho} identifies the parameters $(v:t) \in \PP^1$ with $r \in \mathcal{R}$.
    Hence, the birationality of $\Lambda$ implies for $\PL \in \mathcal{L}$ the existence of a unique $r \in \mathcal{R}$ with $\PL \in \mathcal{L}(r)$.
    Conversely, if a generic $\PL \in \mathcal{L}$ uniquely determines the pencil $\mathcal{L}(r)$ that contains it, then $\PL$ is uniquely picked in that pencil via the parameters $(u:s)$, i.e., $\Lambda$ is birational. \myqed
\end{proof}

\begin{proposition}\label{prop:birationalOrder0}
    Consider a rolling shutter camera whose associated congruence $\overline{\mathcal{L}}$ has order zero.
    The map $\Lambda$ is birational if and only if the camera moves on a line $\mathcal{C}$ and the center map $C: \mathcal{R} \dashrightarrow \mathcal{C}$ is birational.
\end{proposition}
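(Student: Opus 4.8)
The plan is to combine the structural description of order-zero congruences in \cref{rem:order0} with the birationality criterion of \cref{lem:LambdaBirational}, and then reduce everything to a point count on a plane curve. First I would recall that, since $\overline{\mathcal L}$ has order zero, \cref{rem:order0} tells us that all rolling planes $\Sigma(r)$ coincide with a single fixed plane $\Sigma$, that the set of camera rays $\mathcal L$ is the set of all lines contained in $\Sigma$, and that the camera center moves along a rational curve $\mathcal C\subseteq\Sigma$ (genuinely one-dimensional, since $\overline{\mathcal L}$ is a surface by \cref{union-dimension-two}). Under the standing assumption that $\Lambda$ is rational, \cref{lem:LambdaRationalImplications} gives that $C\colon\mathcal R\dashrightarrow\mathcal C$ is a rational map; write $e:=\deg(C)$ for the number of preimages of a general point of $\mathcal C$ under $C$, and $d$ for the degree of $\mathcal C$ as a curve in the plane $\Sigma\cong\PP^2$. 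Because every $\Sigma(r)$ equals $\Sigma$, a ray $L\in\mathcal L$ lies on the pencil $\mathcal L(r)$ if and only if $C(r)\in L$; hence by \cref{lem:LambdaBirational}, $\Lambda$ is birational if and only if a generic line $L\subset\Sigma$ passes through $C(r)$ for exactly one parameter $r\in\mathcal R$.

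Next I would compute the count $\#\{r\in\mathcal R\mid C(r)\in L\}$ for a generic line $L\subset\Sigma$. A generic line meets $\mathcal C$ in exactly $d$ points, without multiplicity, and each of these $d$ points is a general point of $\mathcal C$, hence has exactly $e$ preimages under $C$; therefore $\#\{r\in\mathcal R\mid C(r)\in L\}=d\cdot e$. Consequently $\Lambda$ is birational if and only if $d\cdot e=1$, i.e.\ if and only if $d=1$ and $e=1$ — equivalently, $\mathcal C$ is a line and $C$ is birational. This already yields both implications: for the forward direction one reads off $d=e=1$ from $d\cdot e=1$, and for the converse, if $\mathcal C$ is a line in $\Sigma$ then a generic $L\subset\Sigma$ meets $\mathcal C$ in a single point, which is a general point of $\mathcal C$ and hence (as $C$ is birational) has a unique preimage, so the criterion of \cref{lem:LambdaBirational} is satisfied.

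The step requiring the most care is the general-position argument behind the identity $\#\{r\mid C(r)\in L\}=d\cdot e$: one must verify that a generic line $L$ in $\Sigma$ is transverse to $\mathcal C$, avoids the (finite) singular locus of $\mathcal C$, and avoids the (finite) branch locus of $C$ on $\mathcal C$, so that all $d$ intersection points are simple and each contributes exactly $e$ parameters. Transversality and avoidance of $\mathrm{Sing}(\mathcal C)$ hold because the tangent lines of $\mathcal C$ and the lines through a fixed finite set form proper subvarieties of the dual plane $\Sigma^\ast$; avoidance of the branch locus follows by a dimension count on the incidence variety $\{(L,q)\mid q\in L\cap\mathcal C\}$, which is two-dimensional and dominates $\mathcal C$, so the preimage of a finite subset of $\mathcal C$ cannot dominate $\Sigma^\ast$. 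Everything else is bookkeeping with the dictionary ``$L\in\mathcal L(r)\iff C(r)\in L$'' provided by \cref{rem:order0}.
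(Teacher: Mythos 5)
Your proof is correct and follows essentially the same route as the paper's: invoke \cref{rem:order0} to reduce to a single plane $\Sigma$ with the congruence being all lines in $\Sigma$, apply \cref{lem:LambdaBirational}, and then observe that a generic line meets $\mathcal C$ in $\deg\mathcal C\cdot\deg C$ parameters, forcing both degrees to be $1$. The only difference is that you spell out the general-position bookkeeping that the paper leaves implicit.
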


\begin{proof}
    As in Remark \ref{rem:order0}, all rolling planes are the same plane, say $\Sigma$.
    The congruence is the family of all lines contained in $\Sigma$
    and the camera moves along a rational plane curve $\mathcal{C} \subset \Sigma$.
    By Lemma \ref{lem:LambdaBirational}, the map $\Lambda$ is birational if and only if, for a generic line $\PL \subset \Sigma$, there is a unique $r \in \mathcal{R}$ such that $C(r) \in \PL$. 
    The latter means that the generic line $\PL$ intersects the plane curve $\mathcal{C}$ in a single point (i.e., $\deg \mathcal{C}=1$) and that $C$ is a birational map. \myqed
\end{proof}

\begin{proposition}\label{prop:birationalNoMovement}
Consider a rolling shutter camera that does not move (but possibly rotates). The map $\Lambda$ is birational  if and only if the intersection of all rolling planes is a line $K$ and the rolling planes map $\Sigma$ is birational.
\end{proposition}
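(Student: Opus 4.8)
The plan is to reduce the proposition to a single degree count. Since the camera does not move, its center is a fixed point $C \in \PP^3$; as $\Lambda$ is rational, \Cref{lem:LambdaRationalImplications} makes $\Sigma^\vee$ a morphism $\PP^1 \to (\PP^3)^\ast$, and I would write it in coprime coordinates $\Sigma^\vee = (\sigma_0:\sigma_1:\sigma_2:\sigma_3)$ with the $\sigma_i \in \R[v,t]$ homogeneous of some degree $n$. Every rolling plane contains $C$, so $\im(\Sigma^\vee)$ lies in the $\PP^2$ of planes through $C$. First I would dispose of the degenerate case where all rolling planes coincide: then \Cref{prop:birationalOrder0} applies, and since a non-moving camera has a constant — hence non-birational — center map, $\Lambda$ is not birational; the right-hand side of the proposition also fails there, because the intersection of all rolling planes is a plane rather than a line, so the equivalence holds vacuously. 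Henceforth I assume the $\sigma_i$ are not all proportional, so $n \geq 1$.

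Next I would pin down the congruence. The pencils $\mathcal{L}(r) = \{ L \in \GrLine \mid C \in L \subseteq \Sigma(r) \}$ are lines in the $\PP^2$ of all lines through $C$, and as $r$ varies they form a one-parameter family; by \Cref{union-dimension-two} their union is two-dimensional, hence is all of that $\PP^2$, so $\overline{\mathcal{L}}$ consists of all lines through $C$. By \Cref{lem:LambdaBirational}, $\Lambda$ is birational if and only if a generic line $L$ through $C$ lies in a unique rolling plane. Writing $L = C \vee P$ for a second point $P$, a rolling plane $\Sigma(r)$ contains $L$ exactly when $P \cdot \Sigma^\vee(r) = 0$ (the condition $C \cdot \Sigma^\vee(r) = 0$ being automatic), i.e.\ when $(v:t)$ is a root of the degree-$n$ form $\sum_i P_i \sigma_i(v,t)$; for generic $P$ this form is nonzero and has $n$ distinct roots. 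Hence $\Lambda$ is birational if and only if $n = 1$.

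Finally I would match $n = 1$ with the two conditions on $\Sigma$. If $n = 1$, then $\Sigma^\vee(v:t) = vA + tB$ for $A, B \in (\PP^3)^\ast$ that are linearly independent (independent since the $\sigma_i$ are not all proportional); the image of $\Sigma^\vee$ is the line $A \vee B \subset (\PP^3)^\ast$, which is the pencil of planes through the line $K := A \cap B$, and $\Sigma^\vee$ is an isomorphism onto that line, so $\Sigma$ is birational and $\bigcap_r \Sigma(r) = K$. Conversely, if $\bigcap_r \Sigma(r)$ is a line $K$ and $\Sigma$ is birational, then every rolling plane contains $K$, so $\im(\Sigma^\vee) \subseteq K^\vee$, a line in $(\PP^3)^\ast$; this image is not a single point — otherwise the intersection would be a plane — so $\im(\Sigma^\vee) = K^\vee \cong \PP^1$, and a birational morphism $\PP^1 \to \PP^1$ has degree one, giving $n = 1$. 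Together with the previous paragraph, this proves the equivalence.

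The step demanding the most care is not deep but is the bookkeeping of projective duality: keeping the three copies of $\PP^2$ straight (lines through $C$, planes through $C$, and the pairing between them) and justifying that, for generic $P$, the degree-$n$ form truly detects all rolling planes through $L$ with the right multiplicities. Isolating the degenerate all-planes-equal case at the start is what makes ``the line $K$'' well defined throughout the main argument.
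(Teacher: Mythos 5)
Your proof is correct. The paper's own proof is three lines: it observes that a static camera is projectively dual to the order-zero situation of \Cref{prop:birationalOrder0} (all rolling planes equal, center tracing a plane curve) and transports that statement across the duality, so that the rolling planes trace a plane curve in $C^\vee\subset(\PP^3)^*$ and birationality of $\Lambda$ forces this curve to be a line $K^\vee$ with a birational parametrization. You reach the same geometric conclusion without invoking duality or \Cref{prop:birationalOrder0}: after reducing to \Cref{lem:LambdaBirational}, you count the rolling planes through a generic line $C\vee P$ as the roots of the degree-$n$ binary form $\sum_i P_i\sigma_i$, so uniqueness of the parameter is exactly $n=1$, and you then translate $n=1$ into ``$\bigcap_r\Sigma(r)$ is a line and $\Sigma^\vee$ is birational.'' This is the coordinate version of the paper's dual-curve degree count; what it buys is self-containedness and an explicit treatment of the degenerate all-planes-coincide case, which the paper's proof passes over (it is handled only in the surrounding discussion). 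Two minor points: the assertion that $\sum_i P_i\sigma_i$ has $n$ distinct roots for generic $P$ deserves a word of justification --- once the $\sigma_i$ are coprime and not all proportional, the linear system $\{\sum_i P_i\sigma_i\}$ is base-point-free of projective dimension at least one, and for $n\ge 2$ such a system cannot consist only of $n$-th powers of linear forms, so generically not all roots coincide, which is all the argument needs; and in the degenerate case the equivalence holds because both sides are false, not ``vacuously.''
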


\begin{proof}
    The congruence is the family of lines passing through the fixed camera center $C \in \PP^3$.
    Hence, this situation is projectively dual to the setting in Proposition \ref{prop:birationalOrder0}.
    In particular, when considering the rolling planes $\Sigma(r)$ as points $\Sigma(r)^\vee$ in $(\PP^3)^*$, they form a curve in the plane $C^\vee \subset (\PP^3)^*$. 
    As in the proof of Proposition \ref{prop:birationalOrder0}, the birationality of $\Lambda$ means that that plane curve is a line (namely $K^\vee$) with a rational parametrization via the rolling planes map $\Sigma^\vee$. \myqed 
\end{proof}

\begin{theorem}\label{thm:birationalInGeneral}
Consider a moving rolling shutter camera with a congruence $\overline{\mathcal{L}}$ of positive order. The map $\Lambda$ is birational  if and only if the map $\mathcal{R} \dashrightarrow \PP^3 \times (\PP^3)^*, r \mapsto (C(r), \Sigma(r)^\vee)$ that parametrizes the pencils $\mathcal{L}(r)$ is birational.
In particular, the birationality of $C: \mathcal{R} \dashrightarrow \mathcal{C}$ implies that $\Lambda$ is birational. 
\end{theorem}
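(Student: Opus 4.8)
The plan is to factor $\Lambda$ through the incidence variety of \Cref{union-dimension-two} and to isolate the single geometric fact driving the statement: a generic camera ray lies on a unique one of the pencils $\mathcal{L}(C(r),\Sigma(r))$ once we account for the fibres of $\psi\colon\mathcal{R}\dashrightarrow\PP^3\times(\PP^3)^*$, $r\mapsto(C(r),\Sigma(r)^\vee)$. Write $X=\overline{\im\psi}$, an irreducible curve, with images $\mathcal{C}$ in $\PP^3$ (the camera-centre curve) and $\mathcal{S}$ in $(\PP^3)^*$ (the curve of rolling planes), and let $\mathcal{V}$ be the incidence variety of triples $(C,\Sigma^\vee,\PL)$ with $\PL\in\mathcal{L}(C,\Sigma)$, as in \Cref{union-dimension-two}. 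Then $\Lambda$ factors as $\PP^1\times\PP^1\dashrightarrow\mathcal{V}\xrightarrow{\ \mu\ }\overline{\mathcal{L}}$, where the first map sends $((v{:}t),(u{:}s))$ to $\bigl(\psi(v{:}t),\Lambda((v{:}t),(u{:}s))\bigr)$ and $\mu$ is the projection to the last factor. So it suffices to compute the degrees of these two maps.

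The first map is easy. For fixed $r$ the restriction $\Lambda(r,-)\colon\PP^1\dashrightarrow\mathcal{L}(r)$ is birational: up to the identifications $\rho$ and $\nu$ of the rolling line $r$ and its points, it is the perspective projection onto $r$ of the pencil of lines through $C(r)$ in the plane $\Sigma(r)$. Hence over a generic point of $\mathcal{V}$ the fibre of the first map is in bijection with the $\psi$-fibre over the corresponding point of $X$, so the first map has degree $\deg\psi$. Granting that $\mu$ is birational (the next paragraph), we get $\deg\Lambda=\deg\psi$, so $\Lambda$ is birational if and only if $\psi$ is. The ``in particular'' then follows at once: since $C=\pi_1\circ\psi$ for the projection $\pi_1$ to $\PP^3$, birationality of $C$ forces $\psi$ to be generically injective, and $\psi$ is dominant onto $X$ by construction, hence birational, hence $\Lambda$ is birational.

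The main obstacle is showing that $\mu$ is birational, i.e. that a generic $\PL\in\overline{\mathcal{L}}$ lies on $\mathcal{L}(C,\Sigma)$ for a unique $(C,\Sigma^\vee)\in X$. I would fix a sufficiently generic $r_0$, take $\PL$ generic in $\mathcal{L}(r_0)$, and suppose that some $(C_1,\Sigma_1^\vee)\in X$ distinct from $\psi(r_0)$ also has $\PL\in\mathcal{L}(C_1,\Sigma_1)$; then $C_1,C(r_0)\in\PL\subseteq\Sigma_1\cap\Sigma(r_0)$, and I would split into two cases. If $C_1\neq C(r_0)$ then $\PL=C_1\vee C(r_0)$ is a secant of $\mathcal{C}$, which contradicts that a generic line of the pencil $\mathcal{L}(r_0)$ meets $\mathcal{C}$ only at $C(r_0)$: indeed $\mathcal{C}\cap\Sigma(r_0)$ is finite unless $\mathcal{C}\subseteq\Sigma(r_0)$ for generic $r_0$, and the latter forces $\mathcal{C}\subseteq\bigcap_r\Sigma(r)$, a set of dimension at most one because the rolling planes are not all equal (positive order, \Cref{rem:order0}); so $\mathcal{C}$ is then a line, whose only self-secant is itself, still not a generic line. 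If instead $C_1=C(r_0)$ then $\Sigma_1\neq\Sigma(r_0)$ and $\PL=\Sigma_1\cap\Sigma(r_0)$ with $C(r_0)\in\Sigma_1$, so $\Sigma_1^\vee\in\mathcal{S}\cap C(r_0)^\vee$; this set is finite — giving only finitely many candidate lines, not a generic one — unless $\mathcal{S}\subseteq C(r_0)^\vee$ for generic $r_0$, which again forces $\mathcal{C}$ to be a line $K$ contained in every rolling plane, so $\Sigma_1\cap\Sigma(r_0)=K$ and $\PL=K$, not generic. Both cases are contradictory, so $\mu$ is generically injective. The genuinely delicate point is precisely this repeated appeal to ``moving'' (so that $\mathcal{C}$ is a curve) and to ``positive order'' (so that $\bigcap_r\Sigma(r)$ has dimension at most one, hence cannot contain $\mathcal{C}$ unless $\mathcal{C}$ is exactly that line); a little care with the degenerate corners ($\deg\mathcal{C}=1$, coinciding rolling planes) is also needed but is absorbed by the case analysis above.
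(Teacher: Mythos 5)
Your proof is correct, and its core geometric content coincides with the paper's: the heart of both arguments is showing that a generic congruence line lies in a unique pencil $\mathcal{L}(C,\Sigma)$ with $(C,\Sigma^\vee)$ on the curve $X$, via the same two-case analysis (two distinct centers force $\PL$ to be a secant of $\mathcal{C}$, which fails unless $\mathcal{C}$ is planar/a line; two distinct rolling planes through the same center force $\PL$ into a finite set, or force $\mathcal{C}$ to be a line contained in all rolling planes, in which case $\PL$ would be that line). Where you differ is in the packaging. The paper proves the two implications separately: for ``$\Lambda$ birational $\Rightarrow\psi$ birational'' it argues that if the non-injectivity locus of $\psi$ were one-dimensional, the corresponding pencils would be dense in $\mathcal{L}$ by \Cref{union-dimension-two}, contradicting \Cref{lem:LambdaBirational}; for the converse it shows the unique-pencil statement and invokes \Cref{lem:LambdaBirational} again. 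You instead factor $\Lambda$ through the incidence variety $\mathcal{V}$, observe that the first factor has degree $\deg\psi$ (because $\Lambda(r,\cdot)$ is birational onto its pencil, the same fact underlying \Cref{lem:LambdaBirational}) and that the projection $\mu:\mathcal{V}\dashrightarrow\overline{\mathcal{L}}$ is \emph{unconditionally} birational for a moving camera of positive order, so that $\deg\Lambda=\deg\psi$ and both implications drop out of multiplicativity of degrees at once. This buys a more unified statement (and makes explicit that the unique-pencil property holds regardless of whether $\psi$ is birational, which the paper's proof also uses but does not isolate); the paper's version avoids setting up the degree formalism and leans on the already-proved \Cref{union-dimension-two} and \Cref{lem:LambdaBirational}. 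Your handling of the degenerate corners ($\mathcal{C}$ contained in all rolling planes, hence a line by \Cref{rem:order0} and positivity of the order) is sound and in fact treats the type-II situation $\mathcal{C}=K$ a little more explicitly than the paper's appeal to projective duality in its second case.
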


\begin{proof}
    The pairs $(C(r), \Sigma(r)^\vee)$ trace a curve in $\PP^3 \times (\PP^3)^*$, which we denote by~$\mathcal{D}$.    
    
Let $\Lambda$ be birational and let $X$ be the set of pairs $(C(r),\Sigma(r)^\vee)\in\mathcal D$ such that there is some $r'\neq r$ with $(C(r),\Sigma(r)^\vee)=(C(r'),\Sigma(r')^\vee)$. The set $X$ has either dimension zero or it has dimension one. In the former case, we are done since a generic element of $\mathcal D$ will have a unique pre-image $r$. In the latter case, the union $\bigcup_{(C(r),\Sigma(r)^\vee)\in X} \mathcal L(r) \subseteq \mathcal L$ is dense by Lemma~\ref{union-dimension-two}, contradicting Lemma~\ref{lem:LambdaBirational}.

    Now, we assume that the map $\mathcal{R} \dashrightarrow \mathcal{D}$ is birational. 
    Then, by Lemma \ref{lem:LambdaBirational}, it is enough to show that, for a generic $\PL \in \mathcal{L}$, there is a unique pair $(C,\Sigma^\vee) \in \mathcal{D}$ such that $C \in \PL \subset \Sigma$.
    To prove this, we consider a generic line $\PL \in \mathcal{L}$ on the congruence and assume for contradiction that there are two distinct pairs $(C_1,\Sigma_1^\vee), (C_2,\Sigma_2^\vee)\in \mathcal{D}$ such that $C_i \in \PL \subset \Sigma_i$ for $i=1,2$.
    We distinguish two cases.

    First, if $C_1 \neq C_2$, the generic line $\PL$ is a secant line of the curve $\mathcal{C}$ traced by the camera centers. 
    Hence, the congruence $\overline{\mathcal{L}}$ is the family of secant lines of $\mathcal{C}$. 
    In particular, all the lines $\PL'$ in the pencil $C_1 \in \PL' \subset \Sigma_1$ have to be secants of $\mathcal{C}$, which is only possible if $\mathcal{C}$ is a plane curve contained in $\Sigma_1$. 
    Since the same applies to the pencil given by $(C_2,\Sigma_2)$, we see in particular that $\Sigma_1 = \Sigma_2$ (otherwise, the curve $\mathcal{C}$ would be the unique line in their intersection, but then its secants would not form a line congruence).
    However, the congruence $\overline{\mathcal{L}}$ is now simply the family of lines contained in the plane $\Sigma_1$, which has order zero; a contradiction to the assumptions in Theorem \ref{thm:birationalInGeneral}.

    Second, if $C_1=C_2$, then we have necessarily that $\Sigma_1 \neq \Sigma_2$. 
    This case is projectively dual to the first case. 
    In particular, the rolling planes $\Sigma(r)$ trace a curve in the plane $C_1^\vee \subset (\PP^3)^*$ and the dual congruence $\overline{\mathcal{L}}^\vee \subset \mathrm{Gr}(1, (\PP^3)^*)$ is the family of secant lines of that plane curve. 
    Hence, $\overline{\mathcal{L}}^\vee$ consists of all lines in the plane $C_1^\vee$, and $\overline{\mathcal{L}}$ consists of all lines passing through the center $C_1$. 
    However, as observed at the beginning of the proof of Theorem \ref{thm:order1congruence}, the latter is not possible for a moving camera. \myqed
\end{proof}

We obtain the following corollary for order one cameras.

\begin{corollary}\label{cor:birationalOrder1congruence}
Consider a moving rolling shutter camera whose associated congruence $\overline{\mathcal{L}}$ has order one.
The map $\Lambda$ is birational  if and only if 
the rolling planes map $\Sigma^\vee$ is birational.
\end{corollary}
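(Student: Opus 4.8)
The plan is to reduce the statement to \Cref{thm:birationalInGeneral}. Since the camera is moving and its congruence $\overline{\mathcal{L}}$ has order one, hence positive order, that theorem says that $\Lambda$ is birational if and only if the parametrization $\mathcal{D}\colon \mathcal{R}\dashrightarrow \PP^3\times(\PP^3)^*,\ r\mapsto \bigl(C(r),\Sigma(r)^\vee\bigr)$ of the pencils $\mathcal{L}(r)$ is birational. So it suffices to show that $\mathcal{D}$ is birational precisely when the rolling-planes map $\Sigma^\vee$ is. One implication is immediate: if $\Sigma^\vee$ is birational, then a generic rolling plane $\Sigma(r)^\vee$ already determines $r$, and a fortiori so does the pair $\bigl(C(r),\Sigma(r)^\vee\bigr)$; hence $\mathcal{D}$ is birational.

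For the converse the key step is to show that in both cases of \Cref{thm:order1congruence} the center map factors through the rolling-planes map, i.e.\ $C=\psi\circ\Sigma^\vee$ for some rational map $\psi$. In case~1), for generic $r$ we have $C(r)\notin K$ (the finitely many points of $\mathcal{C}\cap K$ are avoided), so $\Sigma(r)=K\vee C(r)$. Moreover $\mathcal{C}\not\subseteq\Sigma(r)$: otherwise $\mathcal{C}\subseteq\bigcap_{r}\,(K\vee C(r))=K$, because the planes $K\vee C(r)$ all contain $K$ and are not all equal (the congruence has order one, not zero), whence $\mathcal{C}=K$, which case~1) excludes. Hence $\Sigma(r)\cap\mathcal{C}$ consists of $\deg\mathcal{C}$ points counted with multiplicity, of which $\deg\mathcal{C}-1$ lie on $K$ (the defining property of case~1) and the remaining one is $C(r)$, since $C(r)\in\Sigma(r)\cap\mathcal{C}$ and $C(r)\notin K$; this expresses $C(r)$ rationally in terms of $\Sigma(r)^\vee$ (compare \Cref{rem:typeISigmaDeterminesC}). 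In case~2), the hypothesis that $\Sigma(r_1)=\Sigma(r_2)$ forces $C(r_1)=C(r_2)$ is exactly the well-definedness of the assignment $\sigma\colon K^\vee\to K,\ \Sigma(r)^\vee\mapsto C(r)$, which is dominant because the camera moves, as already used in the proof of \Cref{thm:order1congruence}; so here $C=\sigma\circ\Sigma^\vee$.

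Granted the factorization $C=\psi\circ\Sigma^\vee$, write $\mathcal{D}=s\circ\Sigma^\vee$ where $s\colon Q\mapsto\bigl(\psi(Q),Q\bigr)$ is inverted by the second projection, hence birational onto its image. A generic point $y\in\im(\mathcal{D})$ then has the form $y=s(z)$ with $z=\mathrm{pr}_2(y)\in\im(\Sigma^\vee)$, and $\mathcal{D}^{-1}(y)=(\Sigma^\vee)^{-1}(z)$; so if $\mathcal{D}$ is birational this fiber is a single point for generic $y$, hence for generic $z$, i.e.\ $\Sigma^\vee$ is birational. Together with the easy direction and \Cref{thm:birationalInGeneral}, this proves the corollary.

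I expect the main obstacle to be the case~1) factorization, i.e.\ the clean argument that $C(r)$ is the unique residual point of $\Sigma(r)\cap\mathcal{C}$ off $K$. This needs the order-one (not order-zero) hypothesis, to ensure the rolling planes genuinely vary and so do not all contain $\mathcal{C}$, together with the exact intersection count built into case~1) of \Cref{thm:order1congruence}. Case~2) is lighter, amounting to unpacking the morphism $\sigma$ already constructed in the proof of \Cref{thm:order1congruence}; there one should just re-confirm that a moving camera makes $\sigma$ dominant.
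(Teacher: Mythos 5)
Your proposal is correct and takes essentially the same route as the paper: both reduce to \cref{thm:birationalInGeneral} and then show that in each case of \cref{thm:order1congruence} a generic rolling plane already determines the camera center (as the unique residual point of $\Sigma(r)\cap\mathcal{C}$ off $K$ in case 1, and via the map $\sigma$ in case 2), so that birationality of $r\mapsto(C(r),\Sigma(r)^\vee)$ is equivalent to birationality of $\Sigma^\vee$. Your explicit factorization $\mathcal{D}=s\circ\Sigma^\vee$ and the check that $\mathcal{C}\not\subseteq\Sigma(r)$ are just slightly more spelled-out versions of what the paper states tersely (cf.\ \cref{rem:typeISigmaDeterminesC}).
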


\begin{proof}
    Clearly, if the map $\Sigma^\vee$ is birational, then the map $r \mapsto (C(r), \Sigma(r)^\vee)$ is birational, and we can apply Theorem \ref{thm:birationalInGeneral} to see that $\Lambda$ is birational.
    For the converse direction, the same statement implies that it is enough to show that the birationality of  $r \mapsto (C(r), \Sigma(r)^\vee)$ implies that $\Sigma^\vee$ is birational. 
    To prove this, we consider the two cases of order-one congruences described in Theorem \ref{thm:order1congruence} separatly.
    In case 2), any rolling plane $\Sigma(r)$ determines the corresponding camera center $C(r)$, and then the birationality of $r \mapsto (C(r), \Sigma(r)^\vee)$ ensures that there is a unique parameter $r$. 

    In case 1), a generic rolling plane $\Sigma(r)$ intersects the curve $\mathcal{C}$ is $\deg \mathcal{C}$ many points (counted with multiplicity). 
    All except one of those points lie on the line $K$.
    The remaining point is the camera center $C(r)$, which is thus uniquely determined by $\Sigma(r)$. As above, the birationality of $r \mapsto (C(r), \Sigma(r)^\vee)$ ensures that the parameter $r$ is unique.
    As a side note, a generic camera center $C(r)$ determines uniquely the corresponding rolling plane: $\Sigma(r) = C(r) \vee K$. 
    Therefore, the birationality of the rolling planes map $\Sigma^\vee$ is equivalent to the birationality of the center map $C$. 
    This also proves Remark \ref{rem:typeISigmaDeterminesC}. \myqed
\end{proof}

\begin{proof}[Proof of Theorem \ref{thm:rs1cameras}]
    If the camera does not move, then $\mathcal{C}$ is a point. 
    The associated congruence consists of all lines passing through that point and has order one. 
    Hence, the camera has order one if and only if the map $\Lambda$ is birational.
    By Proposition \ref{prop:birationalNoMovement}, the latter is equivalent to the conditions in Theorem \ref{thm:rs1cameras}. 
    The point $\mathcal{C}$ has to lie on the line $K$ since each rolling plane has to contain both $\mathcal{C}$ and $K$ and there is a one-dimensional family of such planes. 

    If the camera moves, then $\mathcal{C}$ is a rational curve. 
    The camera has order one if and only if the conditions in Theorem \ref{thm:order1congruence} and  Corollary \ref{cor:birationalOrder1congruence} are satisfied. 
    Note that the birationality of the map $\Sigma^\vee$ implies that every rolling plane  $\Sigma(r)$ uniquely determines the parameter $r$ and thus also the corresponding camera center $C(r)$.
    Hence, the birationality of $\Sigma^\vee$ ensures that the cases 1) and 2) in Theorem \ref{thm:order1congruence} are equivalent to the types I and II in Theorem \ref{thm:rs1cameras}. \myqed
\end{proof}

\subsection{Camera Spaces}

In this section, we prove Propositions \ref{prop:typeIparameters} and \ref{prop:typeIIparameters}.

\begin{lemma}\label{lem:curvesWithD-1Secants}
    The dimension of $\mathcal{H}_d$ is $3d+5$.
    For every line, conic, or nondegenerate rational curve $\mathcal{C}$ of degree at most five, there is a line $K$ such that $(\mathcal{C},K) \in \mathcal{H}_{\deg \mathcal{C}}$.
    For $d \geq 6$, the locus of rational degree-$d$ curves $\mathcal{C}$ that admit a line $K$ with $(\mathcal{C},K) \in \mathcal{H}_d$ is a low-dimensional subset of the locus of all rational degree-$d$ curves.
\end{lemma}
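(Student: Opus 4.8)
The plan is to prove the three assertions separately; the dimension formula is the computational core, the existence statement for $\deg\mathcal C\le 5$ is where the real work lies, and the $d\ge 6$ statement then falls out for free.

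\textbf{Dimension of $\mathcal H_d$.} I would study the projection $\pi\colon \mathcal H_d\to\GrLine$, $(\mathcal C,K)\mapsto K$. Since $\mathrm{PGL}_4$ acts transitively on lines in $\PP^3$ and preserves $\mathcal H_d$ (it preserves degrees and intersection multiplicities), all fibers of $\pi$ are isomorphic, so it suffices to compute the fiber over the $z$-axis $K'$. By the parametrization recalled above \cite[Eqn.~(18)]{ponce2017congruences}, this fiber is the image of the map that sends $(f,g,h)$, with $f,g,h$ homogeneous of degrees $d-1,d,d$, to the curve $(v:t)\mapsto(vf:tf:g:h)$, and this map factors through $\PP\bigl(\R[v,t]_{d-1}\times\R[v,t]_d\times\R[v,t]_d\bigr)=\PP^{3d+1}$. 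I would check it is a birational parametrization: for generic $(f,g,h)$ the curve has degree $d$ and meets $K'=\{X_1=X_2=0\}$ in exactly the $d-1$ zeros of $f$; conversely, given $(\mathcal C,K')\in\mathcal H_d$, projection away from $K'$ is a birational map $\mathcal C\dashrightarrow\PP^1$ (degree $d-(d-1)=1$), and demanding that a parametrization of $\mathcal C$ make the composite $\PP^1\to\mathcal C\dashrightarrow\PP^1$ the identity pins it down up to scalar — which is precisely the normalization forcing the form $(vf:tf:g:h)$ with a unique $(f:g:h)$. Hence every fiber of $\pi$ has dimension $3d+1$, $\mathcal H_d$ is irreducible, and $\dim\mathcal H_d=\dim\GrLine+(3d+1)=4+3d+1=3d+5$.

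\textbf{Existence of $K$ for $\deg\mathcal C\le5$.} I would argue case by case, reducing to the classical geometry of multisecant lines of rational curves. For $d=1$ ($\mathcal C$ a line) any skew line $K$ works, since $d-1=0$. For $d=2$ ($\mathcal C$ a conic) a line through one point of $\mathcal C$ that is not contained in its plane meets $\mathcal C$ in exactly one point. For $d=3$ ($\mathcal C$ a twisted cubic) any chord is a $2$-secant (a line meeting a twisted cubic in three points would lie on every quadric through it, hence in the cubic). For $d=4$: if $\mathcal C$ is smooth it lies on a unique quadric $Q$ (two quadrics would realize $\mathcal C$ as their complete intersection, forcing arithmetic genus $1$), and $Q$ is smooth (a quadric cone would force $\mathcal C$ to be singular at the vertex); on $Q\cong\PP^1\times\PP^1$ the rational quartic has bidegree $(1,3)$ or $(3,1)$, so one ruling consists of $3$-secant lines. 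If instead $\mathcal C$ is singular, its only singularity is a node or a cusp (a triple point would make $\mathcal C$ planar), and projecting $\mathcal C$ from that point shows a general line through it meets $\mathcal C$ with multiplicity $2$ there and in one further point, so $\#(K\cap\mathcal C)=3$. For $d=5$: a singular quintic has a point of multiplicity $2$ or $3$, which I would treat again by projection from that point (using, when the projection does not drop the degree enough, a line through the singular point tangent to $\mathcal C$ elsewhere); for a smooth nondegenerate rational quintic I would invoke the classical fact that it carries a quadrisecant line — the relevant enumerative count for degree $5$, genus $0$ being a nonzero constant — citing \cite{PONCE201762,TragerSCHP16} and the classical literature.

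\textbf{The case $d\ge6$.} This is immediate from the dimension formula. The locus of rational degree-$d$ curves in $\PP^3$ admitting a $(d-1)$-secant line is the image of the projection $(\mathcal C,K)\mapsto\mathcal C$ out of $\mathcal H_d$, so it has dimension at most $\dim\mathcal H_d=3d+5$; while the space of all rational degree-$d$ curves in $\PP^3$ — degree-$d$ maps $\PP^1\to\PP^3$ modulo scalars and $\mathrm{PGL}_2$-reparametrization — has dimension $4(d+1)-1-3=4d$. Since $3d+5<4d$ exactly when $d\ge6$, the locus is a proper closed subvariety, of codimension at least $d-5$. The main obstacle is the quintic case of the second assertion: the dimension count only shows that curves with a quadrisecant are \emph{dense} among the $20$-dimensional family of rational quintics, whereas the lemma claims one on \emph{every} nondegenerate rational quintic; bridging this gap needs either a properness/specialization argument for the incidence of quintics with quadrisecants, or the classical enumerative count together with a hands-on treatment of the singular quintics. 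Everything else — the dimension computation and the $d\ge6$ comparison — is routine once the parametrization is in hand.
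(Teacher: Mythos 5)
Your dimension computation and your $d\ge 6$ argument are correct and in fact more self-contained than the paper's: the paper obtains $\dim\mathcal H_d=3d+5$ for $d\le 5$ by adding the dimension of the family of $(d-1)$-secants of a general curve to the $4d$-dimensional family of rational degree-$d$ curves, and for $d>5$ by citing \cite[Lem.~2.13]{ellia2001gonality}, whereas you fiber $\mathcal H_d$ over $\GrLine$ and use the explicit normal form $(vf:tf:g:h)$ to get the fiber dimension $3d+1$ uniformly in $d$; your birationality argument for that normal form (projection away from $K'$ has degree $d-(d-1)=1$, which pins down the parametrization) is sound, and then $3d+5<4d$ for $d\ge 6$ gives the third assertion exactly as you say. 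Your hands-on treatment of $d\le 3$ is also fine.

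The genuine gap is the one you flag yourself: the smooth nondegenerate rational quintic. The lemma asserts a quadrisecant on \emph{every} such curve, and neither the enumerative count (which is only ``virtual'' or valid for general curves) nor your dimension count (which gives density) delivers this; the paper closes exactly this point by citing \cite[Rem.~2.16]{ellia2001gonality} (and, for the one-dimensional family of trisecants of every nondegenerate rational quartic, \cite{LEBARZ2005743}). Your proposed specialization route can be completed, but you would still have to do the work: show the incidence variety of pairs (nondegenerate rational quintic, line meeting it with multiplicity $\ge 4$) is proper over the irreducible family of such quintics, and then rule out multiplicity $5$ in the limit, which follows from nondegeneracy via the paper's \cref{lem:dSecantImpliesPlanar}; as written this is a sketch, not a proof. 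A smaller but real slip occurs in your singular quartic case: a \emph{general} line through the double point meets $\mathcal C$ only at that point (with multiplicity $2$), so it is not a trisecant; what you need is the join of the double point with a general further point of $\mathcal C$ (equivalently a general line of the cone over the conic obtained by projecting from the double point), which does meet $\mathcal C$ with total multiplicity $3$. The analogous care is needed in your singular quintic sketch (double point: use a line through it and two further points, or tangent elsewhere, read off from the singular point of the projected cubic; triple point: join with a general point of the curve). With the quintic citation (or a completed properness argument) and these corrections, your proof goes through.
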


\begin{proof}[Proof of Lemma \ref{lem:curvesWithD-1Secants}]
    For curves $\mathcal{C}$ of degree at most two, there are clearly many such lines $K$.
For curves $\mathcal{C}$ of degree at least three, the existence of such a line $K$ requires the curve $\mathcal{C}$ to be \emph{nondegenerate}, i.e., not contained in any plane (otherwise, every secant line would automatically intersect the curve in $\deg \mathcal{C}$ many points).
Every nondegenerate cubic curve has a two-dimensional family of secant lines $K$.
It is a classical fact that every nondegenerate rational quartic curve has a one-dimensional family of trisecant lines $K$ (see e.g. \cite{LEBARZ2005743}) and that every nondegenerate rational quintic curve has a quadrisecant line $K$ (in fact, typically a unique one; see \cite[Remark 2.16]{ellia2001gonality}).
Together with the fact that the locus of all rational degree-$d$ space curves has dimension $4d$,
this discussion proves the first two assertions of Lemma \ref{lem:curvesWithD-1Secants} for $d \leq 5$.

For $d > 5$,  the locus $\mathcal{H}'_d$ of  rational curves in $\PP^3$ of degree $d$ that intersect some line at $d-1$ or $d$ many points has dimension $3d+5 < 4d$ \cite[Lem.\ 2.13]{ellia2001gonality}.
Moreover, for the general curve in $\mathcal{H}'_d$ (with $d \geq 5$), there is a \emph{unique} line meeting the curve in $d-1$ points (and not $d$ points) \cite[Thm.\ 2.15 \& Rem.\ 2.16]{ellia2001gonality}.\myqed
\end{proof}

\medskip \noindent
\Cref{lem:curvesWithD-1Secants} implies \Cref{rem:cameraPerCurve}, i.e., that almost any rational curve of degree $d \leq 5$ can be the center locus of a \rs camera of type I, while only special curves are allowed when $d \geq 6$.
Next, we determine which birational maps $\Sigma^\vee: \PP^1 \dashrightarrow K^\vee$ are allowed. 
The following lemma applies to all three types of \rs cameras, and uses the notation introduced in the paragraph before Proposition \ref{prop:typeIparameters}.

\begin{lemma}\label{lem:sigma-restrictions}
Let $C:\PP^1\dashrightarrow\PP^3$ be a rational map defined over $\mathbb{R}$ and let $\mathcal{C}$ be the complex Zariski closure of its image.
Let $K\subseteq \PP^3$ be a line and let $\Sigma^\vee:\PP^1\dashrightarrow K^\vee$ be a birational map, both defined over $\mathbb{R}$, such that, for all $(v:t) \in \PP^1$, the plane $\Sigma(v:t)$ contains the point $C(v:t)$. Assume that $\mathcal{C}\not\subseteq H^\infty$ and that $\mathcal{C}$ and $K$ are related as in Type I, II, or III of Theorem \ref{thm:rs1cameras}.
%by one of the following:
%\begin{enumerate}
%\item[(i)] $\#(K \cap \mathcal{C}) = \deg(\mathcal{C})-1$,
%\item[(ii)] $\mathcal{C}=K$,
%\item[(iii)] $\mathcal{C}$ is a point in $K$.
%\end{enumerate}

Then, the following are equivalent:
\begin{enumerate}
\item[(a)] There exists an \rs camera of type I, II, or III with distinguished line $K$, center-movement map $C$, and rolling planes map $\Sigma^\vee$.
\item[(b)] $K^\infty$ is a point and on its dual line $(K^\infty)^\vee\subseteq (H^\infty)^*$, there are points $A,B\in(K^\infty)^\vee$ such that $A\cdot B = 0$, $A\cdot A = B\cdot B$, and the map $\Sigma^\vee_\infty:\PP^1\dashrightarrow(H^\infty)^*$ is $\Sigma^\vee_\infty(v:t)=Av+Bt$.
\end{enumerate}
\end{lemma}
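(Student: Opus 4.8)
The plan is to express the rolling-planes condition in homogeneous coordinates and track what the birationality of $\Sigma^\vee$ plus the order-one structure forces on the restriction $\Sigma^\vee_\infty$. First I would reduce to a normal form: by the $G$-action of \Cref{rem:groupAction} I may rotate and translate so that $K$ is the $z$-axis $K' = (0:0:1:0)\vee(0:0:0:1)$, so that $K^\infty = (0:0:1:0)$ is a point (this uses $\mathcal{C}\not\subseteq H^\infty$, which guarantees the rolling planes are not all equal to $H^\infty$, hence $K\neq K^\infty$ and $K^\infty$ is a single point). In these coordinates a plane containing $K'$ has equation $(\Sigma_1:\Sigma_2:0:0)$ with $(\Sigma_1:\Sigma_2)\in\PP^1$, and the map $\Sigma^\vee$ sends $(v:t)$ to such a vector; moreover $\Sigma^\vee_\infty(v:t)$ is just the same pair $(\Sigma_1(v:t):\Sigma_2(v:t))$ viewed inside $(H^\infty)^*$, on the dual line $(K^\infty)^\vee = \{(a_1:a_2:0)\}$.

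Next I would argue the \textbf{degree-one} claim, i.e.\ that $\Sigma^\vee_\infty$ is given by \emph{linear} forms $Av+Bt$. Since $\Sigma^\vee$ is birational onto $K^\vee\cong\PP^1$ and (after the normalization) $\Sigma^\vee$ and $\Sigma^\vee_\infty$ carry literally the same pair of forms $(\Sigma_1:\Sigma_2)$, the map $(v:t)\mapsto(\Sigma_1(v:t):\Sigma_2(v:t))$ is a birational self-map of $\PP^1$; after clearing any common factor the two forms are linear, and a birational (hence invertible) map $\PP^1\to\PP^1$ in linear forms is exactly $(v:t)\mapsto Av+Bt$ for the coefficient vectors $A=(\text{coeff.\ of }v)$, $B=(\text{coeff.\ of }t)$. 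This handles ``$\Sigma^\vee_\infty$ is affine-linear in $(v:t)$'' in both directions: conversely, if $\Sigma^\vee_\infty(v:t)=Av+Bt$ with $A,B$ distinct projective points, the map is visibly birational and its span with $K$ recovers a rolling planes map $\Sigma^\vee$ of the required form.

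The part I expect to be the \textbf{main obstacle} is pinning down the two bilinear constraints $A\cdot B=0$ and $A\cdot A=B\cdot B$ and showing they are precisely the obstruction to $\Sigma^\vee$ arising from an actual $\SO(3)$ camera. The point is that a rolling plane $\Sigma(v:t)$ must be the back-projection $(1:0:-\tfrac{v}{t})\cdot P(\tfrac{v}{t})$ of a rolling line under a projection matrix $P = R[I_3\mid -C]$ with $R\in\SO(3)$ honest (not merely $\GL_3$). Restricting to $H^\infty$ kills the translational part and leaves $\Sigma^\vee_\infty(v:t)$ a fixed $\RR$-linear combination of the rows of $R$: writing $R$ with rows $w_1,w_2,w_3$ and using the parameterization \eqref{eq:rho}, one computes $\Sigma^\vee_\infty(v:t)\propto v\,w_1 + t\,w_3$ (up to the $w_2$-direction being spanned away, corresponding to $K^\infty=(w_2:0)$ — cf.\ the constant-rotation discussion around \eqref{eq:SigmaViaR}; for non-constant rotation one argues the same pointwise). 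Hence $A$ and $B$ are, up to common scalar, the first and third rows of an orthonormal frame, so they satisfy $A\cdot B=0$ and $\|A\|=\|B\|$; and conversely, given $A,B$ on the line $(K^\infty)^\vee$ (which is the $w_2$-perpendicular plane at infinity) with $A\cdot B=0$ and $A\cdot A=B\cdot B$, one can normalize $A,B$ to unit length, take $w_2 = A\times B$ (oriented so $\det[A;w_2;B]>0$), and thereby build a rotation map $R(v:t)$ — indeed a \emph{constant} $R$ suffices to realize these two rows — together with $C(v:t)$ determined by \Cref{rem:typeISigmaDeterminesC} in the moving case (Types I/II) or chosen as the fixed center on $K$ in Type III. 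This produces an honest \rs camera of the appropriate type, closing the equivalence. The delicate bookkeeping is checking that the sign/orientation freedom and the possibility of a common polynomial factor in $(\Sigma_1:\Sigma_2)$ do not create spurious cases, and that the constraints are unchanged under the allowed coordinate normalization (they are, since $A\cdot B$ and $A\cdot A - B\cdot B$ are $\SO(3)$-invariant on $H^\infty$).
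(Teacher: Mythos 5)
Your forward direction (a)$\Rightarrow$(b) has a genuine gap at exactly the place you flagged as the main obstacle. You derive $A\cdot B=0$ and $A\cdot A=B\cdot B$ from the claim that $A$ and $B$ are, up to a common scalar, two rows of an orthonormal frame. That is true only for \emph{constant} rotation (where $\Sigma^\vee_\infty(v:t)=t\,w_1-v\,w_3$ exactly, so $A\propto w_3$, $B\propto w_1$). For a general \rs camera the relation is only pointwise and projective: $\Sigma^\vee_\infty(x)\propto w_1(x)-x\,w_3(x)$ with $x$-dependent rows and an $x$-dependent scale $\mu(x)$, so writing $\Sigma^\vee_\infty(x)=Ax+B$ you get merely $\Vert Ax+B\Vert^2=\mu(x)^2(1+x^2)$, which imposes \emph{no} condition on $A,B$ beyond linear independence, since $\mu$ is a priori an arbitrary function. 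Indeed, for any linearly independent real $A,B\in(K^\infty)^\vee$ one can solve for a rotation map $R(x)$ realizing $\Sigma^\vee_\infty(x)=Ax+B$; what fails for such a camera is not the existence of $R$ but the \emph{rationality of $\Lambda$}, which is part of the definition of an \rs camera and which your argument never invokes. This rationality is the heart of the paper's proof: it computes $\Lambda_\infty(x,y)$ explicitly (equation \eqref{lambda-infty}), observes that its rationality forces $\Vert\Sigma^\vee_\infty(x)\Vert/\sqrt{1+x^2}$ to be a rational function, i.e.\ forces $(1+x^2)\bigl((Ax+B)\cdot(Ax+B)\bigr)$ to be the square of a quadratic polynomial, and a short computation shows this is equivalent to the two bilinear conditions. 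A smaller but real flaw in the same direction: your justification that $K^\infty$ is a point (``$\mathcal{C}\not\subseteq H^\infty$ guarantees the rolling planes are not all $H^\infty$, hence $K\neq K^\infty$'') is a non sequitur, since planes can all contain a common line at infinity without any of them being $H^\infty$. The paper instead deduces $K\not\subseteq H^\infty$ from the fact that real vectors $A,B$ satisfying the two derived conditions cannot be proportional, so $\Sigma^\vee_\infty$ is non-constant.

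Your converse direction (b)$\Rightarrow$(a) is correct and is actually a cleaner route than the paper's: given $A\cdot B=0$ and $A\cdot A=B\cdot B$, a \emph{constant} rotation with rows $w_1\propto B$, $w_3\propto -A$, $w_2=w_3\times w_1$, combined with the prescribed center map $C$, reproduces the prescribed rolling planes (a plane distinct from $H^\infty$ is determined by its trace at infinity together with one affine point, and $C(x)\notin H^\infty$ for generic $x$), and then \cref{prop:rs1NoRotation} together with the birationality of $\Sigma^\vee$ gives order one. The paper instead constructs a possibly non-constant $R$ via an auxiliary line at infinity and formula \eqref{rotation-explicit}, then verifies rationality of $\Lambda_\infty$ from $\sqrt{1+x^2}\,\Vert\Sigma^\vee_\infty(x)\Vert=Q(x)$. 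Your shortcut for this half is legitimate; the missing piece is the rationality argument in the other half.
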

\iffalse
\begin{lemma}\label{lem:SigmaConstraint}
    Let $(\mathcal{C},K) \in \mathcal{H}_d$, and let 
    $\Sigma: \PP^1 \dashrightarrow K^\vee$ be  birational.
    Then, $\mathcal{C}$, $K$, and $\Sigma$ are part of an \rs camera
    if and only if $\mathcal{C}^\infty \subsetneq \mathcal{C}$, 
    $K^\infty$ is a point, and 
    $\Sigma^\infty (v:t) = Av+Bt$ for some $A,B \in (K^\infty)^\vee \subset (H^\infty)^\ast$ with  $A \cdot B = 0$ and  $A \cdot A = B \cdot B$. 
\end{lemma} \fi

\begin{proof}
$(a)\Rightarrow (b)$: We begin by assuming that $C$, $K$, and $\Sigma^\vee$ are part of an \rs camera.
In particular, the birational map $\Sigma^\vee$ is of the form \eqref{eq:SigmaMap}. Thus, $\Sigma_\infty^\vee(x) = (1:0:-x) \cdot R(x)$ for  $x \in \mathbb{R}$. The rotation matrix $R(x)$ preserves norms. So, after fixing a scaling for the rational map $\Sigma_\infty^\vee$, the former equation implies
\begin{align} \label{sigma-infty-rotation}
    \frac{\Sigma_\infty^\vee(x)}{\Vert \Sigma_\infty^\vee(x) \Vert} = \frac{1}{\sqrt{1+x^2}} (1,0,-x) \cdot R(x).
\end{align}
For general $x\in\mathbb R$, the projection matrix $P(x)$ maps the whole plane $\Sigma(x)$ onto the rolling line $\rho(x:1)$. 
In particular, exactly one camera ray $\PL(x)\subseteq \Sigma(x)$ passing through $C(x)$ will be mapped by $P(x)$ to  $(0:1:0)^\top$.
Since $(0:1:0) = \varphi((x:1),(1:0))$, the homogenization of the map $\Lambda$ becomes well-defined at $((x:1),(1:0))$ and we see that  $\PL(x) = \Lambda((x:1),(1:0))$. 
In particular, $\PL$ is a rational map.

Thus, $\omega(x)\coloneqq \PL_{\infty}(x)$ also defines a rational map.
Since $P(x)\omega(x) = (0:1:0)^\top$, after fixing a scaling for the rational map $\omega$, we have
\begin{align}\begin{split}
    \label{rotation-lambda-tilde}
    R(x) \cdot \frac{\tilde \omega(x)^\top}{\Vert \tilde \omega(x) \Vert} = (0,1,0)^\top,\\ \text{ where } \tilde{\omega}(x) := \left( [I_3 \mid -C(x)]\, \omega(x) \right)^\top.
\end{split}
\end{align}
Since $\omega(x)$ is of the form $(\omega_1:\omega_2:\omega_3:0)^\top$, the point $\tilde \omega(x)$ is simply $(\omega_1:\omega_2:\omega_3)\in\PP^2$.
Equations \eqref{sigma-infty-rotation} and \eqref{rotation-lambda-tilde} uniquely determine the rotation matrix $R(x)$.
Indeed,
\footnotesize
\begin{align}\begin{split}\label{rotation-explicit}
    R(x) &= \\ \left[ \begin{matrix}
        \dfrac{1}{\sqrt{1+x^2}} &0&\dfrac{x}{\sqrt{1+x^2}} \\[15pt]
        0&1&0 \\[10pt]
        -\dfrac{x}{\sqrt{1+x^2}}&0&\dfrac{1}{\sqrt{1+x^2}}
     \end{matrix} \right] &{}\cdot{}
    \left[ \begin{matrix}
       \dfrac{\Sigma_\infty^\vee(x)}{\Vert \Sigma_\infty^\vee(x) \Vert}  \\[15pt]  \dfrac{\tilde \omega(x)}{\Vert \tilde \omega(x) \Vert}  \\[15pt]
       \dfrac{\Sigma_\infty^\vee(x) \times \tilde \omega(x)}{\Vert \Sigma_\infty^\vee(x) \Vert \cdot \Vert \tilde \omega(x) \Vert} 
    \end{matrix} \right].
    \end{split}
\end{align}
\normalsize
% use phantoms for correct matrix spacing, instead of //[15pt]. later.

For $x,y\in\mathbb R$, we may now compute $\Lambda_{\infty}(x,y)$, the intersection of the camera ray parametrized by $(x,y)$ with the plane $H^{\infty}$. The point $\Lambda_{\infty}(x,y)$ is the intersection of the lines represented by
\[
(1:0:-x) \cdot R(x)=\Sigma_\infty^\vee(x) \ \text{ and }\  (0:1:-y) \cdot R(x)
\]
in$(H^\infty)^*$.
We compute
{\footnotesize\begin{align}%\begin{split}
 & \Lambda_\infty(x,y) = 
   \Sigma^\vee_\infty(x) \times \biggl(  (0:1:-y) \cdot R(x) \biggr)\nonumber \\
   &= \Sigma^\vee_\infty(x) \times
   \biggl( \frac{\tilde \omega(x)}{\Vert \tilde \omega(x) \Vert}  
   - y\nonumber \\ & {}\cdot{}  \biggl( -\frac{x}{\sqrt{1+x^2}} \cdot \frac{\Sigma^\vee_\infty(x)}{\Vert \Sigma^\vee_\infty(x) \Vert} + \frac{1}{\sqrt{1+x^2}} \cdot \frac{\Sigma^\vee_\infty(x) \times \tilde \omega(x)}{\Vert \Sigma^\vee_\infty(x) \Vert \cdot \Vert \tilde \omega(x) \Vert}
   \biggr) \biggr)\nonumber  \\ \label{lambda-infty}
   &= 
   \frac{\Sigma^\vee_\infty(x) \times \tilde \omega(x)}{\Vert \tilde \omega(x) \Vert} + \frac{y}{\sqrt{1+x^2}} \cdot \frac{\Vert \Sigma^\vee_\infty(x) \Vert}{\Vert \tilde \omega(x) \Vert} \cdot \tilde \omega(x).
%\end{split}
\end{align}}
\noindent
The latter expression can be scaled to a rational function since $\Lambda$ is a rational map. 
Since the first term of $\Vert \tilde \omega(x) \Vert \cdot \Lambda_\infty(x,y)$ (which is $\Sigma^\vee_\infty(x) \times \tilde \omega(x)$) is already rational, the function $\Vert \tilde \omega(x) \Vert \cdot \Lambda_\infty(x,y)$ can only be scaled to a rational function if its second term ${y} \cdot \Vert \Sigma^\vee_\infty(x) \Vert \cdot \tilde \omega(x)/{\sqrt{1+x^2}}$ is rational as well. 
This means that ${\Vert\Sigma^\vee_\infty(x) \Vert}/{\sqrt{1+x^2}}$ is rational, which is equivalent to $\sqrt{1+x^2} \cdot \Vert\Sigma^\vee_\infty(x) \Vert$ being rational.
For the affine linear map $\Sigma^\vee_\infty$ this means that $(1+x^2) \cdot (\Sigma^\vee_\infty(x) \cdot \Sigma^\vee_\infty(x)) = Q(x)^2$ for some quadratic polynomial $Q$.
Since $K\subseteq \Sigma(x)$ for all $x$, we can write $\Sigma^\vee_\infty(x) = Ax + B $ for some $A,B \in (K^\infty)^\vee$, and so a direct computation (e.g., with \texttt{Macaulay2}) reveals that the existence of $Q$ is equivalent to the conditions $A \cdot B = 0$ and $A\cdot A = B \cdot B$.

Since the entries of $A$ and $B$ are real numbers, these two conditions imply that $A$ and $B$ are not scalar multiples of each other. Thus, the map $\Sigma^\vee_\infty$ is not constant, which implies that $K$ is not contained in $H^\infty$ (otherwise we would have $\Sigma^\vee_\infty(x) = K^\vee$ for all $x$).

$(b)\Rightarrow (a)$: We are given the data of $K$, $C$, and $\Sigma^\vee$, and need to find an \rs camera that conforms to these. First, the type of our camera (I, II, or III) is readable from $K$ and $\mathcal{C}$. Second, the rotation map $R$ must be determined. Third, the map $\Lambda$ defined in \eqref{eq:LambdaMap} has to be rational.

We define a map $\omega:\PP^1\dashrightarrow H^\infty$ such that the line $\PL(x)$ spanned by $\omega(x)$ and $C(x)$ will be sent to $(0:1:0)$ by the camera.
%For a Type I camera, choose any rational map $\lambda:\PP^1\to K$ and set $\omega(x)\coloneqq (C(x)\vee\lambda(x))\wedge H^\infty$. For a Type II or III camera,
Fix any line $\ell$ at infinity and set $\omega(x)\coloneqq \Sigma(x)\wedge \ell$.
%In both cases,
Equation~\eqref{rotation-explicit} now gives the definition of a rotation map $R$ that conforms to the given data.

Finally, since $\sqrt{1+x^2} \Vert \Sigma^\vee_\infty(x) \Vert = Q(x)$, we see from \eqref{lambda-infty} that $\Lambda_\infty$ is a rational map (after multiplying by $\Vert \tilde \omega(\cdot) \Vert$). Therefore, also $\Lambda(x,y) = C(x) \vee \Lambda_\infty(x,y)$ is rational.\myqed
\end{proof}

\medskip
Lemma~\ref{lem:sigma-restrictions} gives an existence statement for \rs cameras of all types given the data $C$, $K$, $\Sigma^\vee$. 
That data determines uniquely the associated congruence $\overline{\mathcal{L}}$,  
but the camera rotation map $R:\mathbb R\to \SO(3)$ and the congruence parametrization $\Lambda$ are not yet fixed.
In fact, there are many ways of defining a camera rotation map $R:\mathbb R\to \SO(3)$ that fits the data. For $x \in \R$, the orientation $R(x)$ has three degrees of freedom. The first two are accounted for, up to orientation, by the rolling plane $\Sigma(x)$. The third may be fixed, up to orientation, by the choice of which line $\PL(x)\subseteq \Sigma(x)$ passing through $C(x)$ is the camera ray that the projection matrix $P(x)$ maps to $(0:1:0)$, the intersection point of all rolling lines. 
Such a choice is expressed as a rational map $\PL:\mathbb P^1\dashrightarrow \Gr(1,\PP^3)$. %(see the proof of Lemma \ref{lem:sigma-restrictions}).

\begin{lemma}    \label{lem:rs1CamerasAndLambdaMaps}
Let $(C,K,\Sigma^\vee)$ satisfy the conditions from Lemma \ref{lem:sigma-restrictions}.
The \rs cameras involving $C$, $K$, and $\Sigma^\vee$ are in 4-to-1 correspondence with rational maps $\PL: \mathbb{P}^1 \dashrightarrow \GrLine$ satisfying $C(v:t) \in \PL(v:t)\subset \Sigma(v:t)$ for all $(v:t)\in \PP^1$. 
The correspondence comes from the two choices of orientation, that on $\PL$ and that on $\Sigma$.
Changing the orientation on $\PL$ does not change the picture-taking map $\Phi$. Changing the orientation on $\Sigma$ does change $\Phi$, from $(\Phi_1:\Phi_2:\Phi_0)$ to $(\Phi_1:-\Phi_2:\Phi_0)$.
\end{lemma}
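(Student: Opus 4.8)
The plan is to realize the claimed correspondence as a four-valued reconstruction map $F$ sending a rational map $\PL\colon\PP^1\dashrightarrow\GrLine$ with $C(v:t)\in\PL(v:t)\subset\Sigma(v:t)$, together with a choice of orientation on $\Sigma$ and one on $\PL$, to an \rs camera. In the reverse direction, an \rs camera --- a motion $x\mapsto(R(x),C(x))$ with center map $C$, rolling-planes map $\Sigma^\vee$, and rational $\Lambda$ --- determines the rational map $\PL(x):=\Lambda((x:1),(1:0))$; call this assignment $G$. Since $\varphi((x:1),(1:0))=(0:1:0)$ by \eqref{small-phi}, $\PL(x)$ is the camera ray that $P(x)$ sends to the common point of all rolling lines, so $C(v:t)\in\PL(v:t)\subset\Sigma(v:t)$, and $G$ is single-valued. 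To build $F$, set $\omega:=\PL_\infty$ and let $\tilde{\omega}(x):=([I_3\mid-C(x)]\,\omega(x))^\top\in\R^3$ be a rational representative of $\omega(x)\in H^\infty$. Since $\omega(x)\in\PL(x)\subseteq\Sigma(x)$, the vector $\tilde{\omega}(x)$ is annihilated by $\Sigma^\vee_\infty(x)$; hence the unit-length normalizations of $\Sigma^\vee_\infty(x)$ and $\tilde{\omega}(x)$ are orthonormal, the right factor $M(x)$ of \eqref{rotation-explicit} lies in $\SO(3)$, and --- the left factor $L(x)$ being the explicit rotation about the $e_2$-axis --- formula \eqref{rotation-explicit} defines a map $R\colon\R\to\SO(3)$. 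Each of the two normalizations carries a global sign, giving four maps $R$; these two signs are precisely the orientations on $\Sigma$ and on $\PL$.

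I would then verify that each of the four resulting motions is an \rs camera realizing $C,K,\Sigma^\vee$. Directly from \eqref{rotation-explicit} one checks \eqref{sigma-infty-rotation} and \eqref{rotation-lambda-tilde}; that $P=R[I_3\mid-C]$ has center map $C$; and that $P$ has rolling-planes map $\Sigma^\vee$, using that $(1,0,-x)$ times $L(x)$ equals $(\sqrt{1+x^2},0,0)$, so that $(1,0,-x)R(x)\propto\Sigma^\vee_\infty(x)$ and the plane $(1,0,-x)R(x)[I_3\mid-C(x)]$ is the span of $C(x)$ with that line at infinity, namely $\Sigma(x)$. That $\Lambda$ is rational is exactly the computation at the end of the proof of \Cref{lem:sigma-restrictions}, which goes through verbatim here since it uses only that $\PL$ (hence $\tilde{\omega}$) is rational and that $\sqrt{1+x^2}\,\Vert\Sigma^\vee_\infty(x)\Vert$ is a polynomial --- the latter being precisely the conditions $A\cdot B=0$ and $A\cdot A=B\cdot B$. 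Once $\Lambda$ is rational, the hypotheses of \Cref{thm:rs1cameras} all hold by assumption, so each of the four motions is indeed an \rs camera; this defines $F$.

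Next I would show $G$ is exactly $4$-to-$1$ onto the \rs cameras involving $C,K,\Sigma^\vee$. First, each $F(\PL,\pm,\pm)$ satisfies $\Lambda((x:1),(1:0))=\PL(x)$: by \eqref{rotation-lambda-tilde}, $R(x)\tilde{\omega}(x)^\top\propto(0,1,0)^\top$, and this survives both sign flips, so $P(x)\omega(x)\propto(0:1:0)$ and the ray through $C(x)$ and $\omega(x)$ --- namely $\PL(x)$ --- is the preimage of $(0:1:0)$; hence $G\circ F$ is the projection forgetting the two signs. Conversely, the $(a)\Rightarrow(b)$ part of \Cref{lem:sigma-restrictions} shows that any \rs camera involving $C,K,\Sigma^\vee$ satisfies \eqref{sigma-infty-rotation} and \eqref{rotation-lambda-tilde}, which determine $R$ up to exactly those two signs, so it lies in the image of $F$. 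Finally the four $R$'s over a fixed $\PL$ are pairwise distinct: negating the chosen unit representative of $\Sigma^\vee_\infty$ multiplies $M(x)$ on the left by $D:=\mathrm{diag}(-1,1,-1)$, and negating that of $\tilde{\omega}$ multiplies $M(x)$ by $\mathrm{diag}(1,-1,-1)$; writing $R=L(x)M(x)$ and commuting these through $L(x)$ --- which commutes with $D$ and conjugates $\mathrm{diag}(1,-1,-1)$ into $E(x):=\tfrac{2uu^\top}{u\cdot u}-I$ with $u=(1,0,-x)^\top$ --- the four cameras become $R$, $DR$, $E(x)R$, $DE(x)R$, and $I$, $D$, $E(x)$, $DE(x)$ are pairwise distinct. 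Hence $G$ is $4$-to-$1$, the fiber over $\PL$ being the four cameras produced by the two orientation choices.

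For the last two assertions I would track how $\Phi=\varphi\circ\Lambda^{-1}\circ\Gamma$ from \eqref{eq:PhiMap} changes, noting that $\Gamma$ depends only on the congruence $\overline{\mathcal{L}}$, which is common to all four cameras. Under the $\Sigma$-flip $R\mapsto DR$, formula \eqref{eq:LambdaMap} replaces $R(x)^\top(x,y,1)^\top$ by $R(x)^\top D(x,y,1)^\top=R(x)^\top(-x,y,-1)^\top=-R(x)^\top(x,-y,1)^\top$, so projectively $\Lambda_{DR}(x,y)=\Lambda_R(x,-y)$; since $\varphi((x:1),(y:1))=(x:y:1)$, this turns $\Phi=(\Phi_1:\Phi_2:\Phi_0)$ into $(\Phi_1:-\Phi_2:\Phi_0)$. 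Under the $\PL$-flip $R\mapsto E(x)R$, the only computation needed is $u\cdot(x,y,1)=x+0-x=0$, which gives $E(x)(x,y,1)^\top=-(x,y,1)^\top$, hence $\Lambda_{E(x)R}=\Lambda_R$ projectively and $\Phi$ unchanged (and $DE(x)R$ gives the same $\Phi$ as $DR$). I expect the main difficulty to be the bookkeeping forcing the multiplicity to be exactly $4$ --- packaging the two normalization ambiguities into the multipliers $I,D,E(x),DE(x)$, checking they are distinct, and confirming that all four motions are genuine \rs cameras with the prescribed $C,K,\Sigma^\vee$ --- while rationality of $\Lambda$ for the reconstructed camera is no obstacle, being inherited from the proof of \Cref{lem:sigma-restrictions}.
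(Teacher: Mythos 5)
Your proposal is correct and takes essentially the same route as the paper's proof: in both, the correspondence is built from \eqref{sigma-infty-rotation}, \eqref{rotation-lambda-tilde} and \eqref{rotation-explicit} of \Cref{lem:sigma-restrictions} with $\omega:=\PL_\infty$, the factor four arising from the two global sign choices on $\Sigma^\vee_\infty$ and $\omega$. The only (harmless) deviation is that you track the effect of the sign flips on $\Phi$ through the orthogonal multipliers $D$ and $E(x)$ acting on $R$ in \eqref{eq:LambdaMap}, whereas the paper reads it off directly from \eqref{lambda-infty}; both computations give the same conclusion, and your added checks (distinctness of the four rotation maps, surjectivity via the (a)$\Rightarrow$(b) direction) only make the argument more explicit.
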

\begin{proof}
     Given $(C,K,\Sigma^\vee)$, we are interested in describing all possible orientation maps $R: \RR \to \mathrm{SO}(3)$ that would conform to an \rs camera. As in the proof of \Cref{lem:sigma-restrictions}, for every such camera, there is a rational map $\PL$ that assigns to every $x \in \R$ the camera ray that the projection matrix $P(x)$ maps to $(0:1:0)^\top$ (see paragraph under \eqref{sigma-infty-rotation}).
    Conversely, any such map defines an orientation map $R: \RR \to \mathrm{SO}(3)$ via \eqref{sigma-infty-rotation} and \eqref{rotation-lambda-tilde} (with $\omega := \PL_\infty$), \emph{after} fixing a scaling for the rational maps $\Sigma_\infty^\vee$ and $\omega$.
    That means that $R(x)$ is actually only determined up to the signs of $\Sigma_\infty^\vee(x)$ and $\omega(x)$.
    Changing the sign of $\Sigma^\vee_\infty(x)$ would mean to use the unit normal vector of the rolling plane $\Sigma(x)$ that points in the opposite direction, i.e., a rotation around the ray $\PL(x)$ by $180^\circ$.
    Changing the sign of $\omega(x)$ amounts to a rotation around the normal vector of the plane $\Sigma(x)$ by $180^\circ$. 
  
   Finally, we have to analyze how these sign changes affect the picture-taking map $\Phi$. Recall from \eqref{eq:PhiMap} that $\Phi$ is the composition of several maps. The map $\Gamma$ is not affected by the sign changes. Hence, it suffices to consider how $\Lambda \circ \varphi^{-1}$ is affected. In an affine chart, this map is spelled out in \eqref{lambda-infty}. 
    On the one hand, 
changing the sign of $\Sigma^\vee_\infty(x)$, changes $\Lambda_\infty(x,y)$ in \eqref{lambda-infty} to $-\Lambda_\infty(x,-y)$. 
On the other hand, changing the sign of $\omega(x)$, changes $\Lambda_\infty(x,y)$ in \eqref{lambda-infty} simply to $-\Lambda_\infty(x,y)$. Combining this with the definition of the map $\varphi$ from~\eqref{small-phi} concludes the proof.
    %    Changing the orientation of the normal vector of $\Sigma(x)$ amounts to intersecting the rolling line $\rho(x)$ with $\nu(u : -s)$ instead of $\nu(u: s)$. We see that $\varphi((v: t),(u: -s))$ only differs from $\varphi((v: t),(u: s))$ by changing the sign of the second coordinate.
    \myqed
\end{proof}

%The two choices of orientation, that on $\PL$ and that on $\Sigma$, produce a $1:4$ correspondence between the set of camera data $(C, K, \Sigma, \PL)$ and the set of \rs cameras $(C,R)$. Changing the orientation on $\PL$ does not change the camera map $\Phi$. On the other hand, changing the orientation on $\Sigma$ does change the camera map, sending a map of the form $(\Phi_1:\Phi_2:\Phi_0)$ to $(\Phi_1:-\Phi_2:\Phi_0)$.

We are now ready to prove \Cref{prop:typeIparameters}.

\begin{proof}[Proof of~\Cref{prop:typeIparameters}]
With the lemmas established above, we can describe the parameter space of \rs cameras of type I, up to the above described choice of orientation.
The first parameter is a line $K$. It is not allowed to lie at infinity due to Lemma \ref{lem:sigma-restrictions}.
Second, we can choose any curve $\mathcal{C}$ of degree $d$, not at infinity, such that $(\mathcal{C},K) \in \mathcal{H}_d$.
The third parameter is a map $\Sigma^\vee_\infty$ as in Lemma \ref{lem:sigma-restrictions}. 
The rolling planes map $\Sigma^\vee$ can be read off from the the map $\Sigma^\vee_\infty$ since each rolling plane $\Sigma(v:t)$ is the span of the line $\Sigma_\infty(v:t)$ with $K$.
By Remark \ref{rem:typeISigmaDeterminesC}, the map $\Sigma$ determines uniquely the parametrization $C$ of the curve $\mathcal{C}$.
Finally, we need to specify the map $\PL$.
In Type I, for general $(v:t) \in \PP^1$, the line $\PL(v:t)$ intersects the line $K$ at a point other than $C(v:t)$, giving rise to a rational map $\lambda:\PP^1\dashrightarrow K$. Conversely, every such rational map $\lambda$ gives a camera-ray map $\PL \coloneqq \lambda \vee C$. Thus, the orientation map $R$ is specified, up to the $4:1$ relationship above, by $\lambda$. To summarize, the space of all type-I cameras is
$\mathcal{P}_{I,d,\delta}$ in \Cref{prop:typeIparameters}.
There is a one-dimensional family of maps $\Sigma^\vee_\infty$ of the form described in \Cref{lem:sigma-restrictions},
since choosing $A$ already determines $B$, up to sign. Hence, the camera space has dimension
$
\dim \mathcal{P}_{I,d,\delta} = (3d+5) + 1 +  (2 \delta + 1) = 
3d + 2 \delta + 7.
$ \myqed
\end{proof}

\medskip
Similarly, the parameters for an \rs camera of type II or III are the line $K$ (not at infinity), the center-movement map $C: \PP^1 \dashrightarrow K$ of degree $d \geq 0$, the map $\Sigma^\vee_\infty$ as in Lemma~\ref{lem:sigma-restrictions} which determines the rolling planes map $\Sigma^\vee$, and the map $\PL:\PP^1 \dashrightarrow \GrLine$ satisfying $C(v:t)\in \PL(v:t)$ and $\PL(v:t)\subseteq \Sigma(v:t)$ for all $(v:t) \in \PP^1$.
Once the data $(K,C,\Sigma^\vee)$ is fixed, the following lemma shows that the map $\PL$ is determined by a pair of homogeneous polynomials $(h,p)$ of degrees $(\delta,\delta+d+1)$ for some $\delta \geq 0$.

We work in the following setting: we rotate and translate $K$ until it becomes the $z$-axis.
Then, the center-movement map $C:\PP^1\dashrightarrow \PP^3$ takes the form $(0:0:C_3:C_0)$, where $C_3,C_0$ are homogeneous polynomials in two variables of degree $d\geq 0$. The rolling planes map $\Sigma^\vee:\PP^1\dashrightarrow (\PP^3)^*$ takes the form $(\Sigma_1: \Sigma_2: 0:0)$, where $\Sigma_1,\Sigma_2$ are linear homogeneous polynomials in two variables. 
We represent the map $\PL: \PP^1 \dashrightarrow \GrLine$ by Plücker coordinates $(p_{12}: p_{13}: p_{10}: p_{23}: p_{20}: p_{30})$, where the $p_{ij}$ are homogeneous polynomials in two variables of arbitrary degree.
%When a line is spanned by two points $(a_1:a_2:a_3:a_0)$ and $(b_1:b_2:b_3:b_0)$, the Plücker coordinates are the $2 \times 2$-minors $p_{ij} = a_ib_j - a_jb_i$.

\begin{lemma}
    \label{lem:cameraRayMapII}
Assume that $K$ is the $z$-axis and consider maps $C: \PP^1 \dashrightarrow K$ and $\Sigma^\vee: \PP^1 \dashrightarrow K^\vee$.
A map $\PL:\PP^1 \dashrightarrow \GrLine$ satisfies $C(v:t)\in \PL(v:t)$ and $\PL(v:t)\subseteq \Sigma(v:t)$ for all $(v:t) \in \PP^1$ if and only its Plücker coordinates are
\begin{align*}
& (p_{12}: p_{13}: p_{10}: p_{23}: p_{20}: p_{30}) \\ {}={}& (0:-h \Sigma_2 C_3: -h \Sigma_2 C_0: h \Sigma_1 C_3: h \Sigma_1 C_0: p_{30}),
\end{align*}
where $h$ is a homogeneous polynomial in two variables.
\end{lemma}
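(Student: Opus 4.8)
The plan is to translate the two geometric conditions ``$C(v:t) \in \xi(v:t)$'' and ``$\xi(v:t) \subseteq \Sigma(v:t)$'' into linear equations on the Plücker vector $(p_{12}:p_{13}:p_{10}:p_{23}:p_{20}:p_{30})$ and then solve that linear system over the polynomial ring $\R[v,t]$. Recall that, with $K$ the $z$-axis, we have $C = (0:0:C_3:C_0)$ and $\Sigma^\vee = (\Sigma_1 : \Sigma_2 : 0 : 0)$. For a fixed parameter value, containment of a point $q$ in a line with Plücker coordinates $p$ is the vanishing of the three $3\times 3$ Plücker-type expressions (equivalently, $q$ annihilates $p$ under the natural contraction), and containment of the line in a plane $\Sigma^\vee$ is likewise the vanishing of three linear expressions in $p$ with coefficients the entries of $\Sigma^\vee$. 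I would first write these six linear conditions out explicitly in the six unknowns $p_{ij}$.

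Next I would solve. The point-incidence conditions $C \in \xi$, using $C = (0:0:C_3:C_0)$, force $p_{12}$ to vanish (its coefficient involves only the first two coordinates of $C$, which are zero — more precisely $C_3 p_{12} = 0$ and $C_0 p_{12}=0$ after clearing, and since $C_3,C_0$ are not both zero this gives $p_{12}=0$ as a polynomial identity) and impose two relations expressing $p_{10},p_{20}$ in terms of $p_{13},p_{23}$ and $C_3,C_0$, namely $C_0 p_{13} = C_3 p_{10}$ and $C_0 p_{23} = C_3 p_{20}$ up to the standard Plücker sign bookkeeping. The plane-incidence conditions $\xi \subseteq \Sigma$, using $\Sigma^\vee = (\Sigma_1:\Sigma_2:0:0)$, impose $\Sigma_1 p_{23} + \Sigma_2 p_{13}$-type relations — concretely that $(p_{13},p_{23})$ and, separately, $(p_{10},p_{20})$ are each proportional to $(-\Sigma_2, \Sigma_1)$ (again up to signs). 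Combining: $(p_{13}:p_{23})$ must be a scalar multiple of $(-\Sigma_2 : \Sigma_1)$, and then the two incidence relations with $C$ pin down $(p_{10}:p_{20})$ as $(-\Sigma_2 C_0 : \Sigma_1 C_0)$ scaled to match, while $p_{30}$ is left entirely free. Extracting a common polynomial factor $h$ from the remaining freedom in $(p_{13},p_{23},p_{10},p_{20})$ (which, since $\Sigma_1,\Sigma_2$ are linear and $C_3,C_0$ have degree $d$, forces $h$ to have some degree $\delta$ so the four coordinates have degree $\delta+1+d$), and calling the free sixth coordinate $p_{30}$, yields exactly the stated form $(0 : -h\Sigma_2 C_3 : -h\Sigma_2 C_0 : h\Sigma_1 C_3 : h\Sigma_1 C_0 : p_{30})$. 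The converse — that any such vector does satisfy the Plücker quadratic relation $p_{12}p_{30} - p_{13}p_{20} + p_{10}p_{23} = 0$ and both incidences — is a direct substitution: $p_{12}=0$, and $-p_{13}p_{20} + p_{10}p_{23} = h^2\Sigma_1\Sigma_2 C_3 C_0 - h^2 \Sigma_1\Sigma_2 C_0 C_3 = 0$.

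The one genuinely delicate point is the polynomiality/degree bookkeeping: the incidence relations are stated over $\R[v,t]$, so after solving the linear system over the fraction field one must argue that the resulting proportionality factor is itself a polynomial (not merely a rational function) and identify it as the single parameter $h$. Here I would use that $\Sigma_1,\Sigma_2$ have no common factor (they are the independent linear forms $Av+Bt$ with $A,B$ not proportional, by Lemma SM.\arabic{lemma}) and that $C_3, C_0$ have no common factor (the map $C$ is a morphism to $K \cong \PP^1$), so the only freedom in writing $(p_{13}:p_{23}) = h\cdot(-\Sigma_2:\Sigma_1)$ and making the $C$-incidence hold is the choice of the polynomial $h$; this is where the asserted degree $\delta$ of $h$ and degree $\delta+d+1$ of $p_{30}$ come from, matching the statement of Proposition \ref{prop:typeIIparameters}. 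Everything else is a routine linear-algebra-over-a-polynomial-ring computation, easily cross-checked in \texttt{Macaulay2}.
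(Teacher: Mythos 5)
Your proposal is correct and follows essentially the same route as the paper: both translate the two incidences into polynomial identities among the Plücker coordinates and then use $\gcd(\Sigma_1,\Sigma_2)=1$ and $\gcd(C_3,C_0)=1$ to extract the single polynomial factor $h$, leaving $p_{30}$ free. The only cosmetic difference is that the paper parametrizes $\PL$ by an auxiliary spanning point and an auxiliary cutting plane and runs the divisibility argument on their coordinates, whereas you work directly with the linear point-in-line and line-in-plane conditions on the $p_{ij}$; both reduce to the same gcd bookkeeping.
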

\begin{proof}
The degree-$d$ map $C$ is of the form $(0:0:C_3:C_0)$.
We can assume that $C_3$ and $C_0$ share no common factor, as otherwise we could make $d$ smaller. Since the birational map $\Sigma = (\Sigma_1:\Sigma_2:0:0)$ is not constant, we note the same for $\Sigma_1$ and $\Sigma_2$.

The condition $C(v:t)\in \PL(v:t)$ means that $\PL(v:t)$ is the span of $C(v:t)$ and some other point \[(a_1(v:t):a_2(v:t):a_3(v:t):a_0(v:t)).\]
Hence, its Plücker coordinates are 
\begin{align*}
p_{12} &= 0 & p_{30} &= a_3C_0-a_0C_3\\
p_{13} &= a_1C_3 & p_{20} &= a_2 C_0 \\
p_{10} &= a_1C_0 & p_{23} &= a_2C_3.
\end{align*}
Note that fixing the point $(C_3:C_0)$ restricts the coordinates $p_{13},p_{10},p_{23},p_{20}$, but that $p_{30}$ is arbitrary.
Dually, considering the line $\PL(v:t)$ as the intersection of two planes $\Sigma(v:t)$ and $(b_1(v:t):b_2(v:t):b_3(v:t):b_0(v:t)) \in (\PP^3)^\ast$, we obtain 
\begin{align*}
p_{12} &= 0 & p_{30} &= b_2\Sigma_1-b_1\Sigma_2\\
p_{13} &= -b_0\Sigma_2 & p_{20} &= -b_3\Sigma_1 \\
p_{10} &= b_3\Sigma_2 & p_{23} &= b_0\Sigma_1.
\end{align*}

 Now we show that $\Sigma_1\mid a_2$. 
Let $\Sigma_1^k$ be the highest power of $\Sigma_1$ that divides $C_3$. 
Then $\Sigma_1^k \mid p_{13}$, thus $\Sigma_1^k \mid b_0$, so $\Sigma_1^{k+1}\mid p_{23}$, hence $\Sigma_1\mid a_2$. 
By a similar argument we find that $\Sigma_2\mid a_1$. Write $a_2 = h\Sigma_1$ and $a_1 = h'\Sigma_2$. Then $h\Sigma_1\Sigma_2C_0 = a_2 \Sigma_2 C_0 = - b_3 \Sigma_1 \Sigma_2 = -a_1 \Sigma_1 C_0 = -h' \Sigma_1 \Sigma_2 C_0$. It follows that $h' = -h$ and that the Plücker coordinates of $\PL$ have the required form. \myqed
\end{proof}

\begin{proof}[Proof of \Cref{prop:typeIIparameters}]
We are now ready to describe the parameter space of $\mathrm{RS}_1$ cameras of type II. The proof is analogous to that of \Cref{prop:typeIparameters}. Working in the same set-up as \Cref{lem:cameraRayMapII} we see that once $(K,C,\Sigma^\vee)$ are fixed, the map $\PL$ is determined by the bivariate polynomials $h$ and $p_{30}$ from the Lemma. Let $\delta \geq 0$ be the degree of $h$. Then $p_{30}$ is necessarily of degree $\delta+d+1$. All in all, the parameter space of \rs cameras of types II and III is
$\mathcal{P}_{II,d,\delta}$ as claimed.

Note that \rs cameras of type III are precisely the special case when $d=0$ and the image of the constant map $C$ is not allowed to be at infinity.
The dimension of the camera  space is 
$
    \dim \mathcal{P}_{II,d,\delta} = 4 + (2d+1) + 1 + (\delta+1) + (\delta+d+2) -1 = 3d +  2\delta + 8.
$ \myqed
\end{proof}

\subsection{Constant Rotation}

This section proves Propositions \ref{prop:rs1NoRotation} and \ref{prop:constandSpeedConstantRotation}, and Remark \ref{rem:constantSpeedOrder2}.

\begin{lemma}
	\label{lem:dSecantImpliesPlanar}
	Let $\mathcal{C} \subset \PP^3$ be an irreducible curve of degree $d$.
	If there is a line $K$ that intersects $\mathcal{C}$ in $d$ points (counted with multiplicity), then $\mathcal{C}$ and $K$ are contained in a common plane.
\end{lemma}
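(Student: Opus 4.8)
The plan is to exhibit an explicit plane containing both curves: let $q\in\mathcal{C}\setminus K$ be any point and set $H:=K\vee q$; I claim $\mathcal{C}\subseteq H$, which together with $K\subseteq H$ finishes the proof. We may assume $\mathcal{C}\neq K$ (the case $\mathcal{C}=K$ being trivial), and then $\mathcal{C}\not\subseteq K$ since $K$ is itself an irreducible curve, so a point $q$ as above exists and the plane $H=K\vee q$ is well defined.

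The heart of the argument is a comparison of the scheme-theoretic intersections $K\cap\mathcal{C}$ and $H\cap\mathcal{C}$ in $\PP^3$. Suppose for contradiction that $\mathcal{C}\not\subseteq H$. Then $H\cap\mathcal{C}$ is a finite scheme of length exactly $\deg\mathcal{C}=d$: for an integral curve of degree $d$, a hyperplane not containing it cuts out the zero scheme of a nonzero section of $\mathcal{O}_\mathcal{C}(1)$, whose length is $\deg\mathcal{O}_\mathcal{C}(1)=d$. On the other hand, since $K\subseteq H$ we have $\mathcal{I}_H\subseteq\mathcal{I}_K$, hence $\mathcal{I}_H+\mathcal{I}_\mathcal{C}\subseteq\mathcal{I}_K+\mathcal{I}_\mathcal{C}$, so $K\cap\mathcal{C}$ is a closed subscheme of $H\cap\mathcal{C}$; by hypothesis its length is also $d$. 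A closed immersion of finite schemes of equal length is an isomorphism, so $K\cap\mathcal{C}=H\cap\mathcal{C}$ as schemes. In particular $\mathrm{Supp}(H\cap\mathcal{C})=\mathrm{Supp}(K\cap\mathcal{C})\subseteq K$, which contradicts $q\in H\cap\mathcal{C}$ and $q\notin K$. Therefore $\mathcal{C}\subseteq H$, as desired.

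I expect the only genuine subtlety to be handling the intersection multiplicities scheme-theoretically rather than naively: one must read "$K$ intersects $\mathcal{C}$ in $d$ points counted with multiplicity" as $\mathrm{length}(K\cap\mathcal{C})=d$, and use that the hyperplane section of an integral curve of degree $d$ has length exactly $d$ — both standard. As a sanity check, the same content can be organized via projection from $K$: a general plane through $K$ meets $\mathcal{C}$ in a length-$d$ scheme already accounted for by the length-$d$ scheme $K\cap\mathcal{C}$, so the projection $\PP^3\dashrightarrow\PP^1$ from $K$ restricts to a constant map on $\mathcal{C}$, forcing $\mathcal{C}$ into a single plane through $K$. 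The argument above is shorter and avoids normalization, so I would present that one.
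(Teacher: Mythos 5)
Your proof is correct and follows essentially the same route as the paper's: pick a point $q\in\mathcal{C}\setminus K$, form the plane $K\vee q$, and observe that it meets $\mathcal{C}$ in more than $d$ points (with multiplicity), forcing $\mathcal{C}$ to lie in that plane. The paper states this counting informally in three lines, whereas you make the multiplicity bookkeeping precise via lengths of scheme-theoretic intersections; the content is the same.
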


\begin{proof}
	Take any point on $\mathcal{C}$ that does not lie on $K$. Then, that point and $K$ span a plane $\Sigma$ that contains at least $d+1$ points of $\mathcal{C}$. 
	Since $d = \deg \mathcal{C}$, the plane $\Sigma$ has to contain the whole curve $\mathcal{C}$. \myqed
\end{proof}

\begin{lemma}
    \label{lem:notAllRollingPlanesTangent}
    Let $\mathcal{C} \subset \PP^3$ be a curve and let $K \in \GrLine$.
    There are only finitely many planes in $\PP^3$ that contain $K$ and are tangent to $\mathcal{C}$ at one of the points in $\mathcal{C} \setminus K$.
\end{lemma}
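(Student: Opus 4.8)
The plan is to identify the pencil of planes containing $K$ with a $\PP^1$, and to recognise the tangent planes in question as the images of the ramification points of a morphism $\widetilde{\mathcal{C}}\to\PP^1$; these are finite in number unless the morphism is constant, and that case can be handled directly.

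First I would reduce to the case that $\mathcal{C}$ is irreducible and not contained in $K$: if $\mathcal{C}$ is reducible one treats each irreducible component separately and takes the union of the resulting finite sets, and a component contained in $K$ contributes no point of $\mathcal{C}\setminus K$. Under this reduction $K\cap\mathcal{C}$ is finite, and the linear projection $\pi_K\colon\PP^3\dashrightarrow\PP^1$ away from $K$, $x\mapsto K\vee x$, restricts to a rational map $\mathcal{C}\dashrightarrow\PP^1$; composing with the normalisation $\nu\colon\widetilde{\mathcal{C}}\to\mathcal{C}$ yields a morphism $f\colon\widetilde{\mathcal{C}}\to\PP^1$, since a rational map from a smooth curve to a projective variety is a morphism. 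The fibres of $\pi_K$ are exactly the planes of the pencil through $K$ with $K$ removed, so for $p\notin K$ the kernel of the differential $d(\pi_K)_p$ is the $2$-plane $\Sigma_p\coloneqq K\vee p$. Hence, for a smooth point $p\in\mathcal{C}\setminus K$, the morphism $f$ ramifies at the point of $\widetilde{\mathcal{C}}$ above $p$ if and only if $T_p\mathcal{C}\subseteq\Sigma_p$, i.e.\ if and only if the plane $\Sigma_p$ is tangent to $\mathcal{C}$ at $p$.

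Now I would distinguish two cases. If $f$ is non-constant, then it is a finite morphism between smooth projective curves, so over $\C$ it has only finitely many ramification points; therefore only finitely many planes of the form $\Sigma_p$ are tangent to $\mathcal{C}$ at a smooth point $p\in\mathcal{C}\setminus K$. If $f$ is constant, then $\mathcal{C}$ lies in a single plane $\Sigma_0$ of the pencil, and every other plane through $K$ meets $\mathcal{C}$ only along $K$; so the only plane through $K$ tangent to $\mathcal{C}$ at a point off $K$ is $\Sigma_0$. Finally, tangency at a singular point $q\in\mathcal{C}\setminus K$ can occur only for the single plane $K\vee q$, and $\mathrm{Sing}(\mathcal{C})$ is finite; combining the three contributions gives the asserted finiteness.

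I expect the delicate points to be bookkeeping rather than substance: making the reductions clean (reducibility, $\mathcal{C}\subseteq K$, and the case where $\mathcal{C}$ is a line), and justifying $\ker d(\pi_K)_p=\Sigma_p$, which is a one-line coordinate check after choosing $K=\{X_2=X_3=0\}$ so that $\pi_K=(X_2:X_3)$, together with the standard fact that in characteristic zero a morphism from an irreducible variety whose differential vanishes identically is constant. None of this should present a genuine obstacle.
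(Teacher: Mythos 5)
Your proof is correct, and it takes a genuinely different route from the paper's. The paper projects $\mathcal{C}$ from a \emph{generic point of $K$} into $\PP^2$: a plane through $K$ tangent to $\mathcal{C}$ off $K$ becomes a line through $\pi(K)$ tangent to the plane curve $\pi(\mathcal{C})$, and in the dual plane these are the intersection points of $\pi(\mathcal{C})^\vee$ with the line $\pi(K)^\vee$ --- finitely many, since otherwise $\pi(\mathcal{C})^\vee$ would equal that line and $\pi(\mathcal{C})$ would degenerate to a point. You instead project away from $K$ \emph{itself} onto the pencil $K^\vee\cong\PP^1$ and identify the tangent planes at smooth points of $\mathcal{C}\setminus K$ with the ramification points of the induced morphism $\widetilde{\mathcal{C}}\to\PP^1$; the constant case and the singular points of $\mathcal{C}$ are disposed of separately. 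Your key identification $\ker d(\pi_K)_p=\Sigma_p$ is right, and a non-constant morphism of smooth projective curves in characteristic zero indeed has finitely many ramification points. What each approach buys: the paper's argument is shorter given that projective duality of plane curves is already set up at the start of the supplement, whereas yours is more self-contained (no duality, no choice of a generic center of projection) and, by explicitly treating $\mathrm{Sing}(\mathcal{C})$, covers tangency in the intersection-multiplicity sense at singular points, which the dual-variety argument implicitly restricts to smooth points --- a slight gain in rigor that costs you only one extra sentence.
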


\begin{proof}
    We may assume without loss of generality that $\mathcal{C}$ is irreducible. 
    Moreover, we assume that $\mathcal{C} \neq K$, as otherwise the assertion is clear.
    We pick a generic point on the line $K$ and denote by $\pi: \PP^3 \dashrightarrow \PP^2$ the projection from that point. 
    Then, $\pi(\mathcal{C})$ is a plane curve, and every plane $\Sigma$ containing $K$ that is tangent to $\mathcal{C} \setminus K$ projects to a line $\pi(\Sigma)$ that contains the point $\pi(K)$ and is tangent to $\pi(\mathcal{C})$ at another point.
    In the dual projective plane $(\PP^2)^\vee$, those lines $\pi(\Sigma)$ correspond to intersection points of $\pi(\mathcal{C})^\vee$ with the line $\pi(K)^\vee$.
    There are only finitely many such intersection points, since otherwise $\pi(\mathcal{C})^\vee = \pi(K)^\vee$, which would imply the contradiction that the curve $\pi(\mathcal{C})$ equals the point $\pi(K)$. \myqed
\end{proof}

\begin{proof}[Proof of Proposition \ref{prop:rs1NoRotation}]
    Consider a rolling shutter camera with constant rotation. Without loss of generality, we assume that $R(v:t) = I_3$.
    Then, the rolling plane $\Sigma^\vee (v:t)$ defined as an element of $(\PP^3)^*$ by
    \[
    \left(  -t C_0(v:t)\,:\, 0 \,:\, v C_0(v:t) \,:\, tC_1(v:t) - vC_3(v:t) \right)
    \]
    is spanned by $(0:1:0:0)$, $(v:0:t:0)$, and $C(v:t)$.
    In particular, all rolling planes meet in a common point.
    By Theorem \ref{thm:rs1cameras} it is enough to show the following:
    If the intersection of the rolling planes is a line $K$,
    then the other conditions in Theorem \ref{thm:rs1cameras} are automatically satisfied.

We start by proving that the map $\Sigma$ is birational in that case.
We observe that $\Sigma: (v:t) \mapsto K \vee (v:0:t:0)$. Thus, we obtain its inverse $\Sigma^{-1}$ by intersecting each rolling plane $\Sigma(v:t)$ with the plane $(0:0:0:1)^\vee$ at infinity. (In fact, the latter is the line $(1:0:0:0) \vee (v:0:t:0)$.)
    
Now it remains to show that the Zariski closure $\mathcal{C}$ of the set of camera centers is one of the three types in Theorem \ref{thm:rs1cameras}.
If $\mathcal{C}$ is a point, it has to lie in each rolling plane and thus on the line $K$, which is type III.
Otherwise, $\mathcal{C}$ is an irreducible, rational curve of degree $d$.
If $\#(K \cap \mathcal{C}) > d$, then $\mathcal{C} = K$ and we are in type II.
If $\#(K \cap \mathcal{C}) = d$, then $\mathcal{C}$ and $K$ lie in a common plane, say $\Sigma'$, by Lemma \ref{lem:dSecantImpliesPlanar}.
But since $\Sigma: (v:t) \mapsto K \vee C(v:t)$, each rolling plane has to be equal to that plane $\Sigma'$, which contradicts that the intersection of the rolling planes is a line.
Hence, the case $\#(K \cap \mathcal{C}) = d$ cannot happen.

We still have to consider the case $\#(K \cap \mathcal{C}) <d$.
If $d=1$, this means that $\mathcal{C}$ is a line that does not meet $K$, which is type I in Theorem \ref{thm:rs1cameras}.
Thus, in the following, we may assume that $d \geq 2$.
Our goal is to show  $\#(K \cap \mathcal{C}) = d-1$, and so we assume for contradiction that $\#(K \cap \mathcal{C}) \leq d-2$.
Then, a general rolling plane $\Sigma' \in K^\vee$ meets the curve $\mathcal{C}$ in at least two points outside of $K$.
Those points are distinct by Lemma \ref{lem:notAllRollingPlanesTangent}.
Hence, there are distinct $(v:t), (v':t') \in \PP^1$ such that $C(v:t), C(v':t') \in (\Sigma' \cap \mathcal{C}) \setminus K$.
This implies that $\Sigma' = K \vee C(v:t) = \Sigma(v:t)$ and $\Sigma' = K \vee C(v':t') = \Sigma(v':t')$, which contradicts the birationality of the map $\Sigma$. \myqed
\end{proof}

\begin{proof}[Proof of~\Cref{prop:constandSpeedConstantRotation} and~\Cref{rem:constantSpeedOrder2}]
   Consider a rolling shutter camera that moves with constant speed along a line $\mathcal{C}$ and does not rotate.
   If the camera is of order one, then we see directly from the spaces $\mathcal{P}^\mathrm{cs}_{I,1}$ and $\mathcal{P}^\mathrm{cs}_{II,1}$ in Section \ref{sec:linearCameras} that the line $\mathcal{C}$ is parallel to the projection plane $\Pi$.
   For the converse direction, we rotate and translate such that we can assume that the constant rotation is $R = I_3$ and that the movement starts at the origin, i.e., the birational map $C: \PP^1 \dashrightarrow \mathcal{C}$ satisfies $C(0:1) = (0:0:0:1)$.
   Then, the assumption that $\mathcal{C}$ is parallel to the projection plane $\Pi$ means that $C(1:0) = \mathcal{C}^\infty = (a:b:0:0)$.
   Thus, $C(v:t) = (av:bv:0:t)$. 
   By \eqref{eq:SigmaMap}, the rolling planes are 
   $\Sigma^\vee(v:t) = (-t:0:v:av)$.
   The intersection of all rolling planes is the line $K$ spanned by the points $(0:1:0:0)$ and $(0:0:-a:1)$.
   Therefore, the camera has order one by \Cref{prop:rs1NoRotation}.
   If $a=0$, the lines $\mathcal{C}$ and $K$ coincide and the \rs camera is of type II.
   Otherwise, the lines $\mathcal{C}$ and $K$ are skew and the camera has type I. This proves \Cref{prop:constandSpeedConstantRotation}.

   For \Cref{rem:constantSpeedOrder2}, we assume again that the constant rotation is $R = I_3$ and that the movement starts at the origin, but this time $C(v:t) = (av:bv:cv:t)$ for $c \neq 0$.
   By \eqref{eq:SigmaMap}, the rolling planes are $\Sigma^\vee(v:t) = (-t^2:0:vt:cv^2-avt)$, so they trace a conic in $(\PP^3)^\ast$.
   Hence, a general point $X \in \PP^3$ is contained in two rolling planes
   $\Sigma(v_1:t_1)$ and $\Sigma(v_2:t_2)$ with $(v_1:t_1) \neq (v_2:t_2)$. 
   Therefore, the point $X$ is observed twice on the image, namely as $P(v_1:t_1) X$ and $P(v_2:t_2)X$. 
   The associated congruence has order two (since the general point $X$ is contained in the two congruence lines $X \vee C(v_1:t_1)$ and $X \vee C(v_2:t_2)$), while the congruence parametrization $\Lambda$ is birational by \Cref{thm:birationalInGeneral}. \myqed
\end{proof}

\subsection{Image of lines}

In this section, we prove the statements in Section \ref{sec:Image-of-a-line}.

\begin{proof}[Proof of~\Cref{prop:highMultiplicityPoint}]
    Let $D$ be the degree of the image curve $\Phi(L)$.
    Since $L$ is general, every rolling plane meets $L$ in exactly one point.
    Hence, a generic rolling line $r$ on the image plane meets the curve $\Phi(L)$ in a unique point outside of $(0:1:0)$.
    The intersection multiplicity at that unique point is one. 
    (Otherwise the point $r^\vee$ would be on the dual curve $\Phi(L)^\vee$.
    The genericity of $r$ would imply that $\Phi(L)^\vee$ is the line $(0:1:0)^\vee$, but then $\Phi(L)$ would not be a curve; a contradiction).

    We can compute the degree $D$ of the curve $\Phi(L)$ by intersecting wih any line different from $\Phi(L)$ when counting intersection multiplicities.
    In particular, intersecting with the generic rolling line $r$ shows that the curve has multiplicity $D-1$ at the point $(0:1:0)$. \myqed
\end{proof}

\begin{proof}[Proof of Theorem~\ref{thm:degreePhi}]
    After rotating and translating, we may assume that the line $K$ is the $y$-axis, i.e., $K = (0:1:0:0) \vee (0:0:0:1)$.
    Then, the birational map $\Sigma^\vee: \PP^1 \dashrightarrow K^\vee$ takes the form $\Sigma\vee = (\Sigma_1:0:\Sigma_3:0)$, where $\Sigma_1$ and $\Sigma_3$ are linear with $\gcd(\Sigma_1,\Sigma_3)=1$.
    We consider the center-movement map $C = (C_1:C_2:C_3:C_0)$ of degree $d$.
    In type I, the coordinate functions $C_1$ and $C_3$ have $d-1$ common roots (counted with multiplicity), i.e., $C_i = \tilde C \cdot \ell_i$ for $i = 1,3$, where $\deg \tilde C = d-1$ and $\deg \ell_i=1$.
    Since $C(v:t) \in \Sigma(v:t)$ for all $(v:t)\in \PP^1$, we have moreover that $(\ell_1:\ell_3) = (-\Sigma_3:\Sigma_1)$.
    All in all, the center-movement map is of the form
    \begin{align}\label{eq:Cmap}
        C = (- \tilde C \cdot \Sigma_3 : C_2 : \tilde C \cdot \Sigma_1: C_0).
    \end{align}
    In type I, we have $\tilde C \neq 0$, 
    while the case $\tilde C = 0$ corresponds to types II and III.

    Now, let us consider a general point $X  = (X_1:X_2:X_3:X_0)\in \PP^3$. 
    Then, $X \vee K = (X_3:0:-X_1:0)^\vee$ is one of the rolling planes and 
    there is precisely one $(v_X:t_X) \in \PP^1$ such that
    $\Sigma(v_X:t_X) = X \vee K$, i.e.,
    \begin{align}\label{eq:computingVT}
        (\Sigma_1(v_X:t_X) : \Sigma_3(v_X:t_X)) = (X_3: -X_1).
    \end{align}
    From this, we see that $(v_X:t_X)$ depends linearly on $(X_1:X_3)$, and not on $X_2$ or $X_0$. 
    In the following, for any map $f(v:t)$ defined on $\PP^1$, we write \linebreak[4] $f_X (X_1:X_3) := f(v_X:t_X)$.
    For instance, $\Sigma_X = X \vee K$ and 
    \eqref{eq:Cmap} becomes 
    \begin{align}
        \label{eq:CXmap}
        C_X = (\tilde{C}_X \cdot X_1 : C_{2,X} : \tilde{C}_X \cdot X_3 : C_{0,X}).
    \end{align}
    The line on the congruence passing through the point $X$ is $\Gamma(X) = C_X \vee X$. 
    To compute the image of $X$ under the map $\Phi$ in \eqref{eq:PhiMap}, we need to find the unique $(u:s) \in \PP^1$ such that $\Lambda((v_X:t_X),(u:s)) = \Gamma(X)$. 
    To do that, it will be enough to consider the points at infinity, i.e., to solve
    $\Lambda_\infty((v_X:t_X),(u:s)) = \Gamma_\infty(X)$.
    From \eqref{eq:CXmap}, we compute
    \begin{align}
        \begin{split}\label{eq:GammaXInf}
        \Gamma_\infty(X) & = (C_X \vee X)^\infty \\
        &= (X_1 \cdot \alpha(X) : \beta(X) : X_3 \cdot \alpha(X)), \\ \text{ where }
        \alpha(X) &:= C_{0,X} - X_0 \tilde{C}_X,  \\ 
        \beta(X) &:= X_2 C_{0,X}  - X_0 C_{2,X}.
        \end{split}
    \end{align}

\noindent
\textbf{Case 1: Non-constant rotation.}
    We begin by computing  $\Lambda_\infty$ in the case of non-constant rotations. 
    We make use of
    \Cref{lem:rs1CamerasAndLambdaMaps}
    and parametrize the \rs cameras involving $C,K,\Sigma^\vee$ using maps $\PL: \PP^1 \dashrightarrow \GrLine$ such that $C(v:t) \in \PL(v:t) \subset \Sigma(v:t)$ for all $(v:t) \in \PP^1$.
    In particular, we have $\PL_\infty(v:t) \in \Sigma_\infty(v:t)$ for all $(v:t) \in \PP^1$, which means that the map $\PL_\infty$ is of the form
    $\PL_\infty = (-\tilde \PL \cdot \Sigma_3 : \PL_2 : \tilde \PL \cdot \Sigma_1)$.
    We will compute $\Lambda_\infty$ by homogenizing \eqref{lambda-infty}.
    For that, we recall from \Cref{lem:sigma-restrictions}
    that $\Sigma^\vee_\infty(v:t) = Av+Bt$ with $A \cdot B = 0$ and $\Vert A \Vert = \Vert B  \Vert$. After scaling, we may assume that the latter norm is $1$.
    Then, $\Vert \Sigma^\vee_\infty(v:t) \Vert = \sqrt{v^2+t^2}$ and 
    we obtain from \eqref{lambda-infty} that
%\begin{small}
\begin{align}
\begin{split}
 &\Lambda_\infty((v_X:t_X), (u:s))
= (\Sigma^\vee_{\infty,X} \times \PL_{\infty,X}) \cdot s \\
&+ \PL_{\infty,X} \cdot t_X \cdot u
\\&=\left(X_1 \cdot (a(X)s+b(X)u) : c(X)s+d(X)u :\right. \\
&\left.X_3 \cdot (a(X)s+b(X)u)  \right), 
\end{split}
\label{eq:LambdaInftyX}
\end{align}
%\end{small}
    where $a(X) := \PL_{2,X} $, 
    $b(X) := t_X \tilde{\PL}_X$, 
    $c(X) := -\tilde{\PL}_X(X_1^2+X_3^2)$, and
    $d(X) := t_X \PL_{2,X}$.
    Hence, comparing \eqref{eq:GammaXInf} and \eqref{eq:LambdaInftyX}, we see that $\Lambda_\infty((v_X:t_X),(u:s)) = \Gamma_\infty(X)$ is equivalent to
    $(\alpha:\beta) = (s:u)\left[\begin{smallmatrix}
        a & c
        \\ b & d
    \end{smallmatrix}\right]$.
    The determinant of the latter $2 \times 2$ matrix is
    $t_X \cdot (\PL_{2,X}^2 + \tilde{\PL}_X^2(X_1^2+X_3^2))$, 
    which is not the zero polynomial since $(X_1^2+X_3^2)$ is not a square.
    Thus, for general $X$, the $2 \times 2$ matrix is invertible and the unique solution $(u:s) \in \PP^1$ is given by
\begin{align*}
    u_X &= a(X) \cdot \beta(X) - c(X) \cdot \alpha(X)
    \\&= \PL_{2,X} \cdot (X_2 C_{0,X}  - X_0 C_{2,X}) \\
    &+ \tilde{\PL}_X(X_1^2+X_3^2) \cdot     (C_{0,X} - X_0 \tilde{C}_X), \\
    s_X &= d(X) \cdot \alpha(X) - b(X) \cdot \beta(X) \\
    &=t_X \left( \PL_{2,X} \cdot (C_{0,X} - X_0 \tilde{C}_X)\right. \\
    &\left.-\tilde{\PL}_X \cdot (X_2 C_{0,X}  - X_0 C_{2,X})
    \right).
\end{align*}
In particular, we see that $t_X$ divides $s_X$ and therefore
$\Phi(X) = (\tfrac{s_X}{t_X} v_X : u_X : s_X)$.
To determine the degree of the map $\Phi$, it is now sufficient to show that $u_X$ is irreducible and to compute its degree. For that, we rewrite $u_X$ as follows:
\begin{align*}
    u_X = f(X) - X_0 \cdot f_0(X) + X_2 \cdot f_2(X), 
\end{align*}
where $f(X) := \tilde{\PL}_X(X_1^2+X_3^2) \cdot C_{0,X}$, 
$f_0(X) := \PL_{2,X} C_{2,X} + \tilde{\PL}_X(X_1^2+X_3^2)  \cdot\tilde{C}_X$, and 
$f_2(X):= \PL_{2,X} C_{0,X}$.
Note that $f, f_0$ and $f_2$ only depend on $(X_1:X_3)$, and not on $X_0$ or $X_2$.
Hence, the only way that $u_X$ can be reducible is when $f$, $f_0$ and $f_2$ have a common factor. We now consider \rs cameras of type I and II separately to show that this is not possible.

\noindent
\textbf{Case 1a: Type I.}
In type I, the map $\PL$ is given via a map $\lambda: \PP^1 \dashrightarrow K$
such that $\PL(v:t) = C(v:t) \vee \lambda(v:t)$ for general  $(v:t) \in \PP^1$.
Writing $\lambda = (0:\lambda_2: 0 : \lambda_0)$, this means that 
\begin{align*}
    \PL_\infty = ( - \lambda_0 \tilde C \cdot\Sigma_3 : \lambda_0 C_2 - \lambda_2 C_0 : \lambda_0 \tilde C \cdot \Sigma_1),
\end{align*}
i.e.,
$\tilde \PL = \lambda_0 \cdot \tilde C$ and $\PL_2 = \lambda_0 C_2 - \lambda_2 C_0$.
Then, $f_0(X) = \lambda_{0,X} (C_{2,X}^2
+  \tilde{C}_X^2 (X_1^2+X_3^2))
 - \lambda_{2,X}C_{0,X}C_{2,X}$.
Hence, for every choice of $C_0$ and for sufficiently general $\lambda_0, \lambda_2, \tilde C, C_2$, we have that 
$\gcd(C_{0,X},f_0)=1$ and $\gcd(\PL_{2,X}, f) = 1$.
Thus, $f$, $f_0$ and $f_2$ cannot have a common factor for a general \rs camera in $\mathcal{P}_{I,d,\delta}$.  
Therefore, $u_X$ is generally irreducible and we conclude that $\deg \Phi = \deg u_X = 2d + \delta + 1$, where 
 $ d = \deg C $ and $\delta = \deg \lambda$.
 The irreducibility argument worked for \emph{every} choice of $C_0$, in particular in the case $d=1$ and $C_0 = t$, which corresponds to the camera center moving on a line with constant speed. Hence, we obtain $\deg \Phi = \delta +3$ for a general \rs camera in $\mathcal{P}^\mathrm{cs}_{I,1,\delta}$.

\noindent
 \textbf{Case 1b: Type II and III.}
 In this case, $\tilde C = 0$. 
 By \Cref{lem:cameraRayMapII}, 
 the Plücker coordinates of the map $\PL$ are
 $(-h \Sigma_3C_2 : 0 : -h \Sigma_3 C_0 : -h \Sigma_1 C_2 : p : h \Sigma_1 C_0)$, for some polynomials $h$ and $p$.
 Hence, $\tilde \PL = h C_0$ and $\PL_2 = p$. 
 Thus, we obtain that $f(X) = h_X C_{0,X}^2 (X_1^2+X_3^2)$ and $f_0(X) = p_X C_{2,X}$, and so 
$\gcd(f,f_0)=1$ and $u_X$ is irreducible
for every choice of $C_0$ and for sufficiently general $h, p,  C_2$.
Therefore, for a general \rs camera in $\mathcal{P}_{II,d,\delta}$, we find that $\deg \Phi = \deg u_X = 2d+\delta+2$, where $d = \deg C$ and $\delta = \deg h$.
Also, a general \rs camera in $\mathcal{P}_{II,1,\delta}^\mathrm{cs}$ (where $C_0 = t$) satisfies $\deg \Phi = \delta + 4$. 

\noindent
\textbf{Case 2: Constant rotation.}
The second row of the constant rotation matrix $R$ has to be $K^\infty = (0:1:0)$.
Hence, we may assume that $R = I_3$.
Then, $\Sigma^\vee = (-t:0:v:0)$ and we see from \eqref{eq:computingVT} that $(v_X:t_X) = (X_1:X_3)$.
We compute $\Lambda_\infty((v_X:t_X),(u:s))$ as the intersection of the lines that are dual to the points $\Sigma^\vee_{\infty,X} = (-X_3:0:X_1)$ and $((0:-s:u) P(v_X:t_X))^\infty = (0:-s:u)$:
\begin{align*}
    \Lambda_\infty((v_X:t_X),(u:s)) = (sX_1 :uX_3 : sX_3).
\end{align*}
Comparing this with \eqref{eq:GammaXInf}, we see that the unique solution $(u:s) \in \PP^1$ to $\Lambda_\infty((v_X:t_X),(u:s)) = \Gamma_\infty(X)$ is
$(u_X:s_X) = (\beta(X):X_3\cdot  \alpha(X))$.
Therefore, $\Phi(X) = (X_1 \cdot \alpha(X) : \beta(X) : X_3 \cdot \alpha(X))$.

We will now show that $\beta(X) = X_2 C_{0,X}  - X_0 C_{2,X}$ is generally irreducible.
For every choice of $C_0$ and sufficiently general $C_2$, we have $\gcd(C_{0,X},C_{2,X}) = 1$.
The latter means that $\beta(X)$ is irreducible since $C_{0,X}$ and $C_{2,X}$ only depend on $(X_1:X_3)$.
This argument does not involve $\tilde C$.
Hence, a general \rs camera in either $\mathcal{P}_{I,d}$ or $\mathcal{P}_{II,d}$ satisfies $\deg \Phi = \deg \beta = d+1$.
Since $C_0$ was arbitrary, this also holds for cameras moving along a line with constant speed. \myqed
\end{proof}

\subsection{Minimal problems for several linear \texorpdfstring{\rs}{RS1}cameras}
This section and the next provide proofs for all claims in Section \ref{sec:Min-Prob}.
The following lemma is an extended version of \Cref{lemma-p-infty-short}.
\begin{lemma}\label{lemma-p-infty}
Let $(R,\mathcal{C},C)\in \mathcal{P}_1^{cs}$ be of type I and let $(a:b:0)$ be such that $\mathcal{C}^\infty = (a:b:0) \cdot R$.
Consider the associated picture-taking map $\Phi\colon\mathbb{P}^3\dashrightarrow\mathbb{P}^2$ and a line $L\subset\mathbb{P}^3$. Then, we have:
\begin{itemize}
	\item either $\overline{\Phi(L)}$ is a conic through the points $(0:1:0)$ and $(a:b:0)$,
	\item or $\overline{\Phi(L)}$ is the line through the points $(0:1:0)$ and $(a:b:0)$.
\end{itemize}
Conversely, for a generic conic $c$ through $(0:1:0)$ and $(a:b:0)$, there exists a one-dimensional family of  lines  $L\subseteq \PP^3$ with $c=\overline{\Phi(L)}$. These lines rule a smooth quadric surface in $\PP^3$.
\end{lemma}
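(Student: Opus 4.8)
The first two bullet points are \Cref{lemma-p-infty-short}, so the plan is to prove the converse, exploiting that the congruence of camera rays of a type-I linear \rs camera is completely explicit. For $(R,\mathcal{C},C)\in\mathcal{P}_1^{cs}$ of type I the lines $\mathcal{C}$ and $K$ are skew: here $\deg\mathcal{C}=1$, so the condition $\#(K\cap\mathcal{C})=\deg\mathcal{C}-1$ of \Cref{thm:rs1cameras} reads $K\cap\mathcal{C}=\emptyset$. Hence by \Cref{thm:rs1cameras} and \Cref{thm:order1congruence} the congruence $\overline{\mathcal{L}}$ is exactly the set of transversal lines of $\mathcal{C}$ and $K$, and $(p,q)\mapsto p\vee q$ identifies $\overline{\mathcal{L}}\cong\mathcal{C}\times K\cong\PP^1\times\PP^1$. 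The picture-taking map then factors as $\Phi=\mu\circ\tau$, where $\tau\colon\PP^3\dashrightarrow\mathcal{C}\times K$ sends a general $X$ to $(\mathcal{C}\cap(X\vee K),\,K\cap(X\vee\mathcal{C}))$ --- this is the map $\Gamma$ of \eqref{eq:PhiMap} rewritten, since the camera ray through $X$ is the unique transversal of $\mathcal{C},K$ through $X$ --- and $\mu\colon\mathcal{C}\times K\dashrightarrow\PP^2$ is the map $\ell\mapsto\varphi(\Lambda^{-1}(\ell))$ transported through the identification $\overline{\mathcal{L}}\cong\mathcal{C}\times K$, hence birational as a composition of birational maps.

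Next I would establish the geometric core. For a general line $L\subset\PP^3$, the restriction $\tau|_L\colon L\to\mathcal{C}\times K$ is birational onto a curve $D$ of bidegree $(1,1)$: the first coordinate $X\mapsto\mathcal{C}\cap(X\vee K)$ is the composite of the linear projection of $L$ onto the pencil of planes through $K$ with the isomorphism $\Pi\mapsto\Pi\cap\mathcal{C}$, hence of degree one, and symmetrically for the second coordinate; injectivity holds because $X$ lies on both $L$ and the transversal $p\vee q$ when $(p,q)=\tau(X)$. Conversely, a bidegree-$(1,1)$ curve $D\subseteq\mathcal{C}\times K$ is the graph of a projectivity $\psi\colon\mathcal{C}\to K$; writing $\psi(P(s))=Q(s)$ with $P\colon\PP^1\xrightarrow{\,\sim\,}\mathcal{C}$ and $Q\colon\PP^1\xrightarrow{\,\sim\,}K$ linear, the closure of $\tau^{-1}(D)$ is the surface $Q_D$ swept by the transversals $P(s)\vee Q(s)$, i.e.\ the image of the map $((s_0:s_1),(\lambda:\mu))\mapsto\lambda P(s_0:s_1)+\mu Q(s_0:s_1)$. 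Since $\mathcal{C},K$ are skew this map is the Segre embedding postcomposed with a linear automorphism of $\PP^3$, so $Q_D$ is a smooth quadric. One ruling of $Q_D$ is $\{P(s)\vee Q(s)\}_s$, made of camera rays that $\tau$ contracts to the points of $D$; the other ruling consists exactly of the lines $L$ with $\tau(L)=D$, since such an $L$ meets each transversal $P(s)\vee Q(s)$ in one point and hence $\tau$ maps the points of $L$ bijectively onto $D$, while conversely $\tau(L)=D$ forces $L\subseteq Q_D$ with $L$ not a camera ray. Thus $\{L\colon\tau(L)=D\}$ is the second ruling of the smooth quadric $Q_D$: a one-parameter family of lines ruling $Q_D$.

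It remains to transfer this along $\mu$. For a general bidegree-$(1,1)$ curve $D$ we have $D=\tau(L)$ for the lines $L$ of the second ruling of $Q_D$, and \Cref{lemma-p-infty-short} together with $\deg\Phi=2$ from \Cref{thm:degreePhi} gives that $\mu(D)=\overline{\Phi(L)}$ is a conic through $(0:1:0)$ and $(a:b:0)$. The resulting map $D\mapsto\mu(D)$, from the $\PP^3$ of bidegree-$(1,1)$ curves to the $\PP^3$ of conics through $(0:1:0)$ and $(a:b:0)$, is generically injective: if $\mu(D_1)=\mu(D_2)=c$ with $c$ irreducible, then both $D_i$ lie in $\overline{\mu^{-1}(c)}$, whose only component of bidegree $(1,1)$ is the strict transform of $c$, so $D_1=D_2$. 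Being dominant (a dimension count) and generically injective, the map is birational, hence it produces for a generic conic $c$ through $(0:1:0)$ and $(a:b:0)$ a unique irreducible bidegree-$(1,1)$ curve $D_c$ with $\mu(D_c)=c$. Finally, $\{L\colon\overline{\Phi(L)}=c\}=\{L\colon\mu(\tau(L))=c\}$ equals $\{L\colon\tau(L)=D_c\}$: for any line $L$ the image $\tau(L)$ is a point or an irreducible curve of bidegree $(1,0)$, $(0,1)$, or $(1,1)$, and $\mu$ sends a point and a bidegree-$(1,0)$ or $(0,1)$ curve to a point or a line, never to the conic $c$, while a bidegree-$(1,1)$ curve maps to $c$ only if it equals $D_c$. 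By the previous paragraph this set is the second ruling of the smooth quadric $Q_{D_c}$, which proves the claim.

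The delicate point --- the main obstacle --- is the bookkeeping hidden in the word ``generic'': one must check that for a generic such conic $c$ the curve $D_c$ is a genuine irreducible bidegree-$(1,1)$ curve, so that $Q_{D_c}$ really is a smooth quadric and the family of lines really is one-dimensional. This amounts to controlling $\tau$ and $\mu$ along their indeterminacy loci and the contracted rulings, which can be done either by the dimension-and-irreducibility argument above or, more concretely, by pinning down that $\mu$ has the bidegree and single base point of $\varphi$ in \eqref{small-phi}: writing $\mu^{*}\mathcal{O}_{\PP^2}(1)=\mathcal{O}(\alpha_1,\alpha_2)$ off the base locus, $\deg\Phi=2$ forces $\alpha_1+\alpha_2=2$ and birationality of $\mu$ forces $(\alpha_1,\alpha_2)^2$ minus the base contribution to equal $1$, so the parity of $(\alpha_1,\alpha_2)^2$ pins $(\alpha_1,\alpha_2)=(1,1)$ with one simple base point, whose two rulings map to $(0:1:0)$ and $(a:b:0)$.
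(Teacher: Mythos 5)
Your argument for the converse direction is correct and takes a genuinely different, more synthetic route than the paper. The paper works in adapted coordinates, writes down $\Phi$ explicitly as in \eqref{eq:PhiLinearRS}, computes the quadratic equation of $\Phi(L)$ by elimination in \texttt{Macaulay2}, and verifies by a second elimination that the only constraints on the conic's coefficients are the two point conditions, which gives dominance; the one-dimensional fiber is then exhibited as the second ruling of the quadric spanned by the transversals of $\mathcal{C}$, $K$, $L$ (this last step is essentially your $Q_D$ argument, so the two proofs converge there). You instead factor $\Phi=\mu\circ\tau$ through the congruence $\overline{\mathcal{L}}\cong\mathcal{C}\times K\cong\PP^1\times\PP^1$, identify $\tau(L)$ as a $(1,1)$-curve, and get dominance from generic injectivity of $D\mapsto\mu(D)$ (a clean consequence of birationality of $\mu$: the strict transform of an irreducible $c$ is the unique irreducible curve dominating it) plus equality of dimensions. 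Your treatment of the degenerate cases ($\tau(L)$ of bidegree $(1,0)$, $(0,1)$, or a point) via $\mu^{*}\mathcal{O}_{\PP^2}(1)=\mathcal{O}(1,1)$ is also sound, and in fact pins down the fiber \emph{exactly} as the second ruling, which the paper only gets up to the dimension count. This buys coordinate-freeness and avoids symbolic computation.

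There is, however, a gap on the forward direction. You defer the first two bullets to \Cref{lemma-p-infty-short}, but in the paper that statement has no independent proof: its proof \emph{is} the present lemma. Your final paragraph does recover most of it intrinsically --- $\deg\Phi=2$ and birationality force $\mu^{*}\mathcal{O}(1)=\mathcal{O}(1,1)$ with a single simple base point, so $\mu$ contracts exactly two rulings to two fixed points and every image conic passes through both; one of these is $(0:1:0)$ since the pencils $\mathcal{L}(r)$ map to the rolling lines (or by \Cref{prop:highMultiplicityPoint}). But the identification of the \emph{second} blown-down point as $(a:b:0)$ with $\mathcal{C}^\infty=(a:b:0)\cdot R$ is only asserted, and it is not automatic: the identification $\Lambda^{-1}\circ\iota$ of $\mathcal{C}\times K$ with $\PP^1\times\PP^1$ is not a product map in the second factor, so the contracted curves of $\mu$ are not simply those of $\varphi$, and one cannot read the second point off \eqref{small-phi}. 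This is exactly the content of the paper's computation $ac_{11}+bc_{12}=0$; some such computation (or the evaluation of $\Phi$ on one explicit line) is needed to complete your proof of the first part. Once that is supplied, the rest of your argument goes through.
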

\begin{proof}
Up to a change of coordinates, as in the proof of \cref{prop:constandSpeedConstantRotation}, we may assume that $R = I_3$
%Let $K$ be the line $\{(0:x_1:ax_2:-x_2)\in\mathbb{P}^3\}$ 
and $\mathcal{C}$ is the line parametrised by
\begin{equation*}
C\colon\mathbb{P}^1\dashrightarrow \mathbb{P}^3,\,(v:t)\mapsto (av:bv:0:t)
\end{equation*}
for some real parameters $a,b$ with $(a:b)\neq0$. 
Note that this is indeed  an element of $\mathcal{P}_1^{cs}$ because $\mathcal{C}\neq\mathcal{C}^\infty=(a:b:0:0)=C(1:0)$. By \eqref{eq:SigmaMap}, we obtain $\Sigma\vee(v:t)=(-t:0:v:av)$.
The proof of this lemma proceeds by first observing that in this setting the picture-taking map has a particularly elegant description. More precisely, the projection matrix is given by
\begin{equation*}
P(v:t)=\begin{pmatrix}
t & 0 & 0 & -av\\
0 & t & 0 & -bv\\
0 & 0 & t & 0
\end{pmatrix}.
\end{equation*}
Let $X=(X_1:X_2:X_3:X_0)\in\mathbb{P}^3$. We aim to determine the time at which the camera sees $X$. For this, we need to check which rolling plane contains $X$, i.e., we need to solve
\begin{equation*}
-tX_1+vX_3+avX_0=0.
\end{equation*}
However, we see immediately that $t=v\cdot \frac{X_3+aX_0}{X_1}$ for $X_1\neq0$. Thus, we obtain for the picture-taking map $\Phi\colon\mathbb{P}^3\dashrightarrow\mathbb{P}^2,$
{\small \begin{align*}
&(X_1\!\!:X_2\!:\!X_3\!:\!X_0)\mapsto (v\frac{X_3+aX_0}{X_1}X_1-avX_0:\\
&v\frac{X_3+aX_0}{X_1}X_2-bvX_0:v\frac{X_3+aX_0}{X_1}X_3).
\end{align*}}
We observe that $v$ is a common factor of all entries, i.e. we obtain
{\small \begin{align} \label{eq:PhiLinearRS}\begin{split}
\Phi\colon 
X\mapsto ((X_3+aX_0)X_1-aX_1X_0:\\
(X_3+aX_0)X_2-bX_1X_0:(X_3+aX_0)X_3).
\end{split}
\end{align}}
We now fix two points $\alpha,\beta\in\mathbb{P}^3$ with $\alpha=(\alpha_1:\alpha_2:\alpha_3:\alpha_0)$ and $\beta=(\beta_1:\beta_2:\beta_3:\beta_0)$, and consider the line $L=\alpha\vee\beta$ they span.
%\begin{equation*}
    %\mathbb{P}^1\to\mathbb{P}^3,\,(v:t)\mapsto (v\alpha_0-t\beta_0:v\alpha_0-t\beta_0:v\alpha_0-t\beta_0:v\alpha_0+t\beta_0).
%\end{equation*}
The following \textsc{Macaulay2} computation
\begin{verbatim}
QQ[x0,x1,x2,x3,y0,y1,y2,
a0,a1,a2,a3,b0,b1,b2,b3,s,t,a,b]
I=ideal(y1-((x3+a*x0)*x1-a*x1*x0),
y2-((x3+a*x0)*x2-b*x1*x0),
    y0-(x3+a*x0)*x3,x0-(s*a0+t*b0),
    x1-(s*a1+t*b1),x2-(s*a2+t*b2),
    x3-(s*a3+t*b3))
J=eliminate(I,{x0,x1,x2,x3,s,t})
\end{verbatim}
shows that $\Phi(L)$ is cut out by the quadratic polynomial
\begin{align*}
    &b(\alpha_0\beta_3-\alpha_3\beta_0)y_1^2+(\alpha_2\beta_1-\alpha_1\beta_2)y_0^2\\
    &+a(\alpha_3\beta_0-\alpha_0\beta_3)y_1y_2+(a\alpha_0\beta_2-a\alpha_2\beta_0-b\alpha_0\beta_1\\
    &+b\alpha_1\beta_0+\alpha_3\beta_2-\alpha_2\beta_3)y_1y_0+(\alpha_1\beta_3-\alpha_3\beta_1)y_2y_0.
\end{align*}
For the rest of the discussion, we denote the coefficient of $y_iy_j$ by $c_{ij}$.
We first note that $c_{22}=0$, thus any conic that is an image of a line passes through the point $(0:1:0)$.
This has already been observed in \Cref{prop:highMultiplicityPoint}.
Moreover, we observe that $ac_{11}+bc_{12}=0$. Therefore any such conic passes through the point $(a:b:0)$. The following continuation of our previous \textsc{Macaulay2} calculation shows that these are the only two conditions satisfied by a generic image conic:
\begin{verbatim}
M=first entries gens J
f=M_(00)
T=QQ[a0,a1,a2,a3,b0,b1,b2,b3,
a,b][y0,y1,y2]
g=sub(f,T)
N=last coefficients(g,Monomials=>
{y1^2,y2^2,y0^2,y1*y2,y1*y0,y2*y0})
T1=QQ[a0,a1,a2,a3,b0,b1,b2,b3,a,b,
z0,z1,z2,z3,z4,z5]
Nnew=sub(N,T1)
Check=ideal(z0-Nnew_(0,0),z1-Nnew_(1,0),
z2-Nnew_(2,0),z3-Nnew_(3,0),
z4-Nnew_(4,0),z5-Nnew_(5,0))
N1=eliminate(Check,{a0,a1,a2,a3,
b0,b1,b2,b3})
\end{verbatim}
Thus, for a generic conic $c$ through the two points $(0:1:0)$ and $(a:b:0)$, there is a line $L$ such that $\overline{\Phi(L)} = c$. 
Since the space of such conics is three-dimensional and $\dim \GrLine = 4$, there has to be in fact a one-dimensional family of such lines. It can be obtained as follows:
Fix a generic line $L$ with $\overline{\Phi(L)} = c$.
Consider the set of all lines that meet $\mathcal{C}$, $K$, and $L$. These are all camera rays that $\Phi$ contracts to points on the conic $c$. These camera rays rule a smooth quadric surface since the lines $\mathcal{C}$, $K$, and $L$ are generically pairwise skew. The other ruling of that quadric contains $K$, $\mathcal{C}$, and $L$; and therefore, any line $L'$ in that second ruling will also satisfy $\overline{\Phi(L')} = c$. \myqed
\end{proof}

In terms of the joint-camera map, minimality means that $\Phi^{(m)}$ in \eqref{eq:jcm} is dominant and has generically finite fibers. 
A necessary condition for a reconstruction problem to be minimal is for it to be
\emph{balanced}.
\begin{definition}\label{def:balanced}
    The Reconstruction Problem \ref{reconstruction-problem} is called \emph{balanced} if domain and codomain of its associated joint-camera map $\Phi^{(m)}$ in \eqref{eq:jcm} have the same dimensions.  
\end{definition}
To determine the dimensions of the varieties $\mathfrak X$ and $\mathcal Y$, we count the elements of a general arrangement $X$ in $\mathfrak X$ in the following way:
\begin{enumerate}
\item Count all \emph{free points}: in any collection of points, some may be \emph{dependent} on others, defined as being collinear with two other points. Each minimal set of independent points (which we call free points) has the same cardinality, which we denote  by $p_0$.
%the points that are not contained in any lines. Write their amount as $p_0$.
\item Count all \emph{third collinear points}: whenever more than two points are collinear, we pick three of them (always including the free points if present) and we count the dependent points among the three as third collinear points. Write their amount as $p_3$.

%the points that lie on a line in $X$ which contains exactly two already-counted points. 
\item Count all \emph{further collinear points}: These are the points that are collinear with at least three already-counted points. Write their amount as $p_4$. 
Note that $p_0 + p_3 + p_4 = p$.
\item Count all \emph{free} lines, i.e., the lines that do not contain any points. Write their amount as $\ell_0$.
\item Count all \emph{lines incident to} exactly \emph{one point}. Write their amount as $\ell_1$.
\item Count all \emph{lines incident to} exactly \emph{two points}. Write their amount as $\ell_2$.
\end{enumerate}

In this way, we counted all elements of $X$ and associated the following combinatorial data to it:

\begin{center}\begin{tabular}{cl}
$p_0$ & free points\\
$p_3$ & third collinear points\\
$p_4$ & further collinear points\\
$\ell_0$ & free lines\\
$\ell_1$ & lines incident to one point\\
$\ell_2$ & lines incident to two points.
\end{tabular}\end{center}

Recall that in addition $p_\infty \in \{ 0,1\}$ indicates whether we know the point $(a:b:0)$ from Lemma \ref{lemma-p-infty}.
We observe the following implication:
\begin{align} \label{eq:pInftyImplications}
    p_\infty = 0 \quad \Rightarrow \quad \ell_0 = \ell_1 = \ell_2 = p_4 = 0.
\end{align}
Indeed, if $\ell_i > 0$, then there is an image conic that determines the point $(a:b:0)$. Also, if $p_4 > 0$, then there are four collinear points in space. Their image points, together with $(0:1:0)$, determine a unique image conic, which gives us again the point $(a:b:0)$. 

Now, we are able to compute the dimension of the source and target spaces of the joint-camera map $\Phi^{(m)}$:
 \begin{lemma} \label{lem:dimensionSpaces}
     We have $\dim \mathcal{P} = 8$, $\dim G = 7$, and 
     \begin{align*}
\dim \mathfrak X &= 3p_0 + p_3 + p_4 + 4\ell_0 + 2\ell_1,\\
\dim \mathcal Y &= p_\infty+2p_0+2p_3+p_4+3\ell_0+2\ell_1+\ell_2.
\end{align*}
 \end{lemma}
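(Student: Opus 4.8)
The plan is to compute the four quantities separately; each reduces to an elementary parameter count, and \Cref{lemma-p-infty} does the real work on the image side. The equality $\dim G = 7$ is immediate from \Cref{rem:groupAction}: an element of $G$ is a triple $(R,t,\alpha) \in \SO(3) \times \R^3 \times (\R \setminus \{0\})$, contributing $3+3+1$ parameters. For $\dim \mathcal{P}$, recall that $\mathcal{P} = \mathcal{P}_1^{\mathrm{cs}}$ from \eqref{eq:paramSpaceLinearRScam} is the union of its type-I and type-II loci, whose dimensions were found in \Cref{sec:linearCameras} to be $8$ and $7$. Concretely, a generic (type-I) point of $\mathcal{P}_1^{\mathrm{cs}}$ consists of a rotation $R \in \SO(3)$ ($3$ parameters), a line $\mathcal{C}$ whose point at infinity $\mathcal{C}^\infty$ is forced to lie on the fixed line $(w_1:0)\vee(w_2:0) \subset H^\infty$ ($1$ parameter to place $\mathcal{C}^\infty$ on that line and $2$ more for a line through $\mathcal{C}^\infty$), and a birational parametrisation $C : \PP^1 \dashrightarrow \mathcal{C}$ with $C(1:0) = \mathcal{C}^\infty$ prescribed ($\dim \mathrm{PGL}_2 - 1 = 2$ parameters), so $\dim \mathcal{P} = 3 + 3 + 2 = 8$.

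Next I would compute $\dim \mathfrak X$ by building a generic arrangement of $\mathfrak X(p,\ell,\mathcal{I})$ one element at a time, in the order of the counting scheme preceding the lemma, and recording the number of new parameters at each step: a free point is an unconstrained element of $\PP^3$ ($+3$); a third-collinear or further-collinear point is a point on a line already pinned down by previously placed points ($+1$ each); a free line is an unconstrained element of $\GrLine$ ($+4$); a line through exactly one point is a line through a fixed point, i.e.\ a point of a $\PP^2$ ($+2$); and a line through exactly two points is uniquely determined ($+0$). This exhibits the relevant component of $\mathfrak X$ as a tower of projective-bundle-type fibrations, so the sum $3p_0 + p_3 + p_4 + 4\ell_0 + 2\ell_1$ is indeed its dimension; the only thing to check is that for a generic arrangement the ``already determined'' lines are well defined.

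The same bookkeeping on the image side uses that the space of conics in $\PP^2$ is $\PP^5$, and that by \Cref{prop:highMultiplicityPoint} and \Cref{lemma-p-infty} every conic in an image arrangement is forced to pass through $(0:1:0)$ and through the special point $(a:b:0)$, leaving a $3$-dimensional family of admissible conics. Adjoining elements in the same order: the special point $(a:b:0)$ contributes $p_\infty$ (a point of $H^\infty \cong \PP^1$, when it is known); a free image point contributes $2$; a free line contributes $3$; a line through one resp.\ two points contributes $2$ resp.\ $1$ (a conic through the two fixed points and one resp.\ two further image points); a further-collinear point contributes $1$, since the conic through $(0:1:0)$, $(a:b:0)$ and at least three already-placed image points on it is determined (and $p_4 = 0$ unless $p_\infty = 1$ by \eqref{eq:pInftyImplications}); and a third-collinear point contributes $2$, because its image lies on a $1$-dimensional family of admissible conics when $p_\infty = 1$, and is a free point of $\PP^2$ when $p_\infty = 0$ since then the collinearity is not visible in the image. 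Summing gives $\dim \mathcal{Y} = p_\infty + 2p_0 + 2p_3 + p_4 + 3\ell_0 + 2\ell_1 + \ell_2$.

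The step I expect to be the main obstacle is making the image-side count tight rather than a mere upper bound: a priori the $\ell$ image conics could sweep out only a proper subfamily of the $3$-dimensional family of admissible conics, and the incidences $\mathcal{I}$ could impose more than the naively counted conditions. This is exactly where the converse half of \Cref{lemma-p-infty} is needed --- generic conics through $(0:1:0)$ and a given $(a:b:0)$ do arise as $\overline{\Phi(L)}$, in fact via a $1$-parameter family of lines $L$ ruling a smooth quadric --- so the joint-camera map dominates the expected component of $\mathcal{Y}$ and the dimensions add up as claimed. If one prefers to avoid incidence-variety arguments, the explicit formula \eqref{eq:PhiLinearRS} for $\Phi$ on linear \rs cameras can be used to verify the independence of the placements directly in coordinates.
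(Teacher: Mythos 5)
Your proof is correct and takes essentially the same route as the paper's (much terser) argument: both are direct parameter counts over the tower of incidence conditions, with the same key observation that a third collinear point is constrained in $\PP^3$ (contributing $1$) but free in the image (contributing $2$, since five points always lie on a conic), whereas a fourth collinear point is constrained on both sides. The dominance worry in your last paragraph is not actually needed for this lemma, because $\mathcal Y$ is defined as the abstract variety of point-conic arrangements rather than as the image of the joint-camera map; dominance is only established later, via the Jacobian test in the minimality classification.
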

\begin{proof}
    For the dimensions of the camera space and the group, see \Cref{sec:linearCameras} and \Cref{rem:groupAction}.
    %Each camera is defined by position in space, orientation, and displacement. Since the latter must happen parallel to the image plane (\ref{...}), it contributes only two degrees of freedom. Hence, there are 8 degrees of freedom per camera, minus 7 from the group action described in~\ref{...}.
    In $\PP^3$, once a line is fixed by two points, the further points on this line have only one degree of freedom. In $\PP^2$, lines become conics, and these are only fixed after their third image point is counted. This explains the differentiation between $p_3$ and $p_4$. Apart from these considerations, computing $\dim \mathfrak X$ and $\dim \mathcal Y$ is straightforward. \myqed
\end{proof}

%Since $\Phi_I$ is a rational map between algebraic varieties, it can only have generically nonempty and finite fibers if $\dim \mathcal C + \dim \mathfrak X_I = \dim \mathcal Y_I$. These conditions are captured by the next definition.

\begin{corollary} \label{cor:balanced}
The Reconstruction Problem \ref{reconstruction-problem} is balanced if and only if
%A sequence of combinatorial data $I = (p_0,p_3,p_4,\ell_0,\ell_1,\ell_2,p_\infty)$ is a \emph{balanced problem} if
\begin{align}\label{big-den}\begin{split}
&3p_0 + p_3 + p_4 + 4\ell_0 + 2\ell_1 - 7 \\
&= m(p_\infty+ 2p_0 + 2p_3+p_4+3\ell_0+2\ell_1+\ell_2-8).
\end{split}
\end{align}
\end{corollary}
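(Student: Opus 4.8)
The plan is to read \eqref{big-den} off directly from \Cref{def:balanced} and \Cref{lem:dimensionSpaces}, with a single non-formal point to verify. By \Cref{def:balanced}, the Reconstruction Problem is balanced exactly when the domain and codomain of the joint-camera map $\Phi^{(m)}$ in \eqref{eq:jcm} have equal dimension. The codomain records one planar point-conic arrangement (of the kind in \Cref{lemma-p-infty-short}) for each of the $m$ cameras, so it is the $m$-fold product of the space $\mathcal Y=\mathcal Y(p,\ell,\mathcal I,p_\infty)$ and hence has dimension $m\dim\mathcal Y$. The domain is the quotient $(\mathcal P^m\times\mathfrak X)/G$, whose dimension equals $m\dim\mathcal P+\dim\mathfrak X-\dim G$ provided that $G$ acts on $\mathcal P^m\times\mathfrak X$ with generically finite stabilizer. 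Granting this, being balanced is equivalent to the numerical identity
\[
  m\dim\mathcal P+\dim\mathfrak X-\dim G \;=\; m\dim\mathcal Y .
\]
I would then substitute $\dim\mathcal P=8$, $\dim G=7$, and the two formulas for $\dim\mathfrak X$ and $\dim\mathcal Y$ from \Cref{lem:dimensionSpaces}; moving the resulting term $8m$ to the right-hand side turns this identity into precisely \eqref{big-den}, so no further computation is needed.

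The one step carrying actual content is the claim that the generic $G$-stabilizer on $\mathcal P^m\times\mathfrak X$ is finite (in fact trivial), so that the quotient really has dimension $m\dim\mathcal P+\dim\mathfrak X-\dim G$. I expect this to be the main obstacle to make rigorous, since $\dim\mathcal P=8>7=\dim G$ by itself only guarantees that the quotient is nonempty, not that every stabilizer is zero-dimensional. I would handle it by observing, via \Cref{rem:groupAction}, that $g\in G$ stabilizes a camera iff $P(x)\cdot g=P(x)$ for all parameters $x$, and that for a single generic moving linear \rs camera no nontrivial element of the scaled Euclidean group can fix all of its projection matrices; since $m\ge 1$, one such camera already pins down the stabilizer, independently of the point-line data in $\mathfrak X$. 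After this, the remainder of the argument is pure bookkeeping with the dimension formulas of \Cref{lem:dimensionSpaces}, together with the (already implicit) fact that the components of $\mathfrak X$ and $\mathcal Y$ relevant to a fixed combinatorial signature have the stated dimensions.
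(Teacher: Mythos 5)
Your proposal is correct and follows essentially the same route as the paper, whose proof is simply the observation that \eqref{big-den} is the equation $m\dim\mathcal P+\dim\mathfrak X-\dim G=m\dim\mathcal Y$ obtained by substituting the dimension counts of \Cref{lem:dimensionSpaces} into \Cref{def:balanced} (with the codomain of \eqref{eq:jcm} understood as recording one picture per camera). Your extra verification that the generic $G$-stabilizer on $\mathcal P^m\times\mathfrak X$ is trivial — e.g.\ because a nontrivial scaled rigid motion cannot fix the rotation and the nonconstant center-movement map of a single generic linear \rs camera — is a correct filling-in of a step the paper leaves implicit.
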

\begin{proof}
    This follows immediately from \eqref{eq:jcm} and \Cref{lem:dimensionSpaces}. \myqed
\end{proof}

%A balanced problem is \emph{minimal} if the Jacobian of the map $\Phi_I$ at a generic point is invertible.
%Thus, if $I$ is a minimal problem, then the general fiber of the map $\Phi_I$ is finite. A complete classification of minimal problems follows, distinguishing the cases $m=1$ and $m>1$.

Now, we classify the minimal problems for at least two cameras. In Section \ref{ssec:singleCam}, we show that there are no minimal problems for a single camera.

\begin{proposition}
There are finitely many balanced problems for {$m > 1$}.
\end{proposition}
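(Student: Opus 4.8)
The plan is to extract everything from the balance equation \eqref{big-den} of \Cref{cor:balanced}. Write $N := p_0 + p_3 + p_4 + \ell_0 + \ell_1 + \ell_2$ for the total number of points and lines in the arrangement. Moving all terms of \eqref{big-den} to one side and collecting the coefficients of the six combinatorial counts rewrites it as
\begin{equation}\label{eq:balanceCollected}
(2m-3)p_0 + (2m-1)p_3 + (m-1)p_4 + (3m-4)\ell_0 + (2m-2)\ell_1 + m\,\ell_2 \;=\; (8 - p_\infty)\,m - 7 .
\end{equation}
For $m \ge 2$ every coefficient on the left-hand side is at least $m-1$ (indeed $2m-3,\,2m-1,\,3m-4,\,2m-2,\,m \ge m-1$ whenever $m\ge 2$), so the left side is $\ge (m-1)N$, while the right side is $\le 8m-7$. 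Hence $(m-1)N \le 8m-7$, i.e. $N \le 8 + \tfrac{1}{m-1} \le 9$. Thus a balanced problem with $m>1$ involves at most nine features, and in particular there are only finitely many admissible combinatorial signatures $(p_0,p_3,p_4,\ell_0,\ell_1,\ell_2)$.

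It remains to bound $m$. Fix one such signature together with a value of $p_\infty$; then \eqref{big-den} reads $m\,(\dim \mathcal Y - 8) = \dim \mathfrak X - 7$, a single linear equation in $m$ whose data $\dim\mathfrak X,\dim\mathcal Y$ are fixed integers (bounded because $N\le 9$). If $\dim\mathcal Y \neq 8$ this determines $m$ uniquely, and if $\dim\mathcal Y < 8$ then necessarily $\dim\mathfrak X < 7$ as well, so $m = (7-\dim\mathfrak X)/(8-\dim\mathcal Y)$ is again a bounded positive integer. The only way infinitely many $m$ could survive is the degenerate case $\dim\mathcal Y = 8$ and $\dim\mathfrak X = 7$, in which \eqref{big-den} holds for every $m$; this single case must be excluded, and that is the main obstacle.

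To rule it out I would run a short finite check. Subtracting the two dimension formulas of \Cref{lem:dimensionSpaces} gives $p_0 - p_3 + \ell_0 - \ell_2 - p_\infty = \dim\mathfrak X - \dim\mathcal Y = -1$. If $p_\infty = 0$, then \eqref{eq:pInftyImplications} forces $\ell_0 = \ell_1 = \ell_2 = p_4 = 0$, and the pair $3p_0 + p_3 = 7$, $2p_0 + 2p_3 = 8$ has no common integer solution; so $p_\infty = 1$ and $p_0 + \ell_0 = p_3 + \ell_2 =: k$. Since $\dim\mathfrak X = 3p_0 + p_3 + p_4 + 4\ell_0 + 2\ell_1 = 7 \ge 3(p_0+\ell_0) = 3k$, we get $k \le 2$, leaving only a handful of signatures. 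Each of them is eliminated using the counting rules preceding \Cref{lem:dimensionSpaces}: a nonzero $p_3$ or $p_4$ demands strictly more collinear points than the at most $N$ available, a nonzero $\ell_1$ or $\ell_2$ demands incidences with points that are not present, or else an observed $\ell_2$-line would be forced to pass through all three points of a dashed collinear triple. This contradiction completes the proof. (Alternatively, the degenerate case could be dispatched by invoking that $\Phi^{(m)}$ must be dominant for minimality, but the self-contained route via \eqref{eq:pInftyImplications} and the counting rules stays within the hypotheses.)
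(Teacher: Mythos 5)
Your proof is correct, and it takes a genuinely different route from the paper's. The paper first uses a finite computer search to rule out the signatures for which both sides of \eqref{big-den} are zero or negative, and only then — knowing the factor $\dim\mathcal Y-8$ is positive — writes $m-1$ as a ratio and deduces $p_0+3p_3+p_4+2\ell_0+2\ell_1+2\ell_2+2p_\infty\le 9$ from ``numerator $\ge$ denominator.'' You instead collect \eqref{big-den} by the coefficients of the six counts, observe that for $m\ge 2$ each coefficient is at least $m-1$, and obtain the unconditional bound $N\le 8+\tfrac1{m-1}\le 9$ with no case split on signs; you then correctly isolate the only possible source of infinitude, namely the degenerate pair $(\dim\mathfrak X,\dim\mathcal Y)=(7,8)$ for which every $m$ satisfies \eqref{big-den}, and kill it by hand using \eqref{eq:pInftyImplications} and the realizability constraints of the counting scheme preceding \cref{lem:dimensionSpaces}. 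This buys a fully self-contained argument (no computer search) and makes transparent exactly which case threatens finiteness; the paper's search buys slightly more, since it also establishes that both sides of \eqref{big-den} are in fact positive for every balanced problem, a fact reused in the subsequent classification. The one weakness is expository: ``leaving only a handful of signatures'' should be an explicit list. I checked that your constraints ($p_\infty=1$, $p_0+\ell_0=p_3+\ell_2=k\le 2$, $3p_0+p_3+p_4+4\ell_0+2\ell_1=7$) leave only finitely many tuples and that your three elimination criteria do dispose of all of them — $k=0$ by parity and the absence of points for $\ell_1$; $k=1$ because every surviving subcase has either $p_4>0$ with only $p_0+p_3\le 2$ previously counted points, or $\ell_2>0$ with too few points; $k=2$ via the $(2,1,0,0,0,1,1)$ clash between an observed two-point line and a collinear triple, and again the $p_4$ obstruction — so the gap is one of presentation, not substance. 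Do phrase the $p_3,p_4$ criterion as ``more \emph{already-counted} collinear points than are available'' rather than comparing against $N$.
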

\begin{proof}
All variables in~\eqref{big-den} are non-negative integers. These become bounded from above if we were to require both sides of~\eqref{big-den} to be zero or negative. Thus, there are only finitely many sequences $I=(p_0,p_3,p_4,\ell_0,\ell_1,\ell_2,p_\infty)$ where both sides of~\eqref{big-den} can be zero or negative. But a computer search reveals that this cannot happen. Thus, we may assume that the right-hand side of~\eqref{big-den} is positive and write
\begin{equation*}\label{diff-den}
m - 1 = \frac{p_0 - p_3 + \ell_0 - \ell_2 + 1 - p_\infty}{2p_0+2p_3+p_4+3\ell_0+2\ell_1+\ell_2 - 8 + p_\infty}.
\end{equation*}
Since the denominator in the right-hand side is positive,  a necessary condition for $m$ to be an integer is that this denominator is smaller than or equal to the numerator. This condition is equivalent to
\[
p_0 + 3p_3 + p_4 + 2\ell_0 + 2\ell_1 + 2\ell_2 + 2p_{\infty}\leq 9.
\]
Thus, there are only finitely many choices for the combinatorial datum $I = (p_0,p_3,p_4,\ell_0,\ell_1,\ell_2,p_\infty)$.
For each such sequence $I$, there are only finitely many reconstruction problems. \myqed
\end{proof}

A computer search reveals the full list of sequences $I \!=\! (p_0,p_3,p_4,\ell_0,\ell_1,\ell_2,p_\infty)$ satisfying \eqref{big-den} for some $m>1$. 
Here, we notice that some sequences give rise to two balanced problems due to combinatorial ambiguity.
For instance, the two lines of the configuration $I = (3,0,0, 0,1,1, 1)$ could intersect at one of the three points, or not. To account for this, we introduce a further combinatorial datum $\gamma\in\{0,1,\dotsc,\ell_1+\ell_2\}$ that counts the maximal number of lines going through 
any
free point in the configuration. As it turns out, this resolves all ambiguities in our list of balanced problems.

\begin{theorem} \label{thm:minimalProblems}
Let $m>1$.
The Reconstruction Problem \ref{reconstruction-problem} is balanced if and only if its combinatorial signature $I = (p_0, p_3, p_4, \ell_0, \ell_1, \ell_2, \gamma, p_\infty)$ 
%A sequence of combinatorial data $I = (p_0, p_3, p_4, \ell_0, \ell_1, \ell_2, \gamma, p_\infty)$  is a minimal problem if and only if it 
is contained in the following list:
\footnotesize
\[
\begin{tabular}{CCC}
m = 2 &	m = 2 &	m = 2 \\
\textcolor{gray}{(1, 0, 0, 0, 3, 0, 3, 1)} &	(3, 0, 0, 0, 1, 1, 1, 1) &	(4, 1, 0, 0, 0, 0, 0, 1)\\	
(1, 0, 0, 1, 2, 0, 2, 1) &	(3, 0, 0, 0, 1, 1, 2, 1) &	(5, 0, 0, 0, 0, 1, 1, 1)\\	
(1, 0, 0, 2, 1, 0, 1, 1) &	(3, 0, 0, 0, 2, 0, 1, 1) &	(5, 0, 0, 0, 1, 0, 1, 1)\\
(1, 0, 0, 3, 0, 0, 0, 1) &	(3, 0, 0, 0, 2, 0, 2, 1) &	(5, 0, 0, 1, 0, 0, 0, 1)\\
(2, 1, 0, 0, 1, 0, 1, 1) &	(3, 0, 0, 1, 0, 1, 1, 1) &	(7, 0, 0, 0, 0, 0, 0, 1)\\
(2, 1, 0, 1, 0, 0, 0, 1) &	(3, 0, 0, 1, 1, 0, 1, 1) &	(3, 2, 0, 0, 0, 0, 0, 0)\\
\textcolor{gray}{(2, 1, 2, 0, 0, 0, 0, 1)} &	(3, 0, 0, 2, 0, 0, 0, 1) &	(6, 1, 0, 0, 0, 0, 0, 0)\\
(3, 0, 0, 0, 0, 2, 2, 1) &	\textcolor{gray}{(3, 1, 1, 0, 0, 0, 0, 1) }&  (9, 0, 0, 0, 0, 0, 0, 0)\\\\
m = 3 &	m = 4 & m = 5 \\
(2, 0, 0, 0, 2, 0, 1, 1) &	(1, 0, 0, 2, 0, 0, 0, 1) & (4, 0, 0, 0, 0, 0, 0, 1)\\
(2, 0, 0, 0, 2, 0, 2, 1) &	(3, 0, 0, 0, 1, 0, 1, 1)\\
(2, 0, 0, 1, 0, 1, 1, 1) &	(5, 0, 0, 0, 0, 0, 0, 0)\\
(3, 0, 0, 1, 0, 0, 0, 1)\\
(3, 1, 0, 0, 0, 0, 0, 1)\\
(4, 1, 0, 0, 0, 0, 0, 0)\\

\end{tabular}
\]
\normalsize
The three problems printed in gray, that is, $(1,0,0,0,3,0,3,1)$, $(2,1,2,0,0,0,0,1)$, and $(3,1,1,0,0,0,0,1)$, are non-minimal. All other $31$ problems are minimal.
\end{theorem}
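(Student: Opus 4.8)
The plan is to combine the combinatorial classification of balanced problems with a case-by-case verification of minimality via symbolic and numerical computation, using the explicit form \eqref{eq:PhiLinearRS} of the picture-taking map for type-I linear $\mathrm{RS}_1$ cameras together with \Cref{lemma-p-infty}. First I would carry out the finite computer search over the combinatorial data $I = (p_0,p_3,p_4,\ell_0,\ell_1,\ell_2,\gamma,p_\infty)$: by the previous proposition the inequality $p_0 + 3p_3 + p_4 + 2\ell_0 + 2\ell_1 + 2\ell_2 + 2p_\infty \le 9$ bounds the search space, and for each surviving tuple one checks \eqref{big-den} to decide whether it is balanced for some integer $m>1$; the extra datum $\gamma$ (maximal number of lines through a free point) is appended to separate the combinatorially distinct incidence structures realizing the same numerical signature. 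This produces exactly the table displayed in the statement, so the \emph{balancedness} half of the theorem is purely a bookkeeping verification and I would present it as the output of the stated search (with the code in the ancillary files).

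Next I would address minimality. Recall minimality means the joint-camera map $\Phi^{(m)}$ in \eqref{eq:jcm} is dominant with generically finite (nonempty) fibers; since balancedness already forces $\dim\big((\mathcal P^m\times\mathfrak X)/G\big)=\dim\mathcal Y$, it suffices to show $\Phi^{(m)}$ is \emph{dominant with finite fibers}, equivalently that its Jacobian has full rank at one explicit point \emph{and} that a generic fiber is finite. I would do this numerically: pick a random complex point in the domain, form the image $y\in\mathcal Y$, and run a homotopy-continuation / monodromy computation (e.g.\ with \texttt{HomotopyContinuation.jl}) to track the fiber $\big(\Phi^{(m)}\big)^{-1}(y)$; a finite nonempty fiber certifies minimality and, by \Cref{def:minimal}, its cardinality is the degree. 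For the three gray problems $(1,0,0,0,3,0,3,1)$, $(2,1,2,0,0,0,0,1)$, $(3,1,1,0,0,0,0,1)$ I would instead exhibit non-minimality by showing the generic fiber is positive-dimensional — concretely, by finding a positive-dimensional family of solutions over a random instance (the Jacobian drops rank identically), or by a structural argument peculiar to each: for $(1,0,0,0,3,0,3,1)$ the three image conics all pass through the \emph{same} two known points $(0:1:0),(a:b:0)$ and a third common point forced by the shared free point, leaving too little constraint; for the two signatures with $p_4>0$, the extra collinear-quadruple points are automatically consistent once the conic is fixed, so they add equations that are dependent on those already present.

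The main obstacle I expect is \emph{rigorously} distinguishing "generically finite" from "generically positive-dimensional" for the borderline gray cases, and — for the genuinely minimal cases — certifying that the numerically computed fiber size is exact rather than a lower bound (the "$\ast$" and "$+$" annotations in \Cref{fig:minmtwo} already flag where the computations were only partially completed). To make the argument airtight for the gray cases I would give a direct geometric proof of fiber positive-dimensionality using the quadric-ruling description at the end of \Cref{lemma-p-infty}: the camera rays contracted to a fixed image conic rule a smooth quadric whose second ruling also contains $K,\mathcal C,L$, so when the configuration forces several lines/points to lie on such a common quadric one gets a one-parameter family of reconstructions. For the finiteness direction in the minimal cases, the clean route is a single symbolic rank check of the Jacobian of $\Phi^{(m)}$ at one rational point (full rank $\Rightarrow$ dominant and, combined with balancedness, generically finite by generic smoothness over $\mathbb C$), which reduces the whole minimality claim to a finite list of explicit linear-algebra verifications over $\mathbb Q$; the precise solution counts then follow from the monodromy runs summarized in \Cref{fig:minmtwo}.
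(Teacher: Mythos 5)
Your treatment of the balancedness half (bounded computer search over the signatures, augmented by $\gamma$) and of the minimality of the 31 black problems (symbolic Jacobian rank check at one point, which gives dominance, plus balancedness to force generically finite fibers) is exactly the paper's argument. The genuine gap is in your handling of the three gray problems. Your numerical fallback (an identically rank-deficient Jacobian) would detect them, but the structural explanations you offer are not the right obstructions, and in one case point in the wrong direction. For the two signatures with $p_4>0$, the issue is \emph{not} that the extra collinear points impose dependent equations; it is that they impose a hidden \emph{compatibility} condition across the different pictures. Four collinear space points on a line $L$ are carried by each $\Phi_i|_L$ birationally onto the image conic, and projecting that conic away from $(0:1:0)$ gives an isomorphism of projective lines; hence the cross-ratios of the four projected image points must agree in all views. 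Generic data in $\mathcal Y$ violates this, so $\Phi^{(m)}$ is not dominant and the generic fiber is \emph{empty} (this is \Cref{lem:lineHomography}), rather than positive-dimensional in the sense you describe.

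For $(1,0,0,0,3,0,3,1)$ the paper's obstruction (\Cref{lem:hedhehog}) lives on the \emph{camera} side, not the image side: the arrangement is a single point with lines through it, i.e.\ a cone over that point, so at every instant the scene looks identical from any center on the ray joining the origin to $C_1(x)$; one may therefore rescale the center-movement map of a camera without changing any picture, producing a positive-dimensional fiber even after quotienting by $G$. Your proposed quadric-ruling argument (moving $L$ to the second ruling of the quadric of rays contracted to a fixed conic, as in \Cref{lemma-p-infty}) perturbs the \emph{arrangement} and is the mechanism behind the single-camera non-minimality proof (\Cref{prop:infinityReconstructionsPerComponent}); for $m>1$ it does not apply directly, because moving $L$ preserves only one camera's picture and generically changes the others. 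So to make the gray cases airtight you need these two specific lemmas (or equivalent arguments), not the image-side counting or the quadric-ruling trick. A final minor point: a monodromy run only certifies a lower bound on the fiber cardinality, so finiteness should be concluded from dominance plus balancedness (as you note at the end), not from the monodromy computation itself.
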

\begin{proof}
This is the list of all balanced problems, obtained from augmenting $I$ by the additional combinatorial datum $\gamma$ as described. To show that the problems printed in black are minimal, we carefully construct the joint camera-map map $\Phi^{(m)}$ for each balanced problem in the computer algebra system \texttt{Macaulay2}~\cite{Macaulay2} and test that its Jacobian at a randomly chosen point is invertible. 
This shows that the derivative of $\Phi^{(m)}$ is surjective almost everywhere, and thus that $\Phi^{(m)}$ is dominant. 
The balancedness condition ensures that the fibers of such a dominant $\Phi^{(m)}$ are generically finite.
The non-minimality of the three problems printed in gray is shown in \Cref{lem:lineHomography,lem:hedhehog} below. \myqed
\end{proof}

\begin{lemma}
    \label{lem:lineHomography}
    There is no minimal problem with $m>1$ and $p_4>0$ (i.e., four or more collinear points in space). 
\end{lemma}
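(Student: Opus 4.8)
The plan is to show that under the hypothesis the joint-camera map $\Phi^{(m)}$ of \eqref{eq:jcm} is not dominant. By \Cref{def:minimal} this forces the problem to be non-minimal, since a generic point of $\mathcal Y$ then lies outside the image of $\Phi^{(m)}$ and hence has empty solution set. (We need not assume the problem is balanced: if it is not, it is trivially non-minimal, and if it is, the argument below still applies.)

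First I would isolate the relevant line: if $p_4>0$, the arrangement contains a line $L\subseteq\PP^3$ carrying at least four of the points, say $P_1,P_2,P_3,P_4$, and by \eqref{eq:pInftyImplications} necessarily $p_\infty=1$. For a generic element of the domain each of the $m$ cameras is of type I, since the type-II parameters form a proper subvariety of $\mathcal P_1^{\mathrm{cs}}$; hence \Cref{lemma-p-infty} applies and $c_i:=\overline{\Phi_i(L)}$ is an irreducible (and therefore, in characteristic zero, smooth) conic passing through $(0:1:0)$ and through the special point $(a_i:b_i:0)$ of the $i$-th camera. After fixing a linear parametrization $\PP^1\xrightarrow{\ \sim\ }L$, the map $\Phi_i|_L$ is given by a triple of binary quadratic forms -- this is immediate from the explicit formula \eqref{eq:PhiLinearRS} -- and for generic input these forms share no common factor. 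Thus $\Phi_i|_L\colon\PP^1\to\PP^2$ is a degree-two morphism whose image is the conic $c_i$, so the induced morphism $\PP^1\to c_i$ is birational; since a birational morphism between smooth projective curves is an isomorphism, $\Phi_i|_L$ is an isomorphism onto $c_i$.

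The key consequence is a cross-ratio identity across the cameras. The cross-ratio $\mathrm{cr}(Q_1,Q_2,Q_3,Q_4)$ of four points on a smooth conic is well defined (independent of an identification with $\PP^1$), so because $\Phi_i|_L$ is an isomorphism onto $c_i$, the image points $Q^i_j:=\Phi_i(P_j)$ satisfy
\[
    \mathrm{cr}\bigl(Q^1_1,Q^1_2,Q^1_3,Q^1_4\bigr)=\cdots=\mathrm{cr}\bigl(Q^m_1,Q^m_2,Q^m_3,Q^m_4\bigr),
\]
all being equal to the cross-ratio of $P_1,\dots,P_4$ on $L\cong\PP^1$. Now consider the rational map $r\colon\mathcal Y\dashrightarrow\mathbb A^m$ recording these $m$ cross-ratios. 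For a point of $\mathcal Y$ the four image points on each conic $c_i$ can be perturbed independently along that conic, so $r$ is dominant. But the displayed identity says that $r\circ\Phi^{(m)}$ takes values in the small diagonal $\{(\lambda,\dots,\lambda)\}\subset\mathbb A^m$, which is a proper subvariety precisely because $m>1$. Hence $\Phi^{(m)}$ cannot be dominant, and the problem is not minimal.

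The one point requiring care is the claim that $\Phi_i|_L$ is an isomorphism onto $c_i$ for generic input: one must rule out that $c_i$ degenerates to a line (which is exactly the excluded case of \Cref{lemma-p-infty}) and that the three binary quadrics defining $\Phi_i|_L$ acquire a common factor (a genericity statement transparent from \eqref{eq:PhiLinearRS}). Everything else is formal.
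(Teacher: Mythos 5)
Your argument is correct and is essentially the paper's own proof: both rest on the observation that $\Phi_i|_L$ is an isomorphism onto the image conic, so the cross-ratio of the four collinear points is preserved in every view, which contradicts the genericity (equivalently, the dominance of $\Phi^{(m)}$) as soon as $m>1$. The only cosmetic difference is that you invoke the intrinsic cross-ratio of four points on a smooth conic, while the paper realizes the same quantity by projecting the conic away from $(0:1:0)$ onto a line.
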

%Such points cannot exist in minimal problems with at least two cameras by \Cref{lem:lineHomography}.
\begin{proof}
    We show that the joint-camera map $\Phi^{(m)}$ in \eqref{eq:jcm} cannot be dominant whenever $m>1$ and $\mathfrak X$ contains four collinear points. 
    We assume for contradiction that there exists such a dominant map. 
    Hence, for generic $Y_1, \ldots, Y_m \in \mathcal{Y}$, there are $m$ linear \rs cameras and an arrangement $X$ in $\mathfrak X$ such that $Y_i$ is the picture of $X$ under the $i$-th camera.
    The arrangement $X$ contains four collinear points. 
    We denote their image points on the $i$-th image $Y_i$ by $x_{i,1}, x_{i,2}, x_{i,3}, x_{i,4}$.
    Together with the point $(0:1:0)$, they form a conic $c_i$ in the $i$-th image plane. 
    That conic is the image of the line $L \subset \PP^3$ that is spanned by the four collinear points in space. 
    Restricting the $i$-th camera map $\Phi_i$ (see \eqref{eq:PhiMap}) to the line $L$, yields a birational map from $L$ to the image conic~$c_i$.
    \\ \indent
    Now we consider the birational projection $\pi_i$ of the conic $c_i$ away from the point $(0:1:0)$ onto the line $l_i$ that is spanned by $(1:0:0)$ and $(0:0:1)$. 
    We denote the image of the point $x_{i,j}$ under this projection by $y_{i,j} \in l_i$.
    The composed map $\pi_i \circ \Phi_i |_L$ is an isomorphism from the line $L$ onto the line $l_i$.
    Therefore, $\pi_2 \circ \Phi_2 |_L \circ (\pi_1 \circ \Phi_1 |_L)^{-1}$ is an isomorphism between the lines $l_1$ and $l_2$ that maps the four points $y_{1,1}, \ldots, y_{1,4}$ onto the four points $y_{2,1}, \ldots, y_{2,4}$.
    Hence, the cross-ratio of the four points $y_{1,1}, \ldots, y_{1,4}$ has to equal the cross-ratio of  $y_{2,1}, \ldots, y_{2,4}$.
    This contradicts that the pictures $Y_1$ and $Y_2$ were chosen generically. \myqed
\end{proof}
\begin{lemma}
    \label{lem:hedhehog}
    There is no minimal problem with $m>1$, a single point, and without free lines.
\end{lemma}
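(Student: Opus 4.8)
The statement to prove is \Cref{lem:hedhehog}: there is no minimal problem with $m > 1$, a single point ($p = 1$, hence $p_0 = 1$ and $p_3 = p_4 = 0$), and without free lines ($\ell_0 = 0$). By \eqref{eq:pInftyImplications}, having no image conics observed would force all $\ell_i = 0$, so the only balanced candidates in this regime have $p_\infty = 1$ together with lines that are incident to the single point. I would start by reading off from the list in \Cref{thm:minimalProblems} exactly which combinatorial signatures fall under the hypotheses: $p_0 = 1$, $p_3 = p_4 = 0$, $\ell_0 = 0$. Scanning the $m = 2$ column, these are $(1,0,0,0,3,0,3,1)$ and $(1,0,0,0,2,0,2,1)$-type entries — concretely the signature $(1,0,0,0,3,0,3,1)$ (the "hedgehog": one point with three lines through it, $p_\infty$ known). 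The other single-point problems in the list all have $\ell_0 > 0$, so they are excluded by hypothesis. Thus the lemma reduces to showing that $(1,0,0,0,3,0,3,1)$ — which is balanced — is \emph{not} minimal, i.e., the joint-camera map $\Phi^{(2)}$ is not dominant (equivalently, has positive-dimensional generic fibers).

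**Key steps.** First I would set up the geometry: we have one space point $X_0$ and three lines $L_1, L_2, L_3$ in $\PP^3$, all passing through $X_0$, viewed by $m = 2$ linear \rs cameras, and for each camera the special point $(a:b:0)$ is known. In the $i$-th image, each line $L_j$ maps to a conic $c_{i,j}$ through $(0:1:0)$ and through the known point $(a_i:b_i:0)$; the point $X_0$ maps to a point $x_i$ lying on all three conics $c_{i,1}, c_{i,2}, c_{i,3}$. Using \Cref{lemma-p-infty}, I would invoke the key structural fact proved there: for a \emph{generic} conic $c$ through $(0:1:0)$ and $(a:b:0)$, the set of lines $L \subset \PP^3$ with $\overline{\Phi(L)} = c$ is a one-dimensional family — the second ruling of a smooth quadric containing $\mathcal{C}$, $K$, and any one such line. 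The second step is to count degrees of freedom on the fiber: given generic image data $(c_{i,j}, x_i)$ for $i = 1,2$ and $j = 1,2,3$, and given a candidate pair of cameras (each camera determined up to the $G$-action by finitely many parameters once $(a_i:b_i:0)$ is pinned, but the cameras themselves are also unknowns here), I would argue that the three space lines $L_1, L_2, L_3$ through $X_0$ that produce the prescribed conics cannot be reconstructed: the incidence "$L_1, L_2, L_3$ all pass through $X_0$" is the only constraint linking them, but each $L_j$ individually ranges over a one-parameter family (the ruling of its quadric) on which the conic stays fixed, and requiring a common intersection point among three such one-parameter families of lines is one condition per coincidence, which is not enough to cut the total fiber dimension down to zero. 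Concretely, I would show that a generic choice of $X_0$ on the intersection locus still leaves a positive-dimensional choice of $(L_1, L_2, L_3)$ compatible with the observed conics — for instance, once $X_0$ is chosen (it has some residual freedom because the conics through it are only constrained to share the point $x_i$ in each image), each $L_j$ is then forced to lie on $X_0 \vee (\text{a point on the quadric ruling})$, giving a residual $1$-dimensional family for at least one of the lines. The cleanest route is to exhibit an explicit $1$-parameter family of distinct arrangements $(X_0, L_1, L_2, L_3)$, fixing the two cameras, all mapping to the same image data, thereby showing the fiber of $\Phi^{(2)}$ is positive-dimensional and hence the problem is non-minimal.

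**Main obstacle.** The balancedness of $(1,0,0,0,3,0,3,1)$ means a naive dimension count does \emph{not} reveal the failure — the problem looks minimal on paper. The real work is to locate the geometric degeneracy that destroys finiteness. I expect the crux is precisely the observation from \Cref{lemma-p-infty}: each image conic through $(0:1:0)$ and $(a:b:0)$ fails to determine its preimage line — there is a whole ruling of a quadric over it — and this "loss of information" is exactly one dimension per line. With three concurrent lines through a single point and only two views, one does not accumulate enough independent constraints to rigidify all three preimage lines simultaneously; the concurrency condition at $X_0$ is consumed by fixing $X_0$ itself rather than by rigidifying the lines. I would make this precise by a careful parameter tally on the fiber (cameras fixed): the arrangement $(X_0, L_1, L_2, L_3)$ with the concurrency incidences has dimension $3 + 3\cdot 2 - 3\cdot 2 = 3$ (point, plus each line $2$ dof once forced through $X_0$, minus nothing extra), while the image data it must match — three conics plus a common point, in two views — imposes fewer than $3$ independent conditions after accounting for the ruling ambiguity, leaving a positive-dimensional fiber. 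The delicate point is getting this count exactly right and confirming it via an explicit family or a rank computation; that is where I would invest the main effort, and it is the step most likely to require a \texttt{Macaulay2} verification analogous to the ones already used in the proof of \Cref{thm:minimalProblems}.
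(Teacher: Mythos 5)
There is a genuine gap: the mechanism you propose for killing minimality is not the one that actually operates here, and the explicit construction you aim for would fail. You want to fix the two cameras and exhibit a one-parameter family of arrangements $(X_0,L_1,L_2,L_3)$ with the same image data. But with both cameras fixed and generic consistent data, the scene is rigid: $X_0$ is pinned down as the intersection of the two back-projection lines $\overline{\Phi_1^{-1}(x_1)}$ and $\overline{\Phi_2^{-1}(x_2)}$, and then — exactly as in the proof of \Cref{lem-minproblemreduced} — for each observed conic $c_{1,j}$ the quadric $Q_{c_{1,j}}$ is doubly ruled, one of the two lines through $X_0$ is the contracted back-projection of $x_1$, and the \emph{other} is the unique line through $X_0$ mapping onto $c_{1,j}$. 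So the "loss of information" you attribute to the ruling of $Q_c$ is fully recovered once the incidence with $X_0$ is imposed, and your degree-of-freedom tally (which is also internally off: the arrangement has dimension $3+3\cdot 2=9$, not $3$) does not produce a positive-dimensional scene fiber.

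The degeneracy actually lives on the camera side, and the paper's argument is short and synthetic: the arrangement (one point with any number of lines through it) is a cone with apex at that point. Normalize by $G$ so that the apex is the origin, the first camera's rotation is $I_3$, and the scale is fixed by the \emph{second} camera's starting position. Then the residual freedom of rescaling the first camera's center trajectory $C_1 \mapsto \mu C_1$ about the origin leaves the first picture unchanged — scaling the camera by $\mu$ is equivalent (via the group action of \Cref{rem:groupAction}) to scaling the world by $\mu^{-1}$, and the cone is invariant under that scaling — yet it is no longer absorbed by $G$, since $G$ has already been spent. This gives a one-parameter family in every fiber of $\Phi^{(m)}$, for any $m>1$ and any number of incident lines, with no need to restrict to the single balanced signature $(1,0,0,0,3,0,3,1)$ or to run a Jacobian computation. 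Your reduction to that signature via \Cref{thm:minimalProblems} is legitimate as a narrowing step, but the core argument you would then need is the cone/rescaling symmetry, not a scene-reconstruction ambiguity.
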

\begin{proof}
    An arrangement $X$ in $\mathfrak X$ consists of a single point and possibly some lines passing through that point. 
    For such an arrangement $X$ and an arbitrary $m$-tuple of linear \rs cameras, we show that~-- even after modding out the group $G$~-- there are infinitely many $m$-tuples of cameras that yield the same $m$ pictures.
    This implies that the generic fiber of $\Phi^{(m)}$ cannot be finite, and so the reconstruction problem cannot be minimal.
    \\ \indent
    We begin by modding out the group $G$. To fix the translation, we assume that the single point in $\PP^3$ is at the origin in $\RR^3$, i.e., $(0:0:0:1)$.
    To fix the global scaling, we assume that the Euclidean distance in $\RR^3$ between the origin and the starting position $C_2(0)$ of the second camera is one.
    Finally, we fix the rotation of the first camera to be $R_1 = I_3$.
    \\ \indent
    In particular, we have not assumed anything about the linear constant-speed movement $C_1: \RR^1 \to \RR^3$ of the first camera. 
    Since $X$ consists of a single point with some adjacent lines, at any given time $x \in \RR^1$, the arrangement $X$ looks the same from any camera center on the ray starting from the origin and passing through $C_1(x)$. 
    In other words, when positioned at $C_1(x)$, one can move back and forth along the line spanned by $C_1(x)$ and the origin without changing the picture of $X$ at time $x$.
    Therefore, we can modify the first camera by multiplying the map $C_1$ with any non-zero scalar without changing the image of $X$ under the first camera. \myqed
\end{proof}

All minimal problems listed in Theorem \ref{thm:minimalProblems} are depicted in Figure \ref{fig:minmtwo}.
To conclude the proof of Theorem \ref{thm:minimalMain}, we need to exclude the possibility of minimal problems for a single camera. We will explain this in Section \ref{ssec:singleCam}. 
Beforehand, we explain our degree computations reported  in Figure \ref{fig:minmtwo}.

\begin{remark}\label{deg-comp}
For each of the finitely many minimal problems, we used the numerical algebraic geometry package \texttt{HomotopyContinuation.jl}~\cite{breiding2018homotopycontinuationjl} to compute their respective degrees, i.e. the number of solutions of a general instance of a reconstruction problem with that combinatorial type. More precisely, we used the monodromy method provided in that package. This method can strictly speaking only provide lower bounds for the degree, but multiple runs can be tried and if the results stays the same across runs, this indicates that the number computed is the actual degree.
In our computations, small degrees ($<$2000) remained constant as described above throughout six monodromy runs, while larger degrees tended to differ run-by-run. In any case, we report in~\Cref{fig:minmtwo} the largest computed value, indicating the larger degrees with an asterisk. A plus indicates that the computation had to be interrupted after running more than 5 days. We used interval certification to guarantee that for each computed value there are indeed at least that many solutions. This holds even for the interrupted computations.
\end{remark}

\subsection{No minimal problem for a single linear \texorpdfstring{\rs}{RS1}camera}
\label{ssec:singleCam}

Our next aim is to prove that no minimal problems arise from a single camera. Throughout this section, we assume that $m=1$. 
Then, by \Cref{cor:balanced}, the Reconstruction Problem \ref{reconstruction-problem} is balanced if and only if

%Recall that we call a tuple $(p_0,p_3,p_4,\ell_0,\ell_1,\ell_2,p_\infty)$ balanced if
%\begin{equation}
%3p_0 + p_3 + p_4 + 4\ell_0 + 2\ell_1 - 7 = m(p_\infty+ 2p_0 + 2p_3+p_4+3\ell_0+2\ell_1+\ell_2-8).
%\end{equation}
%In the case of a single camera, i.e. $m=1$ we have

\begin{equation*}
p_0+ \ell_0 +1 =p_\infty+p_3+\ell_2.
\end{equation*}

%Furthermore, we start by assuming that we know $y_\infty$, i.e. we obtain $p_\infty=1$ which turns our classification problem into the situation of analysing
%\begin{equation}
%p_0+ \ell_0 =p_3+\ell_2.
%\end{equation}

\begin{lemma} \label{lem:singleCamNoFreeLine}
    For $m=1$, there is no minimal problem with $\ell_0$ positive.
\end{lemma}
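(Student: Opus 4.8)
The plan is to show that when $m=1$ and $\ell_0>0$ the generic fiber of the joint-camera map $\Phi^{(1)}$ is positive-dimensional, hence the problem is not minimal. The geometric heart of the argument is already available in \Cref{lemma-p-infty}: for a single linear \rs camera of type I, a generic line $L\subseteq\PP^3$ is contracted to a conic through $(0:1:0)$ and the camera's special point $(a:b:0)$, and a whole one-dimensional family of lines -- one ruling of a smooth quadric -- maps to that same conic. So a free line is genuinely underdetermined by one image, and there is no second camera to pin it down.

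Concretely, I would argue by contradiction. Suppose the problem is minimal; then, by \Cref{def:minimal}, the lift $\psi\colon \mathcal{P}^1\times\mathfrak{X}\dashrightarrow\mathcal{Y}$ of $\Phi^{(1)}$ is dominant. Choose a generic pair $(P,X)\in\mathcal{P}^1\times\mathfrak{X}$ and set $y:=\psi(P,X)$, which is then a generic point of $\mathcal{Y}$. Since $P$ is generic in $\mathcal{P}^1=\mathcal{P}_1^{\mathrm{cs}}$, it is of type I (type II is a proper sublocus). Since $\ell_0>0$, the arrangement $X$ contains a free line $L$; because a free line sits in an unconstrained $\GrLine$-factor of $\mathfrak{X}$, $L$ is a generic line, so its image $c:=\overline{\Phi(L)}$ is a generic conic through $(0:1:0)$ and $(a:b:0)$ and the converse part of \Cref{lemma-p-infty} applies: there is a one-dimensional family $\mathcal{F}$ of lines $L'\subseteq\PP^3$ with $\overline{\Phi(L')}=c$. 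For each $L'\in\mathcal{F}$, replace the free-line component $L$ of $X$ by $L'$ to get a tuple $X'$; this is a legitimate element of $\mathfrak{X}$ precisely because a free line carries no incidences, so the incidence set $\mathcal{I}$ is untouched, and $\psi(P,X')=y$ because all other elements of $X$ (and the point $(a:b:0)$, which depends only on $P$) are unchanged while $L'$ still maps to $c$. Thus $\{(P,X')\colon L'\in\mathcal{F}\}$ is a curve inside $\psi^{-1}(y)$.

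It remains to see that this curve descends to a positive-dimensional subset of $(\Phi^{(1)})^{-1}(y)\subseteq(\mathcal{P}^1\times\mathfrak{X})/G$. All its points share the camera coordinate $P$, and any $G$-orbit meets the slice $\{P\}\times\mathfrak{X}$ in a set of dimension at most $\dim\mathrm{Stab}_G(P)$; so it is enough to check that $\mathrm{Stab}_G(P)$ is finite for generic $P$. This is a short direct check with the action $P\mapsto Pg$ from \Cref{rem:groupAction}: if $g=[\begin{smallmatrix}R_g & t_g\\ 0 & \alpha_g\end{smallmatrix}]$ fixes $P=R[I_3\mid -C]$, then comparing the $\SO(3)$-parts forces $R_g=I_3$, after which the center-map condition becomes $(\alpha_g-1)\,C(x)=t_g$ for all $x$, which forces $\alpha_g=1$ and $t_g=0$ because the linear camera's center map $C$ is non-constant. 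Hence $\mathrm{Stab}_G(P)$ is trivial, the curve in $\psi^{-1}(y)$ cannot be covered by finitely many orbit-slices, and so its image in the quotient is infinite -- contradicting the finiteness of the generic fiber of a minimal problem. This completes the proof that no minimal problem with $m=1$ has $\ell_0>0$.

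I expect the only real obstacle to be the bookkeeping around the $G$-quotient: one must make sure the one-parameter family produced by \Cref{lemma-p-infty} is not collapsed to a point after modding out the scaled Euclidean group, and the triviality of the generic camera stabilizer is exactly what rules that out. Everything else is routine, and the argument visibly breaks for $m\ge2$, where the free line would instead be recovered as a common transversal of the quadric rulings coming from the several cameras, which is generically a finite set.
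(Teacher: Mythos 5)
Your proof is correct and follows essentially the same route as the paper: both rest on the converse part of \Cref{lemma-p-infty}, namely that a generic image conic is the image of a one-dimensional family of space lines, so a free line is underdetermined by a single view. The paper states this in one line (and disposes of the $p_\infty=0$ case via \eqref{eq:pInftyImplications}), whereas you additionally verify that the trivial $G$-stabilizer of a generic camera prevents the family from collapsing in the quotient — a detail the paper leaves implicit but which is worth making explicit.
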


\begin{proof}
If $p_\infty=0$, the assertion follows from \eqref{eq:pInftyImplications}.
Otherwise, \Cref{lemma-p-infty} tells us that there exists a one-dimensional family of lines in $\mathbb{P}^3$ mapping to a generic image conic in $\mathbb{P}^2$ via $\Phi$, i.e., a free image conic has infinitely many solutions. \myqed
\end{proof}

Therefore, we may assume that $\ell_0=0$ and obtain

\begin{equation} \label{eq:balancedSingleCamKnowInf}
    p_0+1=p_\infty + p_3+\ell_2
\end{equation}
as numerical condition for our balanced problems.
We observe that this equation only involves free points, third collinear points,  and lines incident to two points. 
If $p_\infty=0$, the remaining features do not appear by~\eqref{eq:pInftyImplications}.
If $p_\infty=1$, they do not affect minimality:
\begin{lemma}
\label{lem-minproblemreduced}
    For any minimal problem with $m=1$ and $p_\infty=1$, one can add or remove further collinear points (corresponding to $p_4$) and lines incident to one point (corresponding to $\ell_1$) without changing minimality.
\end{lemma}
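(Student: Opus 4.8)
The plan is to show that adding a further collinear point (one lying on a line already containing at least three counted points) or a line through exactly one free point changes both sides of the balancedness relation in a compatible way and, more importantly, contributes a fibre that is generically a single point. First I would isolate the two operations. Adding a further collinear point increments $p_4$ by one; by \Cref{lem:dimensionSpaces} this adds $1$ to $\dim\mathfrak X$ and $1$ to $\dim\mathcal Y$, so balancedness is preserved. Adding a line incident to exactly one (already present) point increments $\ell_1$ by one, adding $2$ to both $\dim\mathfrak X$ and $\dim\mathcal Y$; again balancedness is preserved. So in both cases the augmented problem is still balanced, and it remains to check that the generic fibre of the new joint-camera map $\Phi^{(1)}$ stays finite of the same cardinality.

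Next I would argue that the new feature is "reconstructed for free" once the rest of the arrangement and the camera are known. Here is where $p_\infty=1$ is essential: the special point $(a:b:0)$ is known, and by \Cref{lemma-p-infty} (with $m=1$) the image of any space line $L$ through a point $X$ already reconstructed is a conic through $(0:1:0)$ and $(a:b:0)$. For a further collinear point on a line $L$ that already carries three reconstructed points, its image in $\PP^2$ lies on the (now-determined) image conic $\overline{\Phi(L)}$, so its preimage under the birational restriction $\Phi|_L\colon L\to\overline{\Phi(L)}$ is a single point in $\PP^3$; thus the fibre over the enlarged data maps bijectively to the fibre over the original data. For a line incident to exactly one free point $X$: its image conic $c$ is pinned down by the two image points $(0:1:0)$, $(a:b:0)$, plus $\Phi(X)$ — wait, that is only three points and a conic needs five, so instead the correct statement is that observing the image conic $c$ (which is part of the data $\mathcal Y$) together with the already-reconstructed camera lets us recover $L$ from the second ruling of the quadric as in \Cref{lemma-p-infty}; but for a \emph{single} camera this quadric ruling gives a one-parameter family, so the line $L$ through the \emph{known} point $X$ is cut out uniquely because $X\in L$ selects one line from that ruling. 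I would make this precise by noting $\Phi^{-1}(c)$ is ruled by the camera rays meeting $\mathcal C$, $K$, and any fixed line over $c$, and the constraint $X\in L$ together with $L\subset\Phi^{-1}(c)$ forces $L$ to be the unique ruling line through $X$.

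Assembling these, I would conclude: the enlarged problem is balanced, and its joint-camera map factors (over the image of the original data) as the original map followed by a birational "append the free feature" map, so dominance and generic finiteness of the fibre are inherited in both directions, and the degree is unchanged. The main obstacle I anticipate is the line-incident-to-one-point case: one must be careful that the single free point $X$ really does cut the one-dimensional ruling family down to a unique line, i.e. that $X$ is not a base point of that ruling and that the ruling line through $X$ is indeed mapped to the prescribed conic $c$ rather than to the exceptional line through $(0:1:0)$ and $(a:b:0)$. For generic data this is immediate, but it is the one spot where genericity of the input pictures must be invoked explicitly.
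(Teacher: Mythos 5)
Your proposal is correct and follows essentially the same route as the paper: balancedness is untouched because $p_4$ and $\ell_1$ contribute equally to $\dim\mathfrak X$ and $\dim\mathcal Y$ when $m=1$, a further collinear point is recovered uniquely via the birational restriction $\Phi|_L\colon L\to\overline{\Phi(L)}$, and a line through one known point is recovered as the unique non-contracted ruling line through that point on the doubly ruled quadric $\overline{\Phi^{-1}(c)}$ from \Cref{lemma-p-infty}. The self-correction about the conic needing five points lands exactly where the paper does (the conic is observed data, not something to be interpolated), so no gap remains.
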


\begin{proof}
    Since every minimal problem is balanced and neither $p_4$ nor $\ell_1$ appear in the balanced equation \eqref{eq:balancedSingleCamKnowInf}, it is sufficient to show that the existence of solutions is not affected by adding or removing further collinear points or lines incident to one point.
    
    The existence of solutions is not affected by removing data, thus this part is clear. In order to show that adding such points and lines does not affect minimality, we consider the following set-up:
    
    Let $\widetilde{\mathfrak{X}}$ be the set of arrangements obtained from arrangements in $\mathfrak{X}$ after forgetting all further collinear points and all lines incident to exactly one point. Similarly, we consider $\widetilde{\mathcal{Y}}$ obtained from $\mathcal{Y}$.
Thus, we obtain forgetful maps
\begin{equation*}
    \alpha\colon \mathcal{Y}\to\widetilde{\mathcal{Y}}\quad\textrm{and}\quad \beta\colon\mathfrak{X}\to\widetilde{\mathfrak{X}}.
\end{equation*}
Analogously to the joint camera map $\Phi^{(1)}$ in \eqref{eq:jcm}, we now obtain a map

\begin{equation}       \tilde{\Phi}^{(1)}\colon(\mathcal{P}\times\widetilde{\mathfrak{X}})/G \dashrightarrow\widetilde{\mathcal{Y}}.
\end{equation}
We need to show that for generic $\tilde{y}\in \widetilde{\cal Y}$ and generic $y\in{\cal Y}$ with $\alpha(y)=\tilde{y}$, any solution $\tilde{S}\in (\mathcal{P}\times \widetilde{\mathfrak X})/G$ with $ \tilde{\Phi}^{(1)}(\tilde{S})=\tilde{y}$ can be uniquely extended to a solution $S \in (\mathcal{P}\times {\mathfrak X})/G$ with $ {\Phi}^{(1)}(S)=y$. 
By the same arguments as in the proof of \Cref{lemma-p-infty}, we see that, for a generic line $L\subset\mathbb{P}^3$ with image conic $c$, the map $\Phi_{|L}\colon L\to c$ is birational. Therefore, when $c$ is the conic passing through three points that are the images of three collinear points in $\PP^3$ and the two known points at infinity from Lemma \ref{lemma-p-infty} and $L$ is the solution of $c$ in $\tilde{S}$, then any further point on $c$ corresponds to a unique point on $L$. Thus, adding further collinear points does not affect minimality.

In order to add a line incident to exactly one point, we consider a generic point $x\in\mathbb{P}^2$ with fixed pre-image $X\in\mathbb{P}^3$ and fix a generic image conic $c$ passing through $x$. We claim there is a unique line $L\subset\Phi^{-1}(c)$ passing through $X$ with $\Phi(L)=c$. Recall from \Cref{lemma-p-infty} that the lines mapping to $c$ rule a smooth quadric surface $Q_c$. Any smooth quadric surface is doubly ruled, i.e., there are two lines passing through each point. The pre-image of any point on $c$ is a line in $\mathbb{P}^3$ that is contracted, i.e, one of the lines in the quadric $Q_c$ passing through $X$ is the pre-image of $x$. The other line through $X$ is not contracted and therefore maps to $c$. This is the desired unique line. \myqed
\end{proof}

Due to \Cref{lem:singleCamNoFreeLine,lem-minproblemreduced}, we may from now on assume that $\ell_0 = \ell_1 = p_4 = 0$.
This means that the generic arrangement in $\mathfrak{X}$ only contains free points, third collinear points, and lines incident to exactly two points.
We encode these incidences in a graph:

%\KK{The minimality should not depend on a concrete arrangement, but rather on the abstract Reconstruction Problem \ref{reconstruction-problem}. A concrete arrangement is just part of the solution to a reconstruction problem given a conrete input picture. I made some tiny reformulations reflecting that. Also, we have from the beginning of the paper a standard affine chart $\RR^3 \subset \PP^3$, so we can use that one directly. I also inserted a new Definition \ref{def:minimal}.}

\begin{definition}
    Let $X$ be a generic arrangement in ${\mathfrak{X}}$. We consider the points and lines in $X$ in $\mathbb{R}^3$. %Let $U$ be an affine chart containing all points and lines in $S$. 
    This point-line arrangement induces a graph $\mathfrak{G} = \mathfrak{G}(p,\ell,\mathcal{I})$ whose vertices are the points and whose edges are the line segments connecting points.
\end{definition}

For a balanced problem with $\ell_0 = \ell_1 = p_4 =0$, this graph has $p_0 + p_3$ many vertices and   its number of edges is
$\ell_2 + 2 p_3 = p_0+p_3+1-p_\infty$, where the latter equality follows from \eqref{eq:balancedSingleCamKnowInf}. Thus, if $p_\infty=1$, the graph  has the same amount of edges as vertices;
if $p_\infty=0$, it has one more edge.
For every connected component $\mathfrak{C}$ of the graph $\mathfrak{G}$, we denote by $\mathfrak{X}_\mathfrak{C}$ the set of arrangements obtained from arrangements in $\mathfrak{X}$ after forgetting all point and lines, except those in the component $\mathfrak{C}$.

\begin{proposition}
    \label{prop:infinityReconstructionsPerComponent}
    Let $\mathfrak{C}$ be a connected component of the graph $\mathfrak{G}$.
    For a generic $X \in \mathfrak{X}_\mathfrak{C}$, there is a one-dimensional family of arrangements $X'$ in $\mathfrak{X}_\mathfrak{C}$ such that $\Phi(X) = \Phi(X')$.
\end{proposition}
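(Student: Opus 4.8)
The plan is to reduce the statement to a single camera taking a picture of a single connected configuration $\mathfrak{C}$, and to produce the one-parameter family of alternative reconstructions explicitly by deforming the arrangement along the camera rays while keeping the image fixed. Recall from \Cref{cor:birationalOrder1congruence} and \Cref{thm:rs1cameras} that a linear \rs camera of type I has its congruence $\overline{\mathcal L}$ equal to the family of lines meeting both the fixed line $\mathcal{C}$ (the center locus) and the fixed line $K$; the picture-taking map $\Phi$ contracts each such camera ray to a point on the image plane. The first step is to fix a generic camera $(R,\mathcal{C},C)\in\mathcal{P}_1^{\mathrm{cs}}$ of type I (type II being similar or even degenerate) and a generic arrangement $X\in\mathfrak{X}_{\mathfrak{C}}$, and to record the combinatorial structure: $\mathfrak{C}$ is a connected graph with vertex set the free and third-collinear points and edge set the two-point lines together with the pairs realizing the collinearities, so by the count just before the proposition, $\mathfrak{C}$ has either as many edges as vertices (when $p_\infty=1$) or one more (when $p_\infty=0$).

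The core idea is that moving a space point $P$ along its camera ray $\Gamma(P)$ does not change its image $\Phi(P)$, since $\Phi$ contracts $\Gamma(P)$; and, by \Cref{lemma-p-infty} (and its strengthening in that lemma, that the lines mapping to a fixed image conic rule a smooth quadric whose other ruling also maps to that conic), moving a whole space line $L$ within the quadric $Q_{\Phi(L)}$ does not change its image conic $\overline{\Phi(L)}$. So the second step is to set up the deformation: assign to each vertex $v$ of $\mathfrak{C}$ a scalar parameter $s_v$ moving the point $X_v$ along $\Gamma(X_v)$, and to each edge $e=\{u,v\}$ of $\mathfrak{C}$ require that the deformed points $X_u(s_u)$ and $X_v(s_v)$ still span a line in the congruence $\overline{\mathcal L}$, i.e. still meet both $\mathcal{C}$ and $K$. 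This last requirement is one polynomial condition per edge (``the line $X_u(s_u)\vee X_v(s_v)$ meets $K$'', the condition of meeting $\mathcal{C}$ being automatic since both endpoints lie on their own congruence rays and a line through two points of the congruence that already meets $\mathcal{C}$... — here one must be a little careful: the honest condition is that the spanned line lies in the congruence, equivalently meets $K$, because it automatically meets $\mathcal{C}$ through at most the images of the endpoints; I will verify that this is exactly one equation per edge). For a third-collinear constraint the same analysis applies along the quadric $Q$ containing the ambient line. Thus the alternative reconstructions with the same image are the solutions of a system of $\#\{\text{edges of }\mathfrak{C}\}$ equations in $\#\{\text{vertices of }\mathfrak{C}\}$ unknowns $s_v$; since $\mathfrak{C}$ is connected, the number of unknowns exceeds the number of equations by exactly $1$ in the case $p_\infty=1$ and is balanced (equal) in the case $p_\infty=0$.

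The third step is the dimension count producing the claimed one-dimensional family. When $p_\infty=1$ the system is underdetermined by one, and one checks that for generic $X$ the solution variety is nonempty (it contains $X$ itself, $s_v\equiv 0$) and smooth of dimension $1$ at that point — this is a Jacobian/transversality computation: the linearization of the edge-equations at $s=0$ has corank one, which follows because the constraint graph is connected and each edge-equation's gradient is supported on its two endpoints, so the $\#E\times\#V$ incidence-type matrix has rank $\#V-1$ generically. When $p_\infty=0$ the extra edge is present, so the system is square; but here the two points $(0:1:0)$ and $(a:b:0)$ at infinity are \emph{not} known, which is exactly what frees up one more degree of freedom — I would instead run the deformation argument before fixing the special point, i.e. allow the image conics to vary within the family of conics through $(0:1:0)$ only, recovering one extra parameter, so that again the family is one-dimensional; alternatively and more cleanly, note that $p_\infty=0$ forces $\ell_0=\ell_1=\ell_2=p_4=0$ by \eqref{eq:pInftyImplications}, so $\mathfrak{C}$ is built purely from free points and third-collinear points, and the same ruling-quadric trick applies to each collinearity, yielding the parameter. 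The main obstacle I anticipate is precisely this bookkeeping of which data is fixed (the two image points at infinity) versus free, and verifying that the Jacobian of the edge-constraint system has exactly the expected corank for a \emph{generic} arrangement — i.e. ruling out unexpected degeneracies of the constraint graph's incidence structure; I expect this to follow from connectedness of $\mathfrak{C}$ together with the genericity of $X$, but it is the step that needs real care rather than a one-line argument.
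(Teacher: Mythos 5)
Your high-level strategy — deform each point along its (contracted) camera ray and propagate the deformation along the edges of $\mathfrak{C}$ using the double ruling of the quadrics $Q_c$ — is the same skeleton as the paper's proof, which packages the per-edge propagation into the isomorphisms $\gamma_E\colon L_{V_1}\to L_{V_2}$ of \Cref{lem-graphiso}. However, there are two genuine gaps in your execution.

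First, your edge condition is wrong as stated. You require that the deformed points span a line \emph{in the congruence} $\overline{\mathcal L}$, i.e.\ a line meeting both $\mathcal{C}$ and $K$. But lines of the congruence are exactly the camera rays, which $\Phi$ \emph{contracts} to single image points; imposing that condition would force the two deformed points to have the same image, which is not what preserving the picture requires. The correct condition is that the deformed line lies in the \emph{second} ruling of the quadric $Q_c$ (the ruling containing $\mathcal{C}$, $K$ and the original line $L$), whose lines are pairwise skew and in particular do \emph{not} meet $\mathcal{C}$ or $K$; only then is its image the same conic $c$. This is exactly what $\gamma_E$ encodes: given $s_u$, the value $s_v$ is uniquely determined.

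Second, and more seriously, your dimension count via a ``generically corank-one'' Jacobian fails precisely when the component $\mathfrak{C}$ contains a cycle. An $\#E\times\#V$ matrix whose rows are each supported on the two endpoints of an edge has rank $\#V-1$ only for the \emph{signed incidence matrix} (rows summing to zero); for generic coefficients it has rank $\min(\#E,\#V)$, so on a cycle (e.g.\ a triangle, giving equations $c_1s_1-s_2=c_2s_2-s_3=c_3s_3-s_1=0$) the only solution would be $s\equiv 0$ and there would be \emph{no} positive-dimensional family. The one-dimensional family exists only because the situation is non-generic in exactly the right way: the composition of the maps $\gamma_E$ around any cycle is a projective automorphism of the line $L_{V_1}\cong\PP^1$ that fixes \emph{three} points — $L_{V_1}\cap\mathcal{C}$, $L_{V_1}\cap K$ (because each $\gamma_E$ preserves the intersections with $\mathcal{C}$ and $K$), and the original vertex $V_1$ itself — and is therefore the identity. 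This three-fixed-points argument is the heart of the paper's proof and is absent from your proposal; without it the statement is simply false for ``generic'' constraint systems of the shape you describe. (Relatedly, the edge/vertex counts you quote are for the whole graph $\mathfrak{G}$, not for a single connected component, which may be a tree or contain cycles irrespective of $p_\infty$; the proposition must, and the paper does, handle both cases.)
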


The statement above is the key insight for why there are no minimal problems for a single camera.
Before we can prove the proposition, we need the following helpful tool.
Recall that  the congruence associated with a linear \rs camera consists of all lines meeting both $\mathcal{C}$ and $K$.
Therefore, for a generic image point $x \in \PP^2$, the line $\overline{{\Phi}^{-1}(x)}$ meets both lines $\mathcal{C}$ and $K$.

\begin{lemma}
\label{lem-graphiso}
    Let $c$ be the image conic of a generic line in $\PP^3$ under $\Phi$.
    Moreover, let $x,x'\in c$ be generic points, and $L:=\overline{{\Phi}^{-1}(x)}, L':=\overline{{\Phi}^{-1}(x')}$. Then, there is an isomorphism $\gamma\colon L\to L'$ such that $\overline{\Phi(X \vee \gamma(X))} = c$ for almost all $X \in L$.
    Furthermore, we have that 
    $\gamma(L \cap \mathcal{C}) = L' \cap \mathcal{C}$ and 
    $\gamma(L \cap K) = L' \cap K$.
\end{lemma}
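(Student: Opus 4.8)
The plan is to exploit the explicit description of the congruence of a linear $\mathrm{RS}_1$ camera of type I: its lines are exactly the transversals of the two skew lines $\mathcal{C}$ and $K$. By \Cref{lemma-p-infty}, the preimage $\overline{\Phi^{-1}(c)}$ of a generic image conic $c$ is a smooth quadric surface $Q_c$, ruled in one family by the lines of the congruence that $\Phi$ contracts to the points of $c$, and in the other family by a pencil of lines containing $\mathcal{C}$, $K$, and any line $L$ with $\overline{\Phi(L)} = c$. So, given generic $x, x' \in c$ with contracted fibres $L = \overline{\Phi^{-1}(x)}$ and $L' = \overline{\Phi^{-1}(x')}$, both $L$ and $L'$ lie in the first ruling of $Q_c$. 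I would define $\gamma\colon L \to L'$ to be the isomorphism of the two rulings of $Q_c$: for $X \in L$, let $\gamma(X)$ be the unique point of $L'$ lying on the line of the \emph{second} ruling through $X$.

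First I would make this concrete. A point $X \in L$ lies on a unique line $M_X$ of the second ruling; $M_X$ is a transversal to both rulings, hence meets $L'$ in one point, which we call $\gamma(X)$. Since $M_X$ is in the second ruling and $\mathcal{C}, K$ are also in that ruling, $M_X$ meets $\mathcal{C}$ and $K$; and $M_X$ contains $X$ and $\gamma(X)$, so $M_X = X \vee \gamma(X)$. Thus $X \vee \gamma(X)$ is a transversal of $\mathcal{C}$ and $K$, i.e.\ a line of the congruence, hence $\Phi$ contracts it to a single image point; because this line lies on $Q_c$, that point lies on $c = \Phi(Q_c)$, giving $\overline{\Phi(X \vee \gamma(X))} = c$ for all but finitely many $X$ (the degenerate ones being the rulings through $Q_c \cap \mathcal{C}$, $Q_c \cap K$, etc.). Maps between rulings of a smooth quadric are isomorphisms (both are $\cong \PP^1$), so $\gamma$ is an isomorphism $L \xrightarrow{\sim} L'$.

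For the last two claims, observe that $\mathcal{C}$ itself is a line of the second ruling of $Q_c$, so it meets $L$ and $L'$ each in one point, namely $L \cap \mathcal{C}$ and $L' \cap \mathcal{C}$; these two points lie on the common second-ruling line $\mathcal{C}$, so by the definition of $\gamma$ we get $\gamma(L \cap \mathcal{C}) = L' \cap \mathcal{C}$. The identical argument with $K$ in place of $\mathcal{C}$ gives $\gamma(L \cap K) = L' \cap K$. I expect the main obstacle to be purely bookkeeping: verifying genericity, namely that for generic $x, x'$ the fibre lines $L, L'$ are honest lines lying on $Q_c$ in the contracting ruling (not the special contracted lines $\mathcal{C}$, $K$, or a tangent line), and that $\mathcal{C}$, $K$, and a generic line with image $c$ really are pairwise skew so that $Q_c$ is smooth — all of which is already supplied by \Cref{lemma-p-infty} and its proof, so no genuinely new difficulty arises.
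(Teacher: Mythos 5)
Your overall strategy is the same as the paper's: realize $\overline{\Phi^{-1}(c)}$ as the doubly ruled quadric $Q_c$ from \Cref{lemma-p-infty}, define $\gamma$ by following the second ruling from $L$ to $L'$, and get the statements about $\mathcal{C}$ and $K$ from the fact that $\mathcal{C}$ and $K$ are themselves lines of that second ruling (your argument for the last two claims is correct and matches the paper's). However, your justification of the first claim, $\overline{\Phi(X\vee\gamma(X))}=c$, contains a genuine error. You write that since $M_X$, $\mathcal{C}$, and $K$ all lie in the second ruling, $M_X$ meets $\mathcal{C}$ and $K$; this is backwards. Distinct lines in the \emph{same} ruling of a smooth quadric are pairwise skew; it is the lines of the \emph{other} ruling (the contracted camera rays, such as $L$ and $L'$) that meet $\mathcal{C}$ and $K$. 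Consequently $M_X$ is generically \emph{not} a transversal of $\mathcal{C}$ and $K$, i.e.\ not a line of the congruence, and is not contracted by $\Phi$. Your chain of reasoning then ends in a non sequitur: if $\Phi$ really contracted $X\vee\gamma(X)$ to a single image point, its image closure would be that point, not the conic $c$ — the opposite of what you need to prove.

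The fix is short and already contained in \Cref{lemma-p-infty}, which you cite: the second ruling of $Q_c$ consists of $\mathcal{C}$, $K$, and lines $L''$ satisfying $\overline{\Phi(L'')}=c$. For generic $X\in L$ (in particular $X$ outside the base locus of $\Phi$, so $M_X\neq\mathcal{C},K$), the line $M_X=X\vee\gamma(X)$ is therefore one of the lines mapped \emph{onto} $c$, which is exactly the first claim. With that correction your argument coincides with the paper's proof.
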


%NOTES:
%the base locus of Phi consists of C, K, and a line that meets both C and K (that line is the line that connects the points of C and K at infinity)
%the preimage quadric contains also C and K
%(the total preimage of the conic is that quadric plus 2 planes)
%hence, the map gamma maps points on C resp K to points on C resp K
%therefore, for a cycle, which gives us an automorphism of L, we have always 2 fixpoints (the ones on C resp K)
%so, if the reconstruction problem has a solution, then that would give us a 3rd fixpoint, meaning that the automorphism has to be the identity, and then there are infinitely many solutions

\begin{proof}
    Recall that the lines that $\Phi$ maps onto the conic $c$ rule a smooth quadric surface $Q_c$, which is therefore doubly ruled. 
    A generic point $X\in L$ is met by two lines on the quadric $Q_c$.
    One of the lines is $L$ itself.
    Let $\tilde{L}$ be the other line.
    Then $\tilde L$ intersects $L'$ in a single point $X'$.
    We define $\gamma(X):=X'$.
    This is clearly a well-defined isomorphism from $L$ to $L'$.
    For those $X$ outside the base locus of $\Phi$, the line $\tilde L$ is mapped by $\Phi$ onto the conic $c$. This proves the first assertion.

    For the second assertion, we start by recalling that the quadric $Q_c$ contains both lines $\mathcal{C}$ and $K$. Note that both $\mathcal{C}$ and $K$ are in $\Phi$'s base locus.
    If $X \in L$ is the point of intersection between $L$ and $\mathcal{C}$, then the two lines on the quadric $Q_c$ that pass through  $X$ are $L$ and $\mathcal{C}$.
    By our definition of $\gamma$, we have that $\gamma(X)$ is the point of intersection between $L'$ and $\mathcal{C}$. The same reasoning applies to the point of intersection between $L$ and $K$. \myqed
\end{proof}

\begin{proof}[Proposition~\ref{prop:infinityReconstructionsPerComponent}]
    Every vertex $V$ in the graph $\mathfrak{C}$ corresponds to a point $V$ in the arrangement $X$, and thus gives rise to a line $L_V := \overline{\Phi^{-1}(\Phi(V))}$.
    Every edge $E$ between two vertices $V_1$ and $V_2$ corresponds to an image conic $\overline{\Phi(V_1 \vee V_2)}$, and thus gives rise to an isomorphism $\gamma_E: L_{V_1} \to L_{V_2}$ as in \Cref{lem-graphiso}. By construction of $\gamma_E$ as seen in the proof of \Cref{lem-graphiso}, we see that $\gamma_E$ sends $V_1$ to $V_2$.

    We start by assuming that the component $\mathfrak{C}$ is a spanning tree.
    In that case, we can start at any vertex $V_1$ and pick a generic point $V_1'$ on the line $L_{V_1}$.
    For any neighboring vertex $V_2$ connected to $V_1$ by the edge $E$, we define $V_2' := \gamma_E(V_1')$. By  \Cref{lem-graphiso}, the lines $V_1 \vee V_2$ and $V_1' \vee V_2'$ have the same image under $\Phi$.
    Continuing like this throughout the whole component $\mathfrak{C}$, we construct an arrangement $X'$ with $\Phi(X) = \Phi(X')$.
    By varying $V_1'$ on the line $L_{V_1}$, we see that this family of arrangements $X'$ is one-dimensional.

    If $\mathfrak{C}$ contains a cycle, then we consider any vertex $V_1$ on the cycle and, by composing all the edge maps $\gamma_E$ along the cycle, we obtain an automorphism $\gamma_{V_1}: L_{V_1} \to L_{V_1}$. 
    Since $L_{V_1}$ is a projective line, every such automorphism has either exactly two fixpoints or is the identity.
    We already know from Lemma \ref{lem-graphiso} that $\gamma_{V_1}$ has the two fixpoints $L_{V_1} \cap \mathcal{C}$ and $L_{V_1} \cap K$.
    Since $V_1$ is a point in the original arrangement $X$, the point $V_1$ must also be a fixpoint of $\gamma_{V_1}$, which shows that $\gamma_{V_1}$ must be the identity. 
    Since this holds for every cycle in the connected graph $\mathfrak{C}$, we can~-- as in the case of spanning trees~-- start at any vertex $V_1$ and with any point $V_1'$ on the line $L_{V_1}$ and then use the graph maps $\gamma_E$ to define an arrangement $X'$
    with the same image under $\Phi$ as the original arrangement~$X$.\myqed
\end{proof}

\begin{corollary}
   For $m=1$, the Reconstruction Problem \ref{reconstruction-problem} is never minimal. 
\end{corollary}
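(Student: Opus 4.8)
The plan is to combine the reductions already established with \Cref{prop:infinityReconstructionsPerComponent}, which produces a positive-dimensional family of reconstructions for a single camera. First, suppose the Reconstruction Problem~\ref{reconstruction-problem} with $m=1$ is minimal. If $p_\infty = 0$, then $\ell_0 = \ell_1 = \ell_2 = p_4 = 0$ by \eqref{eq:pInftyImplications}; if $p_\infty = 1$, then $\ell_0 = 0$ by \Cref{lem:singleCamNoFreeLine}, and we may delete the features counted by $p_4$ and $\ell_1$ without changing minimality by \Cref{lem-minproblemreduced}. Either way, $\ell_0 = \ell_1 = p_4 = 0$, so a generic arrangement $X$ in $\mathfrak{X}$ consists only of free points, third collinear points, and lines through exactly two points, and is encoded by the graph $\mathfrak{G}$. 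If $X$ has no points, then by these reductions it has no lines either and the problem is vacuous; so I may assume $X$ contains a point and let $\mathfrak{C}$ be a connected component of $\mathfrak{G}$.

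Next I would consider the joint-camera map $\Phi^{(1)}\colon (\mathcal{P}\times\mathfrak{X})/G \dashrightarrow \mathcal{Y}$ from \eqref{eq:jcm}. If $\Phi^{(1)}$ is not dominant, then for generic input pictures there is no solution at all, contradicting minimality. So assume $\Phi^{(1)}$ is dominant and pick a generic point of its domain, represented by a camera $P\in\mathcal{P}_1^{\mathrm{cs}}$ and an arrangement $X\in\mathfrak{X}$; then the restriction of $X$ to the component $\mathfrak{C}$ is generic in $\mathfrak{X}_\mathfrak{C}$. Applying \Cref{prop:infinityReconstructionsPerComponent} to $\mathfrak{C}$, while keeping $P$ and the remaining components of $X$ unchanged, yields a one-parameter family of arrangements $X_s\in\mathfrak{X}$ (with $X_0 = X$) whose pictures under $P$ all coincide; hence the points $(P,X_s)$ all have the same image $y\in\mathcal{Y}$ under the joint-camera map.

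It then remains to verify that this family remains one-parameter after quotienting by $G$, and this is the step I expect to be the main obstacle. If $(P,X_s)$ and $(P,X_{s'})$ lie in the same $G$-orbit, there is $g\in G$ with $P\cdot g = P$ and $g^{-1}X_s = X_{s'}$. A camera in $\mathcal{P}_1^{\mathrm{cs}}$ genuinely moves along a line, so the kernels $\ker P(v:t)$ are pairwise distinct points of that line, whence $\bigcap_{(v:t)}\ker P(v:t) = 0$; reading $P(v:t)\,(g-I_4)=0$ column by column then forces $g = I_4$, so $X_s = X_{s'}$ and $s = s'$. Thus $\{(P,X_s)\}$ descends to a curve contained in the fiber of $\Phi^{(1)}$ over $y$. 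Since the chosen point of the domain was generic and $\Phi^{(1)}$ is dominant, the generic fiber of $\Phi^{(1)}$ is positive-dimensional, so for generic input pictures the solution set is infinite, again contradicting minimality. In all cases the Reconstruction Problem~\ref{reconstruction-problem} with $m=1$ fails to be minimal.
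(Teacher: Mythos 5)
Your proof is correct and follows essentially the same route as the paper: the same reductions to $\ell_0=\ell_1=p_4=0$ via \Cref{lem:singleCamNoFreeLine,lem-minproblemreduced}, followed by \Cref{prop:infinityReconstructionsPerComponent} to produce a positive-dimensional family of arrangements with the same picture. The only difference is cosmetic: the paper disposes of the $G$-quotient by normalizing the camera to a canonical representative, whereas you keep the camera arbitrary and check that its $G$-stabilizer is trivial (and you use a single connected component rather than all of them); both are fine.
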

\begin{proof}
    We can understand the domain $(\mathcal{P} \times \mathfrak{X})/G$ of the joint-camera map $\Phi^{(1)}$ by modding out the group $G$ from the camera space $\mathcal{P}$. 
    Indeed, modulo the group $G$, every equivalence class of cameras in \eqref{eq:paramSpaceLinearRScam} is represented by a tuple 
    $(I_3, (0:0:0:1)\vee(a:b:0:0), C: (v:t) \mapsto (av:bv:0:t))$, where $a^2+b^2=1$.
    We consider a generic such camera representative and a generic arrangement $X \in \mathfrak{X}$.
    Let $\Phi$ be the picture-taking map of that camera.
    By \Cref{lem:singleCamNoFreeLine,lem-minproblemreduced}, we may assume that $\ell_0 = \ell_1 = p_4 = 0$.
    Then, by \Cref{prop:infinityReconstructionsPerComponent}, there is an $\eta$-dimensional family of arrangements $X' \in \mathfrak{X}$ with $\Phi(X')=\Phi(X)$, where $\eta$ is the number of  connected components of $\mathfrak{G}$.
    Hence, the dimension of the fiber of $ \Phi(X)$ under the joint-camera map $\Phi^{(1)}$ is at least $\eta > 0$, and so the reconstruction problem cannot be minimal. \myqed
\end{proof}

This concludes the proof of Theorem \ref{thm:minimalMain}.

\subsection{Straight-Cayley cameras}

In this section, we prove the statements from Section \ref{sec:Straight-Cayley}.

\begin{proof}[Proposition~\ref{prop:twistedCubicGeneral}]
To find the curve on which the camera center in \eqref{eq:C-curve} moves, we first substitute $a$ for $u - u_0$ and then eliminate $a$ from \eqref{eq:C-curve}. This is what the following \Macaulay{}~\cite{Macaulay2} computation does: 
\begin{small}
\begin{verbatim}
restart
C2R = c -> matrix {
{c_(0,0)^2-c_(1,0)^2-c_(2,0)^2+1,
2*c_(0,0)*c_(1,0)+2*c_(2,0),
    2*c_(0,0)*c_(2,0)-2*c_(1,0)},
{2*c_(0,0)*c_(1,0)-2*c_(2,0),
-c_(0,0)^2+c_(1,0)^2-c_(2,0)^2+1,
    2*c_(1,0)*c_(2,0)+2*c_(0,0)},
{2*c_(0,0)*c_(2,0)+2*c_(1,0),
2*c_(1,0)*c_(2,0)-2*c_(0,0),
    -c_(0,0)^2-c_(1,0)^2+c_(2,0)^2+1}

Rng = QQ[a,c_1..c_3,o_1..o_3,t_1..t_3,
v_1..v_3]
C = genericMatrix(Rng,c_1,3,1)
O = genericMatrix(Rng,o_1,3,1)
T = genericMatrix(Rng,t_1,3,1)
V = genericMatrix(Rng,v_1,3,1)
Rd = C2R(a*O) 
-- Non-normalized Cayley matrix of rotations
d = 1+a^2*o_1^2+a^2*o_2^2+a^2*o_3^2 
-- The denominator to normalize R
-- Equations for the camera center C 
-- in the world coordinate system
e = flatten entries(d*C + 
transpose(Rd)*(a*V+T)) 
-- Avoid the components for d=0
eMinusD = saturate(ideal(e),ideal(d))
-- Eliminate a
Rne = QQ[flatten entries (vars Rng)_{1..12}]
curve = sub(eliminate({a},eMinusD),Rne)
\end{verbatim}
\end{small}
The last line computes the ideal of the curve $\mathcal{C}$ along which the camera center moves. By plugging in random values for the parameters $o_i, t_j, v_k$, we see that the ideal describes a twisted cubic curve in the variables $c_1,c_2,c_3$.

The projection matrix $P(a)$ and the rolling planes map $\Sigma^\vee(a)$ can be computed as follows:
\begin{verbatim}
P = Rd|(d*(a*V+T))
Sigma = matrix{{1,0,-a}}*P
\end{verbatim}
Again, by plugging in random values for the parameters $o_i, t_j, v_k$, we see that the rolling planes map $\Sigma^\vee$ is of degree four in $a$.
This means that for a generic space point $X$, there are four solutions in $a$ such that $X$ is contained in the rolling plane $\Sigma(a)$.
Each such solution $a$ gives rise to an image point $P(a)\cdot X$. \myqed
\end{proof}

\begin{proof}[Proof of Theorem \ref{thm:twistedCubicOrder1}]
    We continue to work with the \Macaulay{} code from above.
    First, we can check for every choice of parameters that either $\Sigma(0) \cap \Sigma(1)$ or  $\Sigma(0) \cap \Sigma(2)$ is a line (and not a plane):
    \begin{verbatim}
M1small = sub(Sigma, a => 0) 
|| sub(Sigma, a => 1)
I1small = sub(minors(2,M1small), Rne)
M2small = sub(Sigma, a => 0) 
|| sub(Sigma, a => 2)
I2small = sub(minors(2,M2small), Rne)
radical(I1small+I2small) == ideal 1_Rne
    \end{verbatim}
    \vspace{-1em}
    Hence, if all rolling planes are supposed to intersect in a line $K$, then $K$ has to be one of those two lines.
    This means that we have to express the condition that $\Sigma(0) \cap \Sigma(i)$ (for $i \in \{ 1,2 \}$) being a line implies that it is contained in all rolling planes $\Sigma(a)$.
    These conditions are encoded in the following ideals \texttt{I1} and \texttt{I2}:
    \begin{verbatim}
M1 = M1small || Sigma
M2 = M2small || Sigma
I1 = minors(3,M1)
I2 = minors(3,M2)
    \end{verbatim}
    \vspace{-1em}
    Since the conditions are supposed to hold for \emph{all} choices of $a$, when viewing the elements in \texttt{I1+I2} as polynomials in $a$, all of their coefficients have to be zero.
    We collect all those coefficients in an ideal $\texttt{I}$ and decompose it into prime ideals:
\begin{small}    
\begin{verbatim}
Rng2 = Rne[a]
L = flatten entries gens sub(I1+I2, Rng2)
I = sub(ideal flatten apply(L, eq ->
     flatten entries last 
     coefficients eq), Rne)
dec = decompose I
\end{verbatim}
\end{small}
This yields 5 prime ideals:
    \begin{align*} 
        I_1 &:= \langle v_3, o_3, o_2+1 \rangle,\\
        I_2 &:= \langle v_3, o_1, o_2^2+o_3^2+o_2 \rangle, \\
        I_3 &:= \langle v_3,o_3t_1-o_1t_3+o_1v_1,2o_2^2+2o_3^2+3o_2+1, \\ &\quad\quad\quad2o_1^2+o_2+1,2o_2t_1^2+t_1^2-t_3^2+2t_3v_1-v_1^2,\\ &\quad\quad\quad 2o_1o_2t_1+o_1t_1-o_3t_3+o_3v_1 \rangle,\\
        I_4 &:= \langle t_3-v_1,o_3,2o_2+1,4o_1^2+1 \rangle, \\ I_5 &:= \langle v_3,t_3-v_1,t_1,o_3,o_1 \rangle.
    \end{align*}
    The ideals $I_1, I_2, I_5$ appear in \Cref{thm:twistedCubicOrder1}.
    The ideal $I_4$ does not have any real solutions due to its last quadratic generator. Finally, all real solutions of $I_3$ are already contained in the zero sets of $I_1$ and $I_2$.
    We can see the latter claim by using the generator $2 o_1^2 + o_2 + 1 \in I_3$:
    After substituting $o_2 \mapsto - 2 o_1^2 - 1$,
 the generator $2o_2^2+2o_3^2+3o_2+1$ becomes $8 o_1^4+2o_1^2+2o_3^2$, whose only real solutions are $o_1=0=o_3$.
 In that case, $o_2$ becomes $-1$, and all real solutions of $I_3$ are already solutions of $I_1$ and $I_2$. This concludes the proof of the first part of \cref{thm:twistedCubicOrder1}.

 To see that the camera moves along a twisted cubic curve for a generic choice of parameters in each of the three components described by $I_1,I_2,I_5$, it is sufficient to verify this at a single example for each component.
 This is done in \cref{ex:twistedCubicCameras1,ex:twistedCubicCameras2,ex:twistedCubicCameras3}.

When imposing the conditions in ideal $I_1$ on the rolling planes map $\Sigma^\vee$, we see as follows that it becomes linear:
\begin{verbatim}
Sigma1 = sub(Sigma, {v_3 => 0, o_3 => 0,
o_2 => -1})
factor1 = gcd flatten entries Sigma1
1/factor1*Sigma1
\end{verbatim}
This  reveals for parameters in the zero set of $I_1$ that $\Sigma^\vee$ is the birational map 
$a \mapsto (1,0,a,-a t_3 + av_1 + t_1)$.
Since $\Sigma^\vee$ is linear in $a$, such a RS camera has order one.

Similarly, for the ideal $I_2$, the code
\begin{verbatim}
use Rng
Sigma2 = sub(Sigma, {v_3 => 0, 
o_1 => 0})
factor2 = first flatten entries 
sub(Sigma2, Rng/ideal(o_2^2+o_3^2+o_2))
1/factor2*Sigma2
\end{verbatim}
shows that $\Sigma^\vee$, when restricted to parameters in the zero set of $I_2$, is the linear map
$a \mapsto (1, 2 o_3 a, -2 o_2 a - a, -a t_3 + a v_1 + t_1)$, and the RS camera has order one.

For parameters in the zero set of $I_5$, an analogous computation as for $I_1$ shows that $\Sigma^\vee$ becomes the cubic map
\begin{align}\label{eq:SigmaCubic}
    a \mapsto (1-a^2o_2(o_2+2),  \, 0, \,  a (a^2 o_2^2 - 2 o_2 - 1), \, 0).
\end{align}
There are two ways how this map can become linear:
One possibility is $o_2 =0 $, but then the parameters would be contained in the solution set of $I_2$. 
If $o_2 \neq 0$, the map \eqref{eq:SigmaCubic} can only become linear if its quadratic entry divides its cubic entry.
This would mean that $1-a^2o_2(o_2+2)$ and $a^2 o_2^2 - 2 o_2 - 1$ are equal up to scaling by a constant that is allowed to depend on $o_2$ but not on $a$. This constant would have to be $- 2 o_2 - 1$, so we would get
\[
- 2 o_2 - 1  + a^2o_2(o_2+2)(2 o_2 + 1) = a^2 o_2^2 - 2 o_2 - 1
\]
for all $a$, i.e.,
$(o_2+2)(2 o_2 + 1) = o_2$.
The latter is equivalent to $(o_2+1)^2 = 0$, i.e., $o_2 = -1$, 
but in that case the parameters would be already contained in the zero locus of $I_1$.
Hence, for generic parameters in the zero set of $I_5$, the rolling planes map $\Sigma^\vee$ is cubic, and it is linear if and only if the parameters are also contained in the zero locus of either $I_1$ or $I_2$.\myqed
\end{proof}

\begin{proof}[Proof of Proposition~\ref{prop:degreePhiTwistedCubic}]
    For parameters in the zero locus of $I_1$, we can compute $\Phi$ as follows:
    Given a space point $X$, we begin by finding the unique $a$ such that $X$ is contained in the plane $\Sigma(a)$:
    \begin{verbatim}
RX = QQ[gens Rng,x_1..x_3,x_0]
X = matrix{toList(x_1..x_3)|{x_0}}
linX = (sub(Sigma1,RX)*transpose(X))_(0,0)
aX0 = - sub(linX, a => 0)
aX1 = (linX + aX0)/a
aX = aX0 / aX1
    \end{verbatim}
    \vspace{-2em}
    Then, we can plug that parameter $a$ into the projection matrix $P$ and use it to take a picture of $X$:
    \begin{verbatim}
PX = sub(sub(P,RX), {v_3 => 0, o_3 => 0,
o_2 => -1, a => aX})
Phi = (aX1)^3*PX*transpose(X)
    \end{verbatim}
    The resulting map has entries of degree four in $X$.
    It is enough to observe in an example that those entries do not have any common factor; for that, see \cref{ex:twistedCubicCameras1}.
    We can proceed analogously for the ideal $I_2$. \myqed
\end{proof}

\section{Diagram of the relationships between parameter spaces of rolling shutter cameras} \label{sec:paramSpaceOverview}

The following diagram shows the relationships of the parameter spaces of RS cameras we discussed in this paper, as well as the mathematical objects that go into their respective definitions. 

\[\begin{tikzcd}
	{K, \mathcal C, \Sigma_{\infty}^\vee,\lambda} & {\mathcal P_{I, d, \delta}} & {\mathcal P_{I, 1, \delta}^{\mathrm{cs}}} \\
	{R, K,\mathcal C} & {\mathcal P_{I, d}} & {\mathcal P_{I,1}^{\mathrm{cs}}} \\
	{R,K,C} & {\mathcal P_{II, d}} & {\mathcal P_{II, 1}^{\mathrm{cs}}} & {\mathcal P_1^{\mathrm{cs}}} \\
	{K, C,\Sigma_{\infty}^\vee, h,p} & {\mathcal P_{II, d, \delta}} & {\mathcal P_{II, 1, \delta}^{\mathrm{cs}}}
	\arrow[dashed, from=1-1, to=1-2]
	\arrow[from=1-2, to=1-3]
	\arrow[from=1-2, to=2-2]
	\arrow[from=1-3, to=2-3]
	\arrow[dashed, from=2-1, to=2-2]
	\arrow[from=2-2, to=2-3]
	\arrow[hook', from=2-3, to=3-4]
	\arrow[dashed, from=3-1, to=3-2]
	\arrow[from=3-2, to=3-3]
	\arrow[hook, from=3-3, to=3-4]
	\arrow[dashed, from=4-1, to=4-2]
	\arrow[from=4-2, to=3-2]
	\arrow[from=4-2, to=4-3]
	\arrow[from=4-3, to=3-3]
\end{tikzcd}\]

Dashed arrows indicate using building blocks to construct parameter spaces. Normal arrows indicate passing to a special case. Hooked arrows are set inclusions.

\section{Details for Figure \ref{fig:twistedCubicCameras}}\label{sec:details-twistedCubicCameras}
\begin{figure}[ht]\label{fig:twistedCubicCameras1}
        \centering
        \begin{tabular}{ccc}
        \includegraphics[height=3.5cm]{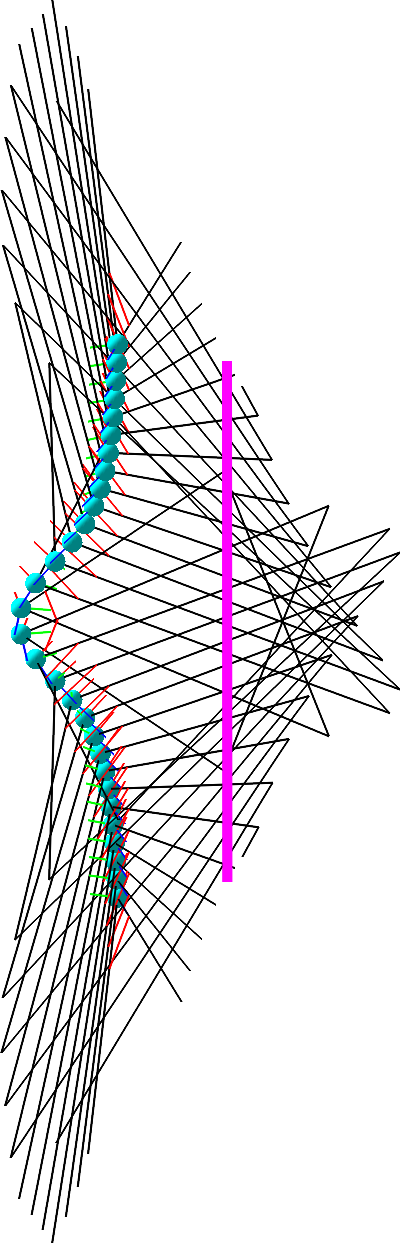}&
        \includegraphics[height=3.5cm]{Figs/simpl-model-figs/21-2.png}&
        \includegraphics[height=3.5cm]{Figs/simpl-model-figs/21-3.png}
        \end{tabular}
\caption{
\Cref{ex:twistedCubicCameras1}. The camera center ${\textcolor{cyan}C}$ (cyan) moves along a twisted cubic $\mathcal{C}$. All rolling planes $\Sigma$ (black) intersect in the line ${\textcolor{magenta} K = (0:X_2:X_0:X_0)}$ (magenta). $\mathcal{C}$ intersects ${\textcolor{magenta} K}$ in the two non-real conjugate points $(0:i:1:1)$ and $(0:-i:1:1)$.
}
\end{figure}

\begin{figure}[ht]
        \centering
        \begin{tabular}{c}
        \includegraphics[height=3.5cm]{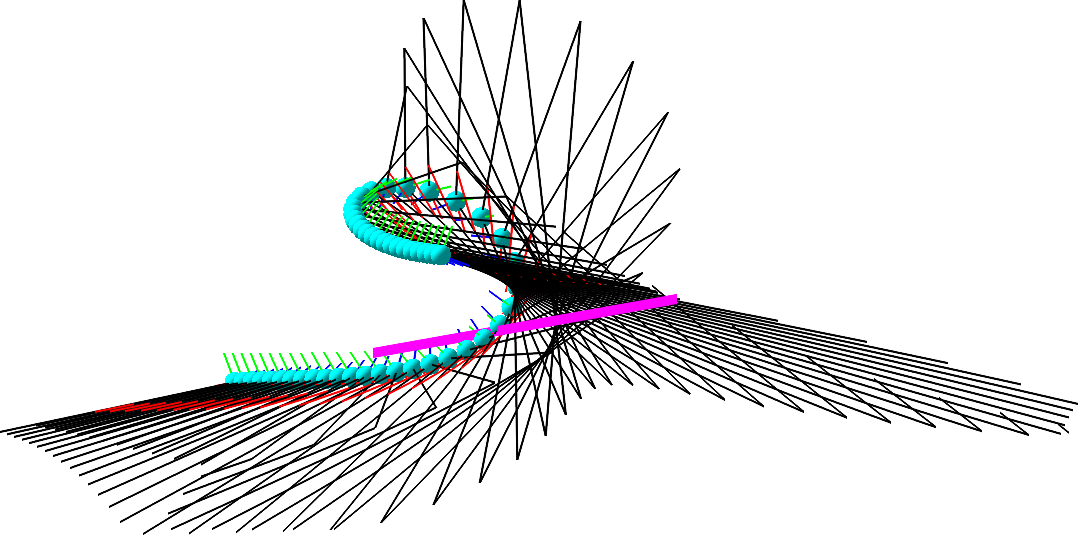}\\
        \includegraphics[height=3.5cm]{Figs/simpl-model-figs/22-2.png}\\
        \includegraphics[height=3.5cm]{Figs/simpl-model-figs/22-3.png}
        \end{tabular}
        \caption{\Cref{ex:twistedCubicCameras2}. The camera center ${\textcolor{cyan}C}$ (cyan) moves along a twisted cubic $\mathcal{C}$. All rolling planes $\Sigma$ (black)  intersect in the line ${\textcolor{magenta}K = (0:X_0:X_3:X_0)}$ (magenta). $\mathcal{C}$ intersects ${\textcolor{magenta} K}$ in $(0:0:1:0)$ and $(0:1:-1:1)$.}
        \label{fig:twistedCubicCameras2}
\end{figure}\ 
%\pagebreak 
%\vspace{-1em}
\begin{figure}[ht]
        \centering
        \begin{tabular}{c}
        \includegraphics[height=3.5cm]{Figs/simpl-model-figs/23-1.png}\\
        \includegraphics[height=3.5cm]{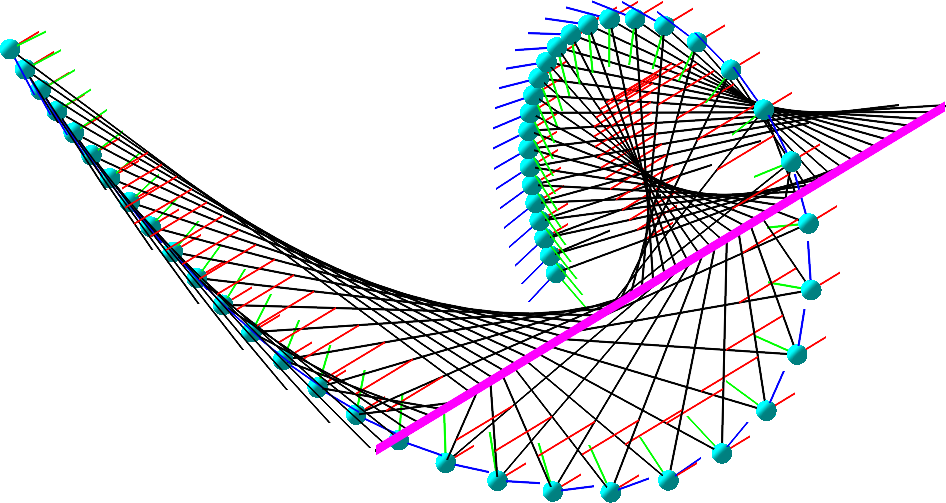}\\
        \includegraphics[height=3.5cm]{Figs/simpl-model-figs/23-3.png}
        \end{tabular}
        \caption{\Cref{ex:twistedCubicCameras3}.  The camera center ${\textcolor{cyan}C}$ (cyan) moves along a twisted cubic $\mathcal{C}$. All rolling planes $\Sigma$ (black)  intersect in the line ${\textcolor{magenta}K = (0:X_2:0:X_0)}$ (magenta). $\mathcal{C}$ does not intersect ${\textcolor{magenta} K}$.}
        \label{fig:twistedCubicCameras3}
\end{figure}

\end{document}